\newcommand{\dl}[1]{{\iffalse #1 \fi}}
\title{Stochastic Semi-Gradient Descent for Learning Mean Field Games with Population-Aware Function Approximation}
\author{%
  Chenyu Zhang\\
  MIT\\
  Cambridge, MA 02139 \\
  \texttt{zcysxy@mit.edu} \\
\And
  Xu Chen\\
	Columbia University\\
	New York, NY 10025 \\
  \texttt{xc2412@columbia.edu} \\
  \And
  Xuan Di \\
	Columbia University\\
	New York, NY 10025 \\
  \texttt{sharon.di@columbia.edu}
}
\begin{document}
\maketitle

\begin{abstract}
Mean field games (MFGs) model interactions in large-population multi-agent systems through population distributions. Traditional learning methods for MFGs are based on fixed-point iteration (FPI), where policy updates and induced population distributions are computed separately and sequentially. However, FPI-type methods may suffer from inefficiency and instability due to potential oscillations caused by this forward-backward procedure. In this work, we propose a novel perspective that treats the policy and population as a unified parameter controlling the game dynamics. By applying stochastic parameter approximation to this unified parameter, we develop SemiSGD, a simple stochastic gradient descent (SGD)-type method, where an agent updates its policy and population estimates simultaneously and fully asynchronously. Building on this perspective, we further apply linear function approximation (LFA) to the unified parameter, resulting in the first population-aware LFA (PA-LFA) for learning MFGs on continuous state-action spaces. A comprehensive finite-time convergence analysis is provided for SemiSGD with PA-LFA, including its convergence to the equilibrium for linear MFGs—a class of MFGs with a linear structure concerning the population—under the standard contractivity condition, and to a neighborhood of the equilibrium under a more practical condition. We also characterize the approximation error for non-linear MFGs. We validate our theoretical findings with six experiments on three MFGs.
\end{abstract}

\section{Introduction} 
\label{sec:intro}

Mean field games (MFGs) \citep{huang2006Largepopulation,lasry2007Meanfield} offer a tractable framework for modeling multi-agent systems with a large homogeneous population. In MFGs, the impact of other agents on a particular agent is encapsulated by a \emph{population mass} (or \emph{mean field}), providing a reliable approximation of actual interactions among agents when the number of agents is large.
Consequently, understanding the \emph{population} is fundamental in MFGs, as learning these games entails considering both the agent's policy and the population dynamics.

Prior work on learning MFGs has mainly focused on fixed-point iteration (FPI) methods and their variations, which is characterized by a \emph{forward-backward} procedure that \emph{alternately} calculate the policy update w.r.t. a \emph{fixed} population and the induced population distribution w.r.t. a \emph{fixed} policy \citep{guo2019Learningmeanfield,elie2019mfg,perrin2020Fictitiousplay,xie2021LearningPlaying,cui2022ApproximatelySolving,lauriere2022ScalableDeep,anahtarci2023Qlearningregularized}.
However, FPI-type methods face several limitations:
\begin{enumerate*}[label=\upshape\arabic*\upshape)]
	\item The policy learning and population learning in FPI-type methods typically involves distinct iterative processes, and are implemented separately and executed sequentially, potentially increasing the overall \emph{computational burden} \citep{maoMeanFieldGame,zaman2023OraclefreeReinforcement}.		\item Vanilla FPI methods suffer from \emph{instability}.
	      As the policy or population is fixed while updating the other, the differences between updates in consecutive iterations can be drastic, causing oscillations in the learning process, a phenomenon commonly observed in practice \citep{cui2022ApproximatelySolving}.
	\item Separating the forward and backward processes prevents us from applying abundant methods developed for policy learning, such as function approximation \citep{maoMeanFieldGame,huang2023StatisticalEfficiency}, to population learning.
\end{enumerate*}

This work delves into the rapidly growing field of online learning for MFGs \citep{maoMeanFieldGame,angiuli2022Unifiedreinforcement,zaman2023OraclefreeReinforcement,yardim2023policy,zhang2024SingleOnline,zeng2024learning}, where an online agent interacts with the environment and gathers observations on a Markov chain to update its policy and population estimate.
A key aspect of our approach is recognizing that the Markov chain is jointly \emph{parameterized} by both the policy and population distribution, which together define the transition kernel. From this perspective, finding the mean field equilibrium (MFE) becomes a problem of identifying the \emph{optimal parameter}---the fixed point of the Bellman and transition operators. This viewpoint unlocks a large toolbox of stochastic parameter approximation methods for learning MFGs, including simple stochastic gradient descent (SGD)-type methods. Specifically, by treating the policy and population as a \emph{unified parameter}, we update both estimates \emph{fully asynchronously} using the same batch of samples, thereby eliminating the need for a forward-backward process.

Building on this perspective, we introduce another powerful technique: linear function approximation (LFA), a widely used approach in stochastic approximation methods for policy learning on large or continuous state-action spaces \citep{melo2008analysisreinforcement,jin2019ProvablyEfficient}. By applying LFA to the unified parameter---encompassing both the policy and population---we develop the first \emph{population-aware} LFA (PA-LFA) for learning MFGs on continuous state-action spaces.
This is particularly significant, as many common MFGs, such as autonomous driving \citep{huang2021driving,chen2023Learningdual,chen2023ST,Mo2024second}, flocking \citep{perrin2021MeanField}, and crowd modeling \citep{lachapelle2011crowd,burger2013mean}, inherently involve continuous state-action spaces. With PA-LFA, our method enables a simple, fully online, model-free SGD-type method for learning MFGs in these settings, supported by strong theoretical guarantees.

While elegant, treating the policy and population as a unified parameter raises several questions:
\setlist[itemize]{leftmargin=0.5cm}
\begin{enumerate*}[label=\upshape\arabic*\upshape)]
	\item Updating policy and population estimates fully asynchronously creates a \emph{strong coupling} between the two, which is absent in forward-backward or multi-loop structures \citep{yardim2023policy,zhang2024SingleOnline} or is alleviated in multi-timescale approaches \citep{maoMeanFieldGame,angiuli2023ConvergenceMultiScale,zaman2023OraclefreeReinforcement}.
	      How can we analyze this coupling and the non-stationary Markov chain it generates to establish convergence guarantees?
	\item Empirically, our method---a simple SGD-type method without additional stabilization mechanisms---outperforms vanilla FPI in \emph{stability} and \emph{efficiency} (\cref{sec:exp}). What theoretical explanation underpins this?
	\item How can we design PA-LFA for continuous state-action spaces such that each population update maintains comparable \emph{operational complexity} to that in the finite state-action case?
	\item How does PA-LFA influence convergence? Under what conditions does it converge to an MFE, and if it doesn't, how large is the \emph{approximation error}?
\end{enumerate*}

\textbf{Main results.}
With the above questions in mind, we highlight the key contributions of this work:

\setlist[itemize]{leftmargin=0.5cm}
\begin{itemize}
	\item We propose SemiSGD, a simple online SGD-type method for learning MFGs. We innovatively treat the policy and the population as a unified parameter. Algorithmically, we update both simultaneously and asynchronously using the same online observations and learning rate, thus eliminating the forward-backward process typical of FPI methods.
	      Theoretically, we show that the unified parameter in SemiSGD follows a \emph{descent direction} towards the MFE, whereas neither the policy nor population alone is guaranteed to do so (\cref{lem:descent}).
	      This gives a potential explanation for the stability and efficiency of SemiSGD over vanilla FPI (\cref{sec:exp}).
	\item We formulate a novel framework of linear MFGs, characterizing a class of MFGs with linear structure concerning the population measure.
	      Linear MFGs accommodates continuous state-action spaces and includes MFGs on finite state-action spaces as a special case.
	      We prove that linear MFGs enable linear parameterization of both the value function and the population measure.
	\item We extend the linear parameterization to develop a \emph{population-aware linear function approximation (PA-LFA)} for general MFGs.
	      SemiSGD equipped with PA-LFA is the first method to apply LFA to the population measure in MFGs.
	      Notably, updates in SemiSGD with PA-LFA maintain the same operation complexity as in the finite state-action space case, highlighting the simplicity and efficiency of our method.
	\item Finite-time convergence analysis is provided for SemiSGD with PA-LFA.
	      We novelly regard the learning process as a \emph{stochastic approximation on a non-stationary Markov chain parameterized by the unified parameter.}
	      This perspective enables a straightforward SGD-type analysis, elegantly handling the strong coupling between the policy and population and offering insights into the learning dynamics of SemiSGD.
	      We prove that, under a contractivity condition no stronger than prior work, SemiSGD converges to the MFE.
	      The contractivity condition can be hard to verify in practice and potentially implies large regularization.
	      In response, we propose a new condition that is more practical.
	      This condition allows general non-regularized policies, under which SemiSGD converges to a neighborhood centered at the MFE.
	      In both scenarios, SemiSGD enjoys state-of-the-art sample complexities.
	\item For non-linear MFGs, SemiSGD with PA-LFA converges to the \emph{projected} MFE (or its neighborhood).
	      We characterize the distance between this projected MFE and the actual MFE in terms of the intrinsic approximation error in the linear representation.
	\item We conduct six experiments on three MFGs to demonstrate the properties of our methods:
	      \begin{enumerate*}[label=\upshape\arabic*\upshape)]
		      \item For continuous state-action spaces, PA-LFA is more efficient and accurate than discretization;
		      \item SemiSGD automatically stabilizes without additional mechanics, achieves a higher accuracy, and is faster by eliminating the forward-backward procedure.
	      \end{enumerate*}
\end{itemize}

\textbf{Related work.}
Learning MFGs has garnered increasing attention, with recent comprehensive reviews by \citet{lauriere2022LearningMean,cui2022SurveyLargePopulation}.
To address the instability issue of FPI, various stabilization mechanisms have been proposed, broadly categorized into: 1) regularization \citep{cui2022ApproximatelySolving,anahtarci2023Qlearningregularized}; 2) fictitious play \citep{elie2019mfg,perrin2020Fictitiousplay,cardaliaguet2017Learningmean}; and 3) mirror descent \citep{perolat2021ScalingMean,xie2021LearningPlaying,yardim2023policy,lauriere2022ScalableDeep}.
Recently, \citet{angiuli2022Unifiedreinforcement,zaman2023OraclefreeReinforcement,yardim2023policy,zhang2024SingleOnline} developed online oracle-free methods for the population learning in MFGs, where a population estimate is updated using online observations without requiring an oracle to manipulate the population or directly return the population measure.
\citet{angiuli2023ConvergenceMultiScale,zaman2023OraclefreeReinforcement,maoMeanFieldGame,zeng2024learning} further proposed single-loop multi-timescale schemes to asynchronously update the policy and population estimates.
On the policy learning side, \citet{maoMeanFieldGame,huang2023StatisticalEfficiency} considered function approximation, though they both assume access to the population measure rather than learning it.
An extended literature review with detailed comparisons of existing setups and methods is provided in \cref{apx:lit}.

\textbf{Notation.}
A complete list of symbols and their meanings is provided in \cref{tb:notation}.
We denote the set of probability measures on a space $\mathcal{X}$ as $\mathcal{D}(\mathcal{X})$, and the set of signed measures as $\M(\mathcal{X})$.
When $\mathcal{X}$ is finite with $d$ elements, the probability simplex on it is denoted as $\Delta^{d} \subset \R^{d}$.
$\delta_{x}$ is the Dirac delta measure at $x$.
Without subscript, the inner product denotes the vector inner product $\left< x,y \right> = x^{T}y$.
$x\oplus y$ is the direct sum of elements in linear spaces, which reduces to their concatenation $(x;y)$ when $x$ and $y$ are vectors.
The norm without subscript denotes the $\ell_2$ norm for vectors, and $\ell_{2}$ operator norm for matrices.
For (vector-valued) functions on $\S$, the $L_2$ inner product is defined as $\left< f,g \right>_{L_2} = \int_\S f(s)g(s)\d s$,
and the $L_1$ norm is denoted as $\|f\|_{\mathrm{TV}} = \int_{\S} \|f(s)\|_1 \d s$.

\section{Stochastic semi-gradient descent for MFGs on finite state-action spaces} \label{sec:gd_finite}

\subsection{Revisit online learning for MFGs on finite state-action spaces}

We consider an infinite-horizon Markov decision process (MDP) denoted by \((\mathcal{S},\mathcal{A}, r, P, \gamma)\), with the state space \(\mathcal{S}\) and action space \(\mathcal{A}\) being finite.
In MFGs, the reward function $r$ and transition kernel $P$ depend on the population distribution over the state space.
Specifically, for a given state-action pair $(s,a)\in\S\times \A$ and population distribution $\mu\in\D(\S)$,
$r(s,a,\mu)$ and $P(s'\given s,a,\mu)$ denote the reward received and the probability of transitioning to state $s'\in\S$.
We consider a bounded reward function with $\|r\|_{\infty}\leq R$.
$\gamma$ is the discount factor.

An agent in an MFG aims to find a policy $\pi$, which maps a state to a distribution on the action space determining the agent's actions,
that maximizes its expected cumulative discounted reward while interacting with the population.
We utilize a value-based approach to calculate policies and assume access to a policy operator $\Gamma_{\pi}$ \citep{zou2019Finitesampleanalysis,zhang2024SingleOnline} that returns a policy based on an (action-)value function $Q\colon \S\times \A\to \R$.
We write $\pi_{Q}\coloneqq \Gamma_{\pi}(Q)$.
We define two operators for any value functions $Q,Q'\in\R^{|\S|\times |\A|} $ and population measures $M,M'\in\D(\S)$:
\begin{align}
	\T_{(Q,M)} Q'(s,a) \coloneqq & \E{(Q,M)}\left[r(s,a,M) \!+\! \gamma Q'(s',a')\right], \text{ with } a' \!\sim\! \pi_{Q}, s' \!\sim\! P(\cdot\given s,a,M) \tag{Bellman}\label{eq:bellman} \\
	\P_{(Q,M)} M'(s') \coloneqq  & \E{(M',Q)}\left[ P(s'\given s,a,M)\right], \text{ with }s \!\sim\! M', a \!\sim\! \pi_{Q} \tag{Transition}\label{eq:transition}
	.\end{align}
Then our learning goal, the mean field equilibrium (MFE), is defined as the fixed point of these two operators.
With an argmax policy operator, the fixed point of the Bellman operator satisfies the Bellman optimality equation, leading to the standard MFE definition in the reward-maximizing setting \citep{lauriere2022ScalableDeep,angiuli2023ConvergenceMultiScale}; with a regularized policy operator, e.g., softmax, the fixed point corresponds to the regularized MFE \citep{cui2022ApproximatelySolving,zaman2023OraclefreeReinforcement}. See \cref{apx:discuss-mfe} for more discussions on our MFE definition.

\begin{definition}[Mean field equilibrium]\label{def:mfe}
	A value function-population distribution pair \((Q,M)\) is a \emph{mean field equilibrium (MFE)} if it satisfies
	$Q = \T_{(Q,M)}Q$ and $M = \P_{(Q,M)}M$.
\end{definition}

A typical FPI-type method (approximately) calculates the fixed point of the Bellman operator given the current population distribution, i.e., the best response, and the fixed point of the transition operator given the current value function, i.e., the induced population distribution, alternately.
Under certain contractivity conditions, FPI converges to the MFE \citep{guo2019Learningmeanfield}.

This work focuses on model-free online learning methods for MFGs.
In such methods, an agent maintains estimates about the value function and population measure, and uses online observations to update its estimates.
For the value function, temporal difference (TD) methods \citep{sutton2018Reinforcementlearninga} are widely used.
Given an online observation tuple $O = (s,a,s',a')$ and step-size $\alpha$, on-policy TD (or SARSA \citep{sutton2018Reinforcementlearninga}) updates the Q-value function as follows:
\[\label{eq:q-update-finite}
	Q(s,a) \leftarrow Q(s,a) - \alpha \g(Q;O),
	\ \ \text{with}\ \ \g(Q; O) \coloneqq Q(s,a) - r(s,a,M) - \gamma Q(s',a').
\]
For the population distribution, Monte Carlo (MC) sampling \citep{latuszynski2013nonasymptotic} is a common choice.
Given a new state sample $s'$ and step-size $\alpha$, MC updates the population measure as follows:
\[\label{eq:m-update-finite}
	M \leftarrow M - \alpha \g(M;O),
	\ \ \text{with}\ \ \g(M;O) \coloneqq M - \delta_{s'}
	,\]
where $\delta_{s'}$ is the Dirac delta measure and is a one-hot vector for finite state spaces.

TD and MC updates are widely used in online learning for MFGs \citep{angiuli2022Unifiedreinforcement,zaman2023OraclefreeReinforcement,zhang2024SingleOnline}.
Fixing the policy and population measure (i.e., fixing the Markov kernel), TD converges to the optimal value function, and MC converges to the induced population distribution.

\subsection{Stochastic semi-gradient descent}\label{sec:sgd}

Combining \cref{eq:q-update-finite,eq:m-update-finite} in a single pass gives a simple SGD-type method.
$\g$ in \cref{eq:q-update-finite,eq:m-update-finite} is referred as semi-gradients, as it does not represent the actual gradient of any loss function but provides a similar descent direction to the \emph{stationary point}, i.e., the zero point of $\g$.%
\footnote{With a slight abuse of notation, we use a single operator $\g$ to return semi-gradients throughout the paper, for both the action-value function and population measure; it should be clear from the context and its arguments which one it refers to.}%
We term our method SemiSGD and present it in \cref{alg},
where $(Q_t;M_t)$ denotes the concatenated representation of the value function and population measure, as they are both vectors for finite state-action spaces.
Similarly, $\g$ represents the concatenation of the semi-gradients for the value function and population measure.

\begin{algorithm}[ht]
	\caption{Stochastic semi-gradient descent (SemiSGD)} \label{alg}
	\Input: Initial value function $Q_{0}$($=\left< \phi,\theta_0 \right>$), population estimate $M_{0}$($=\left< \psi,\eta_0 \right>$), and state $s_0$.\;
	\For{$t=0,1,\dots,T$}{
	Observe $a_t \!\sim\! \Gamma_{\pi}(Q_t)[s_t],r_t = r(s_t,a_t,M_t),s_{t+1} \!\sim\! P(\cdot \given s_t,a_t,M_t), a_{t+1}\!\sim\!\Gamma_{\pi}(Q_t)[s_{t+1}]$. \label{line:update}\;
	Update for finite MFGs: $(Q_{t+1};M_{t+1}) = (Q_t;M_t) - \alpha_t \g((Q_t;M_t); s,a,s',a')$ ;\\
	\quad or with PA-LFA: $\xi_{t+1} = \Pi(\xi_{t} - \alpha_{t} \g_t(\xi_{t}))$.
	}
	\Return $(Q_T,M_T)$ or $\xi_{T}$.
\end{algorithm}

\cref{alg} is an online, single-loop, uni-timescale method that is free from any forward-backward process.
Not only does it update the value function and population measure estimates in an SGD-like manner, but it provides an actual descent direction through the \emph{mean-path} semi-gradient, defined as $\bar{\g}_{(Q,M)}(\cdot) \coloneqq \EE_{(Q,M)} \g(\cdot ; O)$, where $O$ is the online observation tuple following the steady distribution induced by the policy and transition kernel determined by $Q$ and $M$.

\begin{lemma}[Descent direction; informal]\label{lem:descent}
	Suppose $(Q_{*};M_{*})$ is an MFE.
	Suppose the reward function, transition kernel, and policy operator are Lipschitz continuous with Lipschitz constant $L$.
	For any value function $Q$ and population measure $M$, with $\Delta Q\coloneqq Q-Q_{*}$ and $\Delta M\coloneqq M-M_{*}$, we have
	\[
		\begin{aligned}
			-\left< \Delta M, \bar{\g}_{{(Q,M)}}(M)\right> \lesssim    & - \|\Delta M\|^{2} + L\|\Delta M\|(\|\Delta Q\| + \|\Delta M\|),   \\
			-\left< \Delta Q, \bar{\g}_{{(Q,M)}}(Q)  \right>  \lesssim & -  \|\Delta Q\|^{2} + L\|\Delta Q\|( \|\Delta Q\| + \|\Delta M\| )
			.\end{aligned}
	\]
	Due to the coupling between $Q$ and $M$, neither $-\bar{\g}_{{(Q,M)}}(M)$ nor $-\bar{\g}_{{(Q,M)}}(Q)$ is guaranteed to be a descent direction, no matter how small the Lipschitz constant $L$ is.
	However, if $L \lesssim 1 /2$, adding the two inequalities gives
	\[
		-\left< {(Q;M)}-{(Q_*;M_*)}, \bar{\g}_{{(Q,M)}}({(Q;M)}) \right> \lesssim -\|{(Q;M)}-{(Q_*;M_*)}\|^{2}/2
		.\]
\end{lemma}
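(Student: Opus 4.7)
The plan is to compute the mean-path semi-gradients explicitly, decompose each around the MFE $(Q_*,M_*)$, control the resulting cross-terms using the Lipschitz hypotheses on $r$, $P$, and $\Gamma_\pi$, and finally add the two estimates so that the quadratic form on the unified parameter closes up.

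First I would identify the mean-path quantities. At the steady state of the Markov chain parameterized by $(Q,M)$, the expected updates from \cref{eq:q-update-finite,eq:m-update-finite} reduce to $\bar{\g}_{(Q,M)}(M) = M - \mu_{Q,M}$ and $\bar{\g}_{(Q,M)}(Q) = D_{Q,M}\bigl(Q - \T_{(Q,M)}Q\bigr)$, where $\mu_{Q,M}$ is the stationary state distribution of the chain induced by the policy $\pi_{Q}$ and the transition kernel $P(\cdot\given s,a,M)$, and $D_{Q,M}$ is the diagonal matrix of steady-state occupancies on $\S\times\A$. The MFE fixed-point conditions translate exactly to $M_{*} = \mu_{Q_*,M_*}$ and $Q_{*} = \T_{(Q_*,M_*)}Q_{*}$, so both mean-path semi-gradients vanish at $(Q_*,M_*)$.

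For the $M$-inequality, I would decompose $\bar{\g}_{(Q,M)}(M) = \Delta M - (\mu_{Q,M} - M_{*})$ and take the inner product with $-\Delta M$, producing $-\|\Delta M\|^{2} + \langle \Delta M, \mu_{Q,M} - M_{*}\rangle$. Cauchy--Schwarz reduces the remaining term to a stationary-distribution perturbation bound $\|\mu_{Q,M} - \mu_{Q_*,M_*}\| \lesssim L(\|\Delta Q\| + \|\Delta M\|)$, which I would obtain by combining the Lipschitz hypotheses on $P$ and $\Gamma_\pi$ with a standard sensitivity estimate for stationary distributions of ergodic Markov chains (fundamental matrix / Kemeny-type argument). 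For the $Q$-inequality, I would expand $Q - \T_{(Q,M)}Q = \Delta Q + (\T_{(Q_*,M_*)}Q_{*} - \T_{(Q,M)}Q_{*}) + (\T_{(Q,M)}Q_{*} - \T_{(Q,M)}Q)$: the first bracket is Lipschitz in $(Q,M)$ with constant $\lesssim L$, via the Lipschitz constants of $r$, $P$, and $\Gamma_\pi$ entering $\T$ through the reward and transition sensitivities; the second bracket is controlled by the $\gamma$-contraction of $\T_{(Q,M)}$ in its $Q$-argument. Absorbing the discount factor and the uniform lower bound on the occupancies in $D_{Q,M}$ into ``$\lesssim$'', Cauchy--Schwarz delivers the stated inequality.

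Finally, I would sum the two inequalities. Writing $x = \|\Delta Q\|$ and $y = \|\Delta M\|$, the cross-terms aggregate to $L(x+y)^{2} \le 2L(x^{2}+y^{2})$, so the total is bounded by $-(1 - 2L)(x^{2}+y^{2})$; the threshold $L \lesssim 1/2$ then produces the claimed $-\tfrac{1}{2}\|(Q;M) - (Q_{*};M_{*})\|^{2}$. The failure of the individual inequalities to guarantee descent is transparent from this algebra: the cross-term $L\|\Delta Q\|\|\Delta M\|$ can exceed $\|\Delta M\|^{2}$ whenever $\|\Delta M\| \ll \|\Delta Q\|$ (and symmetrically for $Q$), so only the sum exhibits a clean coercive quadratic. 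I expect the principal technical obstacle to be the stationary-distribution perturbation bound in the $M$-step: converting per-step Lipschitz control on the transition kernel into a linear bound on $\mu_{Q,M}$ demands a uniform ergodicity / spectral-gap assumption which is implicit in the informal statement and must be made explicit in the formal version; a parallel care is needed for the implicit dependence of $D_{Q,M}$ on $(Q,M)$ in the $Q$-step.
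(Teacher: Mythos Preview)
Your proposal is correct and follows essentially the same route as the paper's formal proof (\cref{lem:descent-apx}): identify the mean-path semi-gradients, decompose around the MFE, bound cross-terms via Lipschitz continuity of $r$, $P$, $\Gamma_\pi$ together with a stationary-distribution perturbation estimate (the paper uses Mitrophanov's bound, which is exactly the ``fundamental matrix / Kemeny-type'' argument you flag), and sum the two inequalities so that the mixed terms $\|\Delta Q\|\|\Delta M\|$ are absorbed by the diagonal ones.

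There is one substantive variation worth noting. In the $Q$-step you anchor the quadratic at the \emph{current} point, writing the negative term as $-\langle \Delta Q, D_{Q,M}(I-\gamma P^{\pi}_{Q,M})\Delta Q\rangle$; this forces you to assume a uniform lower bound on $\lambda_{\min}(D_{Q,M})$ over \emph{all} $(Q,M)$, which you correctly identify as an implicit extra hypothesis. The paper instead anchors at the MFE, writing $\bar{\g}_{\xi}(\theta)-\bar{\g}_{\xi_*}(\theta_*)=\bar G_{\xi_*}\Delta\theta+(\bar G_{\xi}-\bar G_{\xi_*})\theta-(\bar b_{\xi}-\bar b_{\xi_*})$, so that the coercive piece is $-\langle\Delta\theta,\bar G_{\xi_*}\Delta\theta\rangle$ and only $\lambda_{\min}(\hat G_{\xi_*})$ at the equilibrium enters the constant $w$; the $(Q,M)$-dependence of the occupancy is pushed entirely into the Lipschitz cross-term via $\|\bar G_{\xi}-\bar G_{\xi_*}\|$. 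This buys a sharper constant (no infimum over the parameter space) at no extra cost, and is worth adopting in the formal write-up.
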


A formal and generalized version of \cref{lem:descent} for linear MFGs, which encompasses finite MFGs (\cref{exp:finite}), is proved in \cref{apx:key}.
We now discuss some key insights from \cref{lem:descent}.

\WFclear
\begin{wrapfigure}{r}{0.35\textwidth}
	\centering
	\vspace{-0.5em}
	\resizebox{1\linewidth}{!}{
%

\tikzset{every picture/.style={line width=0.75pt}} 

\begin{tikzpicture}[x=0.75pt,y=0.75pt,yscale=-1,xscale=1]

\draw [line width=1.6]    (70,50) -- (100,170) ;
\draw [line width=1.6]    (200,50) -- (170,170) ;
\draw [line width=1.6]  [dash pattern={on 1.69pt off 2.76pt}]  (100,170) -- (110,210) ;
\draw [line width=1.6]  [dash pattern={on 1.69pt off 2.76pt}]  (170,170) -- (160,210) ;
\draw [line width=1.6]    (200,50) -- (83.88,79.03) ;
\draw [shift={(80,80)}, rotate = 345.96] [fill={rgb, 255:red, 0; green, 0; blue, 0 }  ][line width=0.08]  [draw opacity=0] (15.6,-3.9) -- (0,0) -- (15.6,3.9) -- cycle    ;
\draw [line width=1.6]    (80,80) -- (186.06,99.28) ;
\draw [shift={(190,100)}, rotate = 190.3] [fill={rgb, 255:red, 0; green, 0; blue, 0 }  ][line width=0.08]  [draw opacity=0] (15.6,-3.9) -- (0,0) -- (15.6,3.9) -- cycle    ;
\draw [line width=1.6]    (190,100) -- (93.83,128.85) ;
\draw [shift={(90,130)}, rotate = 343.3] [fill={rgb, 255:red, 0; green, 0; blue, 0 }  ][line width=0.08]  [draw opacity=0] (15.6,-3.9) -- (0,0) -- (15.6,3.9) -- cycle    ;
\draw [line width=1.6]    (90,130) -- (166.12,149.03) ;
\draw [shift={(170,150)}, rotate = 194.04] [fill={rgb, 255:red, 0; green, 0; blue, 0 }  ][line width=0.08]  [draw opacity=0] (15.6,-3.9) -- (0,0) -- (15.6,3.9) -- cycle    ;
\draw [line width=1.6]    (255,50) -- (285,170) ;
\draw [line width=1.6]    (380,50) -- (350,170) ;
\draw [line width=1.6]  [dash pattern={on 1.69pt off 2.76pt}]  (285,170) -- (295,210) ;
\draw [line width=1.6]  [dash pattern={on 1.69pt off 2.76pt}]  (350,170) -- (340,210) ;
\draw [line width=1.6]    (320,60) -- (302.83,77.17) ;
\draw [shift={(300,80)}, rotate = 315] [fill={rgb, 255:red, 0; green, 0; blue, 0 }  ][line width=0.08]  [draw opacity=0] (15.6,-3.9) -- (0,0) -- (15.6,3.9) -- cycle    ;
\draw [line width=1.6]    (320,60) -- (337.17,77.17) ;
\draw [shift={(340,80)}, rotate = 225] [fill={rgb, 255:red, 0; green, 0; blue, 0 }  ][line width=0.08]  [draw opacity=0] (15.6,-3.9) -- (0,0) -- (15.6,3.9) -- cycle    ;
\draw [line width=1.6]    (320,60) -- (320,96) ;
\draw [shift={(320,100)}, rotate = 270] [fill={rgb, 255:red, 0; green, 0; blue, 0 }  ][line width=0.08]  [draw opacity=0] (15.6,-3.9) -- (0,0) -- (15.6,3.9) -- cycle    ;
\draw [line width=1.6]  [dash pattern={on 1.69pt off 2.76pt}]  (300,80) -- (320,100) ;
\draw [line width=1.6]  [dash pattern={on 1.69pt off 2.76pt}]  (340,80) -- (320,100) ;
\draw [line width=1.6]    (320,100) -- (320,136) ;
\draw [shift={(320,140)}, rotate = 270] [fill={rgb, 255:red, 0; green, 0; blue, 0 }  ][line width=0.08]  [draw opacity=0] (15.6,-3.9) -- (0,0) -- (15.6,3.9) -- cycle    ;
\draw [line width=1.6]    (320,140) -- (320,176) ;
\draw [shift={(320,180)}, rotate = 270] [fill={rgb, 255:red, 0; green, 0; blue, 0 }  ][line width=0.08]  [draw opacity=0] (15.6,-3.9) -- (0,0) -- (15.6,3.9) -- cycle    ;

\draw (100,212.4) node [anchor=north west][inner sep=0.75pt]  [font=\LARGE]  {$\left( \pi _{*} ,\mu _{*}\right)$};
\draw (139.5,27.5) node  [font=\LARGE] [align=left] {FPI-type};
\draw (57.5,43) node  [font=\LARGE]  {$\pi _{0}$};
\draw (214.87,43) node  [font=\LARGE]  {$\mu _{0}$};
\draw (283,212.4) node [anchor=north west][inner sep=0.75pt]  [font=\LARGE]  {$\left( \pi _{*} ,\mu _{*}\right)$};
\draw (315.5,27.5) node  [font=\LARGE] [align=left] {SGD-type};
\draw (242.5,43) node  [font=\LARGE]  {$\pi _{0}$};
\draw (394.87,43) node  [font=\LARGE]  {$\mu _{0}$};
\draw (264,42.4) node [anchor=north west][inner sep=0.75pt]  [font=\large]  {$-\mathfrak{g}_{0}( \theta _{0});$};
\draw (323,42.4) node [anchor=north west][inner sep=0.75pt]  [font=\large]  {$-\mathfrak{g}_{0}( \eta _{0})$};
\draw (62.5,83) node  [font=\LARGE]  {$\pi _{1}$};
\draw (202.5,103) node  [font=\LARGE]  {$\mu _{1}$};
\draw (77.5,133) node  [font=\LARGE]  {$\pi _{2}$};
\draw (192.5,153) node  [font=\LARGE]  {$\mu _{2}$};
\draw (308.5,103) node  [font=\LARGE]  {$\xi _{1}$};
\draw (308.5,137) node  [font=\LARGE]  {$\xi _{2}$};
\draw (308.5,173) node  [font=\LARGE]  {$\xi _{3}$};

\end{tikzpicture}

	\caption{Learning dynamics.}
	\label{fig:dyna}
	\vspace{-0.5em}
\end{wrapfigure}

\textbf{Descent direction.}
The concatenated semi-gradient points to a descent direction for the concatenated estimate $(Q;M)$, while neither the value function nor population measure alone is guaranteed to follow a descent direction, no matter how small the Lipschitz constant is when far from equilibrium (i.e., large $\|M-M_{*}\|$ and $\|Q-Q_{*}\|$).
This strongly suggests treating the value function and population measure as a unified parameter and updating them simultaneously at the same rate.
The small Lipschitz constant condition $L\lesssim 1 /2$ in the lemma, formalized as \cref{asmp:small-lip} in \cref{sec:sample},  ensures contractivity.
Thus, while both methods converge under the contractivity assumption, SemiSGD follows a descent direction but FPI may not, hinting the \emph{sample efficiency} of SemiSGD.
See \cref{fig:dyna} for an illustration.

\textbf{Incremental update by design.}
Incrementally updating parameters or damping update steps, rather than switching to entirely new estimates, is a common stabilization technique in learning MFGs \citep{lauriere2022LearningMean}, used in methods like fictitious play and mirror descent \citep{lauriere2022ScalableDeep}.
As an SGD-type method, updates in SemiSGD are incremental by design.
This, combined with the previous point, suggests that SemiSGD enjoys \emph{automatic stabilization} without needing additional stabilization mechanisms, as confirmed by our experiments (\cref{sec:exp}).
These two insights also offer a potential explanation for the oscillation issues seen in FPI-type methods.

\textbf{Stochastic approximation on non-stationary Markov chains.}
The lemma highlights that the value function and population measure jointly shape the landscape of learning MFGs and should be treated as a unified parameter.
SemiSGD naturally fits as a \emph{stochastic approximation} method on a \emph{non-stationary} Markov chain for finding the equilibrium parameter, with non-stationarity arising from the changing transition kernel as the value function and population measure are updated.
Two-timescale approaches are used to mitigate the non-stationarity and parameter coupling \citep{angiuli2023ConvergenceMultiScale,zaman2023OraclefreeReinforcement,maoMeanFieldGame}.
Using a unified parameter, our approach enables a simpler uni-timescale scheme with a straightforward SGD-type analysis that accounts for coupling and achieves better sample complexities (see \cref{table:comparison}).
Moreover, for general policy operators, which may fail the small Lipschitz constant condition, the stochastic approximation is on a \emph{rapidly changing} Markov chain \citep{zhang2023ConvergenceSARSA}.
Building on the results for this class of methods, we are able to, for the first time, characterize the finite-time convergence performance of learning MFGs without a contractivity or monotonicity condition.

\section{Linear mean field games} \label{sec:lmfg} 

We now extend our setup beyond finite state-action spaces. Viewing the value function and population measure as a unified parameter, we can naturally extend linear MDP \citep{jin2019ProvablyEfficient} and LFA for the value function to handle population measures on large or continuous state spaces. Before extending LFA to population measures in the next section, we first introduce linear MFGs, a class of MFGs based on linear MDPs with a linear structure in the transition kernel w.r.t. the population measure.

\begin{definition}[Linear mean field games] \label{def:lmfg}
	An MDP $(\mathcal{S},\mathcal{A},P,r,\gamma)$ is a linear MDP \citep{jin2019ProvablyEfficient} with feature map $\phi\colon \S\times\A\to \R^{d_1}$ if there exists $d_1$ (signed) population-dependent measures ${\omega}_{M} = (\omega_{M}^{(i)})_{i=1}^{d_1}\in \M(\S)^{d_1}$ and an \textit{unknown} population-dependent vector ${\nu}_{M} \in \R^{d_1}$, such that for any state-action pair $(s,a)\in\S\times \A$ and population distribution $M\in\mathcal{D}(\S)$, we have
	\[
		P(s'\given s,a,M) = \left< \phi(s,a), {\omega}_{M}(s') \right>,
		\qquad
		r(s,a,M) = \left< \phi(s,a), {\nu}_{M}\right>
		.\]
	A linear MFG further assumes a measure basis $\psi \in \D(\S)^{d_2}$ such that for any population $M$, there exists an \textit{unknown} matrix $\Omega_{M}\in\R^{d_1 \times d_2}$ such that ${\omega}_{M} = \Omega_{M} \psi$, indicating that $P(s'\given s,a,M) = \left< \phi(s,a) ,\Omega_{M}\psi(s')\right>$.
	We require $\phi$ to be $L_{\infty}$ and $\psi$ to be $L_{\infty}$ and $L_2$ (thus its Gram matrix exists).
	Without loss of generality, we assume $\sup_{s,a}\|\phi(s,a)\|_{2} \le 1$ and $\sup_{s}\|\psi(s)\|_{1} \le F$.
\end{definition}

A linear MFG assumes MDP components, the transition kernel and reward function, lie in the linear span of known basis functions,
giving a linear structure to the value function and population measure.

\begin{proposition}\label{prop:linear}
	For a linear MFG, for any population distribution $M$ and policy $\pi$, we denote $Q^{\pi}_{M}$ as the action-value function and $\mu^{\pi}_{M}$ as the induced population measure w.r.t. the MDP determined by $M$ and $\pi$.
	Then, there exist vector parameters $\theta\in\R^{d_1}$ and $\eta\in\R^{d_2}$ such that
	\[
		Q^{\pi}_{M}(s,a) = \left< \phi(s,a), \theta \right>,
		\qquad
		\mu^{\pi}_{M}(s') = \left< \psi(s'), \eta \right>, \qquad \forall (s,a,s')\in \S\times \A\times \S
		.\]
\end{proposition}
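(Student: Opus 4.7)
\textbf{Proof plan for Proposition \ref{prop:linear}.}
The plan is to prove the two linear representations separately, invoking the fixed-point characterizations of $Q^{\pi}_{M}$ and $\mu^{\pi}_{M}$ and then substituting the linear structure of the transition kernel and reward supplied by \cref{def:lmfg}.

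For the value function, the approach is standard for linear MDPs. First I would write the Bellman equation $Q^{\pi}_{M}(s,a) = r(s,a,M) + \gamma \int P(s'\mid s,a,M)\, V^{\pi}_{M}(s')\,\d s'$ where $V^{\pi}_{M}(s') = \mathbb{E}_{a'\sim \pi(\cdot\mid s')}[Q^{\pi}_{M}(s',a')]$. Substituting $r(s,a,M) = \langle \phi(s,a), \nu_{M}\rangle$ and $P(s'\mid s,a,M) = \langle \phi(s,a), \omega_{M}(s')\rangle$ and pulling $\phi(s,a)$ out of the integral gives
\[
Q^{\pi}_{M}(s,a) = \Bigl\langle \phi(s,a),\; \nu_{M} + \gamma\!\int\! \omega_{M}(s')\, V^{\pi}_{M}(s')\,\d s'\Bigr\rangle,
\]
so the definition $\theta \coloneqq \nu_{M} + \gamma\int \omega_{M}(s') V^{\pi}_{M}(s')\,\d s' \in \mathbb{R}^{d_{1}}$ yields the claimed representation. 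Since $\|\phi\|_\infty$ is bounded and the integrand is integrable (the state-action measures $\omega_M^{(i)}$ are signed measures and $V^\pi_M$ is bounded), $\theta$ is well-defined.

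For the induced population measure, I would use its characterization as the fixed point $\mu^{\pi}_{M} = \mathcal{P}_{(Q,M)}\mu^{\pi}_{M}$ from \cref{def:mfe} with the policy $\pi = \pi_{Q}$. Expanding this,
\[
\mu^{\pi}_{M}(s') = \int \pi(a\mid s)\, \mu^{\pi}_{M}(s)\, P(s'\mid s,a,M)\,\d s\,\d a = \Bigl\langle \int \pi(a\mid s)\, \mu^{\pi}_{M}(s)\, \phi(s,a)\,\d s\,\d a,\; \omega_{M}(s')\Bigr\rangle.
\]
Denote the inner integral by $v \in \mathbb{R}^{d_{1}}$. Now I would invoke the defining feature of a linear MFG, $\omega_{M} = \Omega_{M}\psi$, to rewrite $\langle v, \omega_{M}(s')\rangle = v^{T}\Omega_{M}\psi(s') = \langle \psi(s'),\, \Omega_{M}^{T} v\rangle$, so setting $\eta \coloneqq \Omega_{M}^{T} v \in \mathbb{R}^{d_{2}}$ produces the desired linear form.

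The only mild subtlety, and the step I would be most careful about, is ensuring that $\mu^{\pi}_{M}$ is well-defined as a fixed point so that $v$ (and hence $\eta$) makes sense; this is implicit in the proposition's assumption that an induced measure exists, and it follows from the operator $\mathcal{P}_{(Q,M)}$ being well-defined on $\mathcal{D}(\S)$ under the integrability assumptions on $\phi,\psi$ in \cref{def:lmfg}. If one prefers a constructive argument, I would instead iterate $\mu_{n+1} = \mathcal{P}_{(Q,M)}\mu_{n}$ from an arbitrary initialization; the computation above shows $\mu_{n}$ lies in the linear span of $\psi$ for every $n\ge 1$, and the linear span is closed, so any limit point inherits the representation. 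No contractivity is needed for the statement itself beyond the existence of $\mu^{\pi}_{M}$.
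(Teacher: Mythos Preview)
Your proposal is correct and essentially identical to the paper's proof: both plug the linear MDP structure into the Bellman fixed-point equation to extract $\theta$, and both plug $P(s'\mid s,a,M)=\langle\phi(s,a),\Omega_M\psi(s')\rangle$ into the stationarity equation for $\mu^{\pi}_M$ and read off $\eta=\Omega_M^{T}\int\phi(s,a)\pi(a\mid s)\mu^{\pi}_M(s)\,\d s\,\d a$. Your intermediate notations $V^{\pi}_M$ and $v$ are cosmetic, and your extra remarks on existence of $\mu^{\pi}_M$ are more careful than the paper, which simply takes it as given.
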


The proof of \cref{prop:linear} is deferred to \cref{apx:prop-linear}.
Remarkably, requiring that the transition kernel being linear w.r.t. a measure basis is essential for the linear structure of the population measure.
In this paper, we reserve letters $\theta$ for the value function parameter and $\eta$ for the population measure parameter.
And we denote the \emph{concatenated parameter} as $\xi = (\eta;\theta)$.
Additionally, we will use parameters ($\eta$, $\theta$, $\xi$) and their corresponding functions ($Q$, $M$, $(Q,M)$) interchangeably in our analysis.
For example, we say $\xi\in\R^{d_1+d_2}$ is a mean field equilibrium (MFE) if its corresponding value function $Q = \left< \phi,\theta \right>$ and population measure $M = \left<\psi,\eta\right>$ satisfy \cref{def:mfe}.

\begin{example}[Linear MDP plus population-independent transition kernel] \label{ex:finite}
	MFGs with a linear MDP and population-independent transition kernel are a trivial example of linear MFGs, where $\Omega_{M} = I$ for all $M \in \D(\S)$.
\end{example}

\begin{example}[Finite state-action space]\label{exp:finite}
	For finite MFGs, let the feature map $\phi$ return the one-hot vector of each state-action pair, i.e., $\phi(s,a) = e_{(s,a)}\in\R^{|\S||\A|}$,
	and the measure basis $\psi$ return the Dirac delta measure at each state, i.e., $\psi(s') = \delta_{s'}\in\Delta^{|\S|}$.
	Construct the matrix $\Omega_{M}\in\R^{|\S||\A|\times |\S|}$ such that $(\Omega_{M})_{(s,a),s'} = P(s'\given s,a,M)$.
	Similarly, construct the vector $\nu_{M}\in\R^{|\S||\A|}$ such that $(\nu_{M})_{(s,a)} = r(s,a,M)$.
	Then, for any $(s,a,s')\in \S\times \A\times \S$, we have
	\[
		P(s'\given s,a,M) = e_{(s,a)}^{T}\Omega_{M}\delta_{s'} = \left< e_{(s,a)}, \Omega_{M}\delta(s') \right>, \quad
		r(s,a,M) = e_{(s,a)}^{T}\nu_{M} = \left< e_{(s,a)}, \nu_{M} \right>
		.\]
\end{example}

\cref{exp:finite} implies that all the analysis for linear MFGs applies to finite MFGs (see also \cref{apx:finite}).

\section{SemiSGD with population-aware linear function approximation} \label{sec:lfa_gd} 

\cref{prop:linear} presents the first linear parameterization of the population measure in MFGs.
Applying this to general MFGs leads to population-aware linear function approximation (PA-LFA).
More discussions on motivations of PA-LFA can be found in \cref{apx:discuss-lfa}.
PA-LFA necessitates a tailored stochastic update rule for the population measure estimate.
Let $M_{*}$ be the objective population measure and define the loss function $\mathcal{L} \coloneqq \frac{1}{2}\|M-M_{*}\|^{2}_{L_2}$. Then, the gradient of the loss is
\[
	\nabla_{\eta}\mathcal{L} = \left< \nabla_{\eta}M, M-M_{*} \right>_{L_2}
	= \left< \psi, \left< \psi,\eta \right> - M_{*} \right>_{L_2}
	.\]
As the MFE population measure $M_{*}$ is unknown, we replace it with the empirical Delta distribution (bootstrapping), giving the semi-gradient
\[\label{eq:g-m}
	\g (\eta; s') = \left< \psi, \left< \psi,\eta \right> - \delta_{s'} \right>_{L_2}
	= \int_{\S}\psi(s)\psi(s)^{T}\eta\d s - \int_{\S}\psi(s)\delta_{s'}(s)\d s
	\eqqcolon G_{\psi}\eta - \psi(s')
	,\]
where $G_{\psi} \coloneqq \int_{\S}\psi(s)\psi(s)^{T}\d s$ is the Gram matrix of measure basis $\psi$.

For finite MFGs, \cref{eq:m-update-finite} retains the updated population measure as a probability vector, which is not necessarily the case when using \cref{eq:g-m} as the semi-gradient, due to the presence of general $\C$ and $\psi(s')$ that may not be an identity matrix or a probability vector.
Therefore, we need to apply a projection to the updated parameter to ensure that it remains a probability vector, giving our stochastic update rule:
\begin{align}\label{eq:m-update}
	\eta_{t+1} = \Pi_{\Delta}(\eta_{t} - \alpha_{t} \g_t(\eta_{t}))
	,\end{align}
where $\g_{t}(\eta)\coloneqq\g(\eta; s_{t+1})$ and $\Pi_{\Delta}$ is the projection operator onto the probability simplex $\Delta^{d_2}$.

\begin{remark}[Operation complexity] \label{rmk:op}
	The simplex projection has a worst-case complexity of $O(d_2^{2})$ \citep{condat2016Fastprojection}.
	In semi-gradient evaluation, $\C$ is a constant matrix, and thus we only need to evaluate $\psi$ at the next state $s'$ and multiply parameter with a fixed matrix, with the matrix-vector multiplication having a complexity of $O(d_2^{2})$.
	Therefore, the total worst-case operation complexity of the update rule \cref{eq:m-update} is $O(d_2^{2})$.
	Moreover, the simplex projection has an expected complexity of $O(d_2)$ \citep{condat2016Fastprojection}.
	And if $\C$ has precomputed properties and/or special structures, such as being sparse with $O(d_2)$ non-zero elements, the complexity of the semi-gradient evaluation is $O(d_2)$, giving a total expected operation complexity of $O(d_2)$, which is the same as that of \cref{eq:m-update-finite} for finite MFGs with a state space of size $|\S| = O(d_2)$.
\end{remark}

Similarly, we can get the TD update rule for the value function parameter $\theta$ \citep{sutton2018Reinforcementlearninga}:
\[
	\g(\theta; O) = \phi(s,a) \left(\left< \phi(s,a) - \gamma\phi(s',a'), \theta \right> - r \right)
	\eqqcolon \G(O)\theta - \b(O)
	,\]
where $O=(s,a,r,s',a')$, and $\G(O) \coloneqq \phi(s,a)(\phi(s,a)-\gamma\phi(s',a'))^{T}$ and $\b(O)\coloneqq \phi(s,a)r$.
The update rule for the action-value function parameter is
\begin{align}\label{eq:q-update}
	\theta_{t+1} = \Pi_{D}(\theta_{t} - \alpha_{t} \g_t(\theta_{t}))
	,\end{align}
where we denote $\g_{t}(\theta)\coloneqq \g(\theta; O_t)$, $O_t = (s_t,a_t,r_t,s_{t+1},a_{t+1})$, and $\Pi_{D}$ is the projection operator onto the Euclidean ball $B_{D}^{d_1} \coloneqq \{ \theta \in \R^{d_1} : \|\theta\| \le D \}$.
Similar to the population measure update, the projection is commonly used with LFA to ensure that the value function parameter remains bounded \citep{bhandari2018FiniteTime,zou2019Finitesampleanalysis}, which is automatically satisfied in finite MFGs (\cref{apx:finite}).
Specifically, we need $D\ge \|\theta_{*}\|$, where $\theta_{*}$ is the MFE value function parameter.

Combining \cref{eq:m-update,eq:q-update} gives the update rule for the unified parameter:
\begin{align}\label{eq:update}
	\xi_{t+1} = \Pi(\xi_{t} - \alpha_{t} \g_t(\xi_{t}))
	\coloneqq \Pi_{B^{d_1} _{D}\times \Delta^{d_2}} \left( \xi_t - \alpha _t \left((\G(O_t)\oplus \C)\xi_t -(\b\oplus \psih )(O_{t}) \right)\right)
	.\end{align}
\cref{eq:update} updates the action-value function and population measure using the same online observations with the same learning rate.
Replacing \cref{line:update} in \cref{alg} with \cref{eq:update} gives SemiSGD with LFA.

The next section shows that SemiSGD converges to a zero point of the mean-path semi-gradient $\bar{\g}_{\xi}(\xi) \coloneqq \EE_{O\sim \mu_{\xi}} \g(\xi; O)$,
where $\mu_{\xi}$ is the observation distribution induced by the parameter $\xi$.
The next proposition states that this point is an MFE, thus validating the derivation in this section.

\begin{proposition}[MFE as a stationary point]\label{prop:station}
	For linear MFGs,
	$\xi$ is an MFE if and only if $\bar{\g}_{\xi}(\xi) = 0$.
\end{proposition}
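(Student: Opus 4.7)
The plan is to decompose the claim into its $\eta$ and $\theta$ components, showing separately that $\bar{\g}_\xi(\eta) = 0 \iff M = \P_{(Q,M)} M$ and $\bar{\g}_\xi(\theta) = 0 \iff Q = \T_{(Q,M)} Q$. The crucial observation underlying both directions is that when the unified parameter is held at $\xi$, the observation distribution $\mu_\xi$ is the stationary distribution of the Markov chain induced by $\pi_Q$ and $P(\cdot\given\cdot,\cdot,M)$, so both $s$ and $s'$ have marginal equal to the stationary state distribution $\mu^{*}_{\xi}\coloneqq\mu^{\pi_Q}_{M}$ whose linear parameterization is provided by \cref{prop:linear}.

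For the forward direction, suppose $\xi$ is an MFE. The condition $M = \P_{(Q,M)} M$ gives $M = \mu^{*}_{\xi}$, so $\EE_{O\sim\mu_\xi}[\psi(s')] = \int_\S \psi(s') \langle\psi(s'),\eta\rangle\, \d s' = G_\psi \eta$, which cancels the leading term in \cref{eq:g-m} and yields $\bar{\g}_\xi(\eta) = 0$. For $\theta$, the Bellman fixed-point condition $Q = \T_{(Q,M)} Q$ says that for every $(s,a)$, the conditional expectation $\EE[r + \gamma \langle\phi(s',a'),\theta\rangle \given s,a] = \T_{(Q,M)} Q(s,a) = \langle\phi(s,a),\theta\rangle$, so the inner bracket of $\g(\theta;O)$ vanishes in conditional mean and hence $\bar{\g}_\xi(\theta) = 0$ by the tower property.

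For the converse, I would use \cref{prop:linear} to obtain linear representatives of the target objects. First, the induced stationary state distribution has the form $\mu^{*}_{\xi} = \langle \psi, \eta^{*}_{\xi}\rangle$ for some $\eta^{*}_{\xi}$; plugging this in, $\bar{\g}_\xi(\eta) = 0$ becomes $G_\psi(\eta - \eta^{*}_\xi) = 0$. The elementary identity $v^\top G_\psi v = \|\langle \psi,v\rangle\|^{2}_{L_2}$, immediate from the definition $G_\psi = \int_\S \psi(s)\psi(s)^\top \d s$, shows that $\ker G_\psi$ consists exactly of parameters producing the zero $L_2$ function, so $\langle\psi,\eta\rangle = \langle\psi,\eta^{*}_\xi\rangle$, i.e., $M = \mu^{*}_\xi$. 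Second, by the same linear-MDP reasoning used to prove \cref{prop:linear}, $\T_{(Q,M)} Q$ is itself linear in $\phi$ with some parameter $\theta'$, so $\bar{\g}_\xi(\theta) = \bar{\Sigma}(\theta - \theta')$ where $\bar{\Sigma}\coloneqq\EE_{O\sim\mu_\xi}[\phi(s,a)\phi(s,a)^\top]$. The analogous kernel identity for $\bar{\Sigma}$ converts the linear equation into $Q = \T_{(Q,M)} Q$ as $L_2$ functions on the support of $\mu_\xi$, which is the Bellman condition for the MFE.

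The main subtlety is the handling of these two Gram-type matrices: since the paper does not assume $\phi$ or $\psi$ to be $L_2$-linearly independent, $G_\psi$ and $\bar{\Sigma}$ need not be positive definite, and one must rule out spurious zeros of $\bar{\g}_\xi$ sitting in their kernels. The identity $v^\top G_\psi v = \|\langle\psi,v\rangle\|_{L_2}^{2}$ (and its analogue for $\bar{\Sigma}$) does exactly this work: it identifies the kernel with the set of parameters producing the zero function, aligning the linear-algebraic stationary-point condition with the function-space MFE condition. The other point requiring care is verifying that $\T_{(Q,M)}Q$ remains in the linear class $\langle\phi,\cdot\rangle$, which follows from the same factorization of the reward and transition kernel that underlies \cref{prop:linear}.
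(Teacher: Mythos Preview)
Your proposal is correct. The paper takes a slightly different route: it proves the more general \cref{prop:proj-station}, that $\bar{\g}_\xi(\xi)=0$ is equivalent to $\xi$ being a \emph{projected} MFE (with projections $\Pi_\phi,\Pi_\psi$ onto the spans of $\phi,\psi$), and then observes that for linear MFGs these projections act as the identity because $\T_\xi$ and $\P_\xi$ already land in those spans by the computation behind \cref{prop:linear}, so projected MFE collapses to MFE. The paper's algebra explicitly assumes $\phi$ is linearly independent so that the Gram matrix $\widehat{G}_\xi$ is invertible, whereas you sidestep this via the kernel identity $v^\top G_\psi v=\|\langle\psi,v\rangle\|_{L_2}^2$ (and its $\bar\Sigma$ analogue). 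Your direct argument is more elementary and self-contained for the linear case; the paper's detour through the projected MFE is what later powers the approximation-error analysis for non-linear MFGs in \cref{sec:approx}. One minor loose end: your converse for $\theta$ only delivers $Q=\T_{(Q,M)}Q$ on the support of $\mu_\xi^\dagger$; upgrading this to the pointwise equality in \cref{def:mfe} still requires either full support or linear independence of $\phi$ on the support---the paper faces the same issue and resolves it by assuming the latter.
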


We prove an extended version of \cref{prop:station} in \cref{apx:prop-station}, showing that the stationary point actually corresponds to a \emph{projected} MFE for general (non-linear) MFGs.

\section{Sample complexity analysis} \label{sec:sample} 

This section provides a finite-sample analysis of the convergence of SemiSGD with LFA, thus covering finite MFGs as a special case.
We denote $P_{M}(s' \given s,a) \coloneqq P(s' \given s,a,M)$ and $r_{M}(s,a) \coloneqq r(s,a,M)$ for short.
For transition kernels, we define the total variation operator norm $\|P\|_{\mathrm{TV}} \coloneqq \sup_{p\in\M(\S\times \A),  \|p\|_{\mathrm{TV}\le 1}} \| \int P(\cdot \given s,a)p(\d s,\d a)\|_{\mathrm{TV}}
	.$
We now state the assumptions for the analysis.

\begin{assumption}[Lipschitz MDP]\label{asmp:lip-mdp}
	The transition kernel and reward function are Lipschitz continuous w.r.t. the population measure.
	That is, there exists positive constants $L_{P}$ and $L_{r}$ such that for any state-action pair $(s,a)$ and population measures $M_1,M_2$, we have
	\[
		\|P_{M_1} - P_{M_2}\|_{\mathrm{TV}}  \le (L_{P}/\sqrt{d_2}) \|M_1-M_2\|_{\mathrm{TV}}, \qquad
		\|r_{M_1} - r_{M_{2}}\|_{\infty} \le (L_{r}/\sqrt{d_2})\|M_1-M_2\|_{\mathrm{TV}}
		.\]
\end{assumption}

\begin{assumption}[Lipschitz policy operator]\label{asmp:lip-policy}
	There exists a constant $L_{\pi}$ such that for any state $s$ and value functions $Q_1,Q_2$, we have $\|\Gamma_{\pi}(Q_1)(\cdot \given s) - \Gamma_{\pi}(Q_2)(\cdot \given s)\|_{\mathrm{TV}} \le L_{\pi}\|Q_1-Q_2\|_{\infty}.$
\end{assumption}

\begin{assumption}[Uniform ergodicity]\label{asmp:ergo}
	The MDP is uniformly ergodic for any parameter any value function $Q$ and population measure $M$. That is, there exists constants $m\ge 1$, $\rho\in (0,1)$, and $\mu_{(Q,M)}$, such that for any initial distribution $M_0\in\D(\S)$, it holds that $\| \mu_{(Q,M)} - \P_{(Q,M)}^{t} M_0 \|_{\mathrm{TV}} \le m \rho^{t}.$
\end{assumption}

For notational convenience, we define an ergodicity constant $\sigma \coloneqq 2+\hat{n} + m\rho^{\hat{n}}/(1-\rho)$ with $\hat{n}\coloneqq\left\lceil \log_{\rho}m^{-1}  \right\rceil$,
and $H \coloneqq (1+\gamma)D + R + 2F$, which can be regarded as the scale of the problem.

\begin{assumption}\label{asmp:small-lip}
	The Lipschitz constants are sufficiently small such that $3\sigma \max\{L_{P},L_{\pi}\}H + L_{r} \le 2w$, where $w\in (0,1 /2]$ is a problem-dependent constant (defined in \cref{lem:descent-apx} in \cref{apx:key})
\end{assumption}

\begin{remark} \label{rmk:assumption}
	\cref{asmp:lip-mdp} is a standard regularity condition for MFGs that do not assume a blanket contractivity assumption \citep{cui2022ApproximatelySolving,angiuli2023ConvergenceMultiScale,anahtarci2023Qlearningregularized,yardim2023policy,huang2023StatisticalEfficiency}.
	\cref{asmp:lip-policy} ensures the smoothness of policy updates, a condition typically met through regularization. 
	Example policy operators satisfying \cref{asmp:lip-policy} include softmax  \citep{cui2022ApproximatelySolving,zaman2020ReinforcementLearning,angiuli2023ConvergenceMultiScale}, mirror descent \citep{perolat2021ScalingMean,lauriere2022ScalableDeep}, and mirror ascent \citep{yardim2023policy}. 
	\cref{asmp:ergo} is a standard mixing assumption for online methods with Markovian sampling \citep{bhandari2018FiniteTime,zou2019Finitesampleanalysis,angiuli2022Unifiedreinforcement,zaman2023OraclefreeReinforcement}, ensuring that the agent's exploration adequately covers the state-action space.
	Notably, \cref{asmp:ergo} is also typically satisfied by regularizing behavior policies \citep{yardim2023policy}.
	It is noteworthy that even earlier works using a blanket contractivity assumption recognize that contractivity can be achieved through regularization, provided that ``the transition kernel and reward function are Lipschitz and the corresponding Lipschitz constants are small enough'' \citep{xie2021LearningPlaying}.
	Therefore, \cref{asmp:small-lip} is the key assumption that guarantees the contractivity of learning dynamics.
	In summary, our assumptions closely align with those in \citet{angiuli2023ConvergenceMultiScale,anahtarci2023Qlearningregularized}, and are not more restrictive than those in the literature for contractive MFGs.
	See also \cref{apx:general} for convergence results in the absence of \cref{asmp:small-lip}.
\end{remark}

We can now bound the mean squared error of SemiSGD recursively, which directly gives several finite-time error bounds.

\begin{theorem}[One-step progress] \label{thm}
	Let $\xi_{*} = (\theta_{*};\eta_{*})$ be an MFE parameter.
	Let $\{\xi_{t} = (\theta_{t};\eta_{t})\}$ be a sequence of parameters generated by SemiSGD.
	Then, under \cref{asmp:lip-mdp,asmp:lip-policy,asmp:ergo,asmp:small-lip},
	we have
	\[
		\EE \!\left\| \xi_{t+1} \!\!-\! \xi_{*} \right\|^{2}
		\!\!\le\!\! (1-\alpha_t w) \EE\|\xi_{t}-\xi_{*}\|^{2} + H^{2}\cdot O(\alpha_t^{2} \log \alpha^{-1}_t ) + \frac{\max \{ L_{P},\!L_{\pi},\!L_r \}^{2}\!H^{4}}{w} \cdot O( \alpha_t^{3}\log^{4}\alpha_t^{-1} )
		.\]
\end{theorem}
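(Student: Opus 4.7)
The plan is to start from the projected update $\xi_{t+1}=\Pi(\xi_t-\alpha_t \g_t(\xi_t))$. Since $\xi_*\in B^{d_1}_D\times\Delta^{d_2}$ (because $D\ge\|\theta_*\|$ and $\eta_*$ is a probability vector), the non-expansiveness of the Euclidean projection onto this convex set yields
$$\|\xi_{t+1}-\xi_*\|^2 \le \|\xi_t-\xi_*\|^2 - 2\alpha_t \langle \g_t(\xi_t),\xi_t-\xi_*\rangle + \alpha_t^2 \|\g_t(\xi_t)\|^2.$$
The boundedness of $\phi$, $\psi$, $\theta_t\in B^{d_1}_D$, $\eta_t\in\Delta^{d_2}$, and $\|r\|_\infty\le R$ immediately gives a pathwise bound $\|\g_t(\xi_t)\|\lesssim H$, so the last term contributes the $O(\alpha_t^2 H^2)$ portion of the noise.

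Next I would split the cross term as
$$\langle \g_t(\xi_t),\xi_t-\xi_*\rangle = \langle \bar\g_{\xi_t}(\xi_t),\xi_t-\xi_*\rangle + \langle \g_t(\xi_t)-\bar\g_{\xi_t}(\xi_t),\xi_t-\xi_*\rangle.$$
The mean-path piece is handled by the formal linear-MFG version of Lemma \ref{lem:descent} together with Proposition \ref{prop:station}: under Assumption \ref{asmp:small-lip} one obtains $\langle\bar\g_{\xi_t}(\xi_t),\xi_t-\xi_*\rangle \ge w\|\xi_t-\xi_*\|^2$, which after multiplying by $-2\alpha_t$ produces the contractive factor $(1-2\alpha_t w)$, softened to $(1-\alpha_t w)$ once the cubic corrections below are absorbed via Young's inequality.

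The core difficulty lies in bounding the Markov-noise piece $\EE\langle \g_t(\xi_t)-\bar\g_{\xi_t}(\xi_t),\xi_t-\xi_*\rangle$, because the kernel $P_{M_t}$, reward $r_{M_t}$, and behavior policy $\pi_{Q_t}$ all drift with $\xi_t$, so the chain is non-stationary. I would use a mixing-delay trick: pick $\tau_t=\Theta(\log\alpha_t^{-1})$ and insert the auxiliary quantity $\bar\g_{\xi_{t-\tau_t}}(\xi_{t-\tau_t})$ by telescoping. Three bias sources then appear. \emph{(i)} Parameter drift $\|\xi_t-\xi_{t-\tau_t}\|\le \sum_{k=t-\tau_t}^{t-1}\alpha_k \|\g_k\| \lesssim \tau_t \alpha_t H$, which by Lipschitzness of $\g$ and $\bar\g_\cdot(\cdot)$ in $\xi$ contributes $O(\tau_t\alpha_t H^2)$. \emph{(ii)} Mixing of the frozen $\xi_{t-\tau_t}$-chain, controlled by $m\rho^{\tau_t}H^2 \le \alpha_t H^2$ from Assumption \ref{asmp:ergo} and the choice of $\tau_t$. \emph{(iii)} Distortion between the distribution of the actual observation $O_t$ and the invariant distribution of the frozen chain, which, using Assumptions \ref{asmp:lip-mdp}--\ref{asmp:lip-policy} together with the ergodicity constant $\sigma$, is of order $\sigma\max\{L_P,L_\pi,L_r\}H\cdot \tau_t\alpha_t H$. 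Multiplying by the outer $\alpha_t$ and pairing the Lipschitz-dependent contributions against the descent reserve $w\|\xi_t-\xi_*\|^2$ via Young's inequality (with weight $\sim w$) yields the $H^2\cdot O(\alpha_t^2\log\alpha_t^{-1})$ term from sources \emph{(i)}--\emph{(ii)} and the $w^{-1}\max\{L_P,L_\pi,L_r\}^2 H^4\cdot O(\alpha_t^3\log^4\alpha_t^{-1})$ term from source \emph{(iii)}.

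The hard part is source \emph{(iii)}: unlike standard TD analyses in which the behavior policy is fixed, here each step perturbs simultaneously the kernel and the action-selection rule, so one has to compare the product of non-stationary one-step kernels $\P_{\xi_{t-1}}\cdots\P_{\xi_{t-\tau_t}}$ against the frozen product $\P_{\xi_{t-\tau_t}}^{\tau_t}$ in total variation, accumulating Lipschitz perturbations along the whole window. This is the step where $\sigma$, $L_P$, $L_\pi$, $L_r$ enter multiplicatively, where the drift bound $\tau_t\alpha_t H$ gets iterated up to four times through the pair $(Q,M)$ coupling (Lipschitz in $M$ of $P$ and $r$, and Lipschitz in $Q$ of $\pi$), and where the $\log^4$ factor and the extra power of $\alpha_t$ both originate. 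Once these TV comparisons are in place, the remainder of the proof is routine telescoping, and Assumption \ref{asmp:small-lip} guarantees that the Lipschitz-induced perturbations never overwhelm the $w\|\xi_t-\xi_*\|^2$ descent term extracted from Lemma \ref{lem:descent}.
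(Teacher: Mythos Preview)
Your plan is essentially the paper's proof: the same projection/non-expansiveness start, the descent lemma for the mean-path term, and the same three-way split of the Markov noise via a backtracking window $\tau=\Theta(\log\alpha_t^{-1})$ (what the paper calls the \emph{progress}, \emph{mix}, and \emph{backtrack} terms), combined through Young's inequality with weight $\sim w$. The only subtlety you glossed over is that in the backtrack term $\xi_t$ and $\g_t(\xi_t)$ remain correlated given $\mathcal F_{t-\tau}$, so the paper inserts a further decomposition replacing $\xi_t$ by $\xi_{t-\tau}$ in both slots before invoking conditional independence; the $\log^4$ factor then arises because the TV recursion $\|P_t-P_{t-\tau}\|_{\mathrm{TV}}\lesssim \alpha\tau^2 LH$ gets squared by Young, not from four separate couplings.
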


\begin{corollary}[Constant step-size] \label{cor:const}
	Given a constant step-size $\alpha_t\equiv \alpha_0$, we have
	\[
		\EE\|\xi_{T} - \xi_{*}\|^{2} \le e^{-\alpha_0 w T} \|\xi_{0} - \xi_{*}\|^{2} + w^{-1}H^{2} \cdot O(\alpha_0 \log \alpha_0^{-1})
		.\]
\end{corollary}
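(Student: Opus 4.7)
The plan is a direct unrolling of the one-step contraction from \cref{thm}. Set $e_t := \EE\|\xi_t - \xi_*\|^2$, and, specializing to $\alpha_t \equiv \alpha_0$, package the two additive error terms into
\[
B(\alpha_0) := H^{2} \cdot O(\alpha_0^{2} \log \alpha_0^{-1}) + \frac{\max\{L_P,L_\pi,L_r\}^{2} H^{4}}{w} \cdot O(\alpha_0^{3} \log^{4} \alpha_0^{-1}).
\]
Then \cref{thm} reduces to the scalar linear recursion $e_{t+1} \le (1-\alpha_0 w)\, e_t + B(\alpha_0)$, so the entire proof reduces to manipulating this one-dimensional inequality.

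Iterating from $t=0$ to $T-1$ and bounding the resulting geometric series by $\sum_{t=0}^{T-1}(1-\alpha_0 w)^{t} \le 1/(\alpha_0 w)$ yields
\[
e_T \le (1-\alpha_0 w)^{T} e_0 + \frac{B(\alpha_0)}{\alpha_0 w}.
\]
For the first term I would apply the standard inequality $(1-x)^{T} \le e^{-xT}$ with $x = \alpha_0 w \in (0,1]$; this requires $\alpha_0 \le 1/w$, which is implicit in the small-step-size regime under which a constant-step-size analysis is meaningful (and is compatible with $w \in (0,1/2]$ from \cref{asmp:small-lip}). This gives precisely the $e^{-\alpha_0 w T}\|\xi_0 - \xi_*\|^{2}$ term in the claim.

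For the additive term, dividing $B(\alpha_0)$ by $\alpha_0 w$ produces a leading contribution $w^{-1} H^{2} \cdot O(\alpha_0 \log \alpha_0^{-1})$ from the first piece of $B$ and a subleading contribution $w^{-2} \max\{L_P,L_\pi,L_r\}^{2} H^{4} \cdot O(\alpha_0^{2} \log^{4} \alpha_0^{-1})$ from the second. Since $\alpha_0^{2} \log^{4} \alpha_0^{-1} = o(\alpha_0 \log \alpha_0^{-1})$ as $\alpha_0 \to 0$, the second contribution is absorbed into the big-O of the first for sufficiently small $\alpha_0$, leaving exactly the stated bound $w^{-1} H^{2} \cdot O(\alpha_0 \log \alpha_0^{-1})$.

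There is essentially no real obstacle: the corollary is a routine consequence of \cref{thm}, and all heavy lifting (handling Markovian sampling, coupling between $\theta$ and $\eta$, the projection step, rapidly changing Markov chains) has already been packaged into that theorem. The only things to be careful about are \textbf{(i)} verifying $\alpha_0 w \le 1$ so the contraction factor is nonnegative and the exponential bound on $(1-\alpha_0 w)^{T}$ is valid, and \textbf{(ii)} ensuring that the $\log \alpha_0^{-1}$ factors do not invalidate the absorption of the higher-order term, which is immediate because polynomial decay in $\alpha_0$ dominates any polylogarithmic factor.
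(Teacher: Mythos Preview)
Your proposal is correct and follows essentially the same approach as the paper's proof: unroll the one-step recursion from \cref{thm}, bound the geometric sum by $1/(\alpha_0 w)$, apply $(1-\alpha_0 w)^T \le e^{-\alpha_0 w T}$, and absorb the higher-order $\alpha_0^2$ contribution into the leading $O(\alpha_0 \log \alpha_0^{-1})$ term. Your explicit remarks on the need for $\alpha_0 w \le 1$ and on the polylogarithmic absorption are more careful than what the paper spells out, but the argument is the same.
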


Let $\alpha_0 = (\log T)/(wT)$. Then \cref{cor:const} states
\(
\EE\|\xi_{T} - \xi_{*}\|^{2} = O \left((H^{2}\log^{2} T)/(w^{2}T)\right)
\),
giving an $O(\epsilon^{-2}\log^{2} \epsilon^{-1} )$ sample complexity for an $\epsilon$-MFE ($\EE\|\xi_{T}-\xi_{*}\|\le \epsilon$).
A linearly decaying step-size sequence improves the logarithmic factor, giving a convergence rate of $O\left( (H^{2}\log T)/(w^{2}T) \right)$ (see \cref{cor:decay}).
Reducing to finite MFGs, the complexity becomes $\widetilde{O}\left(\frac{SAR^{2}}{\lambda^{2}(1-\gamma)^{4}T}\right)$ (see \cref{cor:samp-finite}).
These results also imply the uniqueness of the MFE under the assumptions of \cref{thm}.
The sample complexity is tight up to a logarithmic factor, consistent with other stochastic approximation methods, and strictly better than existing results for learning MFGs (\cref{table:comparison}). See \cref{apx:discuss-sample} for more discussions.

\section{Approximation error for non-linear MFGs} \label{sec:approx}

SemiSGD with PA-LFA applies to non-linear MFGs as long as the feature map $\phi$ and measure basis $\psi$ are specified.
However, for non-linear MFGs, it remains unclear:
\begin{enumerate*}[label=\upshape\arabic*\upshape)]
	\item What point does SemiSGD with PA-LFA converge to?
	\item What factors characterize the approximation error caused by PA-LFA?
\end{enumerate*}

For the first question, it turns out that the convergence point of SemiSGD with PA-LFA, i.e., the zero point of the mean-path semi-gradient developed in \cref{sec:lfa_gd}, corresponds to a \emph{projected} MFE $\left< \theta,\eta \right>$
defined by $\left< \theta,\phi \right> = \Pi_{\phi}\T_{\xi}\left<\theta,\phi\right>$ and $\left< \eta,\psi \right> = \Pi_{\psi}\P_{\xi}\left<\eta,\psi\right>$,
where $\Pi_{\phi}$ and $\Pi_{\psi}$ are orthogonal projection operators onto the linear spans of $\phi$ and $\psi$, respectively.
The formal definitions and proof of the statement are deferred to \cref{apx:prop-station}.
For linear MFGs, as images of $\T$ and $\P$ are within the linear spans of $\phi$ and $\psi$ (\cref{apx:prop-linear}), the projected MFE coincides with the MFE itself (\cref{prop:station}).
The following theorem answers the second question.

\begin{theorem}[Approximation error] \label{thm:approx}
	Let $\xi_{\diamond} = (\theta_{\diamond};\eta_{\diamond})$ be the convergence point of SemiSGD with PA-LFA.
	Let $(q_{*},\mu_{*})$ be the actual MFE.
	Write $\P_{\diamond} \coloneqq \P_{\xi_{\diamond}}$ for short, and similarly for $\P_{*}$, $\T_{\diamond}$, and $\T_{*}$.
	Then, \cref{asmp:ergo} indicates the existence of $k\in\NN$ such that
	\begin{align}
		\epsilon_{q} \coloneqq \|q_{*} - \left< \phi,\theta_{\diamond} \right>\|_{\infty}
		\le & \left( \gamma + \frac{\gamma \sigma R L_{\pi}}{1-\gamma} \right)\epsilon_{q} + \left( L_r + \frac{\gamma\sigma R L_{P}}{(1-\gamma)\sqrt{d_2}} \right)\epsilon_{\mu} + \left\| q_{*}-\Pi_{\phi}q_{*} \right\|_{\infty} \\
		\epsilon_{\mu} \coloneqq \|\mu_{*} - \left< \psi,\eta_{\diamond} \right>\|_{\mathrm{TV}}
		\le & \left( \rho^{k} + \frac{kL_{P}}{\sqrt{d_2}}  \right)\epsilon_{\mu} + kL_{\pi}\epsilon_{q} + k\left\| \P_{*} - \Pi_{\psi}\P_{*} \right\|_{\mathrm{TV}}
		.\end{align}
	If the Lipschitz constants are small enough: $2 L_{\pi}(\gamma\sigma R + (1-\gamma)k) \le (1-\gamma)^{2}$ and $2(L_{P}(\gamma\sigma R+(1-\gamma) k) + L_{r}(1-\gamma)\sqrt{d_2}) \le (1+\rho-2\rho^{k})(1-\gamma)\sqrt{d_2}$, then
	\[
		\epsilon_{q} + \epsilon_{\mu} \le \frac{2}{1-\min \left\{ \gamma,\rho \right\}}\left(\epsilon_{\phi} + k\epsilon_{\psi}  \right)
		,\]
	where $\epsilon_{\phi}\coloneqq\|q_{*}-\Pi_{\phi}q_{*}\|_{\infty}$ and $\epsilon_{\psi}\coloneqq \|\P_{*}-\Pi_{\psi}\P_{*}\|_{\mathrm{TV}}$ are \emph{inherent} approximation error induced by the projection onto the linear spans of basis $\phi$ and $\psi$, which is independent of the algorithm.
	Moreover, if $\|\P_{*} - \P_{*}^{\infty}\|_{\mathrm{TV}} = \rho$ (e.g., $\P_{*}$ induces a reversible Markov chain), $k=1$.
\end{theorem}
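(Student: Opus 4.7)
The plan is to exploit the projected fixed-point characterization of $\xi_\diamond$ (promised by \cref{prop:station} and its non-linear extension in the appendix) and then run a standard perturbation decomposition separately for the value function and for the population measure, turning the two resulting inequalities into a $2\times 2$ linear system that is invertible under the stated smallness conditions. Throughout, write $\hat q \coloneqq \langle\phi,\theta_\diamond\rangle$ and $\hat\mu \coloneqq \langle\psi,\eta_\diamond\rangle$, so that $\hat q = \Pi_\phi \T_\diamond \hat q$ and $\hat\mu = \Pi_\psi \P_\diamond \hat\mu$, and note that $\Pi_\phi$ and $\Pi_\psi$ are non-expansive in $\|\cdot\|_\infty$ and $\|\cdot\|_{\mathrm{TV}}$ respectively by the normalization of the bases $\phi$ and $\psi$.

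For $\epsilon_q$, I would add-and-subtract $\Pi_\phi q_*$ and use $q_* = \T_* q_*$ to obtain the decomposition $q_* - \hat q = (q_* - \Pi_\phi q_*) + \Pi_\phi(\T_* q_* - \T_\diamond \hat q)$. Taking $\|\cdot\|_\infty$ extracts the inherent error $\epsilon_\phi$ and reduces the problem to bounding $\|\T_* q_* - \T_\diamond \hat q\|_\infty$. I would split this as $\T_*(q_* - \hat q) + (\T_* - \T_\diamond)\hat q$: the first summand is at most $\gamma\epsilon_q$ by the one-step Bellman contraction, and the second summand decomposes further into (i) a reward gap bounded by Lipschitzness in $\mu$, giving a term $\le L_r \epsilon_\mu$, and (ii) a discounted expected-next-value gap, which in turn splits into a policy-change piece (controlled by \cref{asmp:lip-policy} and $L_\pi\epsilon_q$) and a transition-change piece (controlled by \cref{asmp:lip-mdp} and $(L_P/\sqrt{d_2})\epsilon_\mu$). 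The $\sigma R/(1-\gamma)$ multiplier arises by bounding the sup norm of $\hat q$ through a mixing-based argument: since $\hat q$ is a projected Bellman fixed point rather than a bona fide value function, we unfold it using the ergodicity constant $\sigma$ to get $\|\hat q\|_\infty \lesssim \sigma R/(1-\gamma)$, and this sup-norm enters multiplicatively against the policy/transition perturbations.

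For $\epsilon_\mu$, the key trick is to exploit $\mu_* = \P_*^k \mu_*$ for any $k$, giving the two-piece decomposition $\mu_* - \hat\mu = \P_*^k(\mu_* - \hat\mu) + (\P_*^k \hat\mu - \hat\mu)$. The first piece contracts at rate $\rho^k$ by \cref{asmp:ergo} (applied as a two-chain contraction, absorbing the $m$ factor into the choice of $k$). The second piece telescopes as $\sum_{j=0}^{k-1}\P_*^j(\P_* - I)\hat\mu$, and since Markov kernels are non-expansive in total variation, each summand is bounded by the one-step residual $\|\P_*\hat\mu - \hat\mu\|_{\mathrm{TV}} = \|\P_*\hat\mu - \Pi_\psi \P_\diamond \hat\mu\|_{\mathrm{TV}}$, which via a triangle split becomes $\epsilon_\psi + L_\pi \epsilon_q + (L_P/\sqrt{d_2})\epsilon_\mu$. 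Multiplying by $k$ yields the claimed inequality on $\epsilon_\mu$. The remark $k=1$ when $\|\P_* - \P_*^\infty\|_{\mathrm{TV}} = \rho$ is immediate since then the contraction rate $\rho^k$ is already $\rho$ after one step.

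Finally, I would package the two inequalities as $(\epsilon_q, \epsilon_\mu)^\top \le A (\epsilon_q,\epsilon_\mu)^\top + (\epsilon_\phi, k\epsilon_\psi)^\top$ componentwise, where the Lipschitz smallness hypotheses are exactly what make the off-diagonal plus over-$\gamma$ (resp.\ over-$\rho^k$) contributions at most $(1-\gamma)/2$ (resp.\ $(1-\rho^k)/2$), so that $I-A$ is invertible with $\ell_\infty$ operator norm at most $2/(1-\min\{\gamma,\rho\})$. Summing components yields the final $\epsilon_q + \epsilon_\mu$ bound. The main obstacle I anticipate is the careful emergence of the $\sigma R/(1-\gamma)$ factor in the $\epsilon_q$ step: since $\hat q$ is a \emph{projected} fixed point rather than a true value function, the naive bound $\|\hat q\|_\infty \le R/(1-\gamma)$ is not immediate and requires an ergodicity-based unrolling to pick up the $\sigma$; the secondary technical nuisance is verifying that the two-chain contraction at rate $\rho^k$ (as opposed to the one-chain statement of \cref{asmp:ergo}) holds with the claimed constant.
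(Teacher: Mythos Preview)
Your strategy for $\epsilon_\mu$ and for packaging the two inequalities into a $2\times 2$ linear system is essentially the paper's: your telescoping of $\P_*^k\hat\mu - \hat\mu$ is an equivalent reorganization of the paper's recursive decomposition of $(\Pi_\psi\P_\diamond)^k - \P_*^k$, and the ``absorb $m$ into the choice of $k$'' is precisely what the paper does via a strong-ergodicity argument guaranteeing $\|\P_*^k - \P_*^\infty\|_{\mathrm{TV}}\le \rho^k$ for suitable $k$.

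The gap is in the $\epsilon_q$ step. You split as $\T_*(q_* - \hat q) + (\T_* - \T_\diamond)\hat q$, which forces you to control $\|\hat q\|_\infty$; you then conjecture an ``ergodicity-based unrolling'' to extract $\sigma R/(1-\gamma)$ from it. That does not work: $\hat q$ is only a \emph{projected} Bellman fixed point, $\Pi_\phi$ is $L^2$-orthogonal (w.r.t.\ $\mu_{\xi_\diamond}^\dagger$) and carries no sup-norm contraction, so unrolling $\hat q = \Pi_\phi\T_\diamond\hat q$ gives no useful bound on $\|\hat q\|_\infty$, and ergodicity of the chain says nothing about the sup norm of a value-function \emph{parameter}. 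The paper sidesteps the whole difficulty by swapping the order of the split to $\T_\diamond(\hat q - q_*) + (\T_\diamond - \T_*)q_*$. Now the operator difference hits $q_*$, whose sup norm is the trivial $R/(1-\gamma)$ from the geometric reward series. The $\sigma$ in the stated bound then enters not through any value-function norm at all, but through \cref{lem:lip-dist} (its corollary for general $(Q,M)$), which the paper invokes to bound the observation-distribution gap arising in $(\T_\diamond - \T_*)q_*$ by $\sigma\bigl(L_\pi\epsilon_q + (L_P/\sqrt{d_2})\epsilon_\mu\bigr)$. In short: the ``main obstacle'' you flag is self-inflicted by the decomposition order, and the mechanism you propose for the $\sigma$ factor is not the one that actually operates.
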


The proof of \cref{thm:approx} is given in \cref{apx:approx}.
The approximation error of PA-LFA scales with the inherent approximation error determined by the chosen basis, which is consistent with the approximation error of LFA for value function \citep{tsitsiklis1997analysistemporaldifference}.

\section{Numerical experiments} \label{sec:exp} 

To evaluate SemiSGD with PA-LFA, we conduct six experiments on three MFG examples: speed control (\cref{apx:exp-rr}), flocking (\cref{apx:exp-flock}), and network routing (\cref{apx:exp-graph}).
When comparing with FPI-type methods, alongside vanilla online FPI, we equip FPI with entropy regularization (ER), fictitious play (FP), and mirror descent (MD).
Please refer to \cref{apx:exp} for the detailed setup, full results and analysis, and additional experiments.
Here, we present some highlights of the results.

\cref{fig:rr-main} reports MSE curves for the \textbf{speed control} game.
This example showcases the sample efficiency, stability, and accuracy of SemiSGD, particularly compared to vanilla FPI.
The \textbf{flocking game} is highly sensitive to the reward (\cref{apx:exp-reg}).
Large regularization may obscure the reward signal, leading to near-constant policies and population distributions, while other methods except SemiSGD cannot handle non-regularized policies well.
\cref{fig:flock-main} reports exploitability curves for the flocking game, showing that only SemiSGD can effectively 
learn the game with near-greedy policies.
The goal of the \textbf{network routing} game is to direct the traffic from the origin to the destination.
\cref{fig:graph-main} reports the population distribution learned by SemiSGD on this highly non-smooth game.
\cref{fig:loop-main} examines how the \textbf{number of inner loop iterations} ($K$) affect the convergence of FPI (\cref{apx:exp-loop}): it suggests that, given the same number of total samples, reducing $K$ monotonically improves the convergence.
Notably, when $K=1$, online FPI is equivalent to SemiSGD, which removes the forward-backward structure.
\cref{fig:lfa-main,table:accu_comparison} compare \textbf{PA-LFA with discretization} (\cref{apx:exp-lfa}) on learned distribution and accuracy respectively, demonstrating that with a proper choice of measure basis, PA-LFA generalizes better and achieves higher accuracy.

\begin{figure}[th]
	\centering
	\begin{subfigure}{0.33\textwidth}
		\includegraphics[width=\linewidth]{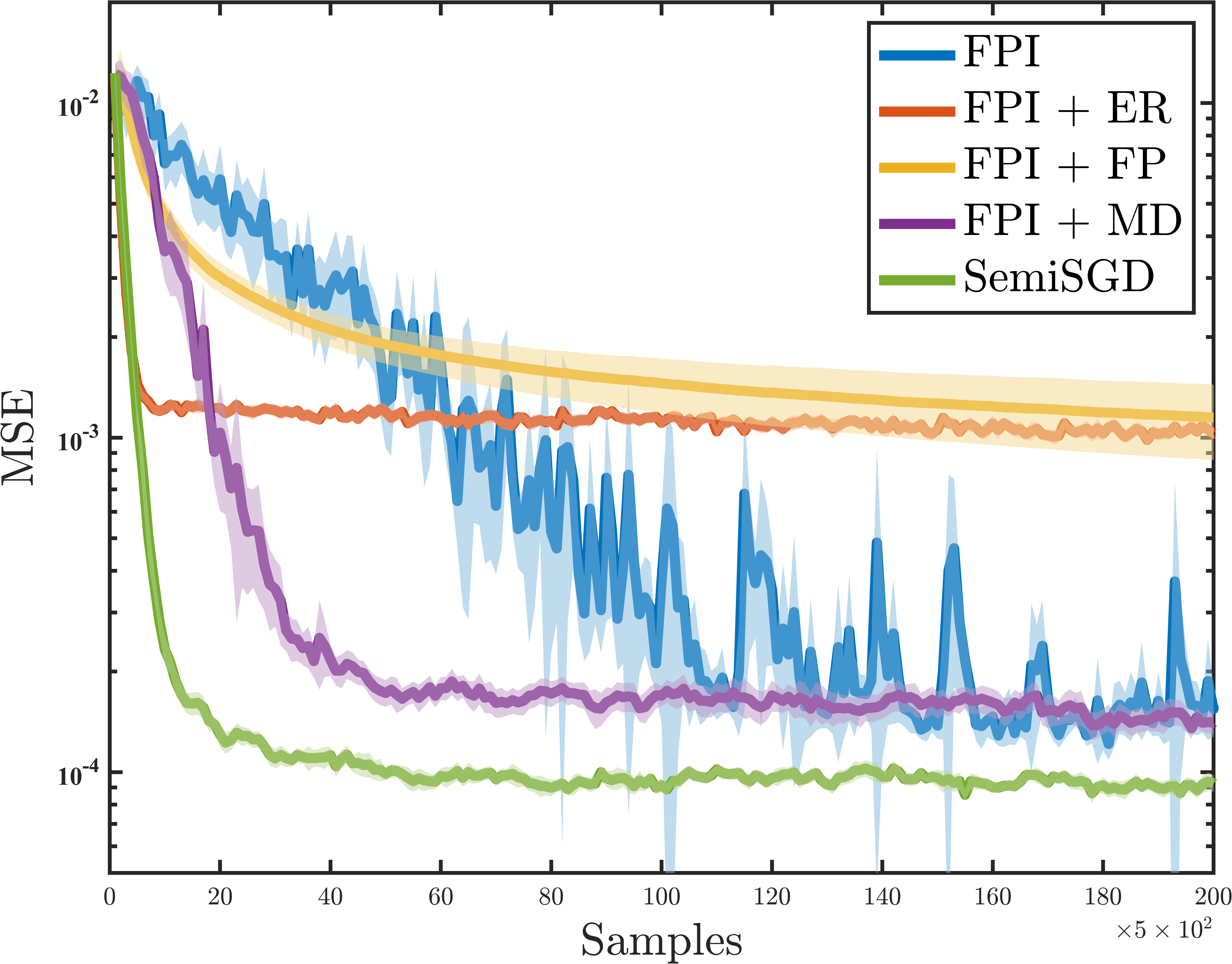}
		\caption{Speed control.}
		\label{fig:rr-main}
	\end{subfigure}\hfill
	\begin{subfigure}{0.33\textwidth}
		\includegraphics[width=\linewidth]{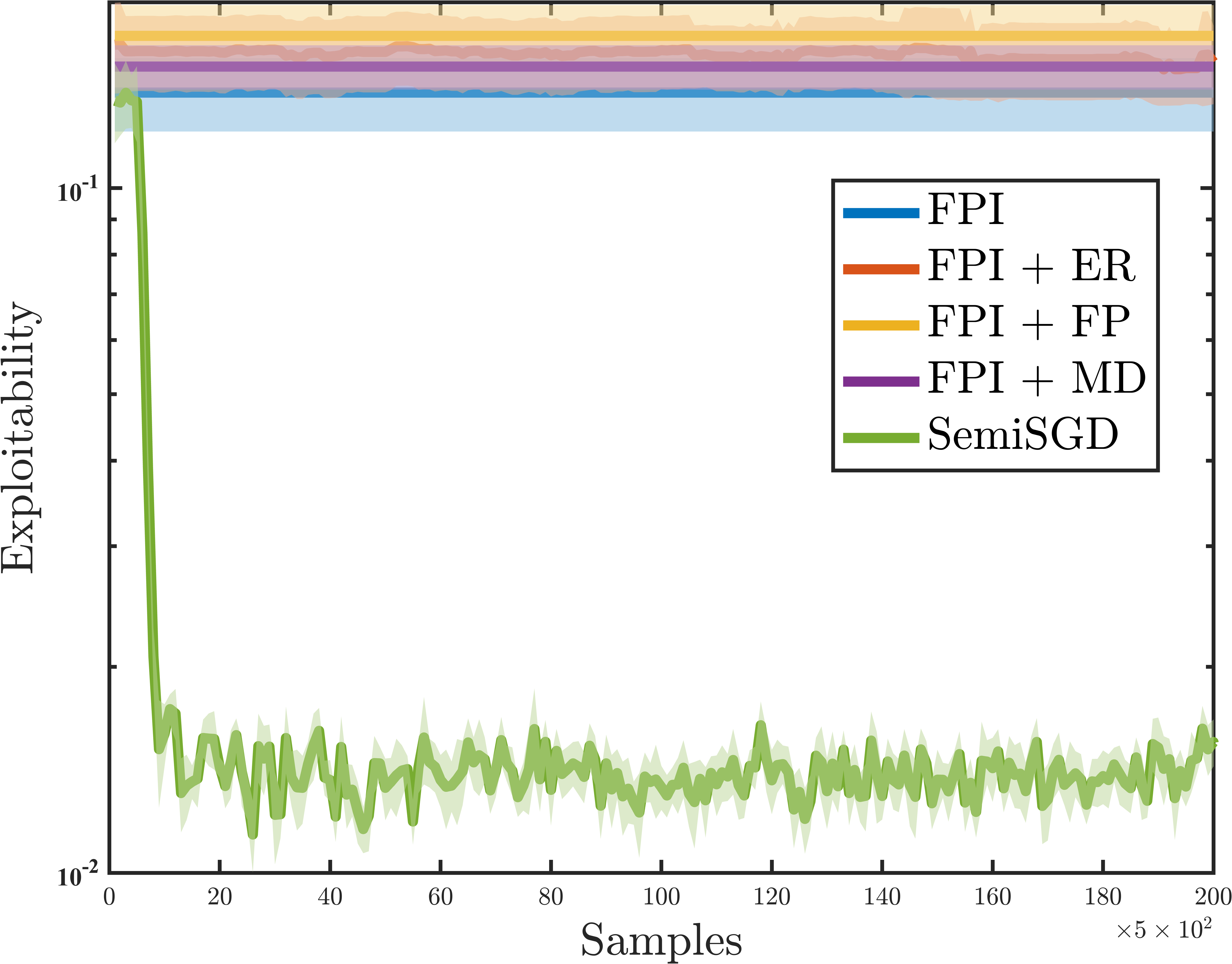}
		\caption{Flocking.}
		\label{fig:flock-main}
	\end{subfigure}\hfill
	\begin{subfigure}[b]{0.30\textwidth}
		\centering
		\includegraphics[width=1\linewidth]{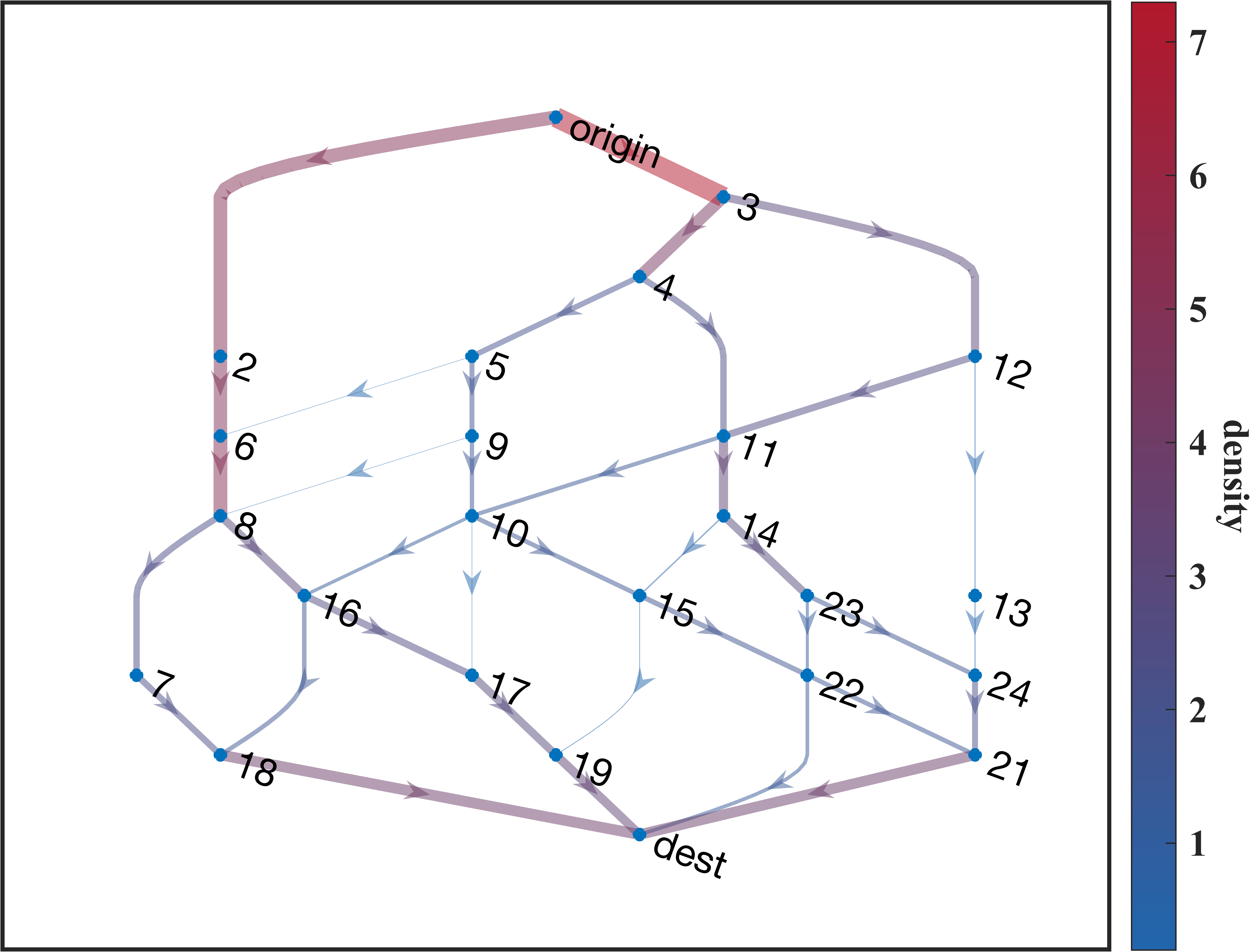}
		\vspace{0in}
		\caption{Network routing.}
		\label{fig:graph-main}
	\end{subfigure}
	\caption{Convergence performance of SemiSGD and FPI.}
\end{figure}

\begin{table}[th]
	\centering
	\makebox[0pt][c]{\parbox{1.01\textwidth}{%
			\begin{minipage}[b]{0.32\textwidth}
				\centering
				\includegraphics[width=1\linewidth]{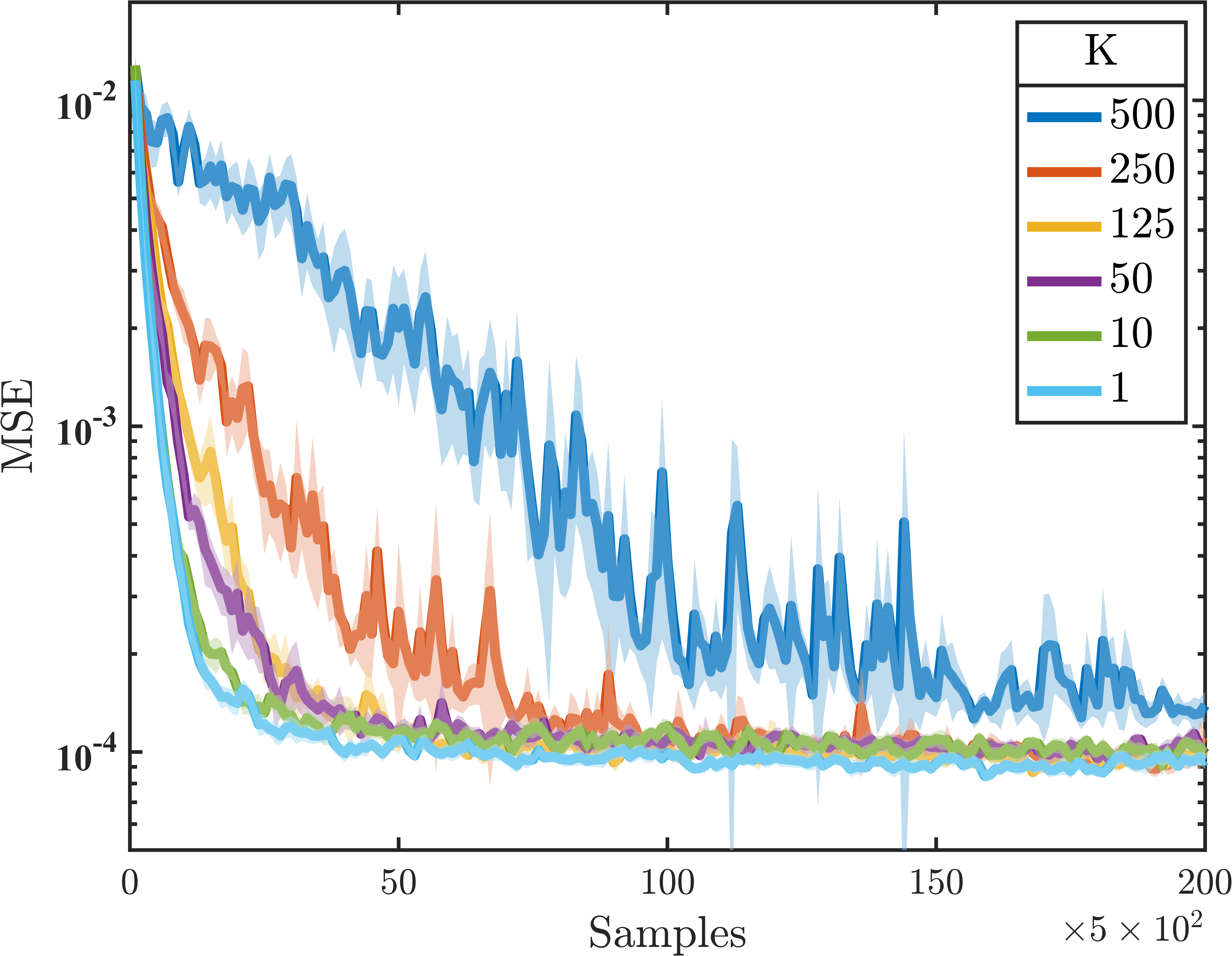}
				\captionof{figure}{On \# inner iterations.}
				\label{fig:loop-main}
			\end{minipage}
			\hfill
			\begin{minipage}[b]{0.32\textwidth}
				\centering
				\includegraphics[width=1\linewidth]{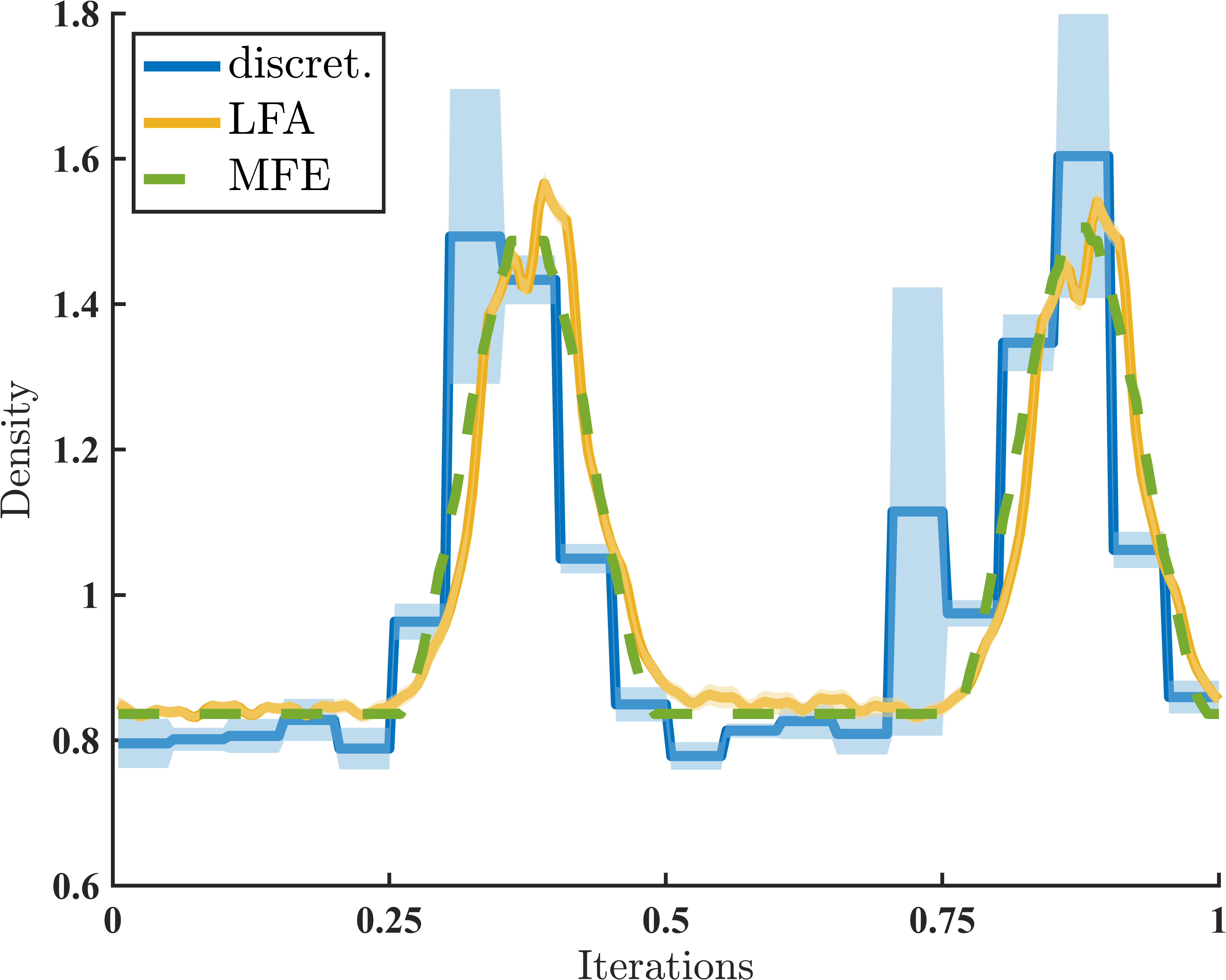}
				\captionof{figure}{LFA \& Discretization.}
				\label{fig:lfa-main}
			\end{minipage}
			\hfill
			\begin{minipage}[b]{0.35\hsize}\centering
				\resizebox{0.95\hsize}{!}{
					\begin{tabular}{lll}
						\toprule
						{\small Dim.\textbackslash Algo.} & Discret.           & PA-LFA             \\
						\midrule
						2                                 & $.20\pm .01$ & $\bf .17\pm .00$ \\
						5                                 & $.21\pm .01$ & $\bf .15\pm .01$ \\
						8                                 & $.21\pm .14$ & $\bf .11\pm .05$ \\
						10                                & $.27\pm .08$ & $\bf .12\pm .03$ \\
						20                                & $.15\pm .13$ & $\bf .07\pm .04$ \\
						25                                & $.18\pm .15$ & $\bf .06\pm .04$ \\
						40                                & $.15\pm .13$ & $\bf .10\pm .05$ \\
						50                                & $.16\pm .13$ & $\bf .11\pm .02$ \\
						\bottomrule
					\end{tabular}%
				}
				\vspace{0.15in}
				\caption{\small{MSE of LFA \& Discretization}}
				\label{table:accu_comparison}
			\end{minipage}

		}}
\end{table}

\section{Conclusion}
\label{sec:con}

This work proposes a simple SGD-type method for learning MFGs, eliminating the forward-backward structure in FPI-type methods and the need for complex stabilization mechanisms. Our approach leverages a novel perspective that unifies the value function and population measure as a single parameter, enabling a straightforward analysis that elegantly handle the coupling between policy and population while offering insights into the learning dynamics.

Building on this perspective, we develop the first population-aware linear function approximation for MFGs on large or continuous state-action spaces, with sample and operation efficiency guarantees, as well as approximation error characterization.
Function approximation is just one example of how techniques from policy learning can be extended to population learning using this unified perspective, and we expect further advancements in this direction.

More broadly, our methodology is generalizable to other dynamical systems that can be parameterized by learnable parameters, including (finite) population games \citep{sandholm2010population}, evolutionary games \citep{hilbe2018evolution}, graphon games \citep{zhou2024graphon}, and policy optimization for MFGs \citep{zeng2024learning}.

\subsubsection*{Acknowledgments}
This work is sponsored by NSF under CAREER award number CMMI-1943998.

\bibliography{lfa_qml.bib}
\bibliographystyle{iclr2025_conference}

\newpage
\doparttoc
\faketableofcontents
\appendix
\part{Appendix} 
\parttoc
\newpage

\section{Extended literature review} \label{apx:lit}
\begin{table}[th]
	\centering
	\caption{
		Comparison of learning methods for MFGs.
		In algorithm structure, 1L and $x$L with $x>1$ denote single- and multi-loop algorithms,
		1TS and $x$TS with $x>1$ denote uni- and multi-timescale schemes, respectively.
		Additional mechanisms contain fictitious play (FP), mirror descent (MD), and entropy regularization (ER).
		MFG structures contain contractivity, monotonicity, and the herding structure.
		In learning dynamics assumptions, (C) denotes a blanket contractivity assumption, (R) denotes regularization, (L) denotes small Lipschitz constants condition, (M) denotes monotonicity, and (H) denotes the herding condition.
	}
	\label{table:comparison}
	\resizebox{1\linewidth}{!}{%
		\begin{tabular}{lccccccccc} \toprule
			\multirow{3}{*}[-0.3em]{Reference}                                      & \multicolumn{5}{c}{Algorithm implementations} & \multicolumn{3}{c}{Theoretical properties}                                                                                                                  \\
			\cmidrule(r){2-6} \cmidrule(r){7-9}
			                                                                        & \multirow{2}{*}[-0.3em]{\shortstack{Algo.                                                                                                                                                                   \\struct.}} & \multirow{2}{*}[-0.3em]{\shortstack{Add.\\mech.}} & \multirow{2}{*}[-0.3em]{\shortstack{Oracle-\\free}} & \multicolumn{2}{c}{Function approx.} & \multirow{2}{*}[-0.3em]{\shortstack{MFG\\struct.}} & \multirow{2}{*}[-0.3em]{\shortstack{Dyna.\\assmp.}} & \multirow{2}{*}[-0.3em]{\shortstack{Sample\\complex.}} \\
			\cmidrule(r){5-6}
			                                                                        &                                               &                                            &            & Population & Policy     &
			\\ \midrule
			\cite{guo2019Learningmeanfield}                                         & FB                                            & -                                          & -          & -          & -          & contract. & (C)    & -                                                  \\\\[-1em]
			\cite{elie2019mfg}                                                      & FB                                            & FP                                         & -          & -          & -          & mono.     & (M)    & -                                                  \\\\[-1em]
			\cite{perrin2020Fictitiousplay}                                         & FB                                            & FP                                         & -          & -          & -          & mono.     & (M)    & -                                                  \\\\[-1em]
			\cite{xie2021LearningPlaying}                                           & FB                                            & MD                                         & -          & -          & -          & contract. & (C)    & -                                                  \\\\[-1em]
			\cite{cui2022ApproximatelySolving}                                      & FB                                            & ER                                         & -          & -          & -          & contract. & (R)    & -                                                  \\\\[-1em]
			\cite{lauriere2022ScalableDeep}                                         & FB                                            & FP\&MD                                     & -          & -          & -          & -         & -      & -                                                  \\\\[-1em]
			\cite{maoMeanFieldGame}                                                 & 1L\&2TS                                       & -                                          & -          & -          & \checkmark & contract. & (C)    & $\widetilde{O}(\epsilon^{-5})$                     \\\\[-1em]
			\cite{angiuli2022Unifiedreinforcement,angiuli2023ConvergenceMultiScale} & 1L\&2TS                                       & -                                          & \checkmark & -          & -          & contract. & (L\&R) & -                                                  \\\\[-1em]
			\cite{anahtarci2023Qlearningregularized}                                & FB                                            & ER                                         & -          & -          & -          & contract. & (L\&R) & -                                                  \\\\[-1em]
			\cite{zaman2023OraclefreeReinforcement}                                 & 1L\&2TS                                       & -                                          & \checkmark & -          & -          & contract. & (C)    & $O(\epsilon^{-4})$                                 \\\\[-1em]
			\cite{yardim2023policy}                                                 & 3L                                            & MD                                         & \checkmark & -          & -          & contract. & (R)    & ${O}(\epsilon^{-2}\log^{2} \epsilon^{-1} )$ \\\\[-1em]
			\cite{zhang2024SingleOnline}                                            & 2L                                            & -                                          & \checkmark & -          & -          & contract. & (C)    & ${O}(\epsilon^{-2}\log^{2}\epsilon^{-1})$   \\\\[-1em]
			\cite{huang2023StatisticalEfficiency}                                   & FB                                            & -                                          & -          & -          & \checkmark & -         & -      & ${O}(\epsilon^{-2}\log \epsilon^{-1} )$     \\\\[-1em]
			\cite{zeng2024learning}                                   & 1L\&1TS                                            & -                                          & \checkmark          & -          & - & herding         & (H)      & $\widetilde{O}(\epsilon^{-4})$     \\\\[-1em]
			This paper                                                              & 1L\&1TS                                       & -                                          & \checkmark & \checkmark & \checkmark & contract. & (L)    & ${O}(\epsilon^{-2}\log \epsilon^{-1} )$     \\\bottomrule
		\end{tabular}%
	}
\end{table}
There is a growing body of literature on learning MFGs, with most relevant works summarized in \cref{table:comparison}.
Readers can refer to \citet{lauriere2022LearningMean,cui2022SurveyLargePopulation} for a comprehensive review.

We cast previous methods into the category that has a FPI-type \textbf{algorithm structure}, which typically use a forward-backward (FB) structure to calculate policy evaluation/optimization and induced population sequentially.
To enable online learning and boost efficiency, several works have proposed multi-loop ($x$L) algorithms to update the policy and population simultaneously in an online fashion \citep{yardim2023policy,zhang2024SingleOnline}.
However, following an FPI-type structure, multi-loop algorithms \emph{fix} the policy or population in the inner loops, which is observed to be more time-consuming and incur oscillations in the learned policies \citep{maoMeanFieldGame}.
In response, two-timescale (2TS) asynchronous update schemes have been proposed in \citet{maoMeanFieldGame,angiuli2022Unifiedreinforcement,angiuli2023ConvergenceMultiScale,zaman2020ReinforcementLearning}, where their ``single-loop'' is highlighted as a key feature.
However, in a two-timescale scheme, the policy or population updates \emph{much slower} than the other to circumvent their strong coupling, essentially resembling a FPI-type structure.
Moreover, due to the existence of a slower timescale, two-timescale methods typically have a larger sample complexity bound (\cref{table:comparison}).
To accelerate the multi-timescale scheme, \citet{zeng2024learning} adopt momentum-type buffers for parameter updates, resulting in an essentially uni-timescale step size with automatic stabilization similar to SemiSGD.
In stark contrast, our method updates the policy and population simultaneously and \emph{fully asynchronously}, resulting in a simple \emph{single-loop} and \emph{uni-timescale} method, completely removing the FPI-type structure and achieving an efficient $\widetilde{O}(\epsilon^{-2})$ sample complexity as an SGD method.

To address the instability issue of FPI, researchers have proposed various \textbf{additional mechanics} to stabilize the learning process, broadly categorized into three classes: 1) entropy regularization (ER) \citep{anahtarci2023Qlearningregularized,cui2022ApproximatelySolving}; 2) fictitious play (FP) \citep{elie2019mfg,perrin2020Fictitiousplay,cardaliaguet2017Learningmean}; and 3) mirror descent (MD) \citep{perolat2021ScalingMean,xie2021LearningPlaying,yardim2023policy}.
Notably, MD inherently includes regularization, with entropy regularization being a special case \citep{lauriere2022ScalableDeep}.
In our algorithm implementation, we do not require any additional mechanics, demonstrating the automatic stabilization of SemiSGD.

Learning MFGs necessitates learning both the policy and the population. Some claimed online methods only learn the policy using online methods such as online reinforcement learning, but assume an \emph{oracle} for the population measure.
As a result, the sample complexity results of these works \citep{maoMeanFieldGame,huang2023StatisticalEfficiency} do not capture the population learning.
We follow \citet{zaman2023OraclefreeReinforcement} to define an \textbf{oracle-free} method as one that does not assume access to an oracle which can provide the (estimated) population measure under a given policy.
Among oracle-free methods, our work together with \citet{angiuli2022Unifiedreinforcement,zaman2023OraclefreeReinforcement,zeng2024learning} maintain a population estimate using online observations of a \emph{single agent}.
While \citet{yardim2023policy} uses the empirical state distribution of $N$ agents to approximate the population measure.

To deal with large state and action spaces and obtain strong theoretical guarantees, \citep{maoMeanFieldGame,huang2023StatisticalEfficiency} consider \textbf{function approximation}
for the policy learning in MFGs.
Our work is the first to apply function approximation to the population learning, which we argue is equally important as the policy learning for learning MFGs.

Turning to the theoretical properties of learning methods, two classes of \textbf{MFG structure} conditions are commonly considered that ensure convergence: the \emph{monotonicity} condition imposes a structure on the reward function ensuring that the exploitability is a Lyapunov function \citep{elie2019mfg,perrin2021MeanField}, and the \emph{contractivity} condition requires that the algorithm gives a contraction mapping.
Most recently, \citet{zeng2024learning} propose a new class of MFGs called the \emph{herding} class, which allows for multiple equilibria.
There are many combinations of \textbf{dynamics assumptions} to ensure the contractivity of the algorithm.
The most straightforward one is to assume a \emph{blanket contractivity} condition, which for FPI-type methods is equivalent to require the composition of the forward-backward process to be a contractive mapping.
Assuming that the mapping from the policy to the induced population is contractive, some works impose regularization on the policy to ensure that the mapping from the population measure to the policy is contractive, hence making their composition contractive \citep{cui2022ApproximatelySolving}.
To further reduce the granularity of the assumptions, \citet{yardim2023policy,angiuli2023ConvergenceMultiScale,anahtarci2023Qlearningregularized} inspect the environment elements.
Specifically, they assume the reward function and transition kernel are Lipschitz continuous with sufficiently small Lipschitz constants, and impose strong regularization, to ensure the contractivity of the FPI-type algorithm.
Our assumptions closely resemble the last set of assumptions.

\section{More discussions on motivations} \label{apx:discuss}

\subsection{Definitions of MFE and policy operators} \label{apx:discuss-mfe}

We now extend the discussion in \cref{sec:gd_finite} on our definition of MFE.

\paragraph{Standard definition of MFE using Bellman optimality equation.} Our definition of MFE follows the standard definition of defining the MFE as the fixed point of the Bellman operator and transition operator \citep{lauriere2022ScalableDeep,cui2022ApproximatelySolving,zaman2023OraclefreeReinforcement,anahtarci2023Qlearningregularized}.
These two fixed-point equations are also known as the ``best response'' condition and ``consistency'' condition in the MFG literature.
The standard definition of the best response condition is defined using the Bellman optimality equation; see e.g., \citet{angiuli2023ConvergenceMultiScale}.
Our \cref{def:mfe} recovers the standard definition with an \texttt{argmax} policy operator:
\[
	\Gamma^{(\mathrm{argmax})}_{\pi}(Q)[s] = \mathbbm{1}\left\{\textstyle a = \argmax_{a'} Q(s,a')\right\}.
\]
Specifically, with an \texttt{argmax} policy operator, \cref{eq:bellman} becomes
\[
	\begin{aligned}
	\mathcal{T}_{(Q,M)} Q'(s,a) \coloneqq \mathbb{E}_{(Q,M)}\left[r(s,a,M) + \gamma \max_{a'} Q'(s',a')\right],\\ 
	\text{ with } a' = \argmax_{a'}Q(s',a'), s' \!\sim\! P(\cdot\mid s,a,M)
	.\end{aligned}
\]
The fixed point of this operator satisfies
\[
    Q^*(s,a) = \mathbb{E}_{(Q^*,M)}\left[r(s,a,M) + \gamma \max_{a'} Q^*(s',a')\right]
,\]
which is exactly the Bellman optimality equation.
Therefore, our definition of MFE covers the standard definition of MFE.
 
\paragraph{Regularized MFE with regularized policy operators.}
Our definition of MFE is more general as it accommodates general policy operators. For example, with a \texttt{softmax} policy operator,
\[
  	\Gamma^{(\mathrm{softmax})}_{\pi}(Q)[s] = \frac{L_{\pi}\exp(Q(s,a))}{\sum_{a'}L_{\pi}\exp(Q(s,a'))},
\]
\cref{def:mfe} exactly recovers the definition of the Boltzmann MFE \citep{cui2022ApproximatelySolving,zaman2023OraclefreeReinforcement}.
Notably, the above \texttt{softmax} policy operator satisfies \cref{asmp:lip-policy} with Lipschitz constant $L_{\pi}$.
With a general regularized policy operator, e.g., a mirror descent policy operator with regularizer $h$ \citep{perolat2021ScalingMean,yardim2023policy},
\[
  	\Gamma^{(\mathrm{MD})}_{\pi}(Q)[s] = \argmax_{p\in \Delta \A} \left\{ \langle p, Q(s,\cdot)\rangle - h(p)\right\},
\]
\cref{def:mfe} corresponds to the regularized MFE, which is widely studied in the MFG literature \citep{maoMeanFieldGame,anahtarci2023Qlearningregularized}.
Again, the mirror descent policy operator satisfies \cref{asmp:lip-policy} when the regularizer $h$ is strongly convex \citep{yardim2023policy}.
Regularization is a common technique to stabilize the learning process and ensure the convergence of the FPI-type algorithm \citep{lauriere2022LearningMean}.

\subsection{Motivations for PA-LFA} \label{apx:discuss-lfa}

In the main text, we highlight that PA-LFA is a natural extension if we consider the value function and population measure as a unified parameter controlling the dynamics.
On the other hand, there are other direct motivations for developing a function approximation method for the population learning in MFGs.

\paragraph{Higher accuracy.}
As illustrated in \cref{exp:finite}, given a continuous state space, discretizing the space gives a special linear function approximation that uses Dirac distributions as basis measures, denoted as $\delta$.
Suggested by our analysis in \cref{sec:approx},
different measure bases lead to different representation spaces of the population measure, which in turn affects the approximation accuracy of applying LFA (\cref{thm:approx}).
Therefore, if we choose a proper measure basis $\psi$ such that
$$
	\operatorname{dist}(\mu_*, \operatorname{span}(\psi)) <
	\operatorname{dist}(\mu_*, \operatorname{span}(\delta)),
$$
or equivalently, $\epsilon_{\psi} < \epsilon_{\delta}$ in \cref{thm:approx},
then LFA with basis $\psi$ will have a higher accuracy than simple grid discretization, as the former has a representation space closer to the true equilibrium.
This is validated in our experiments; see \cref{fig:lfa-mu,fig:lfa-mse,table:accu_comparison}.

\paragraph{Incorporating prior knowledge.}
Continuing our comparison between LFA and grid discretization, the latter is not appropriate for continuous or \emph{dependent} distributions.
For example, in our experiment on the ring road, the population measure is a continuous distribution, meaning that a high density in one state will lead to a high density in the neighboring states.
However, discretization does not capture this dependency.
Continuous population distributions are common in real-world applications, such as traffic flow \citep{chen2023Learningdual}, flocking \citep{perrin2021MeanField}, and crowd dynamics \citep{lachapelle2011crowd,burger2013mean}.
Furthermore, real-world population distributions can exhibit complex dependency structures, such as distributions on transportation networks \citep{zhang2024SingleOnline}.
If we have prior knowledge on the dependency structure, we can choose a proper measure basis to capture this dependency, leading to a more efficient population learning process.
\cref{fig:lfa-mu} also provides a clear illustration on this point.

\paragraph{Sample efficiency.}
As provided in \cref{sec:sample}, the sample complexity of SemiSGD scales with $H^{2}$, where $H$ can be regarded as the problem scale.
In \cref{apx:finite}, we show that for a finite MFG the problem scale is
$$
	H = O\left(\frac{\sqrt{|\mathcal{S}||\mathcal{A}|}R}{1-\gamma}\right),
$$
which scales with the state-action space size.
By choosing a low dimensional LFA ($d_1 d_2 \ll |\S||\A|$), we can achieve a much lower sample complexity.

\paragraph{Operational efficiency.}
\cref{rmk:op} analyzes the operation complexity of each update in SemiSGD with PA-LFA.
Specifically, when $d_2 \approx |\S|$, the operation complexity of each update in SemiSGD with or without PA-LFA is comparable.
When $d_2 \ll |\S|$, PA-LFA gives a significant reduction in operation complexity of each update.
Note that the total operation complexity of the algorithm $=$ sample (iteration) complexity $\times$ operation complexity per iteration.
Thus, combining the sample efficiency gain from PA-LFA, an even larger operational efficiency gain is achieved.

A caveat is that if the environment simulator, especially the reward function and transition kernel, takes the full population measure as input, the operational efficiency gain in each update from PA-LFA may be offset.
However, for many real-world applications, the simulator may not need the full population measure.
For instance, for MFGs with sparse or local interaction, meaning that an agent's reward and transition only depend on the population density on a few states, the simulator only needs to know the population density on these states.
Linear MFGs are another example.
Recall the definition of linear MFGs (\cref{def:lmfg}); the transition simulator is a mapping:
\[
	P : \D(\S) \ni M \mapsto \Omega_{M} \mapsto \left< \phi, \Omega_{M}\psi \right>
	.\]
By \cref{prop:linear}, we know that any induced population measure of a linear MFG is within $\operatorname{span}(\psi)$.
Additionally, the population measure estimate maintained by our method is also within $\operatorname{span}(\psi)$.
Thus, the transition simulator can utilize this low-dimensional structure:
\[
	\begin{aligned}
		                  & P : \operatorname{span}(\psi) \ni M = \left< \psi,\eta \right> \mapsto \Omega_{M} \mapsto \left< \phi, \Omega_{M}\psi \right>                                              \\
		\Rightarrow \quad & \hat{P} : \R^{d_2} \ni \eta \mapsto  \left< \psi,\eta \right> \mapsto \Omega_{\left< \psi,\eta \right>} \mapsto \left< \phi, \Omega_{\left< \psi,\eta \right>}\psi \right>
		.\end{aligned}
\]
That is, for any linear MFG, there exists a transition simulator $\hat{P}$ that takes low-dimensional vectors as input.
This is also true for the reward function.

Nevertheless, since the sample efficiency gain always holds, PA-LFA always reduces the total operation complexity of the algorithm.

\subsection{Comparison of sample complexity} \label{apx:discuss-sample}

The worst-case sample complexity of SemiSGD analyzed in \cref{sec:sample} is $O(\epsilon^{-2}\log \epsilon^{-1})$ (see also \cref{cor:decay}). This complexity aligns with that of SGD for Lipschitz functions ($\Theta(\epsilon^{-2})$ \citep{nemirovskij1983problem})
MCMC methods ($O(\epsilon^{-2})$ \citep{latuszynski2013nonasymptotic}), and is better than the state-of-the-art complexity of TD methods (TD(0): $O(\epsilon^{-2}\log^{2} \epsilon^{-1})$ \citep{bhandari2018FiniteTime}, Q-learning: $O(\epsilon^{-2}\log^{3}\epsilon^{-1})$ \citep{li2023QLearningMinimax}, and SARSA: $O(\epsilon^{-2}\log^{3}\epsilon^{-1})$ \citep{zou2019Finitesampleanalysis}).
All these bounds are tight up to logarithmic factors.
The consistency of these complexities is not surprising as they all belong to the class of stochastic approximation methods for a single parameter.
In contrast, existing methods that treat the value function and population measure as two parameters and update them using a two-timescale scheme have a much worse sample complexity \citep{maoMeanFieldGame,zaman2023OraclefreeReinforcement}.

Under the contractivity assumption, oracle-free methods \citet[Theorem 4.3]{yardim2023policy} and \citet[Theorem 1]{zhang2024SingleOnline} have a comparable sample complexity $O(\epsilon^{-2}\log^{2}\epsilon^{-1})$ to SemiSGD.
Besides that their complexity is worse than ours up to a logarithmic factor, the source of the logarithmic dependence is also different.
Both \citet{yardim2023policy,zhang2024SingleOnline} use multi-loop FPI-type methods, with each outer loop being a contractive mapping.
Due to this contraction, both methods need only $O(\log \epsilon^{-1})$ outer loops, with each loop having a sample complexity of $O(\epsilon^{-2}\log \epsilon^{-1} )$.
On the other hand, SemiSGD, being a single-loop method, only needs the same number of samples as that of one outer loops in \citet{yardim2023policy,zhang2024SingleOnline}.
The logarithmic dependence in SemiSGD's sample complexity comes from the non-stationarity introduced by the time-varying population measure estimate and behavior policy.
In summary, \citet{yardim2023policy,zhang2024SingleOnline} need multiple runs of the policy evaluation/optimization and induced population calculation procedure, with each run having the same sample complexity as SemiSGD, which only needs one run of the stochastic approximation method.

\section{Additional experiments} \label{apx:exp}

\subsection{General setup and remark}

All numerical results are averaged over $10$ independent runs with random initialization. $95\%$ confidence regions are reported.
The following parameters are shared for all methods in this set experiments unless otherwise specified: total number of samples $T=10^{5}$, with the number of inner loop iterations being $K=500$ for FPI-type methods, constant step-size $\alpha=10^{-3}$, policy operator $\Gamma_{\pi} =\operatorname{softmax}$ with a large inverse temperature (near-greedy).

	\paragraph{Reference methods.}
	We compare SemiSGD with vanilla online FPI and its variants with stabilization mechanisms entropy regularization (ER), fictitious play (FP), and online mirror descent (MD).
	For all methods, we use on-policy TD learning to learn the best response value function, and MCMC to calculate the induced population measure.
	Specifically, for online FPI, we follow \citet{zhang2024SingleOnline} to repeat the forward population calculation and backward policy evaluation alternatively;
	for FP, we follow \citet{perrin2020Fictitiousplay,lauriere2022ScalableDeep} to mix the historical population measures with the current one after each induced population calculation;
	for MD, we follow \citet{perolat2021ScalingMean,lauriere2022ScalableDeep} to conduct an incremental Q-value function update after each policy evaluation;
	for ER, we follow \citet{cui2022ApproximatelySolving} to use the softmax policy operator with a large temperature (also called Boltzmann policies) for the policy update.
	We summarize the reference methods in \cref{alg:ref}.

	\begin{algorithm}[ht]
	\caption{Reference methods} \label{alg:ref}
	\Input: Initial value function $q_{0}$, population measure $\mu_{0}$, and policy $\pi_0$.\;
	\For{$t=0,1,\dots T/K$}{
		Forward population calculation: $\mu_{t+1}$ is induced by policy $\pi_{t}$.\;
		\If{FP}{
			$\mu_{t+1} \leftarrow \mu^{(\mathrm{hist})}+ \alpha _t \mu_{t+1} $.\;
			$\mu^{(\mathrm{hist})} \leftarrow \mu_{t+1}$.\;
		}
		Backward policy evaluation: $q_{t+1}$ evaluates $\pi_{t}$ with $\mu_{t+1}$.\;
		\If{MD}{
			$q_{t+1} \leftarrow q^{(\mathrm{hist})} + \alpha _t q_{t+1}$.\;
			$q^{(\mathrm{hist})} \leftarrow q_{t+1}$.\;
		}
		\uIf{ER}{

		Policy update: $\pi_{t+1} = \Gamma^{(\mathrm{reg.})} _{\pi}(q_{t+1})$.\;
		}
		\Else{
		Policy update: $\pi_{t+1} = \Gamma_{\pi}(q_{t+1})$.\;
		}
	}
	\Return $(\pi_{T/K},\mu_{T/K})$.
\end{algorithm}

\paragraph{Regularization.}
To minimize regularization for all methods except ER, we use the softmax policy operator with a large inverse temperature such that
\[\label{eq:greedy}
	\log_{10} \left( L_{\pi} \left(\max_{a\in\A}Q(s,a) - \min_{a\in\A}Q(s,a)\right)  \right)\approx 2, \quad \forall s\in\S
\]
where we use $L_{\pi}$ to represent the inverse temperature, as softmax is Lipschitz continuous with the inverse temperature being a Lipschitz constant \citep{gao2018PropertiesSoftmax}.
This setup ensures near-greedy policies with negligible regularization.
One can verify that in our experiments, the Q-value differences between actions are typically small, leading to a large $L_{\pi}$.
Consequently, the theoretical regularization implied by \cref{asmp:small-lip} is not enforced in our experiments.
For ER, we use a small inverse temperature of $L_{\pi}^{(\mathrm{ER})} = L_{\pi}/ 10^{5}$ to implement the entropy regularization.
Regularization can significantly impact learning dynamics; see \cref{apx:exp-reg} for more experiments on regularization.

We focus on two convergence performance metrics: mean squared error (MSE) of the population distribution and exploitability of the policy.

\paragraph{Reference MFE and mean squared error.}
We compare our results with the reference MFE distribution $M_{\mathrm{ref}}$, which is calculated by \emph{model-based FPI with FP}.
The model-based FPI consists of a value iteration \citep{sutton2018Reinforcementlearninga} for the best responses and direct computation of the induced population measure using the transition operator.
Then, the (population measure) mean squared error (MSE) is calculated as
\[
	\operatorname{MSE}(M):=\left\|M-M_{\mathrm{ref}}\right\|_2^2=\sum_{s \in \mathcal{S}}\left(M(s)-M_{\mathrm{ref}}(s)\right)^2
	.\]
Note that this is equivalent to the 1-Wasserstein distance between the two measures for finite state spaces.

\paragraph{Exploitablity}
\citet{perrin2020Fictitiousplay} defines the exploitability of a policy as follows:
\[
	\operatorname{exploitability}(\pi)  \coloneqq \max_{\pi'} \mathbb{E}_{s\sim \mu_{\pi}} V(s;\pi',\mu_{\pi}) - \mathbb{E}_{s\sim \mu_{\pi}}V(s;\pi,\mu_{\pi})
	,\]
where $V(s;\pi,\mu)$ is the value function determined by policy $\pi$ and population distribution $\mu$.
The exploitability results in our experiments are calculated using model-based value iteration.
We remark that although exploitability is a useful metric to evaluate the performance of the learned policy, it is an operator determined by the underlying MDP, which may not be smooth and hence does not directly reflect the learning dynamics.

\subsection{Speed control on a ring road} \label{apx:exp-rr}

We consider a speed control game on a ring road, i.e., the unit circle $\SS^{1} \cong [0,1)$.
At location $s\in \SS^{1}$, the representative vehicle selects a speed $a$, and then moves to the next location following transition \(s' = s+a\Delta t \pmod{1}\),
where $\Delta t$ is the time interval between two consecutive decisions.
Without loss of generality, we assume that the speed is bounded by $1$, i.e., the speed space is also $[0,1)$.
Then we discretize both the location space and the speed space using a granularity of $\Delta s = \Delta a = 0.02$.
Thus, both our discretized state (location) space and action (speed) space can be represented by $\mathcal{S} = \mathcal{A} = \{0, 0.02, \ldots, 0.98\} \cong [50]$.
By the Courant-Friedrichs-Lewy condition, we choose the time interval to be $\Delta t = 0.02 \le \Delta s / \max a$.
The objective of a vehicle is to maintain some desired speed while avoiding collisions with other vehicles. Thus, it needs to reduce the speed in areas with high population density.
A classic cost function for this goal is the Lighthill-Whitham-Richards function:
\[
	r^{(\mathrm{LWR})}(s,a,\mu) = - \frac{1}{2}\left(\left(1-\frac{\mu(s)}{\mu_{\mathrm{jam}}}\right) - \frac{a}{a_{\mathrm{max}}}\right)^2  \Delta s
	,\]
where $\mu_{\mathrm{jam}}$ is the jam density, and $a_{\mathrm{max}}$ is the maximum speed.
However, in an infinite horizon game, this cost function induces a \emph{trivial} MFE, where the equilibrium policy and population are both constant across the state space.
Therefore, we introduce a stimulus term $b$ that varies across different locations:
\[
	r(s,a,\mu) = - \frac{1}{2}\left(b(s) + \frac{1}{2}\left(1-\frac{\mu(s)}{\mu_{\mathrm{jam}}}\right) - \frac{a}{a_{\mathrm{max}}}\right)^2  \Delta s
	,\]
where the factor of one-half before the population distribution term is included to account for the presence of the new stimulus term.
This new cost function makes the MFE more complex and corresponds to real-world situations where vehicles may have distinct desired speeds at different locations due to environmental variations.
Specifically, we choose the stimulus term as $b(s) = 0.2 (\sin(4\pi s) + 2)$, and set $\mu_{\mathrm{jam}} = 3 / S$ and $a_{\mathrm{max}}=1$.
The discount factor is set as $\gamma = 1-\Delta s= 0.98$.
To ensure the condition in \cref{eq:greedy}, we choose a large inverse temperature of $L_{\pi} = 10^{9}$ for all methods except ER, which uses $L_{\pi}^{(\mathrm{ER})} = L_{\pi} / 10^{5}$.

The convergence of model-based FPI with FP is shown in \cref{fig:rr-opt}, arguing the near-optimality of the reference MFE.

\begin{figure}[th]
	\centering
	\includegraphics[width=0.4\textwidth]{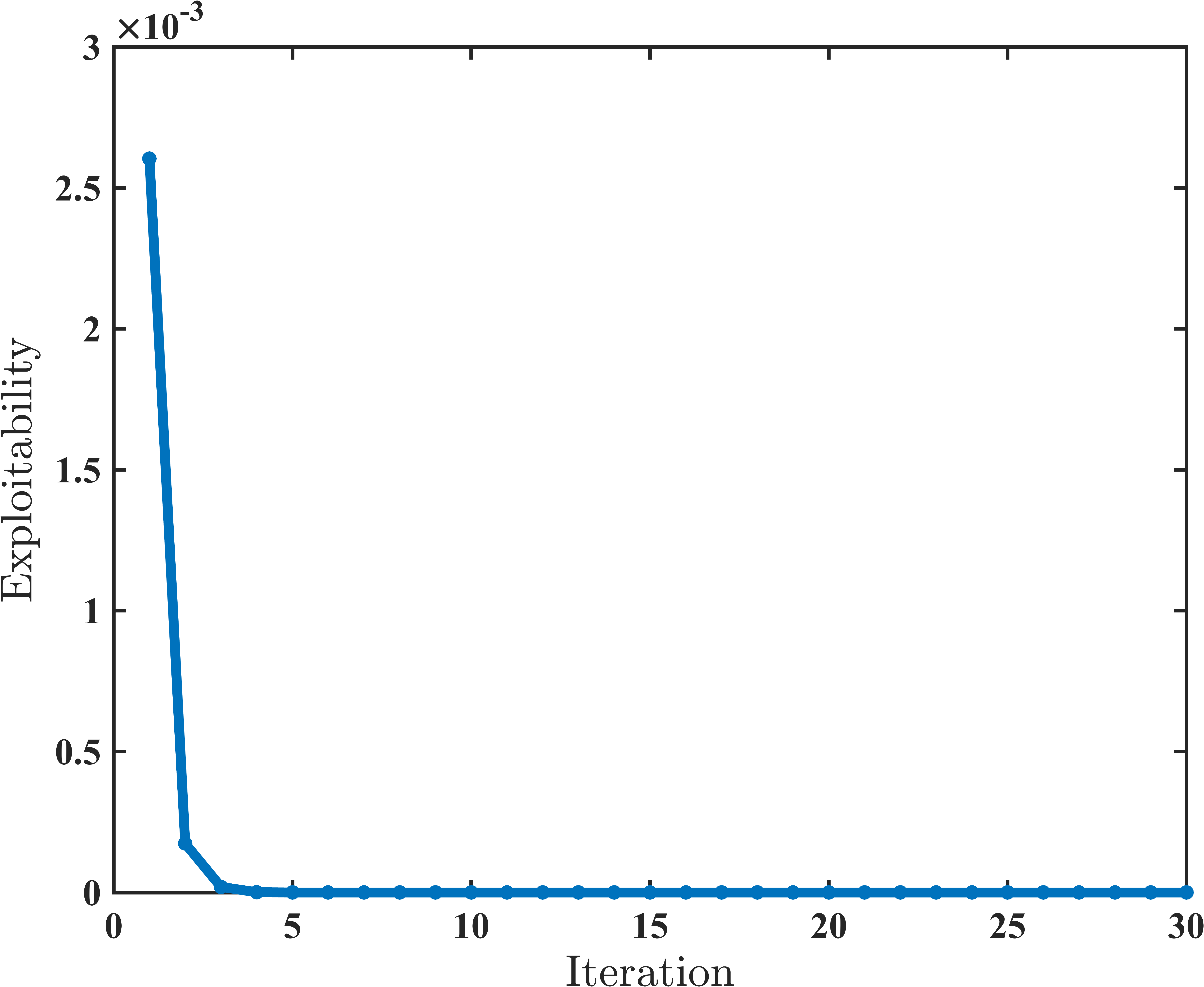}
	\caption{Exploitablity of model-based FPI with FP.}
	\label{fig:rr-opt}
\end{figure}

The convergence performance is reported in \cref{fig:rr}.
The MSE comparison (\cref{fig:rr-mse}) demonstrates that SemiSGD
\begin{enumerate*}[label=\upshape\arabic*\upshape)]
	\item is more efficient, with a much faster convergence,
	\item is more stable without any stabilization techniques,
	\item and is more accurate, with a lower MSE.
\end{enumerate*}
The efficiency gain of SemiSGD is partially due to its removal of the forward-backward structure typical in FPI-type methods; we will explore more on this in \cref{apx:exp-loop}.
Another observation is that although FPI equipped with ER or FP stabilizes, its learned population distribution significantly deviates from the reference MFE, evidenced by \cref{fig:rr-mse,fig:rr-mu}.
This illustrates the fact that the regularized MFE may be far from the true MFE.
For the exploitability, SemiSGD seems to oscillate. However, exploitability is a function determined by the underlying MDP, which may not be smooth.
Therefore, exploitability does not directly reflect the learning dynamics.

\begin{figure}[th]
	\centering
	\begin{subfigure}{0.31\textwidth}
		\includegraphics[width=\linewidth]{new_rr/err.png}
		\caption{MSE.}
		\label{fig:rr-mse}
	\end{subfigure}~\hspace{0.05in}
	\begin{subfigure}{0.31\textwidth}
		\includegraphics[width=\linewidth]{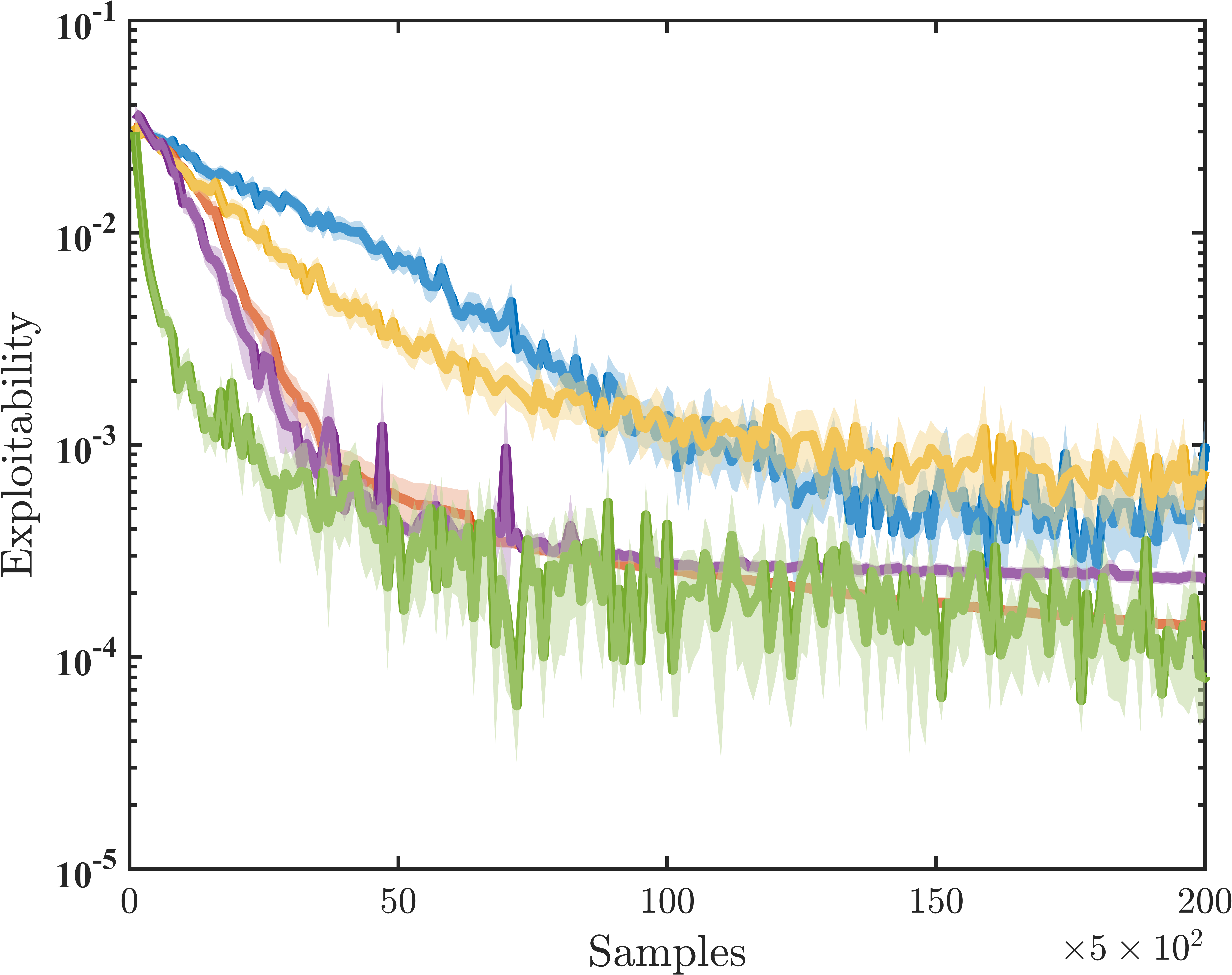}
		\caption{Exploitablity.}
		\label{fig:rr-expl}
	\end{subfigure}~\hspace{0.05in}
	\begin{subfigure}[b]{0.31\textwidth}
		\centering
		\includegraphics[width=1\linewidth]{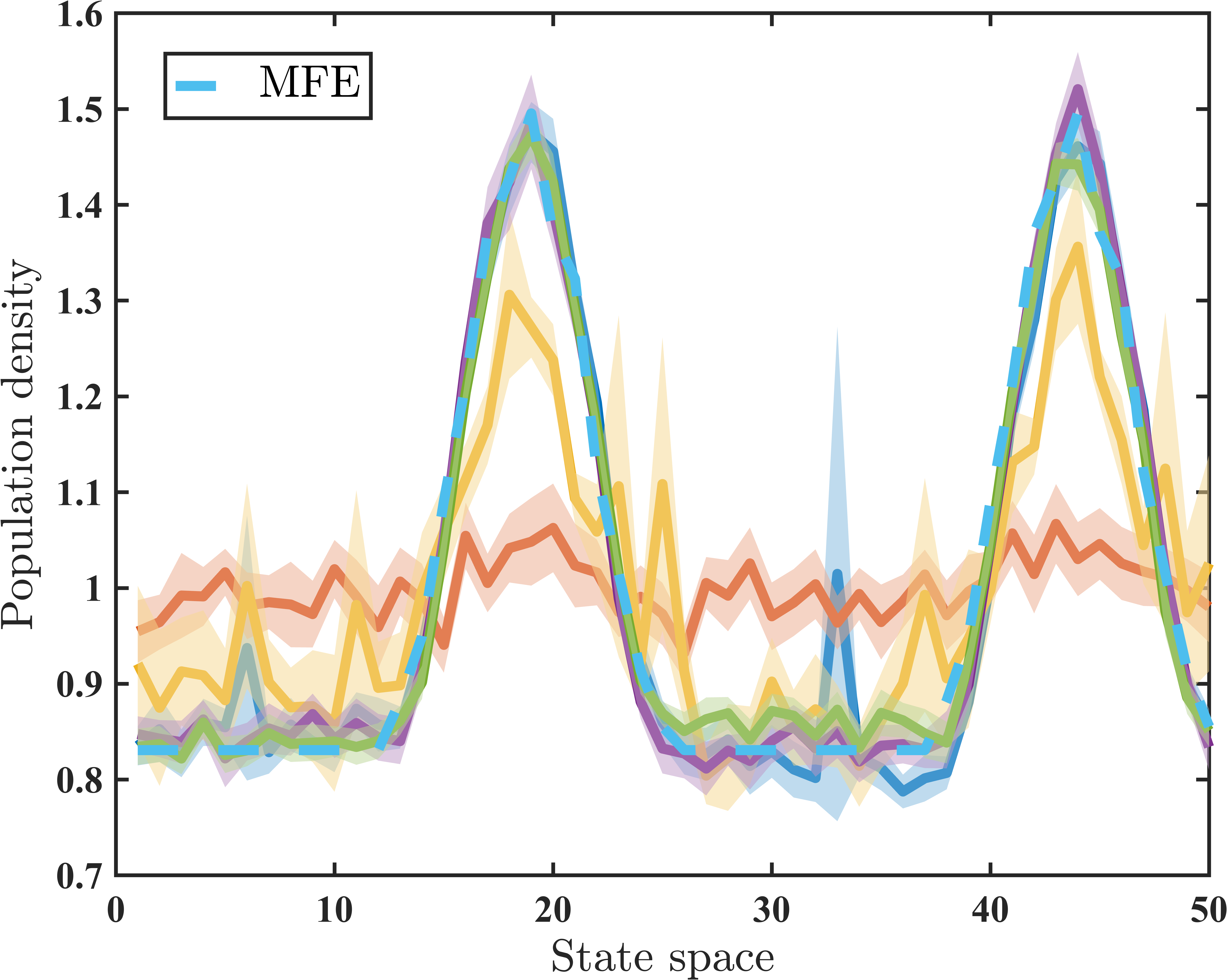}
		\caption{Learned distributions.}
		\label{fig:rr-mu}
	\end{subfigure}
	\caption{Convergence performance on speed control.}
	\label{fig:rr}
\end{figure}

\subsection{Number of inner loop iterations} \label{apx:exp-loop}

When implementing FPI-type methods, we choose a relatively large number of inner loop iterations $K=500$ to demonstrate the difference of a double-loop structure from a single-loop fully asynchronous structure.
In this section, we explore the impact of this important hyperparameter.
Notably, when $K=1$, online FPI is equivalent to SemiSGD.

Fixing the number of total samples $T=10^{5}$, we vary the number of inner loop iterations $K$ from $1$ to $500$ for vanilla FPI. The convergence performance is reported in \cref{fig:loop}.
Evident from both the MSE and exploitability plots, reducing the number of inner loop iterations leads to a \emph{monotonic} improvement.
This backs our theoretical analysis in \cref{sec:sgd} and suggests removing the forward-backward or multi-loop structure can significantly improve the convergence performance.

\begin{figure}[th]
	\centering
	\begin{subfigure}{0.48\textwidth}
		\includegraphics[width=\linewidth]{loop_mse.png}
		\caption{MSE.}
	\end{subfigure}~\hspace{0.1in}
	\begin{subfigure}{0.48\textwidth}
		\includegraphics[width=\linewidth]{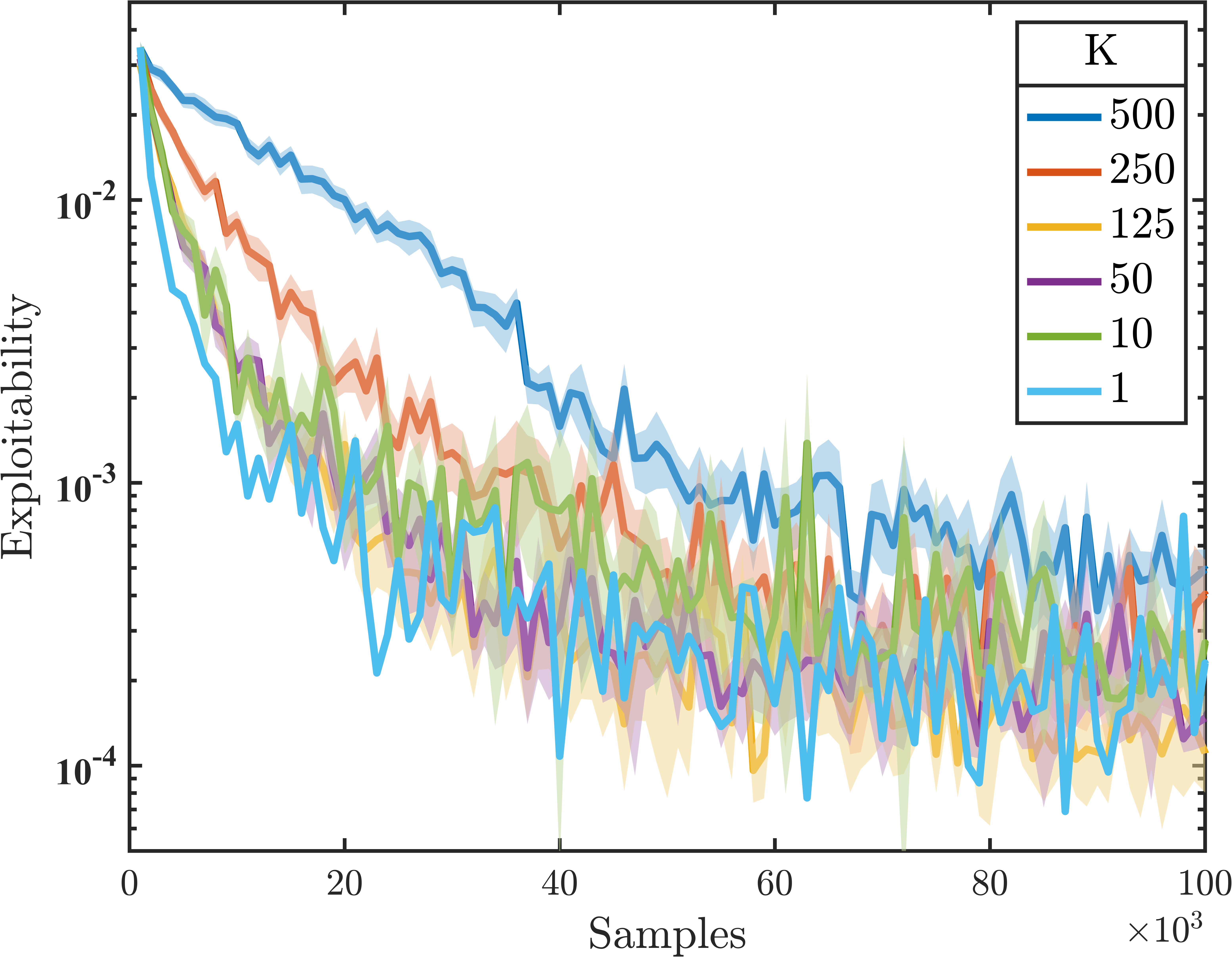}
		\caption{Exploitablity.}
	\end{subfigure}~\hspace{0.1in}
	\caption{Convergence performance with different numbers of inner loop iterations.}
	\label{fig:loop}
\end{figure}

\subsection{Flocking game} \label{apx:exp-flock}

We consider a flocking game on a one dimensional space $\S = [0,1]$.
We set the destination point as $s_{\mathrm{det}}=1$.
To form an infinite-horizon game, we connect the destination point to the starting point $s_{\mathrm{start}} = 0$.
Similar to the speed control game \cref{apx:exp-rr}, at location $s\in \S$, the agent selects a speed $a \in [0,1]$, and then moves to the next location following transition \(s' = s+a\Delta t \pmod{1}\).
Then we discretize both the location space and the speed space using a granularity of $\Delta s = \Delta a = 0.02$ and choose the time interval to be $\Delta t = 0.02 \le \Delta s / \max a$.
The objective of an agent is to reach the destination as fast as possible while aligning its speed with its neighbors.
A cost function for this goal is:
\[
	r(s,a,\mu) = - (a^2 + c (s_{\mathrm{det}} - \operatorname{neighbor}(\mu,s))^2) \Delta s
	,\]
where $c$ is a positive constant; $\operatorname{neighbor}(\mu,s)$ calculates the average location of the neighbors of $s$:
\[
	\operatorname{neighbor}(\mu,s) = \frac{\int_{s-r}^{s+r} s'\mu(s') \d s'}{\int_{s-r}^{s+r} \mu(s') \d s'}
	,\]
where $r$ is the radius of the neighborhood, and we pad the population measure with zero beyond the boundary.
Specifically, we choose $c = 0.5$ and $r = 0.1$.
The discount factor is set as $\gamma = 1-\Delta s = 0.98$.
To ensure the condition in \cref{eq:greedy}, we choose a large inverse temperature of $L_{\pi} = 10^{6}$ for all methods except ER, which uses $L_{\pi}^{(\mathrm{ER})} = L_{\pi} / 10^{5}$.

The convergence of model-based FPI with FP is shown in \cref{fig:flock-opt}, arguing the near-optimality of the reference MFE.

\begin{figure}[th]
	\centering
	\includegraphics[width=0.4\textwidth]{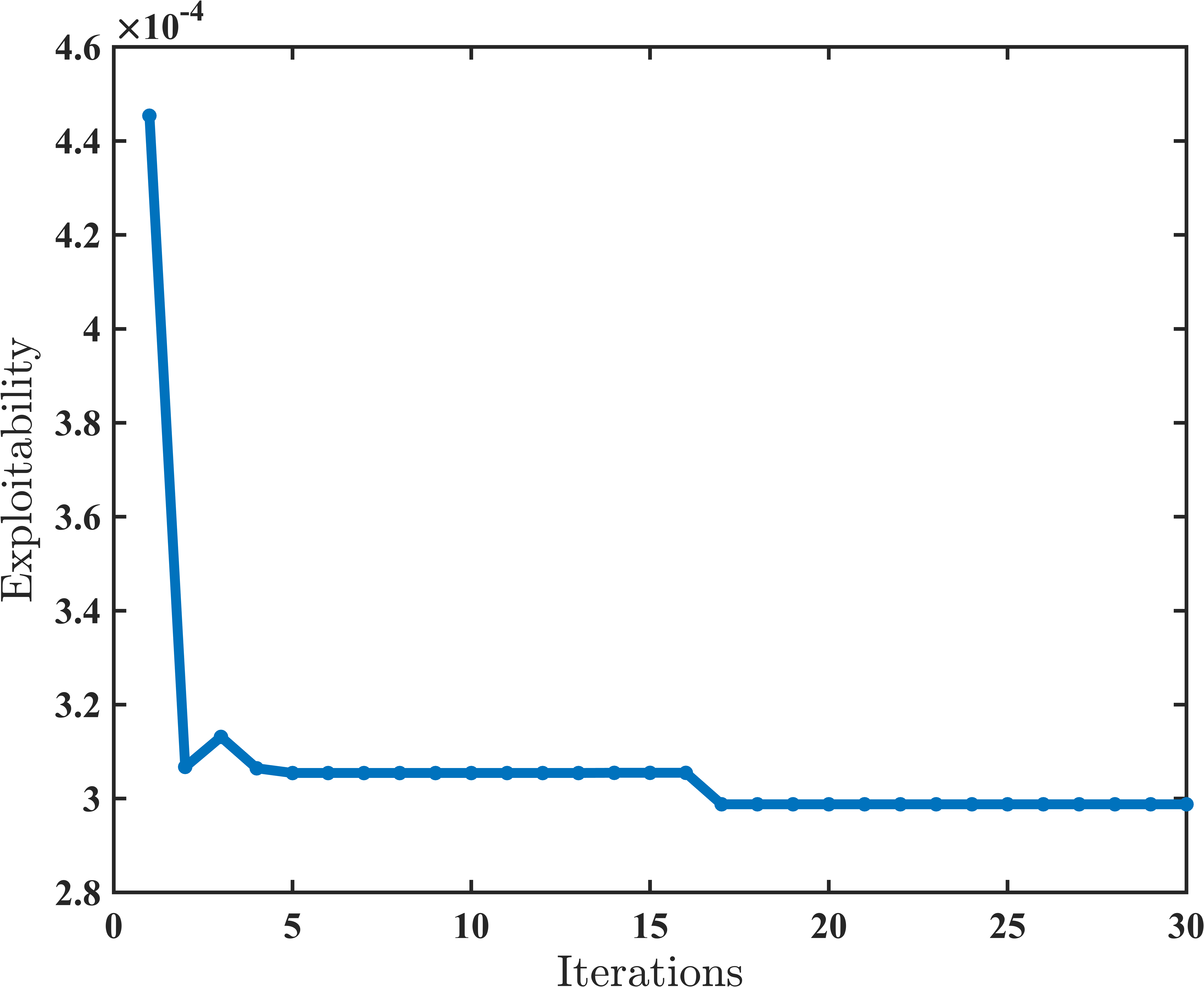}
	\caption{Exploitablity of model-based FPI with FP.}
	\label{fig:flock-opt}
\end{figure}

The convergence performance is reported in \cref{fig:flock}.
We first want to highlight that the MFE population distribution has a gathering behavior near the destination, and SemiSGD is the only method that captures this behavior (see \cref{fig:flock-mu}).
Judging from the reported metric, we conclude that other methods fails to learn the game.
One possible explanation is that the flocking game is highly \emph{sensitive} to the reward function, whose supremum norm is small ($R\le \Delta s = 0.02$).
As a result, large regularization obscures the reward signal, leading to a near-constant policy and population distribution.
However, other methods except SemiSGD cannot handle near-greedy policies well, leading to a poor performance.
To justify this explanation, we further explore the impact of regularization in \cref{apx:exp-reg}.

\begin{figure}[th]
	\centering
	\begin{subfigure}{0.31\textwidth}
		\includegraphics[width=\linewidth]{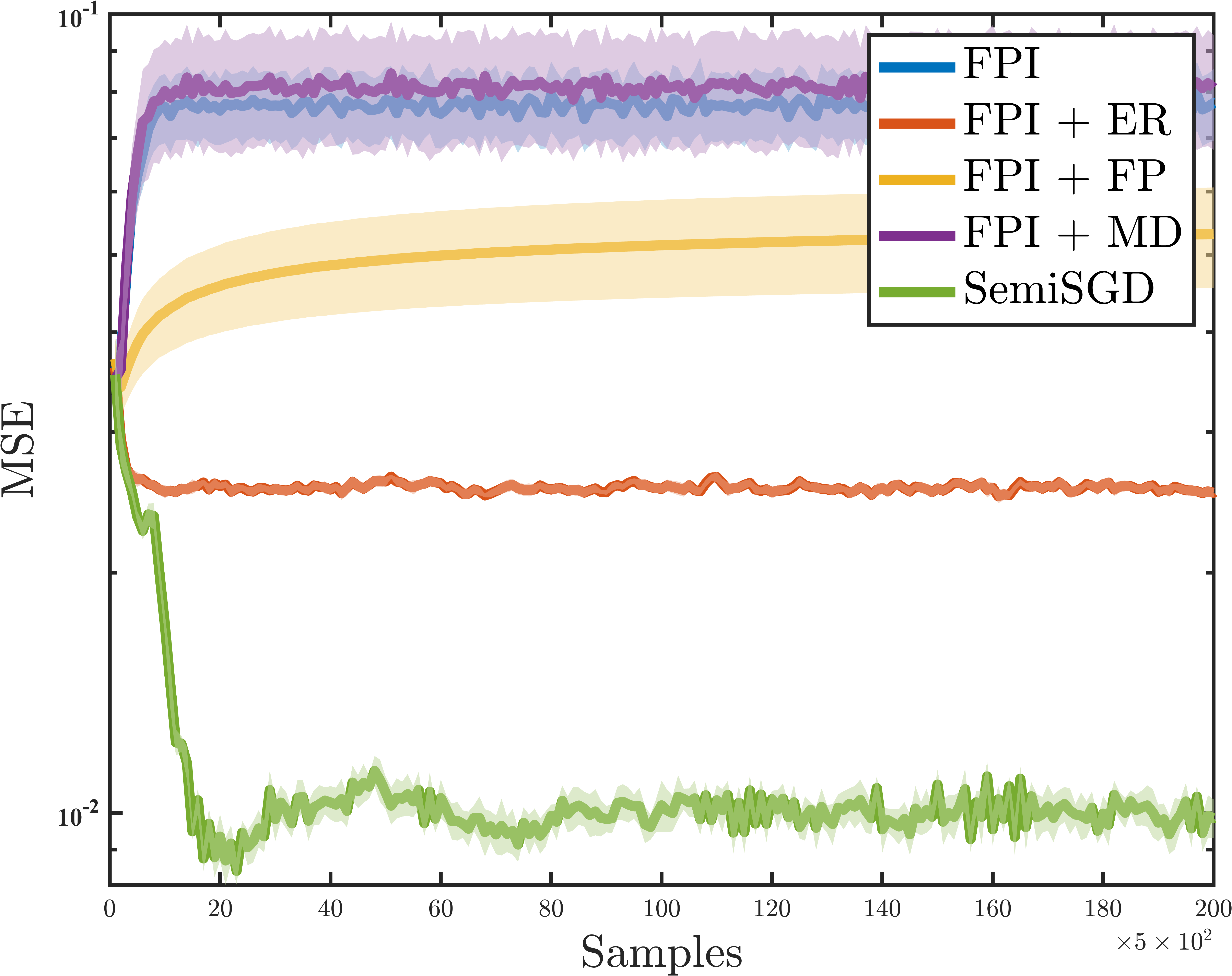}
		\caption{MSE.}
		\label{fig:flock-mse}
	\end{subfigure}~\hspace{0.05in}
	\begin{subfigure}{0.31\textwidth}
		\includegraphics[width=\linewidth]{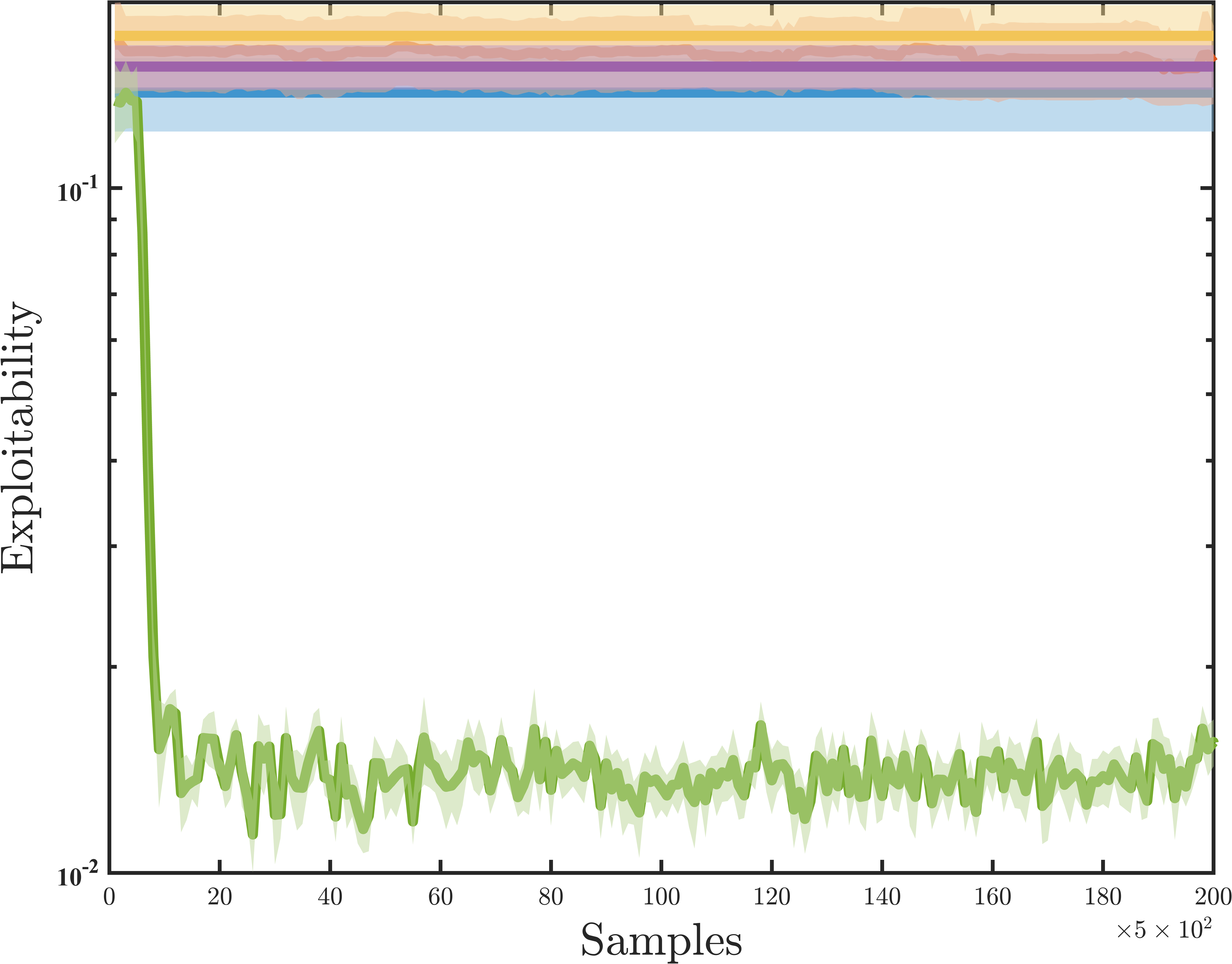}
		\caption{Exploitablity.}
		\label{fig:flock-expl}
	\end{subfigure}~\hspace{0.05in}
	\begin{subfigure}[b]{0.31\textwidth}
		\centering
		\includegraphics[width=1\linewidth]{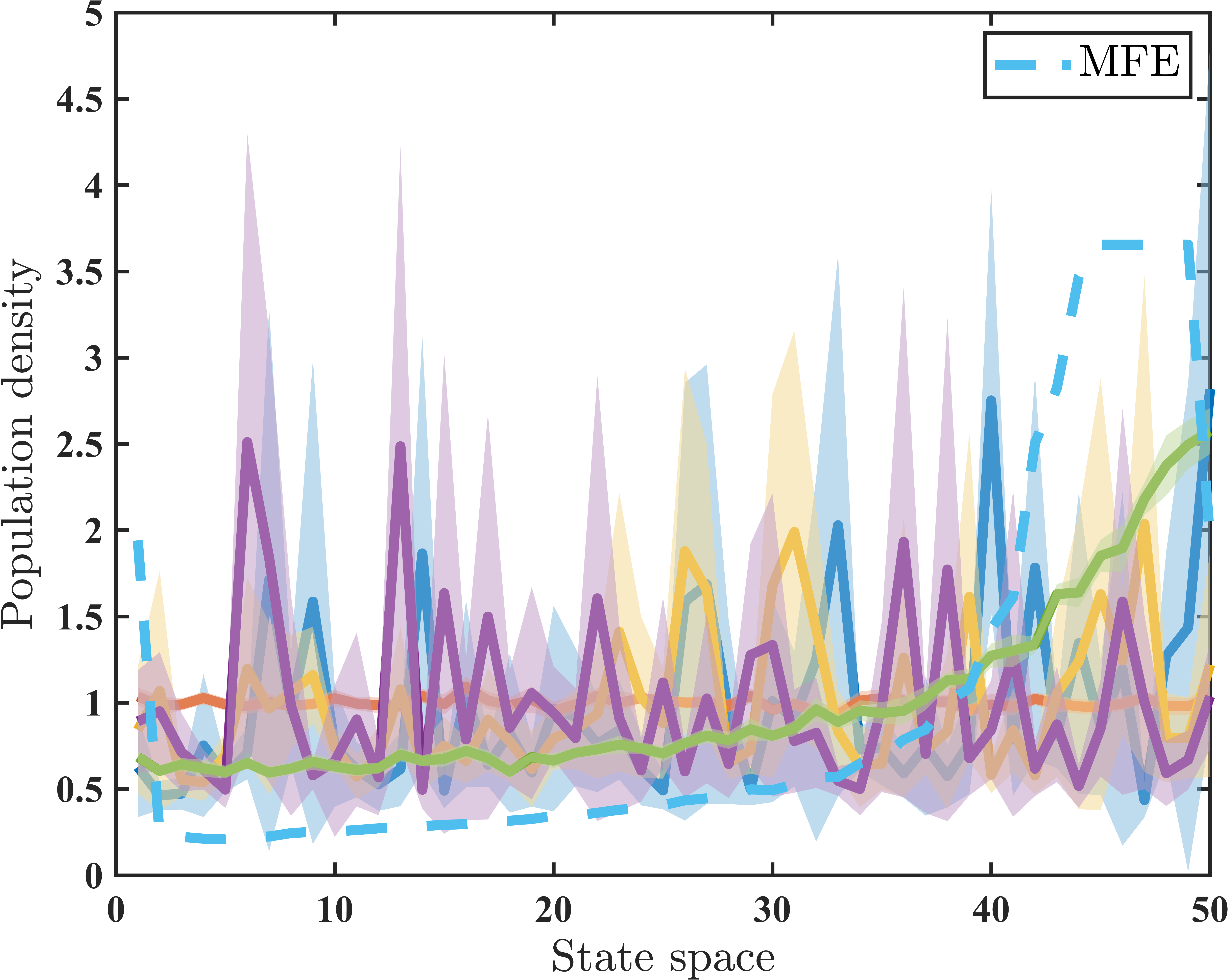}
		\caption{Learned distributions.}
		\label{fig:flock-mu}
	\end{subfigure}
	\caption{Convergence performance on flocking game.}
	\label{fig:flock}
\end{figure}

\subsection{Regularization} \label{apx:exp-reg}

As we can see from other experiments, regularization plays a crucial role in the learning process.
In this section, we explore the impact of regularization on the convergence performance of learning the flocking game.
We impose different levels of regularization on SemiSGD, and report the convergence performance in \cref{fig:reg}.
Recall that a small $L_{\pi}$ implies a large regularization.
Consistent with \cref{apx:exp-flock}, large regularization ($L_{\pi}\le 10^{2}$) obscures the reward signal, leading to a near-constant policy and population distribution.
With a moderate regularization ($L_{\pi} = 10^{4}$), SemiSGD begins to capture the gathering behavior near the destination.
And with negligible regularization ($L_{\pi} \ge 10^{6}$), SemiSGD achieves small MSE.
However, the greediness may lead to a high exploitability, as shown in \cref{fig:reg-expl}.

\begin{figure}[th]
	\centering
	\begin{subfigure}{0.31\textwidth}
		\includegraphics[width=\linewidth]{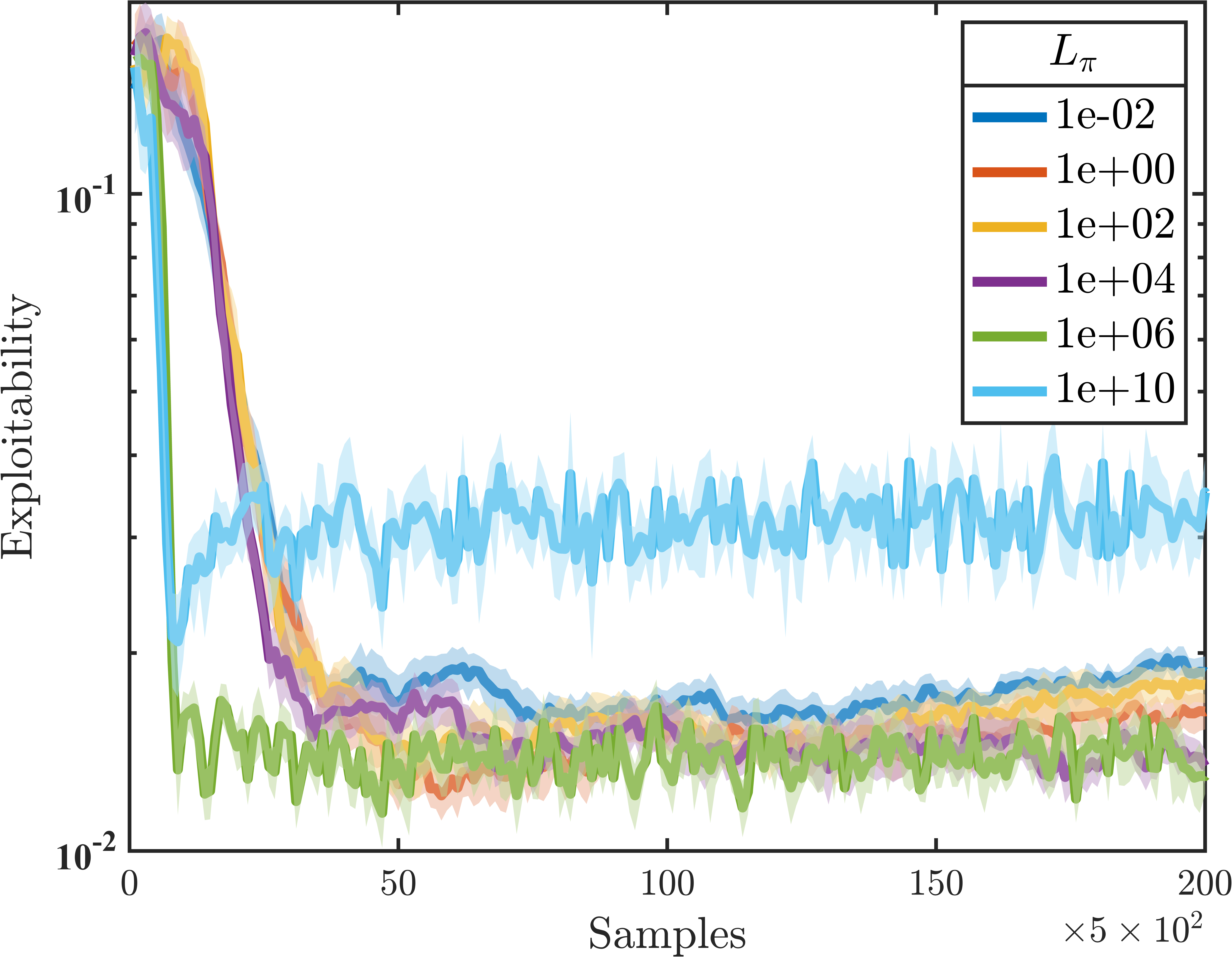}
		\caption{Exploitablity.}
		\label{fig:reg-expl}
	\end{subfigure}~\hspace{0.05in}
	\begin{subfigure}{0.31\textwidth}
		\includegraphics[width=\linewidth]{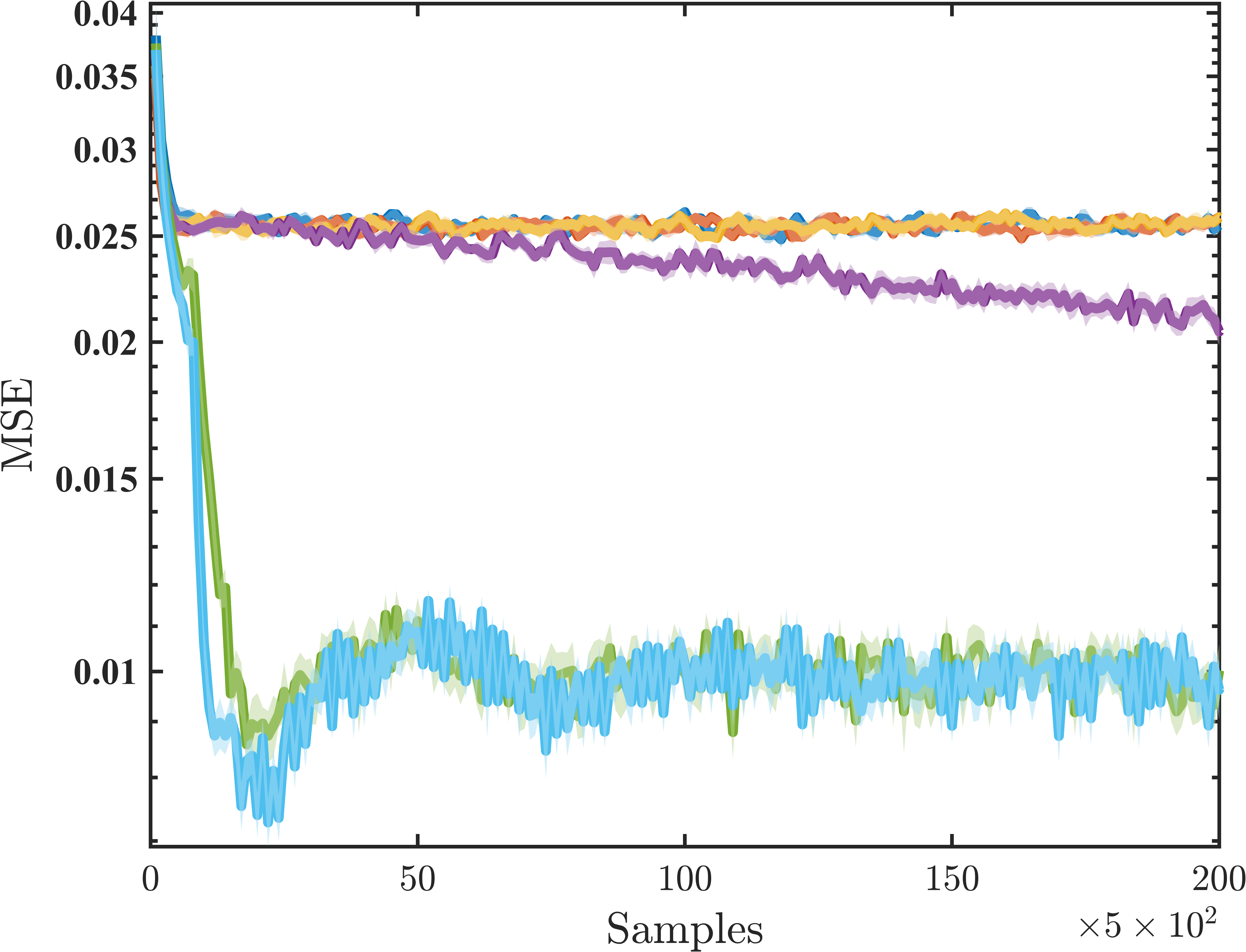}
		\caption{MSE.}
		\label{fig:reg-mse}
	\end{subfigure}~\hspace{0.05in}
	\begin{subfigure}[b]{0.31\textwidth}
		\centering
		\includegraphics[width=1\linewidth]{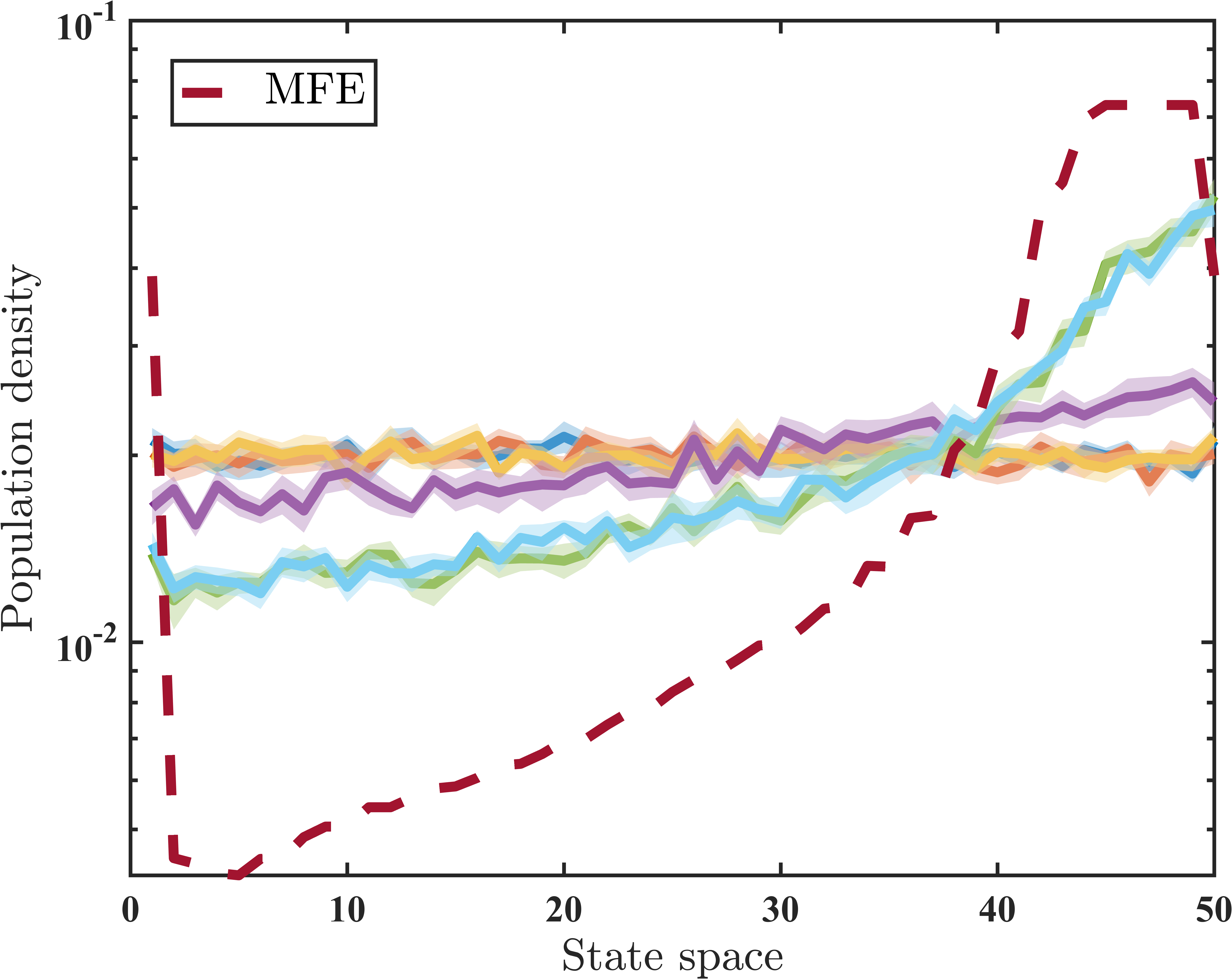}
		\caption{Learned distributions.}
		\label{fig:reg-mu}
	\end{subfigure}
	\caption{Convergence performance with different levels of regularization.}
	\label{fig:reg}
\end{figure}

In summary, this experiment demonstrates the trade-off between regularization and accuracy.
Regularization helps stabilize the learning process, but too much regularization may \emph{smooth out} important information and drive the regularized MFE away from the true MFE.

\subsection{Network routing} \label{apx:exp-graph}

We consider a routing game on the Sioux Falls network,\footnote{The topology of the network is available at \url{https://github.com/bstabler/TransportationNetworks}.} a graph with $24$ nodes and $74$ directed edges. We designate node $1$ as the starting point and node $20$ as the destination.
To construct an infinite-horizon game, we add a \emph{restart} edge $e_{75}$ from the destination back to the starting point.
On each edge, a vehicle selects its next edge to travel to. We consider a deterministic environment, meaning that the vehicle will follow the chosen edge without any randomness.
Therefore, both the state space and the action space can be represented by the edge set, i.e., $\mathcal{S} = \mathcal{A} = \{e_1,\ldots,e_{75}\} \cong [75]$, where $e_{75}$ is the restart edge.
It is worth noting that a vehicle can only select from the outgoing edges of its current location as its next edge.

The objective of a vehicle is to reach the destination as fast as possible. Due to congestion, a vehicle spends a longer time on an edge with higher population distribution. Specifically, the cost (time) on a non-restart edge is $r^{(\text{cong.})}(s,a,\mu) = - c_1\mu(s)^2 \mathbbm{1}  \{s \neq e_{75}\}$, where $c_1$ is a cost constant. To drive the vehicle to the destination, we impose a reward at the restart edge: $r^{(\text{term.})}(s,a,\mu) = c_2 \mathbbm{1}\{s = e_{75}\}$. Together, we get the cost function:
\[
	r(s,a,\mu) = \underbrace{- c_1\mu(s)^2 \mathbbm{1} \{s \neq e_{75}\}}_{\text{congestion cost}}  + \underbrace{c_2 \mathbbm{1}\{s = e_{75}\}
	}_{\text{terminal reward}}
	.\]
We set $c_1 = 10^5$ and $c_2=10$.
The other algorithmic parameters are chosen as follows: the discount factor $\gamma = 0.5$, the initial state is uniformly sampled, the initial value function is set as all-zero, the initial population is randomly generated.

Similarly, to satisfy \cref{eq:greedy}, we choose a large inverse temperature of $L_{\pi} = 10^{3}$ for all methods except ER, which uses $L_{\pi}^{(\mathrm{ER})} = L_{\pi} / 10^{5}$.
The performance comparison is reported in \cref{fig:graph}.
We only plot the learned population distribution by SemiSGD in \cref{fig:graph-mu} as all methods learn similar distributions.
Notably, all methods except FPI with ER have an oscillatory behavior in the exploitability around the same value.
This is due to the highly non-smooth nature of the underlying environment and the choice of near-greedy policies.
In terms of MSE, only SemiSGD and FPI with ER achieve a low MSE, with SemiSGD achieving the lowest error and being the most stable.

\begin{figure}[th]
	\begin{subfigure}{0.32\textwidth}
		\centering
		\includegraphics[width=\linewidth]{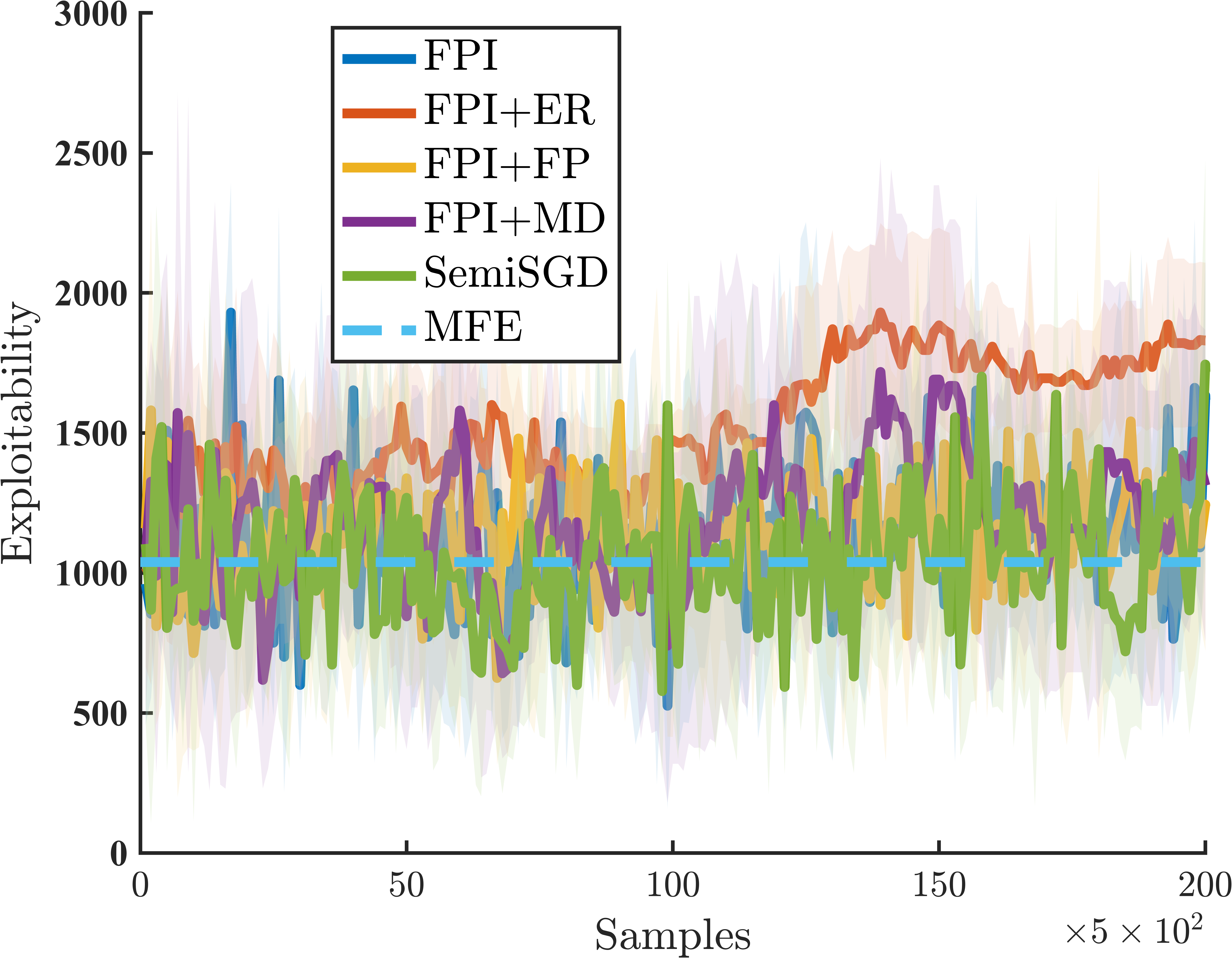}
		\caption{Exploitablity.}
		\label{fig:graph-expl}
	\end{subfigure}\hfill
	\begin{subfigure}{0.32\textwidth}
		\centering
		\includegraphics[width=\linewidth]{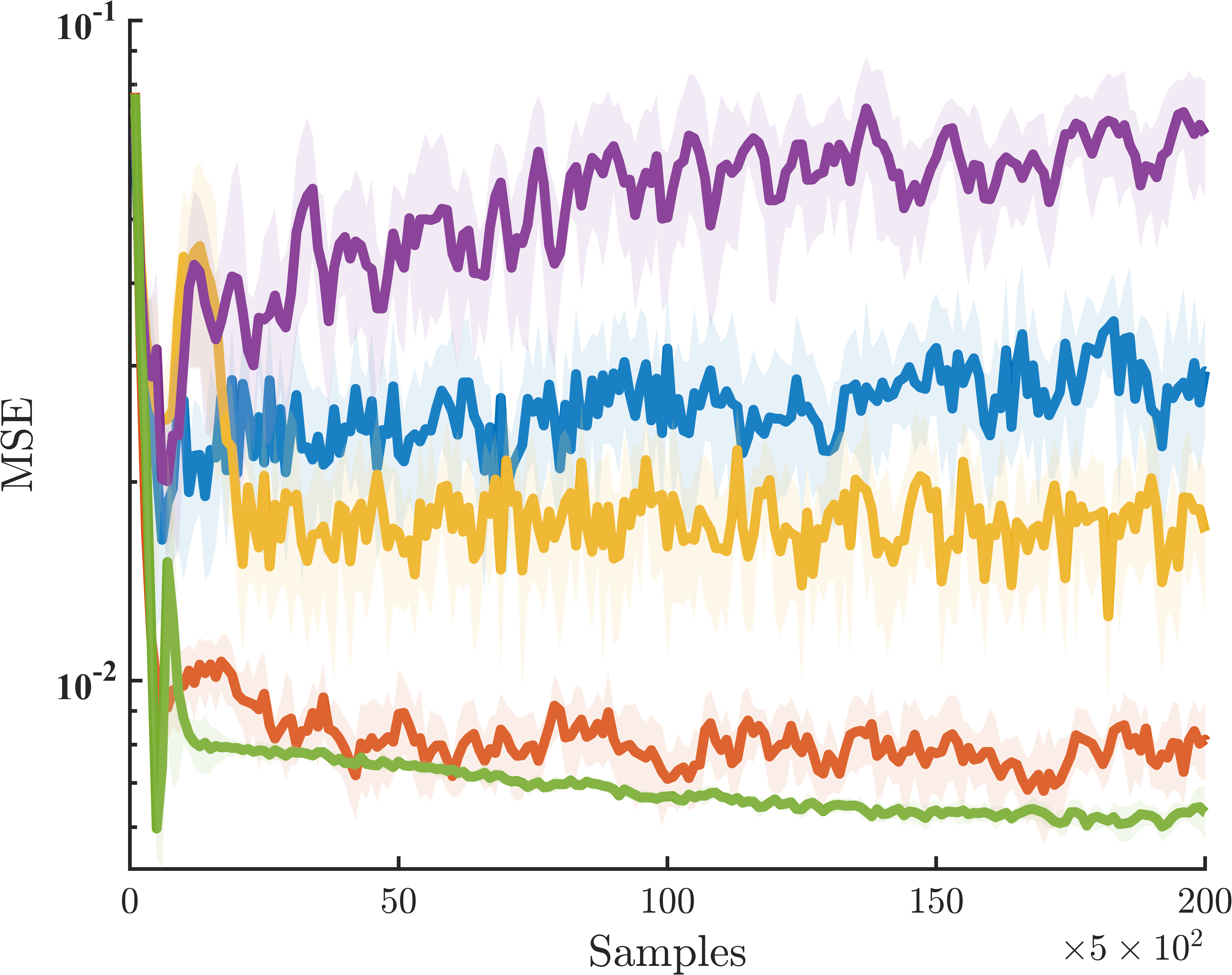}
		\caption{MSE.}
		\label{fig:graph-err}
	\end{subfigure}\hfill
	\begin{subfigure}{0.315\textwidth}
		\centering
		\includegraphics[width=\linewidth]{graph/mu.png}
		\caption{Learned population distribution.}
		\label{fig:graph-mu}
	\end{subfigure}
	\caption{Convergence performance on network routing.}
	\label{fig:graph}
\end{figure}

\subsection{PA-LFA and grid discretization} \label{apx:exp-lfa}

To demonstrate the effectiveness of PA-LFA, we compare it with grid discretization on the speed control example (\cref{apx:exp-rr}).
The reference MFE is calculated using a grid discretization with a granularity of $1 /200$.
We only apply LFA to the population measure estimate, as this is our main focus.
The measure basis is chosen as
\[
	\psi_{i}(s) = cf_{\mathcal{N}}(0) - f_{\mathcal{N}}(\tan((s-s_i)\pi))
	,\]
where $f_{\mathcal{N}}$ is probability density function of the normal distribution with zero mean and variation $v$, and $s_i$ is the center of the basis function.
Specifically, we set $c = 1.2$ and $v = d_2 /2$, and evenly distribute $\left\{ s_i \right\}_{i=1}^{d_2}$.

We run SemiSGD with $d_2$ states (grid discretization) or $d_2$ basis functions (PA-LFA) for $T=10^{4}$ steps.
Recall that PA-LFA with a $d_2$ dimensional feature space has comparable operation complexity to grid discretization with $d_2$ states.
Other parameters are the same as the general setup.
The final MSE (accuracy) comparison is reported in \cref{table:accu_comparison}.
The MSE curves and learned population distributions are shown in \cref{fig:lfa-mse} and \cref{fig:lfa-mu}, respectively.
As we can see, PA-LFA achieves a better accuracy than grid discretization for all $d_2 \le 50$.
This is because by choosing an appropriate measure basis, PA-LFA generalizes better than grid discretization, as the representation capacity of $\operatorname{span} \psi$ is larger than the grid discretization and the representation space of $\psi$ is closer to the MFE population measure.
This is more evident when $d_2$ is small.
As illustrated in \cref{fig:lfa-mu}, with just a few basis functions, PA-LFA can capture the general shape of the MFE population distribution much better than grid discretization.

\begin{figure}[th]
	\centering
	\begin{subfigure}{0.245\textwidth}
		\centering
		\includegraphics[width=1\textwidth]{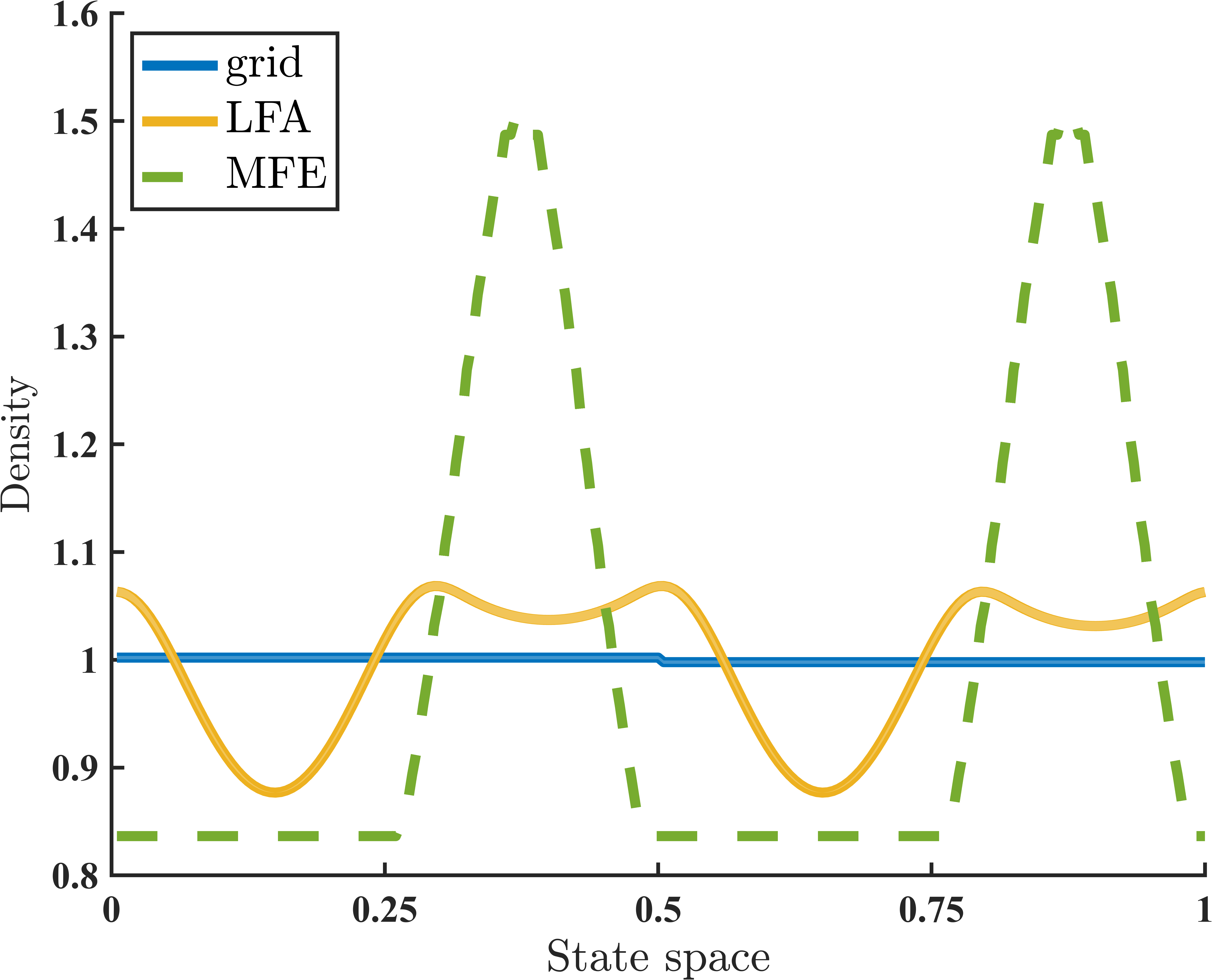}
		\caption{$d_2=2$.}
	\end{subfigure}
	\hfill
	\begin{subfigure}{0.245\textwidth}
		\centering
		\includegraphics[width=1\textwidth]{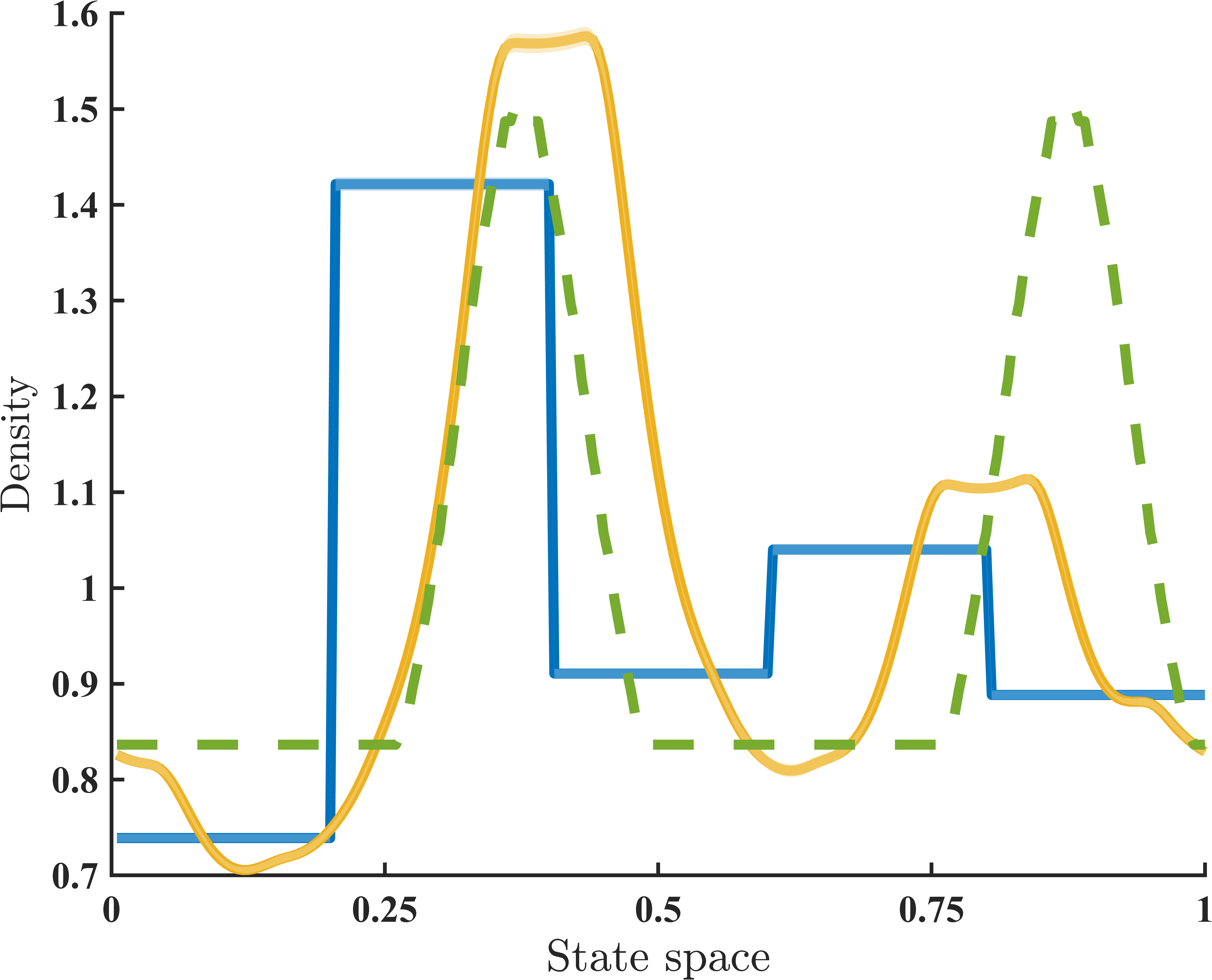}
		\caption{$d_2=5$.}
	\end{subfigure}
	\hfill
	\begin{subfigure}{0.245\textwidth}
		\centering
		\includegraphics[width=1\textwidth]{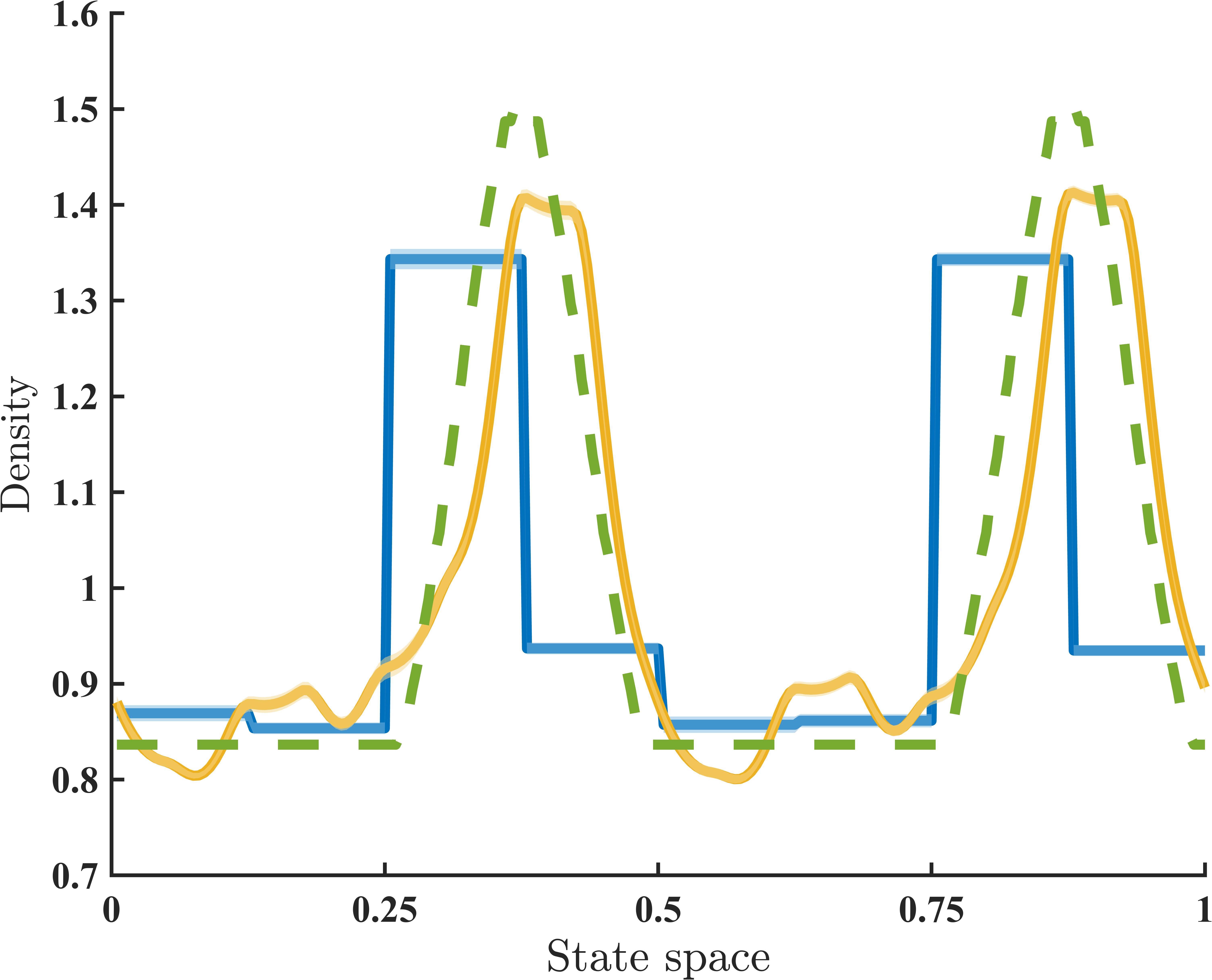}
		\caption{$d_2=8$.}
	\end{subfigure}
	\hfill
	\begin{subfigure}{0.245\textwidth}
		\centering
		\includegraphics[width=1\textwidth]{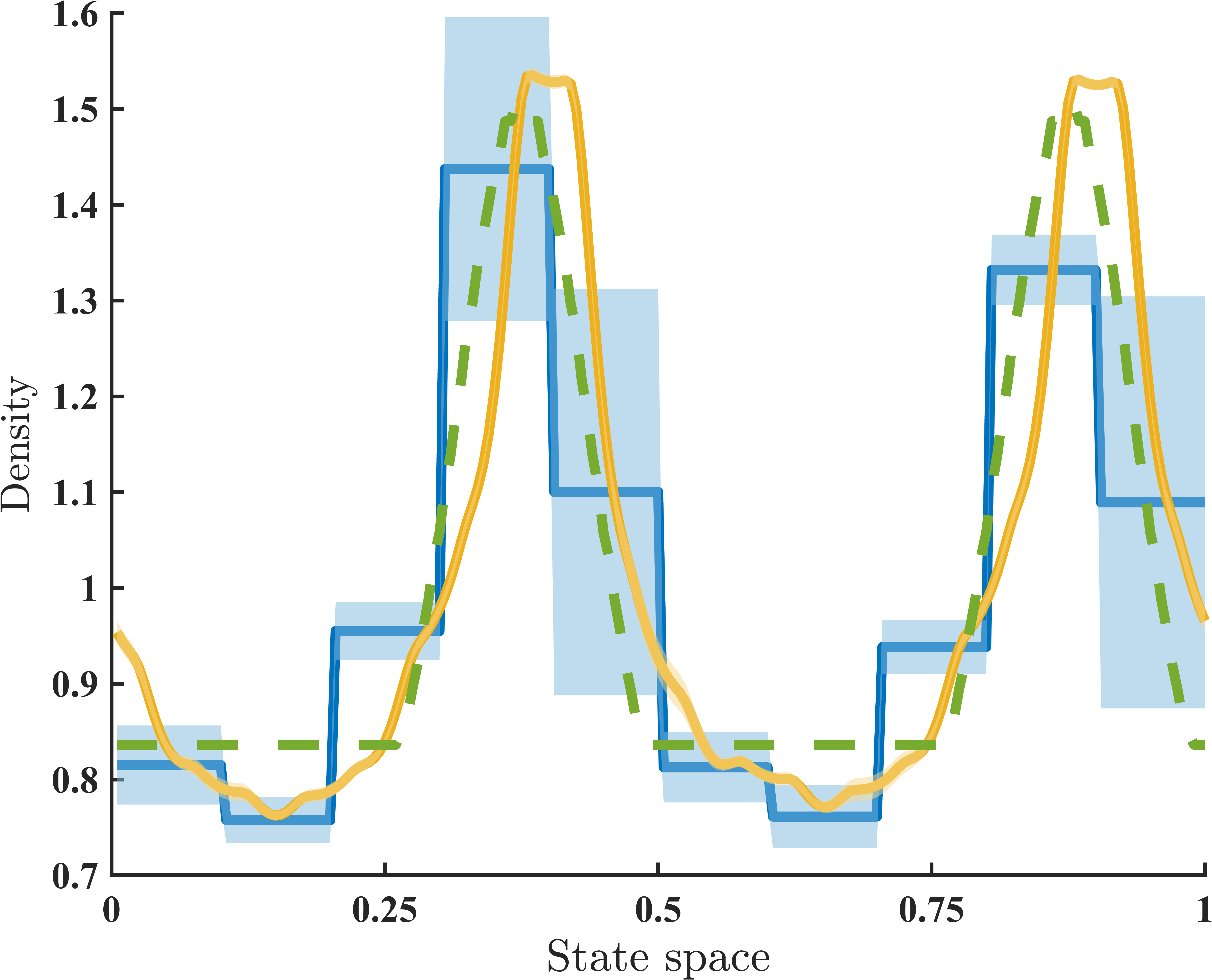}
		\caption{$d_2=10$.}
	\end{subfigure}\\
	\begin{subfigure}{0.245\textwidth}
		\centering
		\includegraphics[width=1\textwidth]{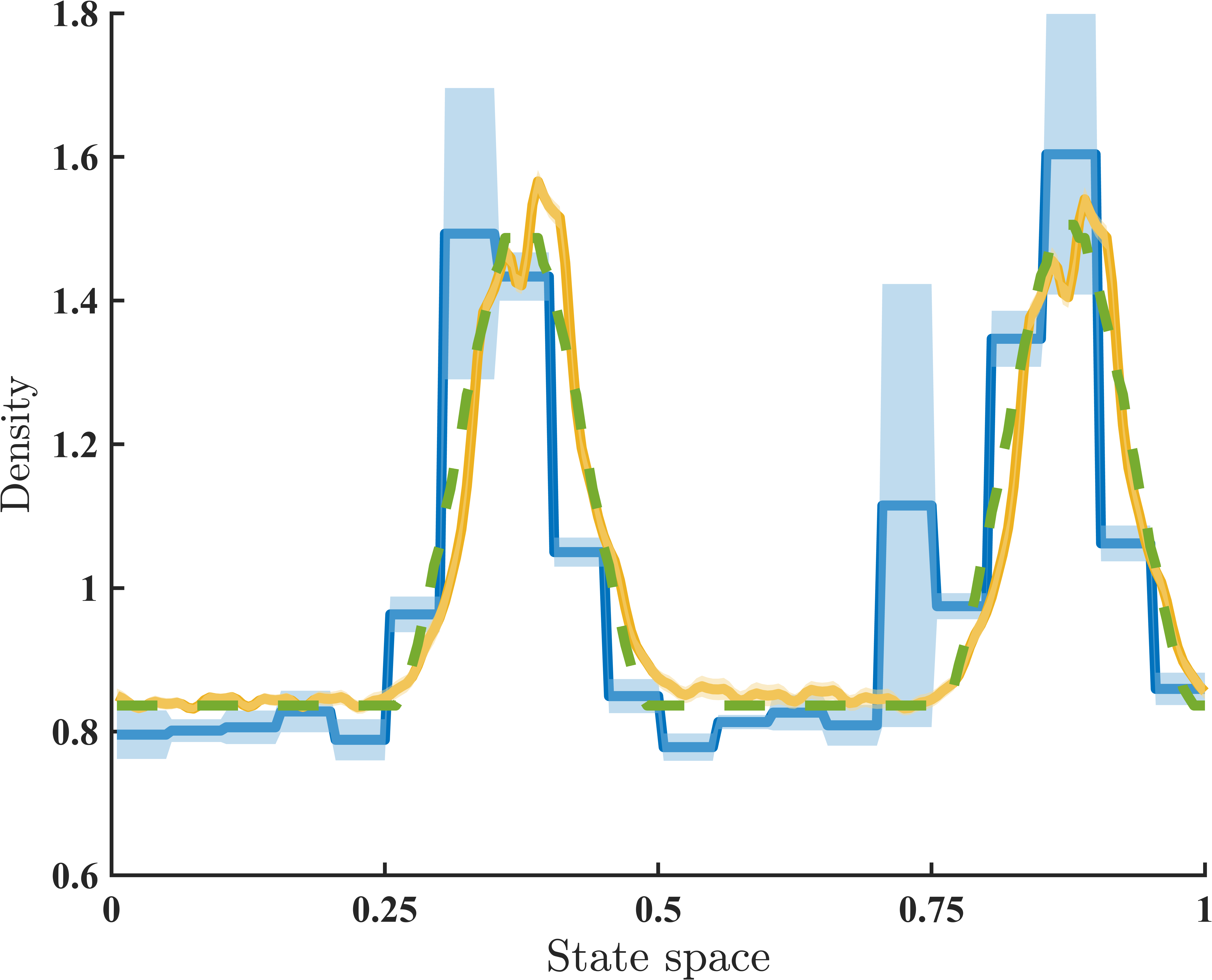}
		\caption{$d_2=20$.}
	\end{subfigure}
	\hfill
	\begin{subfigure}{0.245\textwidth}
		\centering
		\includegraphics[width=1\textwidth]{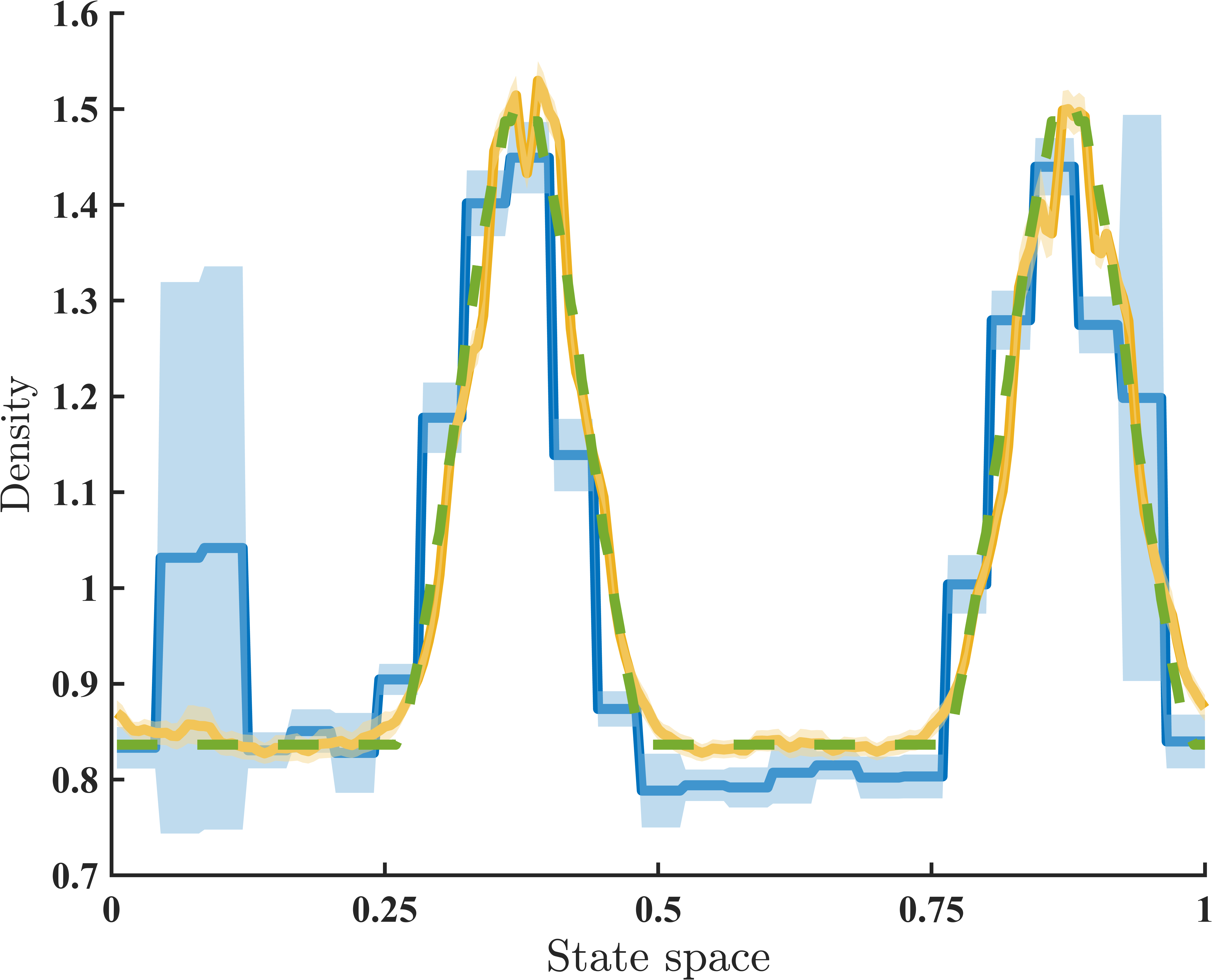}
		\caption{$d_2=25$.}
	\end{subfigure}
	\hfill
	\begin{subfigure}{0.245\textwidth}
		\centering
		\includegraphics[width=1\textwidth]{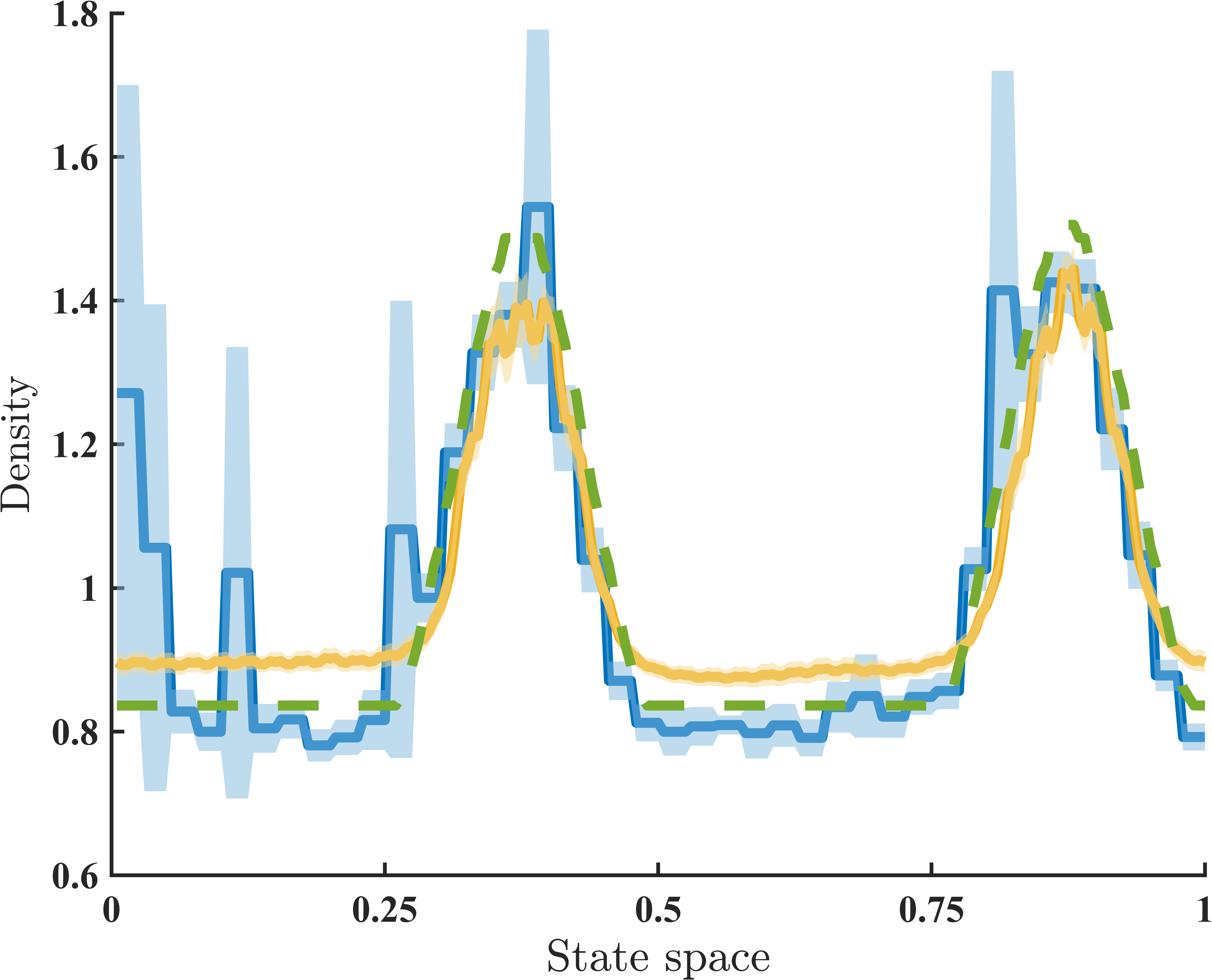}
		\caption{$d_2=40$.}
	\end{subfigure}
	\hfill
	\begin{subfigure}{0.245\textwidth}
		\centering
		\includegraphics[width=1\textwidth]{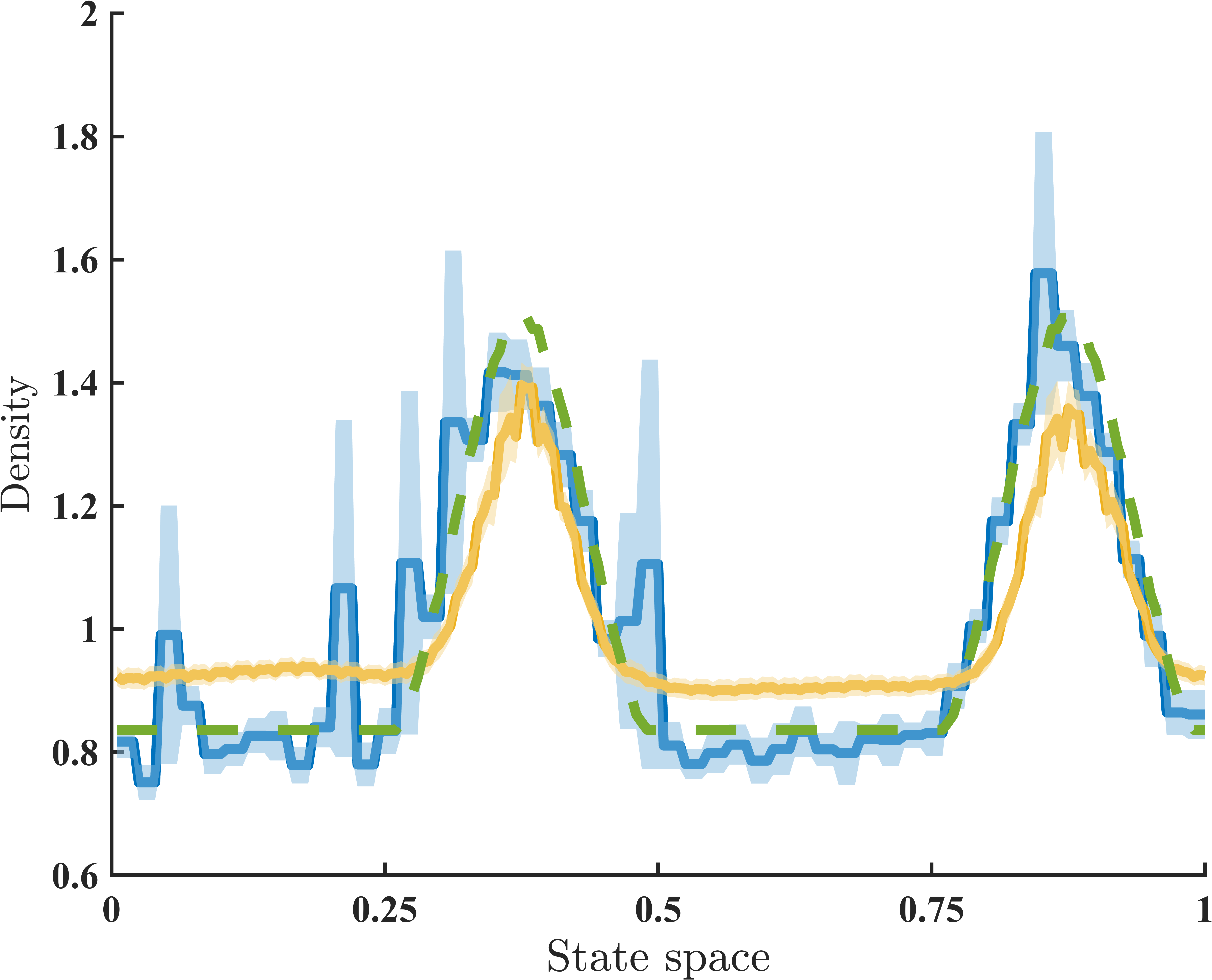}
		\caption{$d_2=50$.}
	\end{subfigure}
	\caption{LFA versus discretization on learned distributions.}
	\label{fig:lfa-mu}
\end{figure}

\begin{figure}[th]
	\centering
	\begin{subfigure}{0.245\textwidth}
		\centering
		\includegraphics[width=1\textwidth]{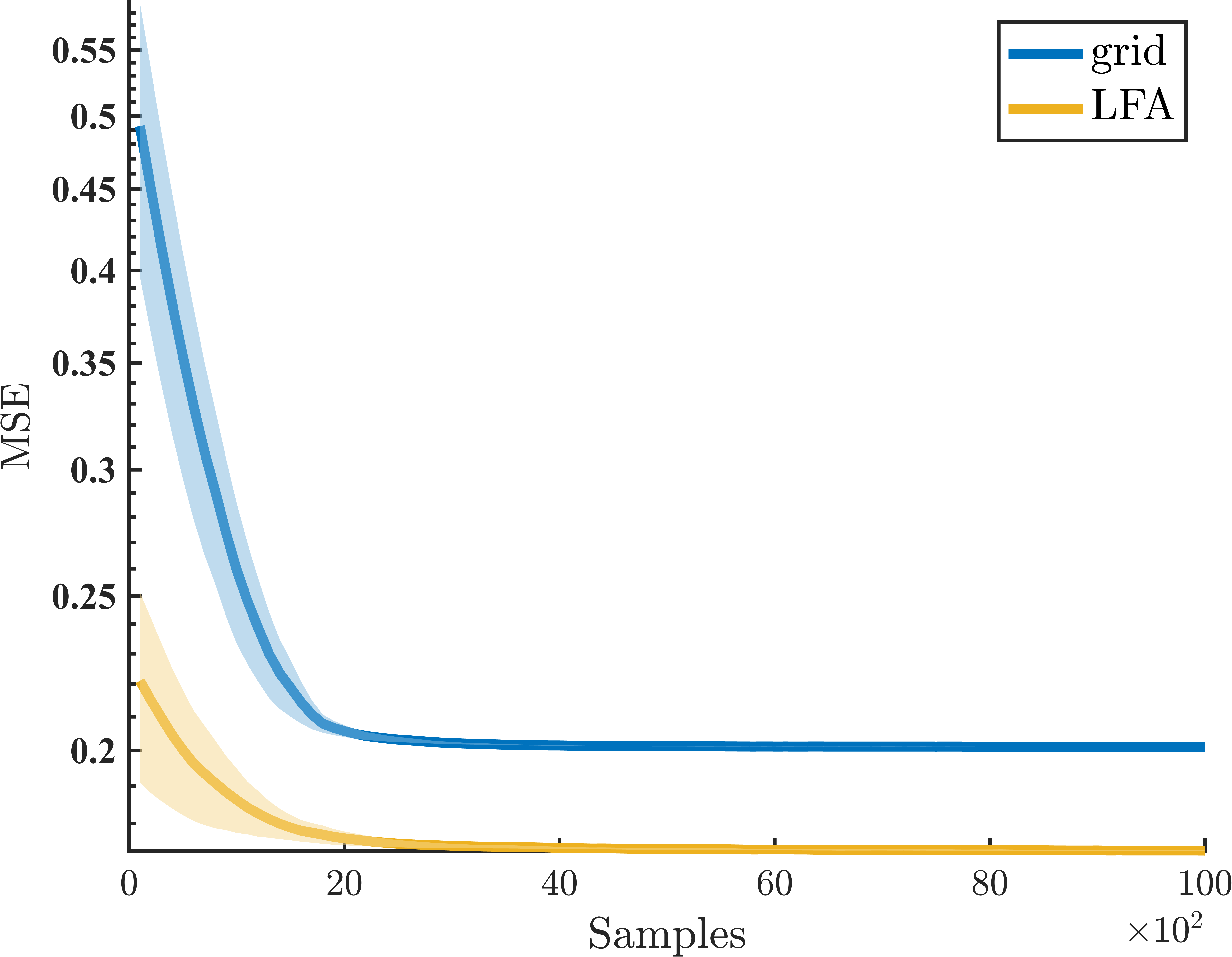}
		\caption{$d_2=2$.}
	\end{subfigure}
	\hfill
	\begin{subfigure}{0.245\textwidth}
		\centering
		\includegraphics[width=1\textwidth]{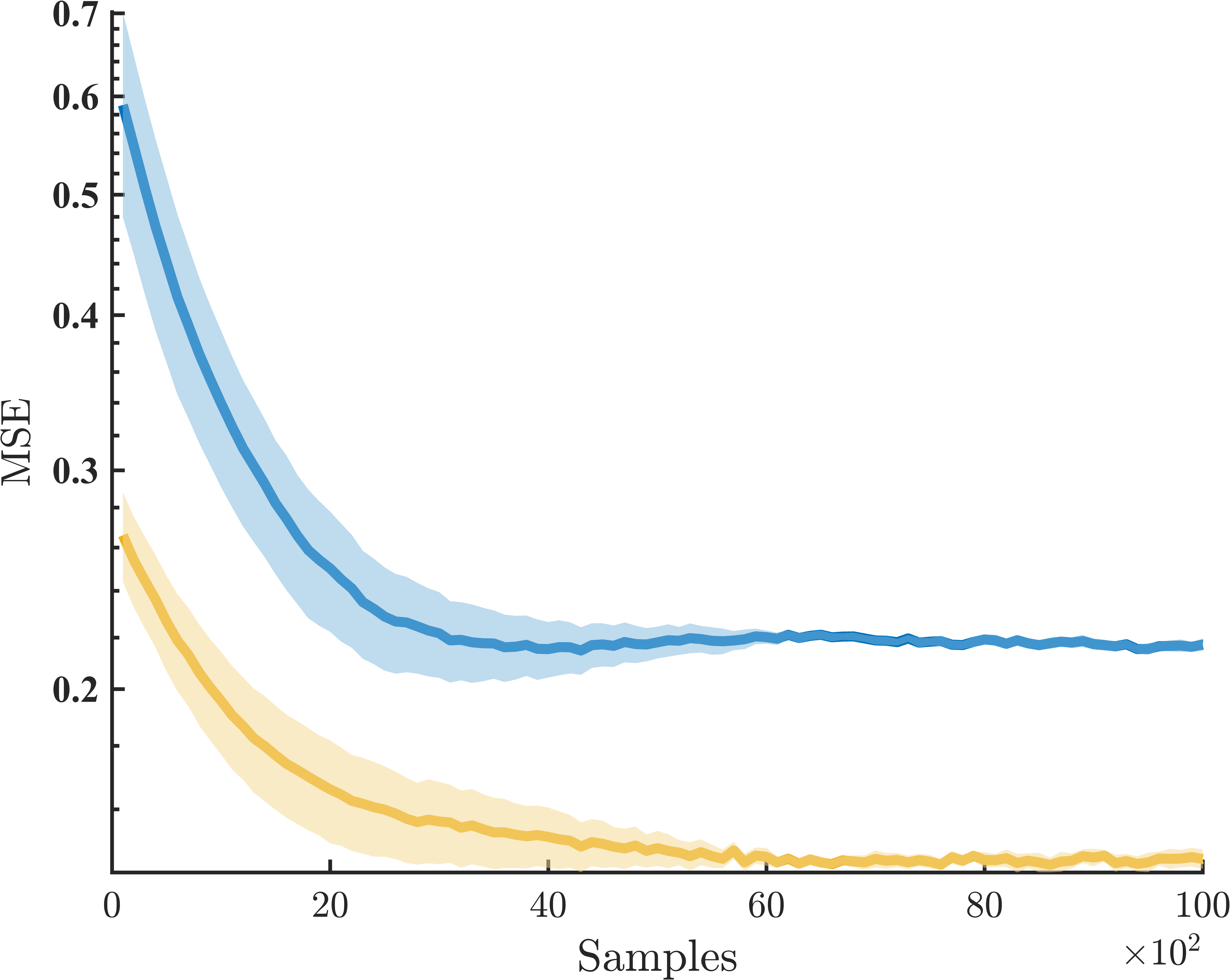}
		\caption{$d_2=5$.}
	\end{subfigure}
	\hfill
	\begin{subfigure}{0.245\textwidth}
		\centering
		\includegraphics[width=1\textwidth]{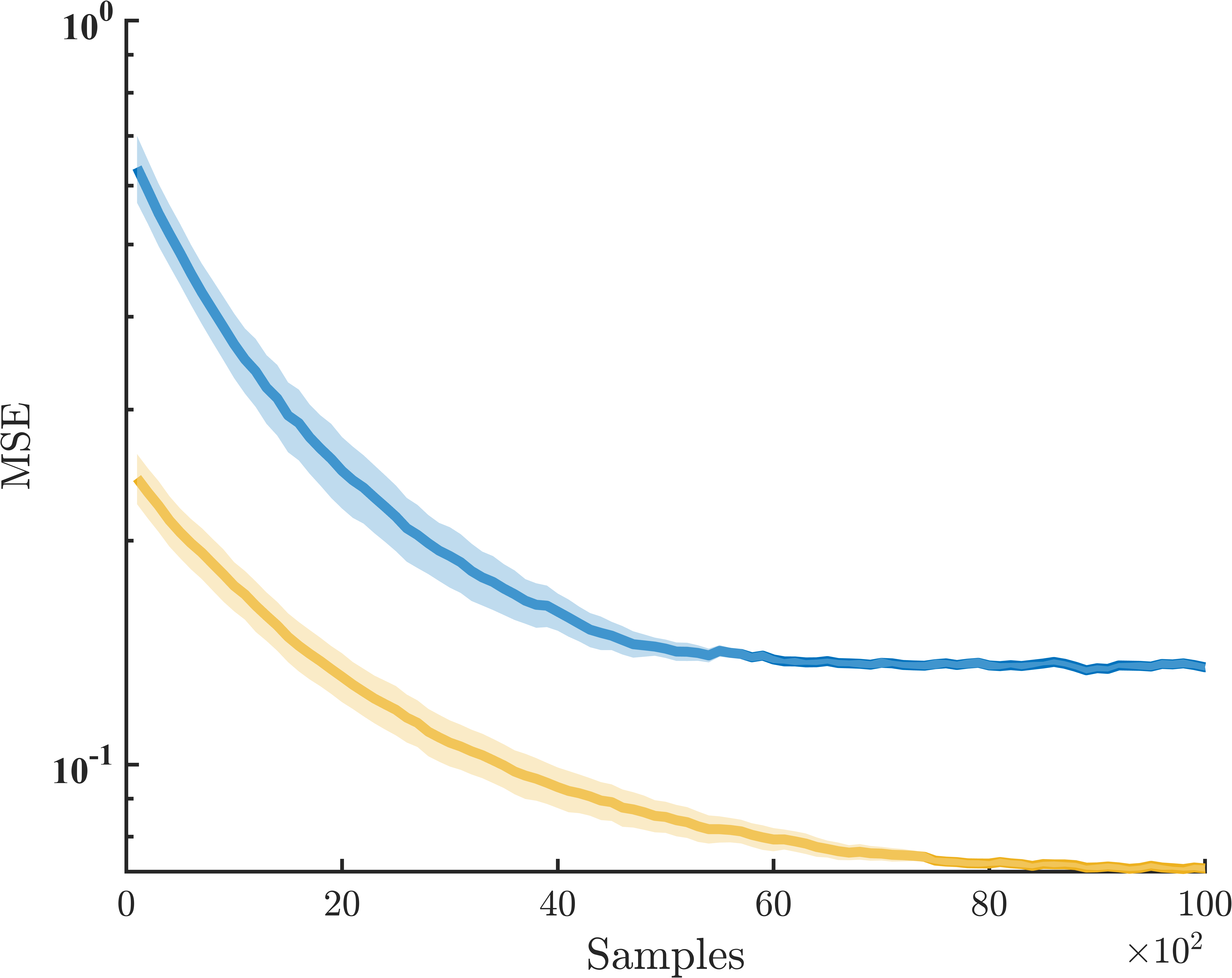}
		\caption{$d_2=8$.}
	\end{subfigure}
	\hfill
	\begin{subfigure}{0.245\textwidth}
		\centering
		\includegraphics[width=1\textwidth]{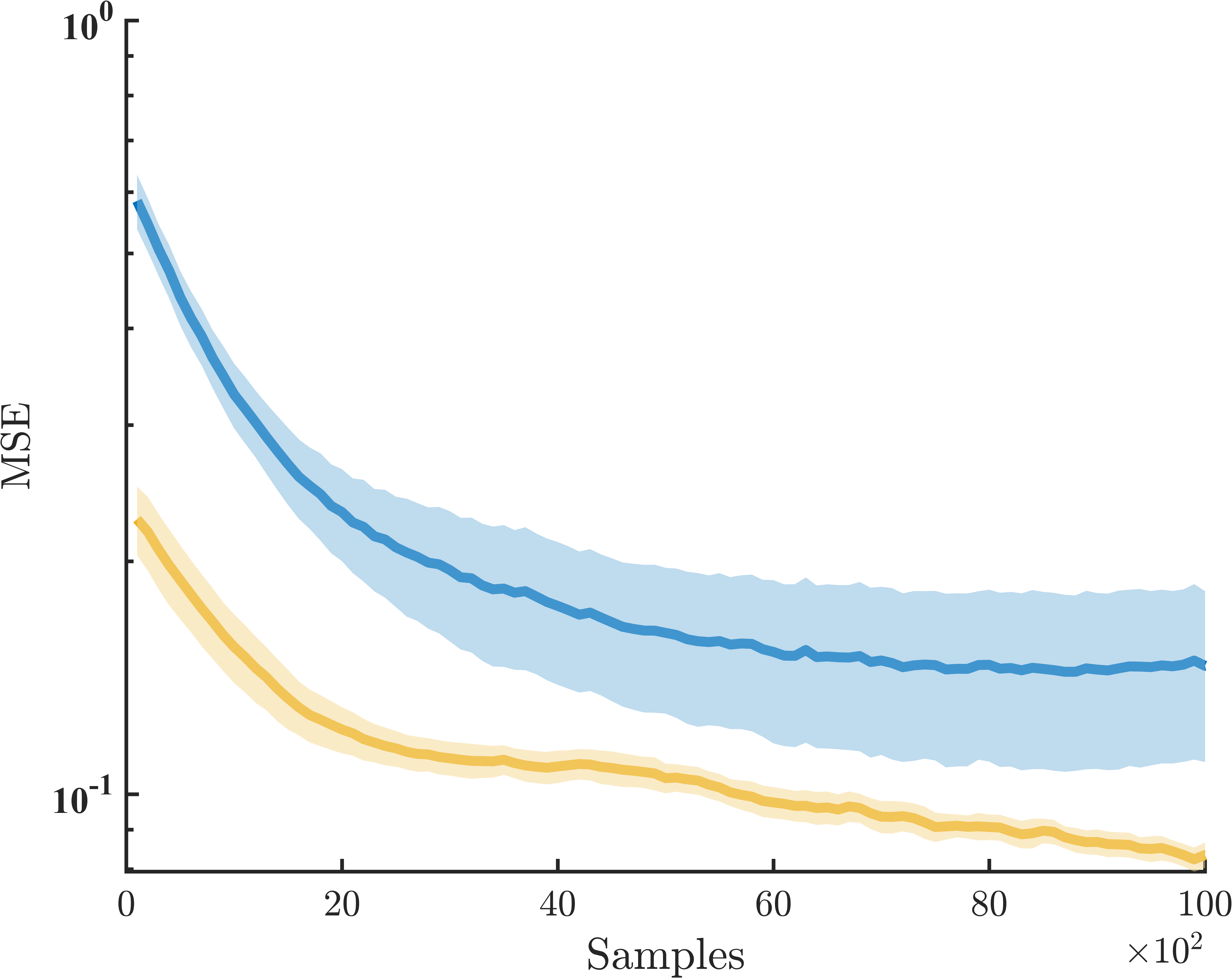}
		\caption{$d_2=10$.}
	\end{subfigure}\\
	\begin{subfigure}{0.245\textwidth}
		\centering
		\includegraphics[width=1\textwidth]{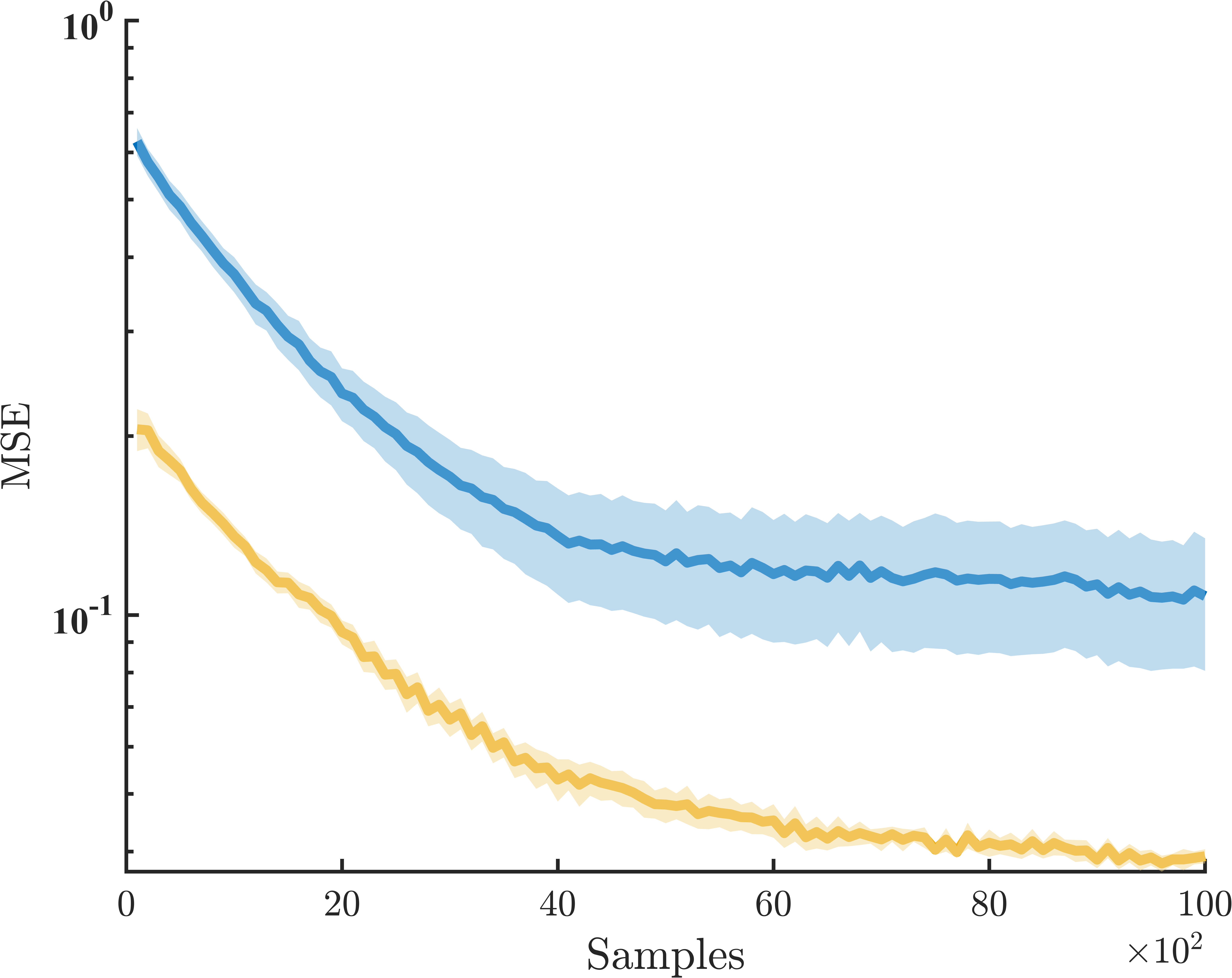}
		\caption{$d_2=20$.}
	\end{subfigure}
	\hfill
	\begin{subfigure}{0.245\textwidth}
		\centering
		\includegraphics[width=1\textwidth]{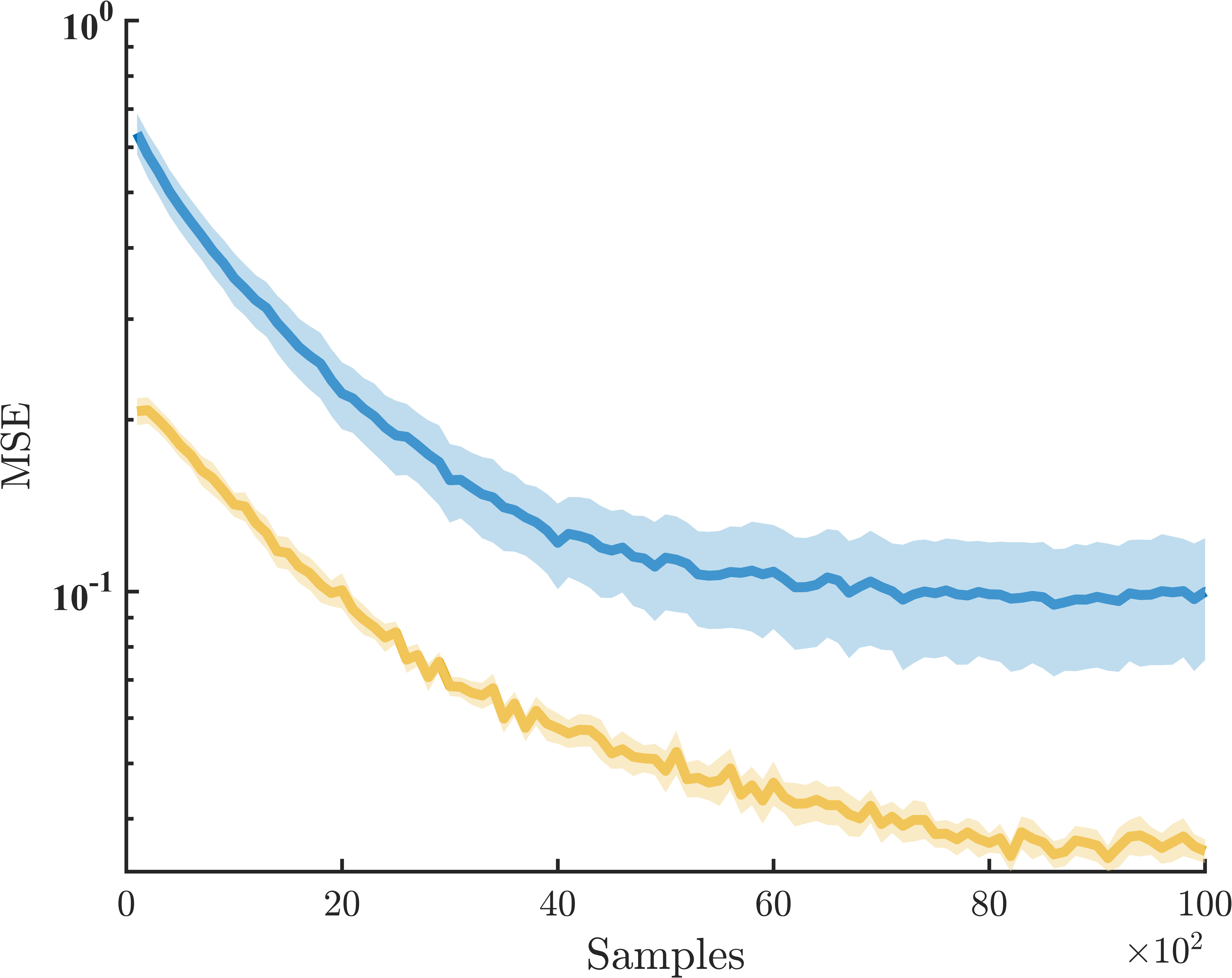}
		\caption{$d_2=25$.}
	\end{subfigure}
	\hfill
	\begin{subfigure}{0.245\textwidth}
		\centering
		\includegraphics[width=1\textwidth]{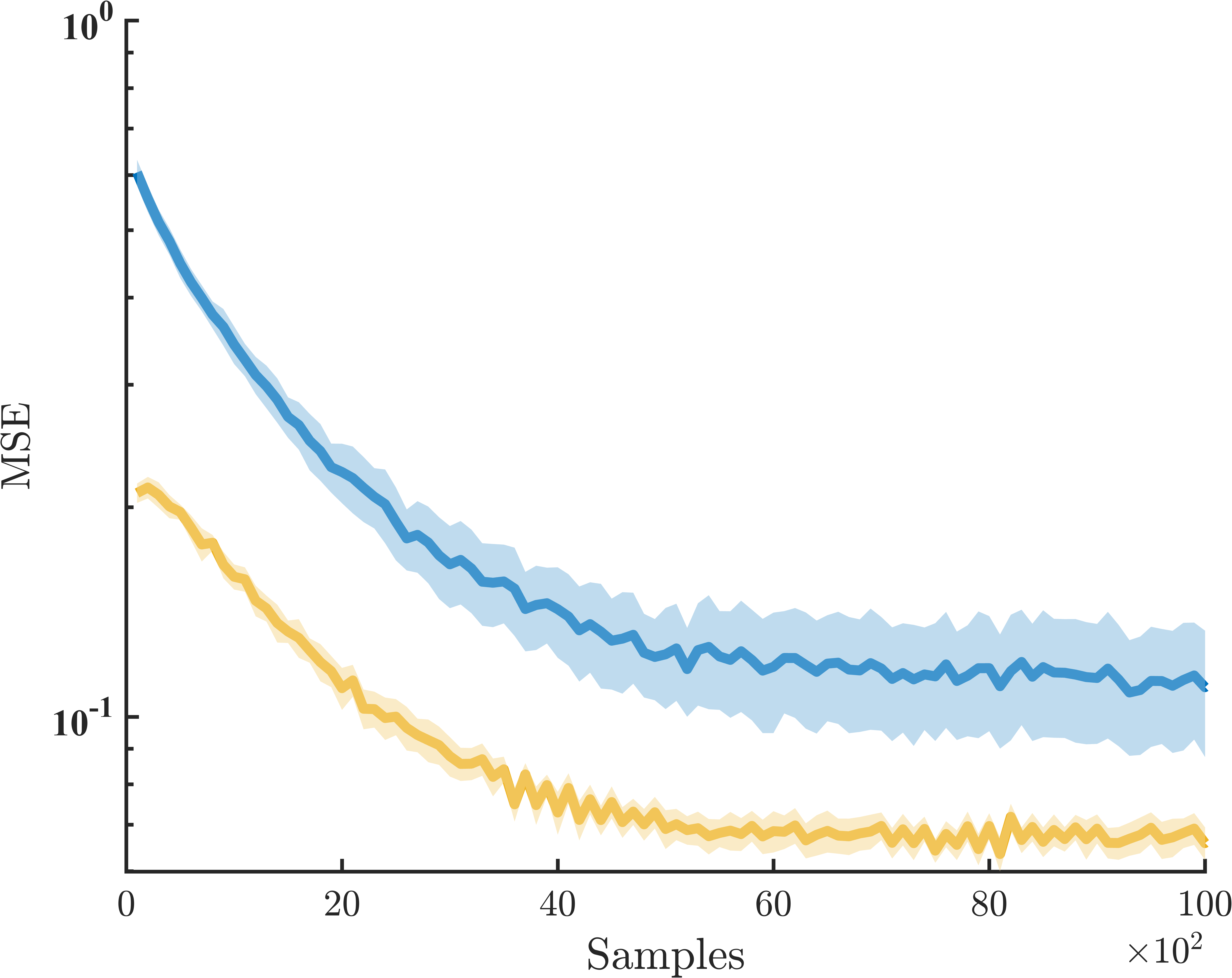}
		\caption{$d_2=40$.}
	\end{subfigure}
	\hfill
	\begin{subfigure}{0.245\textwidth}
		\centering
		\includegraphics[width=1\textwidth]{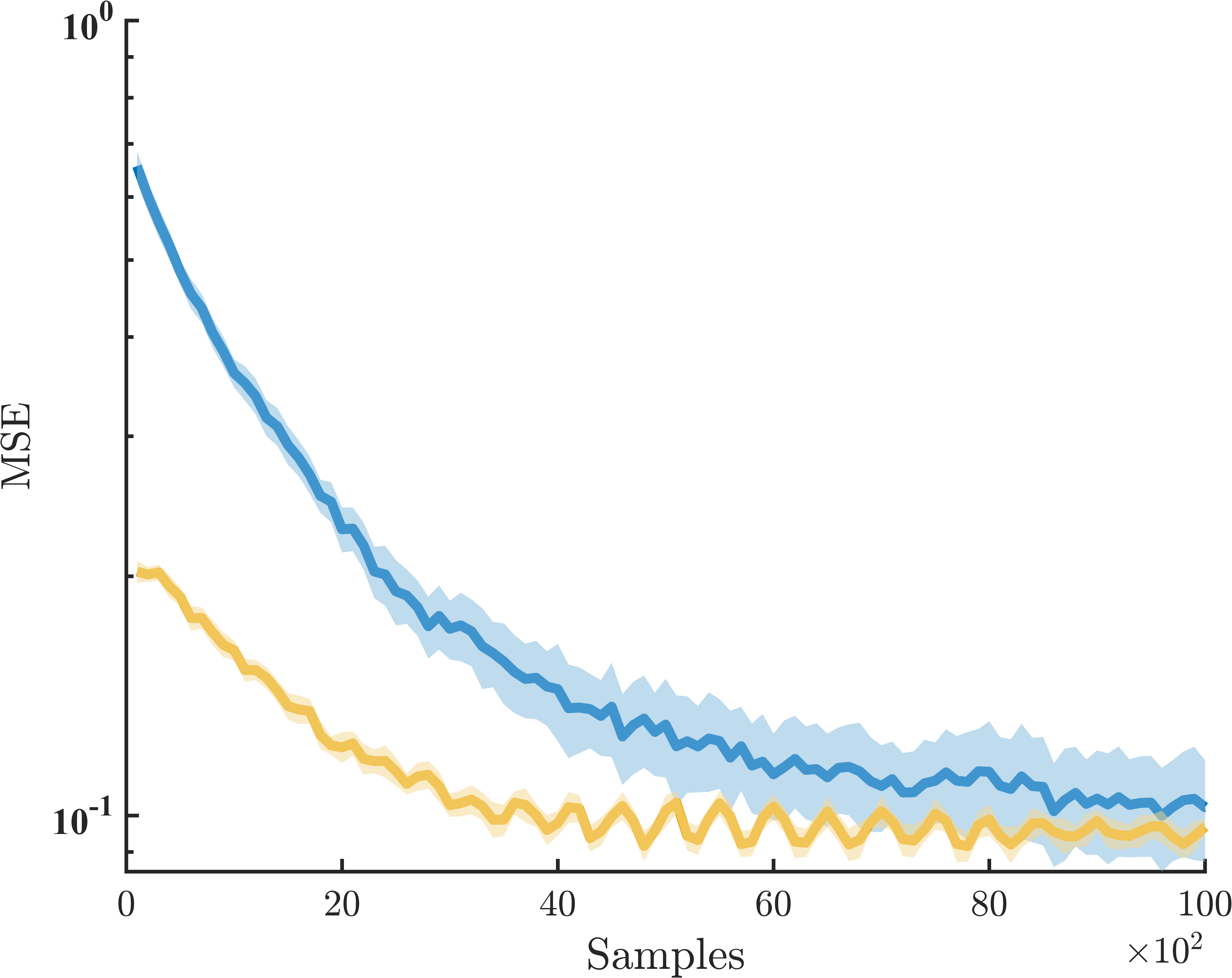}
		\caption{$d_2=50$.}
	\end{subfigure}
	\caption{LFA versus discretization on MSE.}
	\label{fig:lfa-mse}
\end{figure}

\section{Notation} \label{apx:notation}

\cref{tb:notation} provides a summary of the symbol notation used in this paper.
We introduce some supplementary notation to assist analysis.

\begin{table}[ht]
	\caption{Notation.}
	\label{tb:notation}
	\centering\begin{tabular}{cl}
		\toprule
		Notation                           & Definition                                                      \\
		\midrule
		$\D(\mathcal{X}), \M(\mathcal{X})$ & Space of probability and signed measures on space $\mathcal{X}$ \\
		$\Delta$                           & Probability simplex                                             \\
		$\delta$                           & Dirac delta measure                                             \\
		$a, \A$                            & Action and action space                                         \\
		$s, \S$                            & State and state space                                           \\
		$r,R$                              & Reward function and its supremum norm bound                     \\
		$\gamma$                           & Discount factor                                                 \\
		$P,\P$                             & Transition kernel and operator                                  \\
		$\T$                               & Bellman operator                                                \\
		$Q,q$                              & Action-value function                                           \\
		$M,\mu$                            & Population measure                                              \\
		$\pi, \Gamma_{\pi}$                & Policy and policy operator                                      \\
		$\phi$                             & State-action feature map                                        \\
		$\psi$                             & State measure basis                                             \\
		$F$                                & Norm bound of $\psi$                                            \\
		$d$                                & Feature/measure space dimension                                 \\
		$\nu$                              & Linear reward function parameter                                \\
		$\omega, \Omega$                   & Linear transition kernel parameter                              \\
		$\eta$                             & Population measure parameter                                    \\
		$\theta$                           & Value function parameter                                        \\
		$\xi,\Xi$                          & Concatenated parameter and its parameter space                  \\
		$\Pi$                              & Projection operator                                             \\
		$D$                                & Value function projection bound                                 \\
		$H$                                & Problem scale ($H=(1+\gamma)D + R + 2F$)                        \\
		$\g$                               & Semi-gradient                                                   \\
		$\G, \b$                           & Temporal difference operator                                    \\
		$G$                                & Gram matrix                                                     \\
		$L$                                & Lipschitz constant                                              \\
		$w$                                & Contraction constant                                            \\
		$\alpha$                           & Step-size                                                       \\
		$m,\rho,\sigma, k$                 & Ergodicity constants                                            \\
		$\tau$                             & Backtracking period                                             \\
		\bottomrule
	\end{tabular}
\end{table}

\paragraph{Concatenation and direct sum.}
We consider finite-dimensional Euclidean spaces as the parameter spaces. Thus, for any $x\in\R^{d_1}$ and $y\in\R^{d_2}$, we denote their concatenation as $(x;y)\in\R^{d_1+d_2}$.
We sometimes write it as the general direct sum between two vectors $x\oplus y \in \R^{d_1}\oplus\R^{d_2} \cong \R^{d_1+d_2}$.
For matrices and operators, we have $(A\oplus B) (x\oplus y) = Ax \oplus By$.
This notation is especially useful for handling the unified parameter $\xi = \theta\oplus \eta$ (see, e.g., \cref{lem:lip-g}).
Additionally, we use $\Xi$ to denote the unified parameter space $\Xi\coloneqq \R^{d_1}\oplus \R^{d_2}$.

\paragraph{Unprojected parameters.}
We denote the unprojected parameters as $\breve{\xi}$, $\breve{\theta}$, and $\breve{\eta}$. That is,
\[
	\xi_{t+1} = \Pi(\breve{\xi}) = \Pi(\xi_{t} - \alpha _t \g_{t}(\xi_{t}))
	.\]
For any parameter $\xi_{*}$ in the projected region, we have $\|\xi_{*} - \Pi(\breve{\xi})\| \le \|\xi_{*} - \breve{\xi}\|$.

\paragraph{Steady distributions.}
We denote $\mu_{\xi}\in \D(\S)$ as the steady state distribution induced by parameter $\xi = (\theta; \eta)$, i.e., by policy $\Gamma_{\pi}(\theta)$ and transition kernel and reward function determined by population measure parameter $\eta$.
$\mu_{\xi}$ is the marginal distribution of the following two steady distributions:
\[
	\mu_{\xi}^\dagger(s,a) \coloneqq \mu_{\xi}(s)\pi_{\theta}(a\given s),\quad
	\mu_{\xi}^\ddagger(s,a,s',a') \coloneqq \mu_{\xi}(s)\pi_{\theta}(a\given s)P(s'\given s,a,\eta)\pi_{\theta}(a'\given s')
	.\]
We write $\mathbb{E}_{\xi}$ as the expectation over the steady distribution induced by $\xi$; it should be clear from the context which steady distribution is used.

\paragraph{Semi-gradients and temporal difference operators.}
Here we review the definition of semi-gradients. With a slight abuse of notation, we use a single operator $\g$ to return semi-gradients for both the action-value function and population measure.
It should be clear from the argument of $\g$ which parameter the semi-gradient is for.
Specifically, with a sample tuple $O=(s,a,r,s',a')$, we have
\[
	\g(\xi; O) = \left( \g(\theta; O); \g(\eta; O) \right)
	= \left( \G(O)\theta - \b(O); \C\eta - \psi(s') \right)
	,\]
where $\C$ is the gram matrix of the measure basis $\psi$, and $\G$ and $\b$ are the temporal difference operators defined as
\[
	\G(O) = \phi(s,a)(\phi(s,a)-\gamma\phi(s',a'))^{T}, \quad \b(O) = \phi(s,a)r
	.\]
Notably, $\C$ is a constant matrix, and $\G$ is a Gram-like matrix that depends on the sample tuple.
Therefore, we sometimes drop the subscript $\phi$ in $\G$ and use other subscripts to indicate its dependence on the sample tuple. It should be clear that any $G$ with a subscript other than $\psi$ refers to $\G$.
When the sample tuple is an online observation at time step $t$, i.e., $O_t = (s_t,a_t,r_t,s_{t+1},a_{t+1})$, we use shorthand
\[
	\g_{t}(\cdot ) = \g(\cdot ; O_t),\quad G_t = \G(O_t),\quad \b_t = \b(O_t), \quad \psi_t = \psi(O_{t})
	.\]

Backtracking is an analysis technique introduced to tackle \emph{rapidly changing Markov chains} \citep{zou2019Finitesampleanalysis,zhang2023ConvergenceSARSA,zhang2024FiniteTimeAnalysis}. It considers a virtual stationary Markov chain by backtracking a period $\tau$, fixing the parameter $\xi_{t-\tau}$, and then sampling the Markovian observations with the fixed parameter.
By the ergodicity of stationary Markov chains (\cref{asmp:ergo}), the virtual trajectory rapidly converges to the stationary distribution induced by $\xi_{t-\tau}$.
We denote $\O_{t}$ as the virtual observation tuple on this virtual trajectory at time $t$.
When we consider the semi-gradients on this virtual trajectory, we write out its dependence on $\O_{t}$ explicitly:
\[
	\g\tta(\cdot ; \O_{t}),\quad G\tta(\O_{t}) ,\quad \b\tta(\O_{t}), \quad \psi\tta(\O_{t})
\]
with the subscript indicating the backtracking period $\tau$.

\emph{Mean-path} semi-gradients are the expectation of semi-gradients over a steady distribution induced by a parameter:
\[
	\bar{\g}_{\xi} = \EE_{\xi}\g(\xi; O)
	,\]
where the subscript $\xi$ indicates that the observation tuple $O$ follows the steady distribution induced by $\xi = (\theta;\eta)$.
More explicitly, the states follows the steady distribution corresponding to transition kernel $P(\cdot \given \cdot,\cdot,\eta)$, and the actions follow policy $\Gamma_{\pi}(\theta)$, and the rewards are generated by $r(\cdot,\cdot,\eta)$.
Similarly, we have
\[
	\bar{G}_{\xi} = \EE_{\xi}\G(O),\quad \bar{\b}_{\xi} = \EE_{\xi}\b(O), \quad \bar{\psi}_{\xi} = \EE_{\xi}\psi(s')
	.\]
When the parameter has a subscript $\xi_{\circ}$, we also use shorthand
\[
	\bar{\g}_{\circ} = \bar{\g}_{\xi_{\circ}},\quad \bar{G}_{\circ} = \bar{G}_{\xi_{\circ}},\quad \bar{\b}_{\circ} = \bar{\b}_{\xi_{\circ}},\quad \bar{\psi}_{\circ} = \bar{\psi}_{\xi_{\circ}}
	.\]
For example, $\bar{\g}_{t} = \bar{\g}_{\xi_{t}}$ and $\bar{\g}_{*} = \bar{\g}_{\xi_{*}}$.

\section{Proof of \texorpdfstring{\cref{prop:linear}}{Proposition 1}} \label{apx:prop-linear}

\begin{proof}
	By the Bellman equation, we have
	\[
		\begin{aligned}
			  & Q^{\pi}_{M}(s,a) = r(s,a,M) + \gamma \int_{\S\times \A}Q^{\pi}_{M}(s',a') \pi(a'\given s')P(s'\given s,a,M)\d s' \d a'                                      \\
			= & \left< \phi(s,a), {\nu}_{M} \right> + \gamma \int_{\S\times \A} Q^{\pi}_{M}(s',a')  \pi(a'\given s') \left< \phi(s,a), \Omega_{M}\psi(s') \right>\d s'\d a' \\
			= & \left< \phi(s,a), \underbrace{{\nu}_{M}  + \gamma \int_{\S\times \A} Q^{\pi}_{M}(s',a')  \pi(a'\given s') \Omega_{M}\psi(s')\d s'\d a' }_{\theta} \right>
			.\end{aligned}
	\]
	By the transition equation, we have
	\[
		\begin{aligned}
			\mu^{\pi}_{M} (s') & =
			\int_{\S\times \A} P(s'\given s,a,M)\pi(a\given s)\mu^{\pi}_{M}(s) \d s \d a                                                                          \\
			                   & = \int_{\S\times \A} \left< \phi(s,a), \Omega_{M}\psi(s') \right> \pi(a\given s)\mu^{\pi}_{M}(s) \d s \d a                       \\
			                   & = \int_{\S\times \A} \left< \psi(s'), \Omega_{M}^{T}\phi(s,a) \right> \pi(a\given s)\mu^{\pi}_{M}(s) \d s \d a                   \\
			                   & = \left< \psi(s'), \underbrace{\int_{\S\times \A} \Omega_{M}^{T}\phi(s,a)\pi(a\given s)\mu_{M}^{\pi}(s)\d s\d a}_{\eta}  \right>
			.\end{aligned}
	\]
\end{proof}

\section{Projected MFE and stationary point} \label{apx:prop-station}

We call a value function-population measure pair a \emph{stationary point} if the mean-path semi-gradient evaluated at this point is zero.
\cref{sec:sample,apx:sample} show that SemiSGD converges to a stationary point.
Ideally, we want the stationary point to be an MFE (\cref{def:mfe}).
\cref{prop:linear,apx:prop-linear} show that for linear MFGs, images of $\T$ and $\P$ are within the linear spans of $\phi$ and $\psi$, indicating the linear structure of the MFE.
However, this does not hold for general (non-linear) MFGs, hinting the discrepancy between the stationary point and the MFE for non-linear MFGs.

We define the \emph{projected MFE} using the Bellman operator and transition operator composed with the projection operators.

\begin{definition}[Projected MFE] \label{def:pmfe}
	We say $\xi = (\theta ; \eta)$ constitutes a projected MFE if
	\[
		\left< \phi,\theta \right> = \Pi_{\phi}\T_{\xi}\left< \phi,\theta \right>,\quad
		\left< \psi,\eta \right> = \Pi_{\psi}\P_{\xi}\left< \psi,\eta \right>
		,\]
	where $\Pi_{\phi}$ and $\Pi_{\psi}$ are orthogonal projection operators onto the linear spans of $\phi$ and $\psi$, respectively.
\end{definition}

It should be noted that the projection operators are determined by the inner product structure of the function spaces.
They can be explicitly expressed as
\[
	\Pi_{\phi} = \phi^{T}(\langle \phi,\phi^{T}\rangle_{\circ})^{-1}\langle \phi, \cdot \rangle_{\circ}
	,\]
where $\phi$ is the function basis and $\left<\cdot ,\cdot   \right>_{\circ}$ is the chosen inner product.
Specifically, we choose the $L_2$ inner product on $\M(\S)$, giving
\[
	\Pi_{\psi} = \
	\psi^{T}\C^{-1}\langle \psi, \cdot \rangle_{L_2}
	.\]
For the projection acting on $\T_{\xi}$, we choose the inner product induced by the steady distribution $\mu_{\xi}$, i.e., $\langle f,g \rangle_{\mu_{\xi}} = \int f(o)g(o)\mu^\ddagger_{\xi}(\d o)$.
Then, we have
\[
	\Pi_{\phi}\T_{\xi} = \phi^{T}\widehat{G}_{\xi}^{-1}\langle \phi, \T_{\xi}\cdot  \rangle_{\mu_{\xi}}
	,\]
where $\widehat{G}_{\xi}$ is the Gram matrix of $\phi$ w.r.t. the inner product $\langle \cdot ,\cdot \rangle_{\mu_{\xi}}$.
Note that $\widehat{G}_{\xi}$ is different from the TD operator $\G$ or $G_{\xi}$ defined in \cref{apx:notation}, which is only Gram-like.

We are now ready to prove a generalized version of \cref{prop:station}. Recall that for linear MFGs, the projected MFE is the MFE itself.

\begin{proposition}[Projected MFE as a stationary point]\label{prop:proj-station}
	$\xi$ is a projected MFE if and only if $\bar{\g}_{\xi}(\xi) = 0$.
\end{proposition}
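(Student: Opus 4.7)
Proof plan: The equivalence decouples along the two components of $\xi = (\theta;\eta)$ into the claims $\bar{\g}_\xi(\theta)=0 \iff \langle\phi,\theta\rangle = \Pi_\phi\T_\xi\langle\phi,\theta\rangle$ and $\bar{\g}_\xi(\eta)=0 \iff \langle\psi,\eta\rangle = \Pi_\psi\P_\xi\langle\psi,\eta\rangle$. Both are instances of the standard ``stochastic-approximation fixed point equals projected operator fixed point'' duality, so I would prove each by a single orthogonality argument, specialized to the inner product that defines the corresponding projection.

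For the value-function component, unfolding $\bar{\g}_\xi(\theta) = \bar{G}_\xi\theta - \bar{b}_\xi$ under $(s,a,s',a')\sim\mu_\xi^\ddagger$ and setting it to zero gives
\[
	\mathbb{E}_{\mu_\xi^\ddagger}\!\bigl[\phi(s,a)\bigl(\langle\phi(s,a),\theta\rangle - r(s,a,M_\eta) - \gamma\langle\phi(s',a'),\theta\rangle\bigr)\bigr] = 0.
\]
Recognizing the second factor as $\langle\phi(s,a),\theta\rangle - \T_\xi\langle\phi,\theta\rangle(s,a)$, this is coordinate-wise orthogonality $\langle\phi_i, \langle\phi,\theta\rangle - \T_\xi\langle\phi,\theta\rangle\rangle_{\mu_\xi^\dagger} = 0$ of the Bellman residual to $\mathrm{span}(\phi)$ under the $\mu_\xi$-weighted inner product that defines $\Pi_\phi$. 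Since $\langle\phi,\theta\rangle\in\mathrm{span}(\phi)$, the orthogonality is equivalent to $\langle\phi,\theta\rangle = \Pi_\phi\T_\xi\langle\phi,\theta\rangle$, which is the Tsitsiklis--Van Roy projected-Bellman characterization.

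For the population component I would run the analogous argument with $(\P_\xi, \Pi_\psi, \langle\cdot,\cdot\rangle_{L_2})$ in place of $(\T_\xi, \Pi_\phi, \langle\cdot,\cdot\rangle_{\mu_\xi})$. By $G_\psi = \int\psi\psi^{T}\d s$ we have $G_\psi\eta = \langle\psi, M_\eta\rangle_{L_2}$, and under the mean-path convention matching the projected-MFE definition ($s\sim M_\eta$ so that $s'$ is distributed as $\P_\xi M_\eta$), $\bar{\psi}_\xi = \langle\psi, \P_\xi M_\eta\rangle_{L_2}$. Hence $\bar{\g}_\xi(\eta)=0$ becomes $\langle\psi_i, M_\eta - \P_\xi M_\eta\rangle_{L_2}=0$ for every $i$, i.e., $M_\eta - \P_\xi M_\eta \perp_{L_2} \mathrm{span}(\psi)$. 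Because $M_\eta\in\mathrm{span}(\psi)$, this is exactly $M_\eta = \Pi_\psi\P_\xi M_\eta$.

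The main obstacle is pinning down the precise mean-path distribution entering $\bar{\psi}_\xi$: the argument above requires the convention in which $s$ is drawn from the current population estimate $M_\eta$ (so $s'\sim\P_\xi M_\eta$), rather than from the steady distribution of the fixed-parameter chain. One must verify that this is indeed the convention implicit in $\bar{\g}_\xi$, since the two conventions only coincide at a fixed point of $\P_\xi$. The distinction disappears in the linear-MFG setting of \cref{prop:station}: there $\P_\xi M_\eta \in \mathrm{span}(\psi)$ by \cref{prop:linear}, so $\Pi_\psi$ acts trivially and the projected MFE collapses to an MFE, yielding \cref{prop:station} as a corollary.
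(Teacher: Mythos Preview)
Your approach coincides with the paper's: both treat the two blocks separately via the orthogonality characterization of the projection—Tsitsiklis--Van Roy for $\theta$ under the $\mu_\xi$-weighted inner product, and the $L_2$ analogue for $\eta$. The paper passes through the explicit form $\Pi_\phi=\phi^T\hat G_\xi^{-1}\langle\phi,\cdot\rangle_{\mu_\xi}$ and $\Pi_\psi=\psi^T G_\psi^{-1}\langle\psi,\cdot\rangle_{L_2}$ rather than invoking orthogonality directly, but this is cosmetic.

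The convention issue you flag for the $\eta$-block is real, and the paper's own proof does not resolve it either. The paper writes out the integral $\int\psi(s')P(s'\mid s,a,\eta)\pi_\theta(a\mid s)\langle\psi(s),\eta\rangle\,ds\,da\,ds'$—which is $\langle\psi,\P_\xi M_\eta\rangle_{L_2}$, exactly the quantity needed for the projected-MFE characterization—and then silently identifies it with $\EE_\xi[\psi(s')]$. But under the paper's stated convention $\EE_\xi$ averages over the \emph{steady} distribution $\mu_\xi$, which gives $\langle\psi,\mu_\xi\rangle_{L_2}$ instead, so the stationary-point condition would read $M_\eta=\Pi_\psi\mu_\xi$ rather than $M_\eta=\Pi_\psi\P_\xi M_\eta$. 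In the linear case both conditions reduce to $M_\eta=\mu_\xi$ (by \cref{prop:linear} everything lands in $\mathrm{span}(\psi)$, and by ergodicity the one-step and infinite-horizon fixed points of $\P_\xi$ coincide), so \cref{prop:station} is unaffected; for general MFGs the two fixed-point conditions need not agree. Your caution here is well placed.
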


\begin{proof}
	For a parameter $\xi = (\theta;\eta)$, by the definition of mean-path semi-gradients, we have
	\begin{align}
		\bar{\g}_{\xi}(\theta) = & \EE_{\xi} \left[ \phi(s,a) \left( \phi^{T}(s,a)\theta - \gamma \phi^{T}(s',a')\theta - r(s,a,\eta) \right)\right], \\
		\bar{\g}_{\xi}(\eta) =   & \EE_{\xi} \left[ \C\eta - \psi(s') \right]
		,\end{align}
	where the observation tuple $(s,a,s',a')$ follows the steady distribution induced by $\xi$.
	On the other hand, by the definition of the projection operators, we have
	\[
		\left( \Pi_{\phi}\T_{\xi} - \operatorname{Id} \right)\left< \phi,\theta \right>
		=  \phi^{T}\widehat{G}_{\xi}^{-1}\langle \phi, \T_{\xi} \left< \phi,\theta \right> \rangle_{\mu_{\xi}} - \phi^{T}\theta
		,\]
	where $\operatorname{Id}$ is the identity operator.
	Suppose $\phi$ is linearly independent. Then, we get
	\begin{align}
		\left( \Pi_{\phi}\T_{\xi} - \operatorname{Id} \right)\left< \phi,\theta \right> = 0
		\iff & \widehat{G}_{\xi}^{-1}\langle \phi, \T_{\xi} \left< \phi,\theta \right> \rangle_{\mu_{\xi}} = \theta                                                        \\
		\iff & \langle \phi, \T_{\xi} \left< \phi,\theta \right> \rangle_{\mu_{\xi}} = \widehat{G}_{\xi}\theta                                                             \\
		\iff & \left< \phi,\EE_{\xi}\left[ r(\cdot ,\cdot ,\eta) + \gamma \phi^{T}(s',a')\theta \right] \right>_{\mu_{\xi}} - \EE_{\xi}[\phi(s,a)\phi^{T}(s,a)] \theta = 0 \\
		\iff & \EE_{\xi}\left[\phi(s,a)\left( r(s,a ,\eta) + \gamma \phi^{T}(s',a')\theta \right)\right] - \EE_{\xi}[\phi(s,a)\phi^{T}(s,a)\theta]  = 0                    \\
		\iff & \EE_{\xi}\left[ \phi(s,a)\left( \phi^{T}(s,a)\theta - \gamma \phi^{T}(s',a')\theta - r(s,a,\eta) \right) \right] = 0                                        \\
		\iff & \bar{\g}_{\xi}(\theta) = 0
		.\end{align}

	Similarly, for the projected transition operator, we have
	\begin{align}
		\left( \Pi_{\psi}\P_{\xi} - \operatorname{Id} \right)\left<\psi,\eta  \right> = 0
		\iff & \psi^{T}\C^{-1}\langle \psi, \P_{\xi}\left<\psi,\eta  \right> \rangle_{L_2} - \psi^{T}\eta = 0                           \\
		\iff & \langle \psi, \P_{\xi}\left<\psi,\eta  \right> \rangle_{L_2} = \C\eta                                                    \\
		\iff & \int_{\S^{2}\times \A} \psi(s') P(s' \given s,a,\eta) \pi_{\theta}(a\given s)\psi^{T}(s)\eta \d s\d a\d s'  - \C\eta = 0 \\
		\iff & \EE_{\xi}[\psi(s')] -\G \eta =0                                                                                          \\
		\iff & \bar{\g}_{\xi}(\eta) = 0
		.\end{align}
	Therefore, by \cref{def:pmfe}, $\xi$ is a projected MFE if and only if $\g_{\xi}(\xi) = 0$.
\end{proof}

\section{Preliminary lemmas}

We present some preliminary lemmas that are used throughout the analysis.

\begin{lemma}[Norm relations] \label{lem:norm}
	For any vectors $x,y$, we have
	\begin{itemize}
		\item $\|x\oplus y\|_{1} = \|x\|_{1} + \|y\|_{1}, \quad \|x\oplus y\|_{1}^{2} \le \|y\|_{1}^{2} + \|x\|_{1}^{2}$.
		\item $\|x\oplus y\|_{2} \le \|x\|_{2} + \|y\|_{2}, \quad \|x\oplus y\|_{2}^{2} = \|y\|_{2}^{2} + \|x\|_{2}^{2}$.
		\item $\|x\|_{2} + \|y\|_{2} \le \sqrt{\max \left\{ d_1,d_2 \right\}} \|x\oplus y\|_{2}$.
		\item	$\|x\oplus y\|_{\infty} = \max \left\{ \|x\|_{\infty},\|y\|_{\infty} \right\}, \quad \|x\oplus y\|_{\infty}^{2} \le \|y\|_{\infty}^{2} + \|x\|_{\infty}^{2}$.
		\item $\|x\|_{1}\|y\|_1 \le \frac{1}{4}\|x\oplus y\|_{1}^{2}$.
		\item $\|x\|_{2}\|y\|_2 \le \frac{1}{2}\|x\oplus y\|_{2}^{2}$.
		\item $\|x\|_{\infty}\|y\|_{\infty} \le \|x\oplus y\|_{\infty}^{2}$.
	\end{itemize}
\end{lemma}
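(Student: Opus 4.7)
The plan is to verify each of the listed relations by unfolding the coordinate-wise definitions of the $\ell_1$, $\ell_2$, and $\ell_\infty$ norms applied to the concatenated vector $x\oplus y \in \mathbb{R}^{d_1+d_2}$. Essentially every item reduces to a one-line elementary scalar inequality, so there is no architectural difficulty — the proposal is mostly about organization.

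First I would dispatch the three base identities. Writing $x=(x_i)$ and $y=(y_j)$ coordinate-wise gives $\|x\oplus y\|_1 = \sum_i |x_i| + \sum_j |y_j| = \|x\|_1 + \|y\|_1$; $\|x\oplus y\|_2^2 = \sum_i x_i^2 + \sum_j y_j^2 = \|x\|_2^2 + \|y\|_2^2$; and $\|x\oplus y\|_\infty = \max(\max_i|x_i|,\max_j|y_j|) = \max(\|x\|_\infty,\|y\|_\infty)$. From the $\ell_2$ identity, the subadditive bound $\|x\oplus y\|_2 \leq \|x\|_2 + \|y\|_2$ is immediate by squaring. The $\ell_\infty$ squared bound $\|x\oplus y\|_\infty^2 \leq \|x\|_\infty^2 + \|y\|_\infty^2$ follows from $\max(a,b)^2 \leq a^2+b^2$ for $a,b\geq 0$.

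Next I would handle the three AM--GM product bounds together: $\|x\|_1\|y\|_1 \leq \tfrac14\|x\oplus y\|_1^2$, $\|x\|_2\|y\|_2 \leq \tfrac12\|x\oplus y\|_2^2$, and $\|x\|_\infty\|y\|_\infty \leq \|x\oplus y\|_\infty^2$ follow respectively from $4ab\leq(a+b)^2$ applied to the $\ell_1$ identity, $2ab\leq a^2+b^2$ applied to the $\ell_2$ identity, and the elementary observation that both $\|x\|_\infty$ and $\|y\|_\infty$ are bounded above by $\|x\oplus y\|_\infty$. The remaining comparison $\|x\|_2 + \|y\|_2 \leq \sqrt{\max(d_1,d_2)}\,\|x\oplus y\|_2$ follows from Cauchy--Schwarz in the form $(a+b)^2\leq 2(a^2+b^2)$ together with the convention $\max(d_1,d_2)\geq 2$ (otherwise both are $1$ and the inequality is trivial).

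The main thing worth flagging is the item $\|x\oplus y\|_1^2 \leq \|x\|_1^2 + \|y\|_1^2$: since $(\|x\|_1+\|y\|_1)^2 = \|x\|_1^2 + \|y\|_1^2 + 2\|x\|_1\|y\|_1 \geq \|x\|_1^2 + \|y\|_1^2$, the inequality as stated is the reverse of what holds. I would flag this as a probable typographical inversion, check how the item is actually invoked downstream, and prove the correct-direction version (either $\|x\oplus y\|_1^2 \geq \|x\|_1^2 + \|y\|_1^2$ or a scaled form such as $\|x\oplus y\|_1^2 \leq 2(\|x\|_1^2+\|y\|_1^2)$, depending on which is needed). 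Beyond this single ambiguity, I anticipate no obstacles.
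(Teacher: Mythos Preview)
Your proposal is correct and matches the paper's approach, which simply states that ``all relations are basic facts of norms and can be easily verified'' without further detail; your itemized verification is exactly the kind of unpacking the paper omits. You are also right to flag the second half of the first bullet, $\|x\oplus y\|_1^2 \le \|x\|_1^2 + \|y\|_1^2$, as a typographical inversion, since $(\|x\|_1+\|y\|_1)^2 \ge \|x\|_1^2+\|y\|_1^2$ with equality only when one summand vanishes.

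One small correction: in your treatment of the third bullet you write that when $d_1=d_2=1$ the inequality $\|x\|_2+\|y\|_2 \le \sqrt{\max\{d_1,d_2\}}\,\|x\oplus y\|_2$ is ``trivial.'' In fact it \emph{fails} in that case (take $x=y=1$: the left side is $2$, the right side is $\sqrt{2}$). The inequality genuinely requires $\max\{d_1,d_2\}\ge 2$, which is harmless in the paper's setting but should be stated as a hypothesis rather than dismissed.
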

\begin{proof}
	All relations are basic facts of norms and can be easily verified.
\end{proof}

\begin{lemma}[Gradient bounds] \label{lem:g-bound}
	For any parameter $\xi=(\theta ; \eta)$ and any observation tuple $O$, we have
	\begin{align}
		\|\g(\theta;O)\| \le & (1+\gamma)\|\theta\| + R, \\
		\|\g(\eta;O)\| \le   & F\|\eta\| + F
		.\end{align}
	Moreover, suppose $\|\theta\|\le D$ and $\|\eta\|_{1}=1$. Let $H \coloneqq (1+\gamma)D + R + 2F$. Then, we have
	\[
		\|\g(\xi;O)\| \le H
		.\]
\end{lemma}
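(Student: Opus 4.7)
The plan is to treat the three claims independently by direct substitution of the explicit formulas for the semi-gradients, together with the normalizations $\sup_{s,a}\|\phi(s,a)\|_2 \le 1$, $\sup_s\|\psi(s)\|_1 \le F$, and $\|r\|_\infty \le R$ from \cref{def:lmfg}, and then combining the two component bounds with the norm identity for direct sums (\cref{lem:norm}).

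For the first inequality, I would substitute $\g(\theta;O) = \phi(s,a)\bigl(\phi(s,a)-\gamma\phi(s',a')\bigr)^{T}\theta - \phi(s,a)\,r$, apply the triangle inequality, and use submultiplicativity of the operator norm for the rank-one matrix $\phi(s,a)(\phi(s,a)-\gamma\phi(s',a'))^T$. The bound $\|\phi\|_2 \le 1$ at both $(s,a)$ and $(s',a')$ contracts the first summand to $(1+\gamma)\|\theta\|$, while the reward term is controlled by $\|\phi(s,a)\|\cdot|r|\le R$.

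For the second inequality, write $\g(\eta;O)=G_\psi\eta - \psi(s')$. The second piece is handled by $\|\psi(s')\|_2 \le \|\psi(s')\|_1 \le F$. For the first piece I would bound $\|G_\psi\|_{\mathrm{op}}$ using its representation as $\int_{\mathcal{S}}\psi(s)\psi(s)^{T}\,\mathrm{d}s$: testing against an arbitrary unit vector and applying Cauchy--Schwarz reduces the task to estimating $\int\|\psi(s)\|_2^{2}\,\mathrm{d}s$, which is finite by the $L_2$ integrability part of \cref{def:lmfg} and is absorbed into $F$ together with the pointwise $L_\infty$ control. This yields $\|G_\psi\eta\| \le F\|\eta\|$ and hence $\|\g(\eta;O)\|\le F\|\eta\|+F$.

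For the combined bound, the direct-sum identity $\|\g(\xi;O)\|_2^2 = \|\g(\theta;O)\|_2^{2} + \|\g(\eta;O)\|_2^{2}$ from \cref{lem:norm} gives $\|\g(\xi;O)\|_2 \le \|\g(\theta;O)\|_2 + \|\g(\eta;O)\|_2$. Plugging in the hypothesis $\|\theta\|\le D$ from the projection onto $B_{D}^{d_1}$ and the consequence $\|\eta\|_2 \le \|\eta\|_1 = 1$ of the simplex constraint, the two component bounds sum to exactly $(1+\gamma)D+R+2F = H$. The main obstacle is the bookkeeping on $G_\psi$: the only explicit size assumption on $\psi$ is pointwise $L_\infty$ control in the $\ell_1$ norm, while the target inequality lives in the $\ell_2$ norm, so one must be careful to absorb the $L_2$-integrability constant into $F$ (or reconcile the two conventions) so that a single scalar $F$ governs both $\|\psi(s')\|$ and $\|G_\psi\|_{\mathrm{op}}$; the remaining calculations are routine triangle-inequality estimates.
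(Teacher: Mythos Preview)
Your treatment of the $\theta$ bound and the final direct-sum combination is fine and matches the paper. The gap is in the $\eta$ bound: your route through $\int_{\mathcal S}\|\psi(s)\|_2^2\,\mathrm{d}s$ yields only a bound by the (unnamed) $L_2$ constant of $\psi$, not by $F=\sup_s\|\psi(s)\|_1$, and you explicitly flag that you would have to ``absorb'' or ``reconcile'' this constant into $F$. That reconciliation is not available from the stated assumptions: the only quantitative hypothesis on $\psi$ is the pointwise $\ell_1$ bound, and there is no reason $\int\|\psi\|_2^2$ should be controlled by $F$ on an unbounded state space.

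The paper avoids this entirely by changing norms. It first uses $\|\g(\eta;O)\|_2\le\|\g(\eta;O)\|_1$ and then bounds the $\ell_1\to\ell_1$ operator norm of $G_\psi$. The key observation is that for $\eta\in\Delta^{d_2}$ (or more generally $\|\eta\|_1=1$, using $|\eta|$), the function $s\mapsto\langle\psi(s),\eta\rangle$ is a probability density, so
\[
\|G_\psi\eta\|_1=\Bigl\|\int_{\mathcal S}\psi(s)\langle\psi(s),\eta\rangle\,\mathrm{d}s\Bigr\|_1
\le\int_{\mathcal S}\|\psi(s)\|_1\,\langle\psi(s),\eta\rangle\,\mathrm{d}s
\le F\int_{\mathcal S}\langle\psi(s),\eta\rangle\,\mathrm{d}s=F.
\]
This is what produces the clean constant $F$ and hence $\|\g(\eta;O)\|\le 2F$; the probability-measure structure of $\langle\psi,\eta\rangle$ is the missing ingredient in your argument.
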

\begin{proof}
	By definition, we have
	\begin{align}
		\|\g(\theta;O)\| = & \|\G(O)\theta - \b(O)\| \le \|\G(O)\|\|\theta\| + \|\b(O)\|                            \\
		\le                & \|\phi(s,a)\|\|\phi(s,a) - \gamma\phi(s',a')\| \|\theta\| + \|\phi(s,a)\||r(s,a,\eta)| \\
		\le                & (1+\gamma)\|\theta\| + R
		,\end{align}
	where we use the fact that $\|\phi(s,a)\|\le 1$.
	Similarly, we have
	\[
		\|\g(\eta;O)\| \le \|\g(\eta;O)\|_{1} = \|\C\eta - \psi(s')\|_{1} \le \|\C\|_{\mathrm{op}}\|\eta\|_{1} + \|\psi(s')\|_{1}
		.
	\]
	The operator norm of $\C$ satisfies
	\[
		\|\C\|_{\mathrm{op}} = \sup_{\|\eta\|_{1} = 1}\|\C \eta\|_{1}
		= \sup_{\|\eta\|_{1} = 1}	\left\| \int_{\S}\psi(s)\psi^{T}(s) \eta \d s \right\|_{1}
		\le \sup_{\|\eta\|_{1} = 1} \int_{\S} \|\psi(s)\|_{1} \left< \psi,\eta \right>\d s \le F
		,\]
	where the last inequality uses the norm bound of $\psi$ and the fact that $\left< \psi,\eta \right>$ is a probability measure.
	Therefore, we get
	\[
		\|\g(\eta; O)\| \le 2F
		.\]
	Then, \cref{lem:norm} indicates that $\|\g(\xi; O)\| \le H$ given that $\|\theta\|\le D$.
\end{proof}

To be more general, \cref{asmp:lip-mdp,asmp:lip-policy} are stated in terms of the differences of population measures and value functions.
For the ease of presentation, we will develop our results in terms of the parameters, and state the more general results in terms of the differences of population measures and value functions without proof.
We need to first translate the Lipschitzness assumptions in terms of the parameters.
In the rest of the paper, we refer to the following lemma when we need to use the Lipschitzness assumptions in terms of the parameters.

\begin{lemma}[Lipschitzness in parameters]\label{lem:lip-para}
	\cref{asmp:lip-mdp} and \cref{asmp:lip-policy} imply that for any two parameters $\xi_1=(\theta_1;\eta_1)$ and $\xi_2=(\theta_2;\eta_2)$, we have
	\[
		\begin{gathered}
			\|P_{\eta_1} - P_{\eta_2}\|_{\mathrm{TV}} \le L_{P}\|\eta_1 - \eta_2\|, \quad
			\|r_{\eta_1} - r_{\eta_2} \|_{\infty}
			\le L_{r}\|\eta_1 - \eta_2\|, \\
			\|\pi_{\theta_1}(\cdot \given s) - \pi_{\theta_2}(\cdot \given s) \|_{\mathrm{TV}} \le L_{\pi}\|\theta_1 - \theta_2\|.
		\end{gathered}
	\]
\end{lemma}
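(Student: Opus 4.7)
The plan is to lift the Lipschitz bounds stated in \cref{asmp:lip-mdp,asmp:lip-policy} (phrased in the TV norm on measures and the supremum norm on functions) up to the parameter $\ell_2$ norm, by exploiting the linear parameterizations $M = \langle \psi, \eta\rangle$ and $Q = \langle \phi, \theta\rangle$ guaranteed by PA-LFA and \cref{prop:linear}. The scaling factor $1/\sqrt{d_2}$ built into \cref{asmp:lip-mdp} is precisely what is needed to cancel the dimensional loss incurred in passing from the parameter $\ell_2$ norm up to the $\ell_1$/TV norm on the corresponding measure.

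For the population-Lipschitz bounds, I would first write $(M_1 - M_2)(s) = \psi(s)^{T}(\eta_1 - \eta_2)$, interchange absolute value and sum, and use that each component $\psi_i \in \D(\S)$ is a probability density so that $\int_{\S} |\psi_i(s)|\d s = 1$, which yields
\[
\|M_1 - M_2\|_{\mathrm{TV}} \le \sum_{i=1}^{d_2} |\eta_{1,i} - \eta_{2,i}| \int_{\S} |\psi_i(s)| \d s = \|\eta_1 - \eta_2\|_1 \le \sqrt{d_2}\,\|\eta_1 - \eta_2\|_2.
\]
Combining this chain with \cref{asmp:lip-mdp} cancels the $\sqrt{d_2}$ factor and produces the stated $\ell_2$-Lipschitz bounds with constants $L_P$ and $L_r$ for the transition kernel and reward function, respectively.

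For the policy bound, the argument is more direct: Cauchy-Schwarz together with the normalization $\sup_{s,a}\|\phi(s,a)\|_2 \le 1$ built into \cref{def:lmfg} gives
\[
\|Q_1 - Q_2\|_\infty = \sup_{s,a}\bigl|\phi(s,a)^{T}(\theta_1 - \theta_2)\bigr| \le \|\theta_1 - \theta_2\|_2,
\]
and plugging this into \cref{asmp:lip-policy} immediately produces the desired $L_\pi$-Lipschitzness in $\theta$. I do not anticipate any genuine obstacle here: the whole argument is a routine translation once the chain $\|\cdot\|_{\mathrm{TV}} \le \|\cdot\|_1 \le \sqrt{d_2}\|\cdot\|_2$ and the normalization properties of the bases $\phi,\psi$ are identified as the right tools; the only subtlety worth flagging is that the $1/\sqrt{d_2}$ in \cref{asmp:lip-mdp} is not cosmetic but exactly calibrated so that the translated constants $L_P,L_r$ are dimension-free.
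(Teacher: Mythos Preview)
Your proposal is correct and essentially identical to the paper's proof: both arguments bound $\|M_1 - M_2\|_{\mathrm{TV}} \le \|\eta_1 - \eta_2\|_1 \le \sqrt{d_2}\,\|\eta_1 - \eta_2\|_2$ using that each $\psi_i$ is a probability density, and bound $\|Q_1 - Q_2\|_\infty \le \|\theta_1 - \theta_2\|_2$ via Cauchy--Schwarz and the normalization $\|\phi(s,a)\|_2 \le 1$. Your remark that the $1/\sqrt{d_2}$ in \cref{asmp:lip-mdp} is calibrated precisely to absorb the $\ell_1$-to-$\ell_2$ loss is exactly the point.
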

\begin{proof}
	The proof is straightforward noticing the fact that
	\[
		\|\left< \psi,\eta \right>\|_{\mathrm{TV}}
		= \int_{\S} |\eta^{T}\psi(s)|\d s
		= \int_{\S} \|\eta\|_{1} \left| \frac{\eta^{T}}{\|\eta\|_{1}}\psi(s)\right|\d s
		\le \|\eta\|_{1} \le \sqrt{d_2} \|\eta\|
		,\]
	and
	\[
		\|\left< \phi,\theta \right>\|_{\infty}
		=  \|\theta^{T}\phi(s,a)\|_{\infty}
		\le \left\| \|\theta\|\|\phi(s,a)\| \right\|_{\infty}
		\le \|\theta\|
		.\]
\end{proof}

\begin{lemma}[Lipschitz steady distributions]\label{lem:lip-dist}
	For any two steady distributions $\mu^\ddagger_{\xi_1}$ and $\mu^\ddag_{\xi_2}$ induced by parameters $\xi_1=(\theta_1; \eta_{1})$ and $\xi_2=(\theta_2; \eta_2)$, we have
	\[
		\|\mu^\ddagger_{\xi_1} - \mu^\ddagger_{\xi_2}\|_{\mathrm{TV}} \le \sigma L\left( \|\theta_1-\theta_2\|+\|\eta_1-\eta_2\| \right)
		,\]
	where $\sigma \coloneqq 2+\hat{n} + m\rho^{\hat{n}}/(1-\rho)$, $\hat{n}\coloneqq\left\lceil \log_{\rho}m^{-1}  \right\rceil$, and
	$L = \max \left\{ L_{P},L_{\pi} \right\}$. Involved constants are defined in \cref{asmp:ergo,asmp:lip-mdp,asmp:lip-policy}.
	Since $\mu^\dagger_{\xi}$ and $\mu_{\xi}$ are marginal distributions of $\mu^\ddagger_{\xi}$, as a corollary, we have
	\begin{align}
		\|\mu^\dagger_{\xi_1} - \mu^\dagger_{\xi_2}\|_{\mathrm{TV}} & \le \sigma L\left( \|\theta_1-\theta_2\|+\|\eta_1-\eta_2\| \right), \\
		\|\mu_{\xi_1} - \mu_{\xi_2}\|_{\mathrm{TV}}                 & \le \sigma L\left( \|\theta_1-\theta_2\|+\|\eta_1-\eta_2\| \right)
		.\end{align}
\end{lemma}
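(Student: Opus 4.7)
The plan is to reduce the TV distance between the joint distributions $\mu^\ddagger_{\xi_i}(s,a,s',a')$ to the TV distance between the state marginals $\mu_{\xi_i}(s)$ plus three easy pieces controlled by the Lipschitz continuity of $P_\eta$ and $\pi_\theta$, and then to bound the marginal gap via a Mitrophanov-style perturbation argument for stationary distributions of uniformly ergodic Markov kernels. This two-stage decomposition is exactly what produces the three summands in $\sigma=2+\hat n+m\rho^{\hat n}/(1-\rho)$: the leading ``$2$'' comes from the joint-to-marginal telescoping, while ``$\hat n+m\rho^{\hat n}/(1-\rho)$'' is the Mitrophanov constant for the marginal chain.

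\textbf{Step 1 (joint-to-marginal telescope).} Writing $\mu^\ddagger_\xi(s,a,s',a')=\mu_\xi(s)\pi_\theta(a\mid s)P(s'\mid s,a,\eta)\pi_\theta(a'\mid s')$, I would swap one factor at a time from $\xi_1$ to $\xi_2$. After taking absolute values and integrating, the first summand is precisely $\|\mu_{\xi_1}-\mu_{\xi_2}\|_{\mathrm{TV}}$; the remaining three collapse via \cref{lem:lip-para} (applied pointwise in $s$ or $(s,a)$) to $L_\pi\|\theta_1-\theta_2\|$, $L_P\|\eta_1-\eta_2\|$, and $L_\pi\|\theta_1-\theta_2\|$ respectively. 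Setting $L:=\max\{L_P,L_\pi\}$ and loosening jointly, these three together are at most $2L(\|\theta_1-\theta_2\|+\|\eta_1-\eta_2\|)$, which contributes the ``$2$'' inside $\sigma$.

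\textbf{Step 2 (Mitrophanov-type bound on the marginal).} Define the state kernel $K_\xi(\cdot\mid s):=\int P(\cdot\mid s,a,\eta)\pi_\theta(a\mid s)\,da$, so that $\mu_\xi=K_\xi\mu_\xi$. From the identity $\mu_{\xi_1}-\mu_{\xi_2}=K_{\xi_1}(\mu_{\xi_1}-\mu_{\xi_2})+(K_{\xi_1}-K_{\xi_2})\mu_{\xi_2}$, iterate and use \cref{asmp:ergo} to send the leading term to zero, giving
\[
\mu_{\xi_1}-\mu_{\xi_2}=\sum_{k\ge 0}K_{\xi_1}^{k}(K_{\xi_1}-K_{\xi_2})\mu_{\xi_2}.
\]
For $k<\hat n$, bound each summand by the trivial TV-contractivity of Markov operators; for $k\ge\hat n$, split the mean-zero signed measure $(K_{\xi_1}-K_{\xi_2})\mu_{\xi_2}$ via its Hahn/Jordan decomposition into scaled probabilities and apply \cref{asmp:ergo} to each half before recombining via the triangle inequality, yielding a contraction factor $m\rho^{k}$. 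The two regimes sum to the constant $\hat n+m\rho^{\hat n}/(1-\rho)$. A one-step two-term telescope on $(\pi_\theta,P)$ gives $\|(K_{\xi_1}-K_{\xi_2})\mu_{\xi_2}\|_{\mathrm{TV}}\le L(\|\theta_1-\theta_2\|+\|\eta_1-\eta_2\|)$, and combining with Step~1 assembles the stated coefficient $\sigma L$.

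\textbf{Main obstacle.} The only step that is not pure bookkeeping is handling $K_{\xi_1}^{k}\nu$ for the mean-zero signed measure $\nu=(K_{\xi_1}-K_{\xi_2})\mu_{\xi_2}$ when $k\ge\hat n$: \cref{asmp:ergo} is phrased for probability inputs, so one must normalize the positive and negative parts of $\nu$ into probabilities, apply ergodicity to each, and recombine without losing an extra factor of~$2$. Once this is in place, the corollaries for $\mu^\dagger_\xi$ and $\mu_\xi$ follow immediately by marginalization, since TV is non-increasing under marginal projections.
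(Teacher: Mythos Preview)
Your proposal is correct and follows essentially the same approach as the paper. The only cosmetic differences are that the paper cites \citet[Corollary 3.1]{mitrophanov2005SensitivityConvergence} directly for the perturbation bound $\|\mu_{\xi_1}-\mu_{\xi_2}\|_{\mathrm{TV}}\le(\sigma-2)\|P_{\xi_1}-P_{\xi_2}\|_{\mathrm{TV}}$ rather than reproving it via the series expansion and Hahn--Jordan trick you sketch, and the paper builds the bound bottom-up ($\mu\to\mu^\dagger\to\mu^\ddagger$, accruing the extra $L$ factors one layer at a time) whereas you telescope top-down from $\mu^\ddagger$; both routes assemble the same constant $\sigma$ in the same way.
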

\begin{proof}
	We first prove the last inequality in the lemma.
	By \citet[Corollary 3.1]{mitrophanov2005SensitivityConvergence}, we have
	\[
		\|\mu_{\xi_1} - \mu_{\xi_2}\|_{\mathrm{TV}} \le (\sigma-2)\|P_{\xi_1} - P_{\xi_2}\|_{\mathrm{TV}}
		,\]
	where $P_{\xi}$ represents the transition kernel determined by policy $\Gamma_{\pi}(\theta)$ and population measure $\eta$, and
	\[
		\|P_{\xi}\|_{\mathrm{TV}} \coloneqq \sup_{\substack{q\in \M(\S)\\\|q\|_{\mathrm{TV}}=1}} \left\| \int_{\S} q(s)P(\cdot \given s,\theta,\eta)\d s\right\|_{\mathrm{TV}}
		= \sup_{\substack{q\in \M(\S)\\\|q\|_{\mathrm{TV}}=1}} \left\| \int_{\S \times \A}q(s)\pi_\theta(a\given s)P(\cdot \given s,a,\eta)\d s\d a\right\|_{\mathrm{TV}}
		.\]
	By the triangle inequality, we have
	\begin{align}\label{eq:lip-dist-1}
		\|\mu_{\xi_1} - \mu_{\xi_2}\|_{\mathrm{TV}} \le \sigma\|P_{(\theta_1;\eta_1)} - P_{(\theta_1;\eta_2)}\|_{\mathrm{TV}}
		+ \sigma\|P_{(\theta_1;\eta_2)} - P_{(\theta_2;\eta_2)}\|_{\mathrm{TV}}
		.\end{align}
	For the first term, by \cref{asmp:lip-mdp}, we have
	\begin{align}
		\left\| P_{(\theta_1;\eta_{1})} - P_{(\theta_1; \eta_2)} \right\|_{\mathrm{TV}}
		=   & \sup_{\stackrel{q\in \M(\S)}{\|q\|_{\mathrm{TV}}=1}} \left\|\int_{\S\times \A} q(s)\pi_{\theta_1}(a\given s)\left(P(\cdot \given s,a,\eta_1) - P(\cdot \given s,a,\eta_2)\right)\d s\d a \right\|_{\mathrm{TV}} \\
		\le & \sup_{\stackrel{q\in \M(\S\times \A)}{\|q\|_{\mathrm{TV}}=1}} \left\|\int_{\S\times \A} q(s,a)\left(P(\cdot \given s,a,\eta_1) - P(\cdot \given s,a,\eta_2)\right)\d s\d a \right\|_{\mathrm{TV}}               \\
		=   & \left\| P_{\eta_1} - P_{\eta_2} \right\|_{\mathrm{TV}}                                                                                                                                                          \\
		\le & L_{P}\|\eta_1-\eta_2\|
		.\end{align}
	Similarly, for the second term in \cref{eq:lip-dist-1}, by \cref{asmp:lip-policy}, we have
	\begin{align}
		\left\| P_{(\theta_1;\eta_{2})} - P_{(\theta_2; \eta_2)} \right\|_{\mathrm{TV}}
		=   & \sup_{\stackrel{q\in \D(\S)}{\|q\|_{\mathrm{TV}}=1}} \left\|\int_{\S\times \A} q(s)P(\cdot \given s,a,\eta_2)(\pi_{\theta_1}(a\given s) - \pi_{\theta_2}(a\given s))\d s\d a \right\|_{\mathrm{TV}} \\
		\le & \sup_{\stackrel{q\in \D(\S)}{\|q\|_{\mathrm{TV}}=1}} \int_{\S^{2}\times \A}q(s)P(s' \given s,a,\eta_2)|\pi_{\theta_1}(a\given s) - \pi_{\theta_2}(a\given s)|\d a \d s\d s'                         \\
		=   & \sup_{\stackrel{q\in \D(\S)}{\|q\|_{\mathrm{TV}}=1}} \int_{\S}q(s)\left\|\pi_{\theta_1}(\cdot\given s) - \pi_{\theta_2}(\cdot\given s)\right\|_{\mathrm{TV}} \d s                                   \\
		\le & \sup_{\stackrel{q\in \D(\S)}{\|q\|_{\mathrm{TV}}=1}} \int_{\S}q(s) \cdot L_{\pi}\|\theta_1-\theta_2\| \d s                                                                                          \\
		=   & L_{\pi}\|\theta_1-\theta_2\|
		.\end{align}
	Let $L \coloneqq \max \left\{ L_{P},L_{\pi} \right\}$. Plugging the above two inequalities into \cref{eq:lip-dist-1} gives
	\[
		\|\mu_{\xi_1} - \mu_{\xi_2}\|_{\mathrm{TV}} \le \sigma L\left( \|\eta_1-\eta_2\| + \|\theta_1-\theta_2\| \right)
		.\]

	Then, by the definition of $\mu^\dagger_{\xi}$, we have
	\begin{align}
		\|\mu^\dagger_{\xi_1} - \mu^\dagger_{\xi_2}\|_{\mathrm{TV}}
		=   & \int_{\S\times \A} \left| \mu_{\xi_1}(s)\pi_{\theta_1}(a\given s) - \mu_{\xi_2}(s)\pi_{\theta_2}(a\given s) \right| \d s\d a                                                                            \\
		\le & \int_{\S\times \A} \left( \left| \mu_{\xi_1}(s) - \mu_{\xi_2}(s) \right|\pi_{\theta_1}(a\given s) + \mu_{\xi_2}(s)\left| \pi_{\theta_1}(a\given s) - \pi_{\theta_2}(a\given s) \right| \right) \d s\d a \\
		=   & \| \mu_{\xi_1} - \mu_{\xi_2} \|_{\mathrm{TV}} + \int_{\S} \mu_{\xi_2}(s) \left\| \pi_{\theta_1}(\cdot \given s) - \pi_{\theta_2}(\cdot \given s) \right\|_{\mathrm{TV}} \d s                            \\
		\le & (\sigma-2) L\left( \|\eta_1-\eta_2\| + \|\theta_1-\theta_2\| \right) + L_{\pi} \|\theta_1-\theta_2\|                                                                                                    \\
		\le & (\sigma-1) L\left( \|\eta_1-\eta_2\| + \|\theta_1-\theta_2\| \right)
		.\end{align}

	Similarly, by the definition of $\mu^\ddagger_{\xi}$, we have
	\begin{align}
		\| \mu^\ddagger_{\xi_1} - \mu^\ddagger_{\xi_2} \|_{\mathrm{TV}}
		\le & \|\mu^\dagger_{\xi_1} - \mu^\dagger_{\xi_2}\|_{\mathrm{TV}} + L_{P}\|\eta_1-\eta_2\| + L_{\pi}\|\theta_1-\theta_2\| \\
		\le & \sigma L\left( \|\eta_1-\eta_2\| + \|\theta_1-\theta_2\| \right)
		.\end{align}
\end{proof}

\begin{corollary}\label{cor:lip-dist}
	Similarly, for steady distributions induced by general value functions and population measures $\mu_1 \coloneqq \mu_{(Q_1,M_1)}, \mu_2 \coloneqq \mu_{(Q_2,M_2)}$, we have
	\[
		\max \left\{\left\| \mu_1 - \mu_2 \right\|_{\mathrm{TV}},\| \mu_1^\dagger - \mu_2^\dagger \|_{\mathrm{TV}}, \| \mu_1^\ddagger - \mu_2^\ddagger \|_{\mathrm{TV}}\right\} \le \sigma\left( \frac{L_{P}}{\sqrt{d_2}}\|M_1 - M_2\| +  L_{\pi}\|Q_1-Q_2\|_{\infty}\right).
	\]
\end{corollary}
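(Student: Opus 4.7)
The plan is to mimic the proof of Lemma \ref{lem:lip-dist} step-for-step, but apply Assumptions \ref{asmp:lip-mdp} and \ref{asmp:lip-policy} in their original function-space form rather than first translating to parameter norms via Lemma \ref{lem:lip-para}. This keeps the bound in terms of $\|M_1-M_2\|_{\mathrm{TV}}$ and $\|Q_1-Q_2\|_\infty$ directly, with the $1/\sqrt{d_2}$ factor carried over intact from the hypothesis.

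First, I would invoke \citet[Corollary 3.1]{mitrophanov2005SensitivityConvergence} to reduce $\|\mu_1-\mu_2\|_{\mathrm{TV}}$ to $(\sigma-2)\|P_{(Q_1,M_1)}-P_{(Q_2,M_2)}\|_{\mathrm{TV}}$, where $P_{(Q,M)}$ is the kernel induced by policy $\Gamma_\pi(Q)$ and population $M$. Inserting the intermediate kernel $P_{(Q_1,M_2)}$ via the triangle inequality decouples the two sources of variation. For the population-only term $\|P_{(Q_1,M_1)}-P_{(Q_1,M_2)}\|_{\mathrm{TV}}$, I would enlarge the defining supremum from $q\in\M(\S)$ (tested against the fixed policy $\Gamma_\pi(Q_1)$) to $q\in\M(\S\times\A)$, exactly as in Lemma \ref{lem:lip-dist}, recovering $\|P_{M_1}-P_{M_2}\|_{\mathrm{TV}}$ and thus $(L_P/\sqrt{d_2})\|M_1-M_2\|_{\mathrm{TV}}$ by Assumption \ref{asmp:lip-mdp}. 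For the policy-only term $\|P_{(Q_1,M_2)}-P_{(Q_2,M_2)}\|_{\mathrm{TV}}$, I would follow the same chain as in Lemma \ref{lem:lip-dist}: collapse the supremum to $\int_\S q(s)\,\|\Gamma_\pi(Q_1)(\cdot\given s)-\Gamma_\pi(Q_2)(\cdot\given s)\|_{\mathrm{TV}}\,\d s$ and apply Assumption \ref{asmp:lip-policy} to get $L_\pi\|Q_1-Q_2\|_\infty$.

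For the joint distributions $\mu^\dagger$ and $\mu^\ddagger$, I would replicate the corresponding splits from the tail of Lemma \ref{lem:lip-dist}: the extra policy factor contributes at most $L_\pi\|Q_1-Q_2\|_\infty$ (via Assumption \ref{asmp:lip-policy}) and the extra transition factor at most $(L_P/\sqrt{d_2})\|M_1-M_2\|_{\mathrm{TV}}$ (via Assumption \ref{asmp:lip-mdp}), then absorb the prefactors $\sigma-2$, $\sigma-1$ into $\sigma$ for a uniform statement. There is no substantive obstacle — every manipulation is already in the proof of Lemma \ref{lem:lip-dist}; the only bookkeeping point is that the $1/\sqrt{d_2}$ on the population side remains because Assumption \ref{asmp:lip-mdp} is already stated with that normalization, and no inverse $\sqrt{d_2}$ conversion (as in Lemma \ref{lem:lip-para}) is needed here.
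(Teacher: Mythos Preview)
Your proposal is correct and takes essentially the same approach as the paper: the paper does not give a separate proof of the corollary but simply prefaces it with ``Similarly,'' indicating that one repeats the argument of Lemma~\ref{lem:lip-dist} verbatim while invoking Assumptions~\ref{asmp:lip-mdp} and~\ref{asmp:lip-policy} in their original function-space form rather than passing through Lemma~\ref{lem:lip-para}. Your plan captures exactly this, including the bookkeeping that the $1/\sqrt{d_2}$ factor on the population side survives because no parameter-to-measure conversion is performed.
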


\begin{lemma}[Lipschitz temporal difference operators]\label{lem:lip-td}
	For any two sets of mean-path temporal difference operators $\bar{G}_{\xi_1},\bar{\b}_{\xi_1},\bar{\psi}_{\xi_1}$ and $\bar{G}_{\xi_2},\bar{\b}_{\xi_2}, \bar{\psi}_{\xi_2}$ determined by parameters $\xi_1=(\theta_1;\eta_1)$ and $\xi_2=(\theta_2;\eta_2)$, we have
	\begin{align}
		\|\bar{G}_{\xi_1} - \bar{G}_{\xi_2}\| \le       & \sigma L(1+\gamma)\left( \|\theta_1-\theta_2\| + \|\eta_1-\eta_2\| \right),                \\
		\|\bar{\b}_{\xi_1} - \bar{\b}_{\xi_2}\| \le     & \sigma LR\left( \|\theta_1-\theta_2\| + \|\eta_1-\eta_2\| \right) + L_{r}\|\eta_1-\eta_2\| \\
		\|\bar{\psi}_{\xi_1} - \bar{\psi}_{\xi_2}\| \le & \sigma LF\left( \|\theta_1-\theta_2\| + \|\eta_1-\eta_2\| \right)
		,\end{align}
	where $\sigma$ and $L$ are defined in \cref{lem:lip-dist}, and $L_{r}$ is defined in \cref{asmp:lip-mdp}.
\end{lemma}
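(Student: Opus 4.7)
The plan is to express each mean-path operator as an integral of the sample-level operator against the appropriate steady distribution -- $\mu^\ddagger_\xi$ for $\bar G_\xi$, $\mu^\dagger_\xi$ for $\bar\b_\xi$, and $\mu^\ddagger_\xi$ (or its $s'$-marginal) for $\bar\psi_\xi$ -- and then reduce each difference to a uniform pointwise bound on the integrand times a total variation distance between the two steady distributions. The TV distance is controlled by \cref{lem:lip-dist}, while the pointwise bounds come from the elementary estimates $\|\phi(s,a)\|\le 1$, $\|\psi(s)\|\le\|\psi(s)\|_1\le F$, and $|r|\le R$ that are already in force.

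First I would dispatch $\bar G$ and $\bar\psi$ together. Since $\G(o)=\phi(s,a)(\phi(s,a)-\gamma\phi(s',a'))^{T}$ and $\psi(s')$ depend only on the observation tuple $o=(s,a,r,s',a')$ and not on $\xi$, the two differences can be written as single integrals against the signed measure $\mu^\ddagger_{\xi_1}-\mu^\ddagger_{\xi_2}$. Pulling the operator (resp.\ Euclidean) norm inside the integral and using $\|\G(o)\|\le 1+\gamma$ and $\|\psi(s')\|\le F$ yields
\[
\|\bar G_{\xi_1}-\bar G_{\xi_2}\|\le(1+\gamma)\|\mu^\ddagger_{\xi_1}-\mu^\ddagger_{\xi_2}\|_{\mathrm{TV}},\qquad \|\bar\psi_{\xi_1}-\bar\psi_{\xi_2}\|\le F\,\|\mu^\ddagger_{\xi_1}-\mu^\ddagger_{\xi_2}\|_{\mathrm{TV}},
\]
and applying \cref{lem:lip-dist} to the right-hand sides delivers the first and third bounds immediately.

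The only slightly delicate piece is $\bar\b_{\xi_1}-\bar\b_{\xi_2}$, because $\b(o)=\phi(s,a)\,r(s,a,\eta)$ itself depends on $\eta$ through the reward, so a single TV estimate cannot absorb both sources of variation. The plan is a standard add--subtract: insert the intermediate quantity $\int\phi(s,a)\,r(s,a,\eta_2)\,\mu^\dagger_{\xi_1}(\d s,\d a)$ to split the difference into a ``reward-Lipschitz'' piece
\[
\int \phi(s,a)\bigl(r(s,a,\eta_1)-r(s,a,\eta_2)\bigr)\mu^\dagger_{\xi_1}(\d s,\d a),
\]
which is bounded by $\|r_{\eta_1}-r_{\eta_2}\|_{\infty}\le L_r\|\eta_1-\eta_2\|$ via \cref{lem:lip-para}, and a ``distribution-Lipschitz'' piece
\[
\int \phi(s,a)\,r(s,a,\eta_2)\,\bigl(\mu^\dagger_{\xi_1}-\mu^\dagger_{\xi_2}\bigr)(\d s,\d a),
\]
bounded by $R\,\|\mu^\dagger_{\xi_1}-\mu^\dagger_{\xi_2}\|_{\mathrm{TV}}$ and therefore by $\sigma LR(\|\theta_1-\theta_2\|+\|\eta_1-\eta_2\|)$ via \cref{lem:lip-dist}. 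Adding the two pieces reproduces the stated inequality exactly.

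I do not anticipate any real obstacle: every step is triangle-inequality bookkeeping on top of the already-proved \cref{lem:lip-dist,lem:lip-para}. The only things to watch are (i) selecting the correct steady distribution for each operator -- $\mu^\ddagger_\xi$ for the $(s,a,s',a')$-valued integrands $\G$ and $\psi(s')$, and $\mu^\dagger_\xi$ for the $(s,a)$-valued integrand $\b$ -- and (ii) noting that the $\ell_2$ bound $\|\psi(s)\|\le F$ used for $\bar\psi$ follows trivially from the stated $\ell_1$ bound $\|\psi(s)\|_1\le F$. Both are easy to overlook but otherwise routine.
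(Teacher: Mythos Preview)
Your proposal is correct and matches the paper's proof essentially line for line: the same integral representations, the same pointwise bounds $\|\G(o)\|\le 1+\gamma$, $|r|\le R$, $\|\psi\|\le F$, and the same appeal to \cref{lem:lip-dist}. The only cosmetic differences are that the paper integrates $\psi(s')$ against the state marginal $\mu_\xi$ rather than $\mu^\ddagger_\xi$ (equivalent by stationarity, and \cref{lem:lip-dist} gives the same bound for both), and the paper's add--subtract for $\bar\b$ inserts $\phi\, r_{\eta_1}\mu^\dagger_{\xi_2}$ instead of your $\phi\, r_{\eta_2}\mu^\dagger_{\xi_1}$---either choice works identically.
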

\begin{proof}
	By definition, we have
	\begin{align}
		\|\bar{G}_{\xi_1} - \bar{G}_{\xi_2}\|
		=   & \left\| \int_{\S^{2}\times \A^{2}} \phi(s,a)(\phi(s,a)-\gamma \phi(s',a'))^{T}\left(\mu^\ddagger_{\xi_1}(s,a,s',a') - \mu^\ddagger_{\xi_2}(s,a,s',a') \right)\d s\d a\d s'\d a' \right\| \\
		\le & \int_{\S^{2}\times \A^{2}} \left\|\phi(s,a)(\phi(s,a)-\gamma \phi(s',a'))^{T}\right\| \left(\mu^\ddagger_{\xi_1}(s,a,s',a') - \mu^\ddagger_{\xi_2}(s,a,s',a') \right)\d s\d a\d s'\d a'
		.\end{align}
	Since $\|\phi(s,a)\|\le 1$, we have $\left\|\phi(s,a)(\phi(s,a)-\gamma \phi(s',a'))^{T}\right\| \le 1+\gamma$.
	Then, by \cref{lem:lip-dist}, we have
	\[
		\|\bar{G}_{\xi_1} - \bar{G}_{\xi_2}\| \le (1+\gamma)\sigma L\left( \|\theta_1-\theta_2\| + \|\eta_1-\eta_2\| \right)
		.\]

	Similarly, by definition, we have
	\begin{align}
		\|\bar{\b}_{\xi_1} - \bar{\b}_{\xi_2}\|
		=   & \left\| \int_{\S\times \A} \phi(s,a)r(s,a,\eta_1)\mu^\dagger_{\xi_1}(s,a)  - \phi(s,a)r(s,a,\eta_2)\mu^\dagger_{\xi_2}(s,a) \d s\d a \right\|                                               \\
		\le & \int_{\S\times \A}\left( r(s,a,\eta_1)\left|\mu^\dagger_{\xi_1}(s,a) - \mu^\dagger_{\xi_2}(s,a)\right| + \mu^\dagger_{\xi_2}(s,a)\left|r(s,a,\eta_1) - r(s,a,\eta_2)\right|\right) \d s\d a \\
		\le & R\|\mu^\dagger_{\xi_1} - \mu^\dagger_{\xi_2}\|_{\mathrm{TV}} + L_{r}\|\eta_1-\eta_2\|                                                                                                       \\
		\le & \sigma LR\left( \|\theta_1-\theta_2\| + \|\eta_1-\eta_2\| \right) + L_{r}\|\eta_1-\eta_2\|
		,\end{align}
	and
	\[
		\| \bar{\psi}_{\xi_1} - \bar{\psi}_{\xi_2} \| \le F \| \mu_{\xi_1} - \mu_{\xi_2} \|_{\mathrm{TV}} \le \sigma LF\left( \|\theta_1-\theta_2\| + \|\eta_1-\eta_2\| \right)
		.
	\]
\end{proof}

\begin{lemma}[Lipschitz semi-gradient] \label{lem:lip-g}
	Given a fixed semi-gradient operator $\g$ (which can be a mean-path semi-gradient), for any parameters $\xi_1,\xi_2 \in \Xi$, we have
	\[
		\|\g(\xi_1) - \g(\xi_2)\| \le H\|\xi_1 - \xi_2\|
		.\]
	Let $\xi_1 = (\theta_1; \eta_1)$ and $\xi_2 = (\theta_2; \eta_2)$.
	For any $\xi = (\theta;\eta)$ such that $\|\theta\|\le G$, we have
	\[
		\|\bar{\g}_{\xi_1}(\xi) - \bar{\g}_{\xi_2}(\xi)\| \le \sigma H\max \left\{ L_{P},L_{\pi} \right\}\left( \|\theta_1-\theta_2\| + \|\eta_1-\eta_2\| \right) + L_{r}\|\eta_1-\eta_2\|
		.\]
\end{lemma}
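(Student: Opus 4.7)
The plan is to prove the two bounds separately by exploiting the affine structure of $\mathfrak{g}$ in $\xi$ for the first bound, then leveraging the Lipschitzness of the mean-path temporal difference operators (\cref{lem:lip-td}) for the second bound.

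\textbf{First bound.} For any fixed observation tuple $O$ (or fixed mean-path operators), $\mathfrak{g}(\cdot;O)$ is affine in $\xi = (\theta;\eta)$, since $\mathfrak{g}(\xi;O) = (\mathbf{G}(O)\theta - \mathbf{b}(O);\, G_{\psi}\eta - \psi(s'))$. Hence
\[
\mathfrak{g}(\xi_1)-\mathfrak{g}(\xi_2) = \bigl(\mathbf{G}(O)(\theta_1-\theta_2);\, G_{\psi}(\eta_1-\eta_2)\bigr).
\]
Using the direct-sum norm identity from \cref{lem:norm},
\[
\|\mathfrak{g}(\xi_1)-\mathfrak{g}(\xi_2)\|^2 = \|\mathbf{G}(O)(\theta_1-\theta_2)\|^2 + \|G_{\psi}(\eta_1-\eta_2)\|^2 \le \max\{\|\mathbf{G}(O)\|^2,\|G_{\psi}\|^2\}\,\|\xi_1-\xi_2\|^2.
\]
The feature-bound argument already used in \cref{lem:g-bound} gives $\|\mathbf{G}(O)\| \le 1+\gamma$ and $\|G_{\psi}\|_{\mathrm{op}} \le F$, both of which are bounded above by $H = (1+\gamma)D + R + 2F$. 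This yields the first inequality.

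\textbf{Second bound.} For the mean-path semi-gradient, note first that $G_{\psi}$ is independent of the parameter $\xi$ (it depends only on the basis $\psi$), so when we form the difference
\[
\bar{\mathfrak{g}}_{\xi_1}(\xi) - \bar{\mathfrak{g}}_{\xi_2}(\xi) = \bigl((\bar G_{\xi_1}-\bar G_{\xi_2})\theta - (\bar{\mathbf{b}}_{\xi_1}-\bar{\mathbf{b}}_{\xi_2});\, -(\bar\psi_{\xi_1}-\bar\psi_{\xi_2})\bigr),
\]
the $G_{\psi}\eta$ contributions cancel. Applying the triangle inequality together with $\|\theta\|\le D$ gives
\[
\|\bar{\mathfrak{g}}_{\xi_1}(\xi)-\bar{\mathfrak{g}}_{\xi_2}(\xi)\| \le D\,\|\bar G_{\xi_1}-\bar G_{\xi_2}\| + \|\bar{\mathbf{b}}_{\xi_1}-\bar{\mathbf{b}}_{\xi_2}\| + \|\bar\psi_{\xi_1}-\bar\psi_{\xi_2}\|.
\]
Substituting the three bounds from \cref{lem:lip-td} and collecting the common factor $\sigma L(\|\theta_1-\theta_2\|+\|\eta_1-\eta_2\|)$, where $L = \max\{L_P,L_\pi\}$, yields
\[
\sigma L\bigl((1+\gamma)D + R + F\bigr)\bigl(\|\theta_1-\theta_2\|+\|\eta_1-\eta_2\|\bigr) + L_r\|\eta_1-\eta_2\|,
\]
and $(1+\gamma)D + R + F \le H$ closes the bound.

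\textbf{Expected obstacles.} The proof is largely bookkeeping, so the only subtlety is choosing the right way to split the norm so that $H$ emerges as a clean upper bound rather than a tighter but messier expression. In particular, it is important to realize that $G_{\psi}$ does not contribute to the second inequality (this is what produces the single $L_r\|\eta_1-\eta_2\|$ term instead of a symmetric one), and that the bound $\|\theta\|\le D$ must be invoked to convert the $\|\bar G_{\xi_1}-\bar G_{\xi_2}\|\,\|\theta\|$ term into the $(1+\gamma)D$ contribution inside $H$. No sharper inequality than the triangle inequality is required.
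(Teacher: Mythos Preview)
Your proposal is correct and mirrors the paper's proof essentially step for step: the first bound via the direct-sum operator norm $\|\mathbf{G}(O)\oplus G_\psi\|_{\mathrm{op}}\le\max\{1+\gamma,F\}\le H$, and the second by cancelling the $G_\psi\eta$ term, applying the triangle inequality with $\|\theta\|\le D$, and invoking \cref{lem:lip-td} to collect the $\sigma L((1+\gamma)D+R+F)$ factor.
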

\begin{proof}
	We first show that the semi-gradient operator is Lipschitz in its argument.
	By definition, we have
	\[
		\|\g(\xi_1; O) - \g(\xi_2; O)\| = \|\left( \G(O)\oplus\C \right)(\xi_1-\xi_2)\| \le \|\G(O)\oplus\C\|_{\mathrm{op}}\|\xi_1 - \xi_2\|
		.\]
	We get the result by noticing that
	\[
		\|\G(O) \oplus \C\|_{\mathrm{op}} \le \max \left\{ \|\G(O)\|_{\mathrm{op}}, \|\C\|_{\mathrm{op}} \right\} \le \max \left\{ 1+\gamma, F \right\} \le H
		.\]
	Next, we show that the mean-path semi-gradient operator is Lipschitz in the parameter.
	Another representation of the mean-path semi-gradient is
	\[
		\bar{\g}_{\xi_1} = \left( \bar{G}_{\xi_1}\theta-\bar{
			\b}_{\xi_1}; \C \eta - \bar{\psi}_{\xi_1}\right)
		.\]
	Therefore, by \cref{lem:lip-td}, we have
	\[
		\begin{aligned}
			\| \bar{\g}_{\xi_1}(\xi) - \bar{\g}_{\xi_2}(\xi) \|
			\le & \left\| \bar{G}_{\xi_1}\theta - \bar{G}_{\xi_2}\theta - \bar{b}_{\xi_1}+\bar{b}_{\xi_2} \right\| + \left\| \C \eta_1 - \C \eta_2 - \bar{\psi}_{\xi_1} + \bar{\psi}_{\xi_2} \right\| \\
			\le & D\left\|\bar{G}_{\xi_1} - \bar{G}_{\xi_{2}}\right\| + \left\|\bar{b}_{\xi_1} - \bar{b}_{\xi_2} \right\|
			+ \left\| \bar{\psi}_{\xi_1} - \bar{\psi}_{\xi_2} \right\|                                                                                                                                \\
			\le & \sigma L((1+\gamma)D + R + F)\left( \|\theta_1-\theta_2\| + \|\eta_1-\eta_2\| \right)
			+ L_{r}\|\eta_1 - \eta_2\|
			.\end{aligned}
	\]
\end{proof}

\section{Sample complexity analysis} \label{apx:sample}
\subsection{Key lemmas} \label{apx:key}

In this section, we first decompose the mean square error into different terms, and then bound each term separately.

\begin{lemma}[Error decomposition] \label{lem:decomp}
	Let $\xi_{*}$ be the optimal parameter. Recall that by \cref{prop:station}, $\bar{\g}_{*}(\xi_{*}) = 0$. We have
	\begin{align}
		\mathbb{E}\|\xi_{t+1} - \xi_{*}\|^{2}
		\le & \mathbb{E}\|\breve{\xi}_{t+1} - \xi_{*}\|^{2}                                                                                                            \\
		=   & \mathbb{E}\|\xi_{t} - \xi_{*}\|^{2} - 2\alpha_{t}\mathbb{E}\left< \xi_{t} - \xi_{*}, {\g}_{t}(\xi_{t})   \right> + \alpha_{t}^{2}\|\g_{t}(\xi_{t})\|^{2} \\
		=   & \mathbb{E}\|\xi_{t} - \xi_{*}\|^{2}+ \alpha_{t}^{2}\|\g_{t}(\xi_{t})\|^{2}                                                                               \\
		    & - 2\alpha_{t}\mathbb{E}\left< \xi_{t} - \xi_{*}, \bar{\g}_{t}(\xi_{t}) - \bar{\g}_{*}(\xi_{*})  \right> \label{eq:descent}\tag{descent}                  \\
		    & - 2\alpha_{t}\mathbb{E}\left< \xi_{t} - \xi_{*}, \bar{\g}_{t-\tau}(\xi_{t}) - \bar{\g}_{t}(\xi_{t})  \right> \label{eq:progress}\tag{progress}           \\
		    & - 2\alpha_{t}\mathbb{E}\left< \xi_{t} - \xi_{*}, \g_{t-\tau}(\xi_{t}) - \bar{\g}_{t-\tau}(\xi_{t})  \right> \label{eq:mix}\tag{mix}                      \\
		    & - 2\alpha_{t}\mathbb{E}\left< \xi_{t} - \xi_{*}, \g_{t}(\xi_{t}) - {\g}_{t-\tau}(\xi_{t})  \right> \label{eq:backtrack}\tag{backtrack}
		.\end{align}
\end{lemma}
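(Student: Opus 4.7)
The plan is to carry out a standard SGD-style error expansion, with two wrinkles specific to our setup: a projection step and a backtracking decomposition. First, I would use the projection step $\xi_{t+1}=\Pi(\breve{\xi}_{t+1})$ together with the fact that $\xi_{*}=(\theta_{*};\eta_{*})\in B^{d_1}_{D}\times \Delta^{d_2}$ (recall that $\|\theta_{*}\|\le D$ by the choice of projection radius, and $\eta_{*}$ is the parameter of a probability measure so lies in the simplex, at least after we restrict the MFE to linear MFGs, or for the projected MFE in general). Since $\Pi$ is an orthogonal projection onto a convex set containing $\xi_{*}$, we get the non-expansion inequality $\|\Pi(\breve{\xi}_{t+1})-\xi_{*}\|\le \|\breve{\xi}_{t+1}-\xi_{*}\|$, which yields the first line of the claim.

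Next, I would simply expand the square using $\breve{\xi}_{t+1}=\xi_{t}-\alpha_{t}\g_{t}(\xi_{t})$:
\[
\|\breve{\xi}_{t+1}-\xi_{*}\|^{2}=\|\xi_{t}-\xi_{*}\|^{2}-2\alpha_{t}\langle \xi_{t}-\xi_{*},\g_{t}(\xi_{t})\rangle+\alpha_{t}^{2}\|\g_{t}(\xi_{t})\|^{2},
\]
and take expectations. This gives the first equality. The remaining work is purely algebraic: I want to rewrite the inner product $\langle \xi_{t}-\xi_{*},\g_{t}(\xi_{t})\rangle$ as the sum of four telescoping differences indicated by the tags \ref{eq:descent}, \ref{eq:progress}, \ref{eq:mix}, \ref{eq:backtrack}.

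Concretely, I would invoke \cref{prop:station}, which guarantees $\bar{\g}_{*}(\xi_{*})=0$ at the MFE, so that subtracting zero gives
\[
\g_{t}(\xi_{t})=\bigl(\bar{\g}_{t}(\xi_{t})-\bar{\g}_{*}(\xi_{*})\bigr)+\bigl(\bar{\g}_{t-\tau}(\xi_{t})-\bar{\g}_{t}(\xi_{t})\bigr)+\bigl(\g_{t-\tau}(\xi_{t})-\bar{\g}_{t-\tau}(\xi_{t})\bigr)+\bigl(\g_{t}(\xi_{t})-\g_{t-\tau}(\xi_{t})\bigr).
\]
Pairing each of these with $\xi_{t}-\xi_{*}$ and multiplying by $-2\alpha_{t}$ reproduces exactly the four labelled terms in the lemma, namely the descent term driven by $\bar{\g}_{t}(\xi_{t})-\bar{\g}_{*}(\xi_{*})$, the progress term measuring the drift of the mean-path semi-gradient over a backtracking window $\tau$, the mix term capturing the discrepancy between the empirical and mean-path semi-gradient on the virtual stationary chain, and the backtrack term comparing the empirical semi-gradient at time $t$ to the virtual one at time $t-\tau$. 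Taking expectations throughout concludes the proof.

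There is no real obstacle here; the lemma is essentially a bookkeeping identity. The only conceptual point to get right is the choice of decomposition: the four labelled terms are introduced precisely so that each can be controlled by a different tool in the subsequent analysis (contractivity via \cref{lem:descent} for \ref{eq:descent}, Lipschitz continuity of $\bar{\g}$ in the parameter via \cref{lem:lip-g} for \ref{eq:progress}, uniform ergodicity of the virtual chain (\cref{asmp:ergo}) for \ref{eq:mix}, and the one-step drift of $\xi_{t}$ over $\tau$ steps combined with Lipschitzness of $\g$ for \ref{eq:backtrack}). Accordingly, I would phrase the decomposition so that $\tau$ appears only through the argument of $\g$ and $\bar{\g}$, keeping the identity valid for any $\tau\ge 0$ (and in particular allowing a later optimization of $\tau$ in the main theorem).
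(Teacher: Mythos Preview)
Your proposal is correct and matches the paper's approach exactly: the paper presents this lemma without a separate proof, treating it as the bookkeeping identity you describe (projection non-expansiveness, square expansion, then telescoping $\g_t(\xi_t)$ using $\bar{\g}_{*}(\xi_{*})=0$). Your exposition of why each of the four terms is introduced is accurate and aligns with how the paper subsequently bounds them via \cref{lem:descent-apx,lem:progress,lem:mix,lem:backtrack}.
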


\begin{lemma}[Descent]\label{lem:descent-apx}
	Let $\xi_{*} = (\theta_{*};\eta_{*})$ be a projected MFE.
	For any parameter $\xi=(\theta;\eta) \in B^{d_1} _{D}\times \Delta^{d_2}$, we denote $\Delta\theta = \theta-\theta_{*}$ and $\Delta \eta = \eta-\eta_{*}$.
	We have
	\[
		\begin{aligned}
			-\left< \eta-\eta_{*}, \bar{\g}_{\xi}(\eta) - \bar{\g}_{\xi_{*}}(\eta_{*}) \right>         & \le - \lambda_{\min}(\C) \|\Delta \eta\|^{2} + \sigma LF\|\Delta\eta\|\left(\|\Delta\theta\| + \|\Delta\eta\|\right),                                                   \\
			-\left< \theta-\theta_{*}, \bar{\g}_{\xi}(\theta) - \bar{\g}_{\xi_{*}}(\theta_{*}) \right> & \le -(1-\gamma)\lambda_{\min}(\hat{G}_{\xi_{*}})\|\Delta\theta\|^{2} + \sigma LH\|\Delta\theta\|(\|\Delta\theta\|+\|\Delta\eta\|) + L_{r}\|\Delta\eta\|\|\Delta\theta\|
			.\end{aligned}
	\]
	That is, neither $-\bar{\g}_{\xi}(\eta)$ nor $-\bar{\g}_{\xi}(\theta)$ is
	guaranteed to be a descent direction.
	Let $w \coloneqq \frac{1}{2}\min \left\{ \lambda_{\min}(\C) , (1-\gamma)\lambda_{\min}(\hat{G}_{\xi_{*}})\right\}$.
	Suppose $3\sigma LH + L_{r} \le 2w$. Then, we have
	\[
		-\left< \xi-\xi_{*}, \bar{\g}_{\xi}(\xi) - \bar{\g}_{\xi_{*}}(\xi_{*}) \right> \le - w\|\xi-\xi_{*}\|^{2}
		.\]
\end{lemma}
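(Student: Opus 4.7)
The plan is to bound the two components of the inner product separately (since the semi-gradient decouples into a $\theta$-part and an $\eta$-part), then combine them via Young's inequality to obtain the joint contraction.

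\textbf{Population component.} Recall $\bar{\g}_\xi(\eta) = G_\psi \eta - \bar{\psi}_\xi$ from the definition in Section~\ref{sec:lfa_gd}, and that $\bar{\g}_{\xi_*}(\eta_*) = G_\psi \eta_* - \bar{\psi}_{\xi_*}$ because $\xi_*$ is a projected MFE (Proposition~\ref{prop:proj-station} gives $\bar{\g}_{\xi_*}(\xi_*) = 0$). Subtracting and taking the inner product with $-\Delta\eta$ produces the positive-definite piece $-\langle \Delta\eta, G_\psi \Delta\eta\rangle \le -\lambda_{\min}(G_\psi)\|\Delta\eta\|^2$ plus the cross term $\langle \Delta\eta, \bar{\psi}_\xi - \bar{\psi}_{\xi_*}\rangle$. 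Applying Cauchy--Schwarz and the bound $\|\bar{\psi}_\xi - \bar{\psi}_{\xi_*}\| \le \sigma L F(\|\Delta\theta\| + \|\Delta\eta\|)$ from Lemma~\ref{lem:lip-td} yields the first display.

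\textbf{Value-function component.} Write $\bar{\g}_\xi(\theta) - \bar{\g}_{\xi_*}(\theta_*) = \bar{G}_{\xi_*}\Delta\theta + (\bar{G}_\xi - \bar{G}_{\xi_*})\theta - (\bar{\b}_\xi - \bar{\b}_{\xi_*})$. The key step, which is the one non-routine ingredient of the whole lemma, is the on-policy TD lower bound
\[
\langle \Delta\theta, \bar{G}_{\xi_*}\Delta\theta\rangle \;\ge\; (1-\gamma)\lambda_{\min}(\hat{G}_{\xi_*})\|\Delta\theta\|^2,
\]
which follows by splitting $\bar{G}_{\xi_*} = \hat{G}_{\xi_*} - \gamma \EE_{\xi_*}[\phi(s,a)\phi(s',a')^T]$, applying Cauchy--Schwarz to the second piece, and using the stationarity identity $\EE_{\xi_*}[\langle\phi(s',a'),\Delta\theta\rangle^2] = \EE_{\xi_*}[\langle\phi(s,a),\Delta\theta\rangle^2]$ (Tsitsiklis \& Van Roy, 1997; Bhandari et al., 2018). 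The two remaining pieces are handled by Cauchy--Schwarz, the projection bound $\|\theta\|\le D$, and Lemma~\ref{lem:lip-td} for $\bar{G}_\xi$ and $\bar{\b}_\xi$, which contribute coefficients $(1+\gamma)D$ and $R$; summing them with the separate $L_r\|\Delta\eta\|$ term and bounding $(1+\gamma)D + R \le H = (1+\gamma)D + R + 2F$ gives the second display.

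\textbf{Joint contraction.} Adding the two inequalities, setting $a = \|\Delta\theta\|$, $b = \|\Delta\eta\|$, and using $F\le H$ to collect the off-diagonal coefficients, the right-hand side becomes at most
\[
-(1-\gamma)\lambda_{\min}(\hat{G}_{\xi_*})\,a^2 - \lambda_{\min}(G_\psi)\,b^2 + \sigma L H (a+b)^2 + \sigma L H b^2 + L_r ab.
\]
Expanding $(a+b)^2$ and applying $2ab \le a^2 + b^2$ to every cross term distributes all $ab$-contributions symmetrically onto $a^2$ and $b^2$. The assumption $3\sigma LH + L_r \le 2w$ is then precisely what is needed to ensure that the resulting coefficients of $a^2$ and $b^2$ are each at most $-w$, which gives $-w(a^2 + b^2) = -w\|\xi - \xi_*\|^2$.

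\textbf{Main obstacle.} The only step that is not a direct algebraic manipulation or invocation of Lemma~\ref{lem:lip-td} is the positive-definiteness lower bound on $\bar{G}_{\xi_*}$; once that is in hand, the rest is bookkeeping. A subtle point to double-check is that the constant hidden in $\sigma L H$ is tight enough so that the assumption $3\sigma LH + L_r \le 2w$ (rather than $4\sigma LH + L_r \le 2w$) suffices after the Young's-inequality split; if the algebra does not absorb cleanly, one recovers the claim with the same asymptotic constant by slightly enlarging $H$ in the absorption step, which does not affect any downstream use of the lemma.
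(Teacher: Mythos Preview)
Your proposal follows essentially the same route as the paper: the same decomposition of $\bar{\g}_\xi(\eta)$ and $\bar{\g}_\xi(\theta)$, the same appeal to Lemma~\ref{lem:lip-td} for the Lipschitz pieces, and the same Tsitsiklis--Van~Roy / Bhandari-et-al.\ argument for $\langle\Delta\theta,\bar{G}_{\xi_*}\Delta\theta\rangle \ge (1-\gamma)\lambda_{\min}(\hat{G}_{\xi_*})\|\Delta\theta\|^2$. The first two displays are obtained exactly as in the paper.

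The only place you diverge is the final combination, and there your bookkeeping is both slightly garbled (the stray $+\sigma LH\,b^2$ term has no source) and, as you suspected, too loose to recover the constant $3$. Bundling the two Lipschitz terms into $\sigma LH(a+b)^2$ and then applying $(a+b)^2\le 2(a^2+b^2)$ gives a coefficient $2\sigma LH + L_r/2$ on $\|\Delta\xi\|^2$, hence requires $4\sigma LH+L_r\le 2w$. The paper gets $3$ by \emph{not} bundling: it keeps the diagonal pieces $\sigma LF\|\Delta\eta\|^2$ and $\sigma L(H-2F)\|\Delta\theta\|^2$ separate (each $\le \sigma LH$ times the corresponding square), and bounds the single cross term $(\sigma LF + \sigma L(H-2F) + L_r)\|\Delta\theta\|\|\Delta\eta\| \le (\sigma LH + L_r)\cdot\tfrac{1}{2}\|\Delta\xi\|^2$ via $xy\le\tfrac{1}{2}(x^2+y^2)$. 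Summing gives $\tfrac{1}{2}(3\sigma LH + L_r)\|\Delta\xi\|^2$, which matches the stated assumption exactly. Your hedge (``slightly enlarging $H$'') is not how the paper closes the gap; the sharper constant comes from this more careful split, not from redefining $H$.
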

\begin{proof}
	We first focus on the first two inequalities. We denote $\Delta \eta\coloneqq \eta - \eta_{*}$ and $\Delta\theta\coloneqq \theta-\theta_{*}$.
	For the population measure parameter, by definition, we have
	\[
		-\left< \Delta\eta, \bar{\g}_{\xi}(\eta) - \bar{\g}_{\xi_*}(\eta_{*}) \right>
		=  \left< \Delta \eta, -\C\Delta\eta + \bar{\psi}_{\xi} - \bar{\psi}_{\xi_{*}} \right>
		.
	\]
	Note that $\C$ is a positive definite Gram matrix if $\psi$ is linearly independent.
	Then, by \cref{lem:lip-td}, we have
	\begin{align}
		-\left< \Delta \eta, \bar{\g}_{\xi}(\eta) - \bar{\g}_{\xi_{*}}(\eta_{*}) \right>
		\le & -\left< \Delta\eta,\C\Delta\eta \right> + \left\| \bar{\psi}_{\xi} - \bar{\psi}_{\xi_{*}} \right\|\|\Delta\eta\| \\
		\le & -\lambda_{\min}(\C)\|\Delta\eta\|^{2} + \sigma LF\|\Delta\eta\|(\|\Delta\eta\| + \|\Delta\theta\|)
		\label{eq:descent-2}
		.\end{align}

	For the value function parameter, we have
	\begin{align}
		-\left< \Delta\theta, \bar{\g}_{\xi}(\theta) - \bar{\g}_{\xi_*}(\theta_{*}) \right>
		=   & \left< \Delta\theta, \bar{G}_{\xi_{*}}\theta_* - \bar{\b}_{\xi_{*}} - \bar{G}_{\xi}\theta + \bar{\b}_{\xi} \right>                                                      \\
		=   & \left< \Delta\theta, -\bar{G}_{\xi_{*}}\Delta\theta + (\bar{G}_{\xi_*}-\bar{G}_{\xi})\theta + \bar{\b}_{\xi}-\bar{\b}_{\xi_{*}} \right>                                 \\
		\le & -\left< \Delta\theta,\bar{G}_{\xi_{*}}\Delta\theta \right> + D\|\bar{G}_{\xi_*}-\bar{G}_{\xi}\|\|\Delta\theta\| + \|\bar{\b}_{\xi}-\bar{\b}_{\xi_{*}}\|\|\Delta\theta\|
		.\end{align}
	By \cref{lem:lip-td}, we have
	\begin{align}
		-\left< \Delta\theta, \bar{\g}_{\xi}(\theta) - \bar{\g}_{\xi_*}(\theta_{*}) \right>
		\le & -\left< \Delta\theta,\bar{G}_{\xi_{*}}\Delta\theta \right> + \|\Delta\theta\|((1+\gamma)\sigma LD(\|\Delta\theta\| + \|\Delta \eta\|) \\
		    & + \sigma LR(\|\Delta \theta\| + \|\Delta\eta\|) + L_{r}\|\Delta\eta\|)\label{eq:descent-1}
		.\end{align}
	For a matrix $G$, we define $w_{\min}(G) \coloneqq \min_{\|x\|=1}\left< x,Gx \right>$. For $\bar{G}_{\xi_{*}}$, we have
	\[
		w_{\min}(\bar{G}_{\xi_*}) = \min_{\|\theta\|=1} \EE_{\xi_*} \left[ \theta^{T}\phi(s,a)(\phi(s,a) - \gamma \phi(s',a'))^{T}\theta \right]
		\eqqcolon \min_{\|\theta\|=1} \mathbb{E}_{\xi_*}\left[ u^{2} - \gamma u u'  \right]
		,\]
	where $u \coloneqq \theta^{T}\phi(s,a)$ and $u' \coloneqq \theta^{T}\phi(s',a')$.
	We have
	\[
		\mathbb{E}[uu'] \le \frac{1}{2}\left( \mathbb{E}[u^{2}] + \mathbb{E}[u'^{2}] \right) = \frac{1}{2}\left( \mathbb{E}[u^{2}] + \mathbb{E}[u^{2}]\right) = \mathbb{E}[u^{2}]
		.\]
	Therefore,
	\[\label{eq:inv-norm-0}
		w_{\min}(\bar{G}_{\xi_*}) \ge (1-\gamma)\min_{\|\theta\|=1}\EE_{\xi_*}[u^{2}]
		= (1-\gamma)w_{\min}\left( \EE_{\xi_*}[\phi\phi^{T}] \right)
		= (1-\gamma)w_{\min}\left( \hat{G}_{\xi_*} \right)
		= (1-\gamma)\lambda_{\min}\left( \hat{G}_{\xi_*} \right)
		,\]
	where $\hat{G}_{\xi_*}$ is the Gram matrix of the feature map $\phi$ under the steady distribution induced by $\xi_*$, and the last equality uses the property of normal matrices.
	Plugging the above derivation into \cref{eq:descent-1} gives
	\[
		\label{eq:descent-10}
		-\left< \Delta\theta, \bar{\g}_{\xi}(\theta) - \bar{\g}_{\xi_*}(\theta_{*}) \right> \le -(1-\gamma)\lambda_{\min}(\hat{G}_{\xi_{*}})\|\Delta\theta\|^{2} + \sigma L(H-2F)\|\Delta\theta\|(\|\Delta\theta\|+\|\Delta\eta\|) + L_{r}\|\Delta\eta\|\|\Delta\theta\|
		.\]

	It is clear that \cref{eq:descent-10,eq:descent-2} are controlled by both $\|\Delta\theta\|$ and $\|\Delta\eta\|$.
	When $\|\Delta \eta\| \gg \|\Delta\theta\|$, \cref{eq:descent-10} suggests that $-\bar{\g}_{\xi}(\theta)$ may not be a descent direction for $\|\Delta\theta\|$; when $\|\Delta\theta\| \gg \|\Delta\eta\|$, \cref{eq:descent-2} suggests that $-\bar{\g}_{\xi}(\eta)$ may not be a descent direction for $\|\Delta\eta\|$.

	However, combining \cref{eq:descent-10,eq:descent-2} gives
	\begin{align}
		-\left<\Delta \xi,\bar{\g}_{\xi}(\xi) - \bar{\g}_{\xi_{*}}(\xi_{*}) \right>
		\le & -(1-\gamma)\lambda_{\min}(\hat{G}_{\xi_*})\|\Delta\theta\|^{2} - \lambda_{\min}(\C)\|\Delta\eta\|^{2}
		+ \sigma LF\|\Delta\eta\|^{2}                                                                               \\
		    & + \sigma L(H-2F)\|\Delta\theta\|^{2}
		+ (\sigma LF + \sigma L(H-2F) + L_{r})\|\Delta\theta\|\|\Delta\eta\|
		.\end{align}
	By \cref{lem:norm}, we have
	\begin{align}
		    & -\left<\Delta \xi,\bar{\g}_{\xi}(\xi) - \bar{\g}_{\xi_{*}}(\xi_{*}) \right>                                                              \\
		\le & - \min \{(1-\gamma)\lambda_{\min}(\hat{G}_{\xi_*}),\lambda_{\min}(\C)\} \|\Delta\xi\|^{2}
		+ \sigma LH\|\Delta\xi\|^{2}
		+ \frac{1}{2} (\sigma LH + L_{r})\|\Delta\xi\|^{2}                                                                                             \\
		\le & \left( - \min\{(1-\gamma)\lambda_{\min}(\hat{G}_{\xi_*}),\lambda_{\min}(\C)\} + \frac{1}{2}(3\sigma LH + L_{r}) \right)\|\Delta\xi\|^{2}
		.\end{align}
	Let $w \coloneqq \frac{1}{2}\min \left\{ (1-\gamma)\lambda_{\min}(\hat{G}_{\xi_*}),\lambda_{\min}(\C) \right\}$.
	Suppose $3\sigma LH + L_{r} \le 2w$.
	Then, we have
	\[
		-\left<\Delta \xi,\bar{g}_{\xi}(\xi) - \bar{\g}_{\xi_{*}}(\xi_{*}) \right>
		\le -w \|\Delta\xi\|^{2}
		.\]
	The above inequality suggests that $-\bar{\g}_{\xi}(\xi)$ is a descent direction for $\|\Delta\xi\|$ if the Lipschitz constants are small enough.
\end{proof}

\begin{figure}[th]
	\centering
	\resizebox{0.7\linewidth}{!}{\tikzset{every picture/.style={line width=0.75pt}} 

\begin{tikzpicture}[x=0.75pt,y=0.75pt,yscale=-1,xscale=1,font=\large]

\draw  [color=black  ,draw opacity=1 ][fill=black  ,fill opacity=0.1 ] (103.98,82.58) .. controls (115.76,73.42) and (132.73,66) .. (141.89,66) -- (276.86,66) .. controls (286.01,66) and (283.89,73.42) .. (272.12,82.58) -- (208.15,132.33) .. controls (196.37,141.48) and (179.4,148.91) .. (170.24,148.91) -- (35.27,148.91) .. controls (26.12,148.91) and (28.24,141.48) .. (40.02,132.33) -- cycle ;
\draw [line width=1.5]    (172,96) -- (238.04,86.57) ;
\draw [shift={(242,86)}, rotate = 171.87] [fill=black  ][line width=0.08]  [draw opacity=0] (15.6,-3.9) -- (0,0) -- (15.6,3.9) -- cycle    ;
\draw [line width=1.5]    (172,96) -- (75.71,134.51) ;
\draw [shift={(72,136)}, rotate = 338.2] [fill=black  ][line width=0.08]  [draw opacity=0] (15.6,-3.9) -- (0,0) -- (15.6,3.9) -- cycle    ;
\draw  [color=black  ,draw opacity=1 ][fill=black  ,fill opacity=0.1 ] (103.6,183.58) .. controls (115.38,174.42) and (132.35,167) .. (141.51,167) -- (276.86,167) .. controls (286.01,167) and (283.89,174.42) .. (272.12,183.58) -- (208.15,233.33) .. controls (196.37,242.48) and (179.4,249.91) .. (170.24,249.91) -- (34.89,249.91) .. controls (25.74,249.91) and (27.86,242.48) .. (39.64,233.33) -- cycle ;
\draw [line width=1.5]    (112,227) -- (178.53,188.98) ;
\draw [shift={(183.69,186.37)}, rotate = 150.26] [fill=black  ][line width=0.08]  [draw opacity=0] (15.6,-3.9) -- (0,0) -- (15.6,3.9) -- cycle    ;
\draw [line width=1.5]    (112,227) -- (168.05,217.66) ;
\draw [shift={(172,217)}, rotate = 170.54] [fill=black  ][line width=0.08]  [draw opacity=0] (15.6,-3.9) -- (0,0) -- (15.6,3.9) -- cycle    ;
\draw  [color=black  ,draw opacity=1 ][fill=black  ,fill opacity=0.1 ] (462.42,19.09) .. controls (471.58,8.18) and (479,6.75) .. (479,15.91) -- (479,162.4) .. controls (479,171.55) and (471.58,187.82) .. (462.42,198.74) -- (412.67,258.02) .. controls (403.52,268.94) and (396.09,270.36) .. (396.09,261.2) -- (396.09,114.72) .. controls (396.09,105.56) and (403.52,89.29) .. (412.67,78.37) -- cycle ;
\draw  [color=black  ,draw opacity=1 ][fill=black  ,fill opacity=0.1 ] (350.6,154.58) .. controls (362.38,145.42) and (379.35,138) .. (388.51,138) -- (555.73,138) .. controls (564.88,138) and (562.76,145.42) .. (550.98,154.58) -- (487.02,204.33) .. controls (475.24,213.48) and (458.27,220.91) .. (449.11,220.91) -- (281.89,220.91) .. controls (272.74,220.91) and (274.86,213.48) .. (286.64,204.33) -- cycle ;
\draw [color=black  ,draw opacity=0.13 ]   (479,138) -- (399,218) ;
\draw [line width=1.5]    (449,168) -- (515.04,158.57) ;
\draw [shift={(519,158)}, rotate = 171.87] [fill=black  ][line width=0.08]  [draw opacity=0] (15.6,-3.9) -- (0,0) -- (15.6,3.9) -- cycle    ;
\draw [line width=1.5]    (449,168) -- (352.71,206.51) ;
\draw [shift={(349,208)}, rotate = 338.2] [fill=black  ][line width=0.08]  [draw opacity=0] (15.6,-3.9) -- (0,0) -- (15.6,3.9) -- cycle    ;
\draw [line width=1.5]    (447.46,168) -- (430.18,111.82) ;
\draw [shift={(429,108)}, rotate = 72.9] [fill=black  ][line width=0.08]  [draw opacity=0] (15.6,-3.9) -- (0,0) -- (15.6,3.9) -- cycle    ;
\draw [line width=1.5]    (447.46,168) -- (457.89,131.84) ;
\draw [shift={(459,128)}, rotate = 106.09] [fill=black  ][line width=0.08]  [draw opacity=0] (15.6,-3.9) -- (0,0) -- (15.6,3.9) -- cycle    ;
\draw  [dash pattern={on 0.84pt off 2.51pt}]  (349,208) -- (360.54,168) ;
\draw  [dash pattern={on 0.84pt off 2.51pt}]  (459,128) -- (359,168) ;
\draw [color=black  ,draw opacity=1 ][line width=1.5]    (447.46,168) -- (364.54,168) ;
\draw [shift={(360.54,168)}, rotate = 360] [fill=black  ,fill opacity=1 ][line width=0.08]  [draw opacity=0] (15.6,-3.9) -- (0,0) -- (15.6,3.9) -- cycle    ;
\draw  [dash pattern={on 0.84pt off 2.51pt}]  (429,108) -- (499,98) ;
\draw  [dash pattern={on 0.84pt off 2.51pt}]  (517.46,158) -- (499,98) ;
\draw [color=black  ,draw opacity=1 ][line width=1.5]    (447.46,168) -- (497.22,100.42) ;
\draw [shift={(499,98)}, rotate = 126.36] [color=black  ,draw opacity=1 ][line width=1.5]    (14.21,-4.28) .. controls (9.04,-1.82) and (4.3,-0.39) .. (0,0) .. controls (4.3,0.39) and (9.04,1.82) .. (14.21,4.28)   ;
\draw [color=black  ,draw opacity=1 ][line width=0.75]    (319,238) .. controls (319.84,195.2) and (398.84,186.7) .. (399,168) ;
\draw    (509,238) .. controls (509.34,203.54) and (467.34,174.54) .. (469,138) ;

\draw (50,85) node [anchor=north west][inner sep=0.75pt]    {$\R^{d_1}$};
\draw (181,68.4) node [anchor=north west][inner sep=0.75pt]    {$\Delta \eta $};
\draw (101,125.4) node [anchor=north west][inner sep=0.75pt]    {$-\overline{\mathfrak{g}}_{\xi }( \eta )$};
\draw (50,185) node [anchor=north west][inner sep=0.75pt]    {$\R^{d_2}$};
\draw (124,189.4) node [anchor=north west][inner sep=0.75pt]    {$\Delta \theta $};
\draw (141,226.4) node [anchor=north west][inner sep=0.75pt]    {$-\overline{\mathfrak{g}}_{\xi }( \theta )$};
\draw (506,140.4) node [anchor=north west][inner sep=0.75pt]    {$\Delta \eta $};
\draw (370,197.4) node [anchor=north west][inner sep=0.75pt]    {$-\overline{\mathfrak{g}}_{\xi }( \eta )$};
\draw (438,110.4) node [anchor=north west][inner sep=0.75pt]    {$-\overline{\mathfrak{g}}_{\xi }( \theta )$};
\draw (415,90.4) node [anchor=north west][inner sep=0.75pt]    {$\Delta \theta $};
\draw (443,240.4) node [anchor=north west][inner sep=0.75pt]    {$\Delta \xi =\Delta \theta \oplus \Delta \eta $};
\draw (200,240.4) node [anchor=north west][inner sep=0.75pt]    {$-\overline{\mathfrak{g}}_{\xi }( \xi ) =-(\overline{\mathfrak{g}}_{\xi }( \theta ) \oplus \overline{\mathfrak{g}}_{\xi }( \eta ))$};
\draw (310,110) node [anchor=north west][inner sep=0.75pt]    {$\R^{d_1}\oplus\R^{d_2}$};
\draw (355,270.4) node [anchor=north west][inner sep=0.75pt]    {$\langle -\overline{\mathfrak{g}}_{\xi }( \xi ) ,\Delta \xi \rangle < 0$};

\end{tikzpicture}}
	\caption{An illustrative example for \cref{lem:descent},
		where $-\bar{\g}_{\xi}(\xi)$ gives a descent direction, while $-\bar{\g}_{\xi}(\theta)$ does not.}
	\label{fig:descent}
\end{figure}

\begin{lemma}[Progress]\label{lem:progress}
	For any $\xi = (\theta; \eta)$ in $B_{D}^{d_1}\times \Delta^{d_2}$, and for any time step $t$ and period $\tau$, we have
	\[
		\|\bar{\g}_{t}(\xi) - \bar{\g}_{t-\tau}(\xi)\| \le \alpha_{t-\tau}\tau C_{\mathrm{prog}}
		,\]
	where $C_{\mathrm{prog}} \coloneqq \sigma \max\{L_{\pi},L_{P}\} H^{2} + 2L_{r}F$.
\end{lemma}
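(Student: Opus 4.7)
The plan is to combine two ingredients: the Lipschitzness of the mean-path semi-gradient with respect to the parameter that induces the steady distribution (\cref{lem:lip-g}), and a per-step bound on the parameter drift coming from the bounded semi-gradient (\cref{lem:g-bound}). First, applying \cref{lem:lip-g} with the parameters $\xi_{t}=(\theta_{t};\eta_{t})$ and $\xi_{t-\tau}=(\theta_{t-\tau};\eta_{t-\tau})$ that define the two mean-path operators gives the Lipschitz bound
\[
\|\bar{\g}_{t}(\xi) - \bar{\g}_{t-\tau}(\xi)\| \le \sigma H\max\{L_{P},L_{\pi}\}\bigl(\|\theta_{t}-\theta_{t-\tau}\|+\|\eta_{t}-\eta_{t-\tau}\|\bigr) + L_{r}\|\eta_{t}-\eta_{t-\tau}\|.
\]

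Next, I will bound the two component drifts separately (rather than using a joint bound), because this gives exactly the constants that appear in $C_{\mathrm{prog}}$. Using the projection non-expansiveness together with the update rule and the componentwise bounds $\|\g(\theta;O)\|\le (1+\gamma)D+R$ and $\|\g(\eta;O)\|\le 2F$ from \cref{lem:g-bound}, I telescope over one window of length $\tau$ and use that the step-size sequence is non-increasing:
\[
\|\theta_{t}-\theta_{t-\tau}\| \le \sum_{s=t-\tau}^{t-1}\alpha_{s}\|\g_{s}(\theta_{s})\| \le \alpha_{t-\tau}\tau\bigl((1+\gamma)D+R\bigr),
\]
\[
\|\eta_{t}-\eta_{t-\tau}\| \le \sum_{s=t-\tau}^{t-1}\alpha_{s}\|\g_{s}(\eta_{s})\| \le \alpha_{t-\tau}\tau\cdot 2F.
\]
Adding these gives $\|\theta_{t}-\theta_{t-\tau}\|+\|\eta_{t}-\eta_{t-\tau}\|\le \alpha_{t-\tau}\tau H$ exactly, since $H=(1+\gamma)D+R+2F$.

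Plugging the two componentwise estimates back into the Lipschitz bound, the first term contributes $\sigma\max\{L_{P},L_{\pi}\}H^{2}\cdot\alpha_{t-\tau}\tau$ and the second term contributes $L_{r}\cdot 2F\cdot \alpha_{t-\tau}\tau$, which sum to $\alpha_{t-\tau}\tau\,C_{\mathrm{prog}}$ with $C_{\mathrm{prog}}=\sigma\max\{L_{\pi},L_{P}\}H^{2}+2L_{r}F$, as claimed. The only subtle point (hardly an obstacle) is to avoid merging the two component drifts into a single $\|\xi_{t}-\xi_{t-\tau}\|$ bound too early: doing so would introduce a loose $H$ instead of the sharp $2F$ in front of $L_{r}$ and would not match the stated constant. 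Keeping the value-function and population components separate throughout the telescoping is what makes the bound tight.
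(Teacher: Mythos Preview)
Your proposal is correct and follows essentially the same approach as the paper's proof: apply \cref{lem:lip-g} to reduce to parameter drift, bound the $\theta$- and $\eta$-drifts separately via the componentwise bounds in \cref{lem:g-bound} together with projection non-expansiveness and the non-increasing step-size, then combine. Your remark about keeping the two components separate so that the $L_r$ term picks up $2F$ rather than $H$ is exactly the point, and matches how the paper obtains the stated constant $C_{\mathrm{prog}}$.
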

\begin{proof}
	By \cref{lem:lip-g}, we have
	\[
		\|\bar{\g}_{t}(\xi) - \bar{\g}_{t-\tau}(\xi)\| \le \sigma LH\left( \|\theta_{t}-\theta_{t-\tau}\| + \|\eta_{t}-\eta_{t-\tau}\| \right) + L_{r}\|\eta_{t}-\eta_{t-\tau}\|
		.\]
	According to the update rule of $\theta$ and $\eta$, we have
	\[
		\begin{aligned}
			\|\theta_t - \theta_{t-1}\| \le & \|\breve{\theta}_t - \theta_{t-1}\| = \|\alpha_{t-1}\g_{t-1}(\theta_{t-1})\| \\
			\|\eta_t - \eta_{t-1}\| \le     & \|\breve{\eta}_t - \eta_{t-1}\| = \|\alpha_{t-1}\g_{t-1}(\eta_{t-1})\|
			.\end{aligned}
	\]
	Since $\|\theta_{t-1}\|\le D$ and $\|\eta_{t-1}\|\le \|\eta_{t-1}\|_{1}= 1$, by \cref{lem:g-bound}, we have
	\[
		\|\theta_t - \theta_{t-1}\| + \|\eta_t - \eta_{t-1}\|
		\le \alpha_{t-1} \left( (1+\gamma)D + R \right) + \alpha_{t-1} \cdot 2F
		= \alpha_{t-1}H
		.\]
	By the triangle inequality, we get
	\begin{align}
		\|\bar{\g}_{t}(\xi) - \bar{\g}_{t-\tau}(\xi)\| \le & \sigma LH \sum_{l=0}^{\tau-1}\left(\|\theta_{t-l} - \theta_{t-l-1}\| +\|\eta_{t-l} - \eta_{t-l-1}\|   \right)
		+ L_{r} \sum_{l=0}^{\tau-1}\|\eta_{t-l} - \eta_{t-l-1}\|                                                                                                           \\
		\le                                                & \alpha_{t-\tau}\tau \sigma LH^{2} + 2\alpha\tt \tau L_{r}F                                                    \\
		\eqqcolon                                          & \alpha_{t-\tau}\tau C_{\mathrm{prog}}
		,\end{align}
	where we require that the step size $\alpha _{t}$ is non-increasing, and $C_{\mathrm{prog}} \coloneqq \sigma \max\{L_{\pi},L_{P}\} H^{2} + 2L_{r}F$.
\end{proof}

\begin{lemma}[Mix]\label{lem:mix}
	Let $\mathcal{F}_{t-\tau}$ be the filtration generated by the history up to time $t-\tau$. For any $\xi = (\theta;\eta)\in B_{D}^{d_1}\times \Delta^{d_2}$ that is independent of $\g\tt$ and $\bar{\g}\tt$ conditioned on $\mathcal{F}\tt$, and for any time step $t$ and period $\tau$, we have
	\[
		\left\|\mathbb{E}\left[ \g\tt(\xi;\widetilde{O}_{t}) - \bar{\g}\tt(\xi) \Given \mathcal{F}\tt \right]\right\| \le m \rho^{\tau} H
		.\]
\end{lemma}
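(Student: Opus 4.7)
The plan is to reduce this bound to a direct application of the uniform ergodicity assumption (\cref{asmp:ergo}) combined with the uniform gradient bound from \cref{lem:g-bound}. The virtual observation $\widetilde{O}_t = (\tilde{s}_t,\tilde{a}_t,\tilde{r}_t,\tilde{s}_{t+1},\tilde{a}_{t+1})$ is, by construction, produced by running the MDP with parameters frozen at $\xi_{t-\tau}$ for $\tau$ steps starting from the real state $s_{t-\tau}$. Conditioned on $\mathcal{F}_{t-\tau}$, the parameter $\xi_{t-\tau}$ (and, by hypothesis, $\xi$) is deterministic, and the conditional law of $\tilde{s}_t$ is exactly $\mathcal{P}_{\xi_{t-\tau}}^{\tau}\delta_{s_{t-\tau}}$.

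First, I would invoke \cref{asmp:ergo} with initial distribution $\delta_{s_{t-\tau}}$ to get
\[
\bigl\|\mathcal{P}_{\xi_{t-\tau}}^{\tau}\delta_{s_{t-\tau}} - \mu_{\xi_{t-\tau}}\bigr\|_{\mathrm{TV}} \le m\rho^{\tau}
\]
almost surely. Next, writing $q_t^{\ddagger}$ for the conditional joint law of $\widetilde{O}_t$ given $\mathcal{F}_{t-\tau}$, both $q_t^{\ddagger}$ and $\mu_{\xi_{t-\tau}}^{\ddagger}$ factor through the same conditional kernels, namely $\pi_{\theta_{t-\tau}}(\tilde{a}_t\mid\tilde{s}_t)\,P(\tilde{s}_{t+1}\mid\tilde{s}_t,\tilde{a}_t,\eta_{t-\tau})\,\pi_{\theta_{t-\tau}}(\tilde{a}_{t+1}\mid\tilde{s}_{t+1})$, differing only in the marginal on $\tilde{s}_t$. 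Because convolution with a stochastic kernel is a TV-contraction, this yields $\|q_t^{\ddagger} - \mu_{\xi_{t-\tau}}^{\ddagger}\|_{\mathrm{TV}} \le m\rho^{\tau}$.

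Second, since $\bar{\g}_{t-\tau}(\xi) = \mathbb{E}_{O\sim \mu_{\xi_{t-\tau}}^{\ddagger}}[\g(\xi;O)]$ by definition, the conditional expectation of interest can be written as
\[
\mathbb{E}\!\left[\g_{t-\tau}(\xi;\widetilde{O}_t) - \bar{\g}_{t-\tau}(\xi)\,\middle|\,\mathcal{F}_{t-\tau}\right] = \int \g(\xi;O)\,d\bigl(q_t^{\ddagger}-\mu_{\xi_{t-\tau}}^{\ddagger}\bigr)(O).
\]
Applying the standard TV duality coordinate-wise to the vector-valued integrand and then taking norms, the left-hand side is bounded by $\sup_O \|\g(\xi;O)\|\cdot\|q_t^{\ddagger}-\mu_{\xi_{t-\tau}}^{\ddagger}\|_{\mathrm{TV}}$. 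Finally, since $\xi \in B_D^{d_1}\times\Delta^{d_2}$ so that $\|\theta\|\le D$ and $\|\eta\|_1 = 1$, \cref{lem:g-bound} gives $\sup_O \|\g(\xi;O)\|\le H$, and combining with the mixing bound yields the claim $m\rho^{\tau}H$.

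The only non-cosmetic step is the passage from the state-marginal TV bound to the joint observation TV bound; the main obstacle is simply noting that the two joint laws share every conditional factor after $\tilde{s}_t$, so that the data-processing-style inequality applies without loss. Everything else is an immediate assembly of \cref{asmp:ergo}, \cref{lem:g-bound}, and the TV-duality characterization of integrals against signed measures.
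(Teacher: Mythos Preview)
Your proposal is correct and follows essentially the same route as the paper: both arguments write the conditional expectation as an integral of $\g(\xi;\cdot)$ against the signed measure $q_t^{\ddagger}-\mu_{\xi_{t-\tau}}^{\ddagger}$, reduce the joint TV distance to the state-marginal TV distance via the shared conditional kernels, and then invoke \cref{asmp:ergo} for the $m\rho^{\tau}$ factor together with the uniform gradient bound for the $H$ factor. The only cosmetic difference is that the paper uses the slightly sharper constant $(1+\gamma)D+R+F\le H$ for $\sup_O\|\g(\xi;O)\|$, whereas you invoke \cref{lem:g-bound} directly to get $H$; either way the stated bound $m\rho^{\tau}H$ follows.
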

\begin{proof}
	We denote $P_{t-\tau}$ as the observation distribution on the virtual trajectory by fixing the transition kernel $P_{\xi_{t-\tau}}$ at time $t-\tau$. We have
	\[
		\begin{aligned}
			    & \left\|\mathbb{E}\left[ \g\tt(\xi,\widetilde{O}_{t}) - \bar{\g}\tt(\xi) \Given \mathcal{F}\tt \right]\right\|                                                                                                                                                         \\
			=   & \left\| \int_{\S^{2}\times \A^{2}}\left( \phi(s,a)\left( r(s,a) + \gamma \phi^{T}(s',a')\theta - \phi^{T}(s,a)\theta \right); \hat{\psi}(s') \right) \left( P_{t-\tau}(\widetilde{O}_{t} = o \given \mathcal{F}\tt) - \mu_{t-\tau}^\ddagger(o)  \right) \d o \right\| \\
			\le & ((1+\gamma)D + R + F) \left\| P_{t-\tau} - \mu_{\xi_{t-\tau}}^\ddagger \right\|_{\mathrm{TV}}
			.\end{aligned}
	\]
	Since $P\tt$ and $\mu_{\xi\tt}^\ddagger$ share the same policy $\pi_{\theta\tt}$ and transition kernel $P_{\xi\tt}$, we have
	\[
		\left\|P\tt - \mu_{\xi\tt}^\ddagger\right\|_{\mathrm{TV}}
		= \left\|P\tt(s_{t} = \cdot\given \F\tt ) - \mu_{\xi\tt}\right\|_{\mathrm{TV}}
		\le  m \rho^{\tau} H
		,\]
	where the last inequality uses \cref{asmp:ergo}.
\end{proof}

\begin{lemma}[Bakctrack]\label{lem:backtrack}
	Let $\mathcal{F}_{t-\tau}$ be the filtration generated by the history up to time $t-\tau$. For any $\xi = (\theta;\eta)\in B_{D}^{d_1} \times \Delta^{d_2}$ that is independent of $\g_t$ and $\g\tt$ conditioned on $\mathcal{F}\tt$, and for any time step $t$ and period $\tau$, we have
	\[
		\left\|\mathbb{E}\left[ \g_t(\xi,{O}_{t}) - \g\tt(\xi, \widetilde{O}_{t}) \Given \mathcal{F}\tt \right]\right\| \le \alpha\tt \tau C_{\mathrm{back}}(\tau)
		,\]
	where $C_{\mathrm{back}}(\tau) \coloneqq (\tau+1)LH^{2} + 2L_{r}F$.
\end{lemma}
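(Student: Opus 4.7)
The structure of $C_{\mathrm{back}}(\tau)=(\tau+1)LH^2+2L_rF$ suggests decomposing the difference along two orthogonal sources of mismatch between the real (non-stationary) chain and the virtual chain frozen at $\xi\tt$: (I) even for identical state-action samples, the rewards on the two chains differ, because the real trajectory uses $\eta_t$ while the virtual uses $\eta\tt$; and (II) the joint distributions of $(s_t,a_t,s_{t+1},a_{t+1})$ differ, because different one-step kernels have been applied over the $\tau$ intervening transitions. To separate these, I would introduce the hybrid observation $\widehat{O}_t \coloneqq (s_t,a_t,r(s_t,a_t,\eta\tt),s_{t+1},a_{t+1})$---the real sample path but with the virtual reward---and write
\[
\mathbb{E}\!\left[\g_t(\xi;O_t)-\g\tt(\xi;\widetilde{O}_t)\Given\F\tt\right] = \mathbb{E}\!\left[\g(\xi;O_t)-\g(\xi;\widehat{O}_t)\Given\F\tt\right] + \mathbb{E}\!\left[\g(\xi;\widehat{O}_t)-\g\tt(\xi;\widetilde{O}_t)\Given\F\tt\right],
\]
calling the two terms (I) and (II) respectively. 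The $\F\tt$-measurability of $\xi$ lets me treat it as a constant inside both expectations.

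For (I), note that $\g(\theta;O)$ depends on $O$ through $r$ only via the factor $\phi(s,a)r$ in $\b(O)$, while $\g(\eta;O)$ does not depend on $r$ at all; together with $\|\phi\|\le 1$ and \cref{lem:lip-para}, this gives the pathwise bound $\|\g(\xi;O_t)-\g(\xi;\widehat{O}_t)\|\le L_r\|\eta_t-\eta\tt\|$. The drift is controlled by the update rule and the gradient bound $\|\g(\eta;\cdot)\|\le 2F$ from \cref{lem:g-bound}, yielding $\|\eta_t-\eta\tt\|\le 2F\tau\alpha\tt$ under the non-increasing step-size assumption, so $\|(\mathrm{I})\|\le 2L_rF\tau\alpha\tt$.

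For (II), both $\widehat{O}_t$ and $\widetilde{O}_t$ use the \emph{same} reward function $r(\cdot,\cdot,\eta\tt)$, so $\g(\xi;\cdot)$ acts as a common deterministic function of $(s,a,s',a')$ bounded by $H$ (\cref{lem:g-bound}); hence $\|(\mathrm{II})\|\le H\,\|\mathbb{Q}_t-\widetilde{\mathbb{Q}}_t\|_{\mathrm{TV}}$, where $\mathbb{Q}_t$ and $\widetilde{\mathbb{Q}}_t$ are the $\F\tt$-conditional laws of $(s_t,a_t,s_{t+1},a_{t+1})$ under the real and virtual chains. Both chains start from the same state $s\tt$, and the joint one-step kernel $K_\xi(s',a'\mid s,a)\coloneqq P(s'\mid s,a,\eta)\pi_\theta(a'\mid s')$ is Lipschitz in $\xi$ with constant $L=\max\{L_P,L_\pi\}$ by \cref{lem:lip-para}. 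A hybrid-chain telescoping argument---replacing $K_{\xi_{t-\tau+l}}$ by $K_{\xi\tt}$ one step at a time, combined with the drift bound $\|\xi_{t-\tau+l}-\xi\tt\|\le l\alpha\tt H$---then yields $\|\mathbb{Q}_t-\widetilde{\mathbb{Q}}_t\|_{\mathrm{TV}}\le L\alpha\tt H\,\tau(\tau+1)$, giving $\|(\mathrm{II})\|\le LH^2\alpha\tt\tau(\tau+1)$. Summing (I) and (II) delivers $\alpha\tt\tau C_{\mathrm{back}}(\tau)$ as required.

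The main obstacle---and essentially the only delicate point---is to unroll the kernel substitutions using only the \emph{one-step} Lipschitz estimate of \cref{lem:lip-para}; using instead the steady-distribution bound of \cref{lem:lip-dist} would inject an unwanted ergodicity factor $\sigma$ that is absent from $C_{\mathrm{back}}$. The hybrid chain must therefore be constructed so that each substitution alters exactly one step's kernel, so that the TV-distance recursion closes additively across the $\tau$ transitions rather than through a mixing-based contraction.
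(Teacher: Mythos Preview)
Your proposal is correct and follows essentially the same route as the paper: the paper also separates the reward mismatch (giving the $2L_rF\tau\alpha\tt$ term via $\|\eta_t-\eta\tt\|\le 2F\tau\alpha\tt$) from the observation-law mismatch (giving $H\|P_t-P\tt\|_{\mathrm{TV}}$), and then controls $\|P_t-P\tt\|_{\mathrm{TV}}$ by a one-step recursion---their $S_1$--$S_4$ decomposition is your hybrid-chain telescoping written out coordinatewise, unrolling to $\|P_t-P\tt\|_{\mathrm{TV}}\le \alpha\tt\tau(\tau+1)LH$ using only \cref{lem:lip-para} and no ergodicity factor. Your explicit introduction of the hybrid observation $\widehat O_t$ is a slightly cleaner bookkeeping device than the paper's direct splitting of the $\b$-term, but the two arguments coincide line by line.
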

\begin{proof}
	By definition, we have
	\begin{align}
		    & \left\|\mathbb{E}\left[ \g_t(\xi,{O}_{t}) - \g\tt(\xi, \widetilde{O}_{t}) \Given \mathcal{F}\tt \right]\right\|                                                                             \\
		=   & \Bigl\| \int_{\S^{2}\times \A^{2}} \left( \G(o) \oplus\C \right)\xi \cdot \left( P_{t}(O_t=o\given \F\tt) - P\tt(\widetilde{O}_t = o\given\F\tt) \right)\d o                                \\
		    & - \int_{\S^{2}\times \A^{2}}\left( \b_{\eta_t}\oplus \psi \right)(o) P_{t}(O_t=o\given \F\tt) - \left( \b_{\eta\tt}\oplus \psi \right)(o) P\tt(\widetilde{O}_t = o\given\F\tt) \d o \Bigr\| \\
		\le & ((1+\gamma)D + F)\|P_t-P\tt\|_{\mathrm{TV}}                                                                                                                                                 \\
		    & +
		\left\| \int_{\S^{2}\times \A^{2}} \left( \b_{\eta_t}\oplus \psi \right)(o) \left( P_{t}(O_t=o\given \F\tt) - P\tt(\widetilde{O}_t = o\given\F\tt)\right) \d o \right\|_{\mathrm{TV}}             \\
		    & + \left\| \int_{\S^{2}\times \A^{2}} \left( \b_{\eta_t} - \b_{\eta\tt}\right)(o) P\tt(\widetilde{O}_t = o\given\F\tt) \d o \right\|_{\mathrm{TV}}                                           \\
		\le & ((1+\gamma)D + F + R + F)\|P_t-P\tt\|_{\mathrm{TV}} + L_{r} \|\eta_t - \eta\tt\|
	\end{align}
	where $P_{t}$ and $P\tt$ are the distributions of observation at time step $t$ on the actual trajectory and the virtual trajectory, respectively.
	By the proof of \cref{lem:progress}, we have $\|\eta_t-\eta\tt\| \le \alpha\tt \tau \cdot 2F$.
	Therefore, we have
	\[
		\left\|\mathbb{E}\left[ \g_t(\xi,{O}_{t}) - \g\tt(\xi, \widetilde{O}_{t}) \Given \mathcal{F}\tt \right]\right\|
		\le H\|P_t-P\tt\|_{\mathrm{TV}} + 2\alpha\tt\tau L_{r}F
		\label{eq:backtrack-1}
		.\]
	Let $\Xi$ be the set of all parameters.
	We first expand $P_{t}$ with conditional probabilities:
	\begin{align}
		  & P_t(O_t = (s,a,s',a')\given \mathcal{F}\tt)  \\
		= & \int_{\Xi^{2}} P_t(s_t=s\given \xi\tt, s\tt)
		P_t(\xi_{t-1} = \xi \given \xi\tt,s\tt,s_t=s)
		\pi_{\theta}(a\given s)                          \\
		  & \qquad P(s' \given s,a, \eta')
		P_t(\xi_t = \xi' \given \xi\tt, s\tt, \xi_{t-1}=\xi, s_t=s,a_t=a)
		\pi_{\theta'}(a'\given s') \d \xi \d \xi'
		,\end{align}
	where $\xi = (\theta; \eta)$ and $\xi' = (\theta'; \eta')$.
	We then decompose $P\tt$ into a similar form:
	\begin{align}
		  & P\tt(\widetilde{O}_t = (s,a,s',a')\given \mathcal{F}\tt) \\
		= & P\tt(\widetilde{s}_t=s\given \xi\tt, s\tt)
		\pi_{\theta\tt}(a\given s)
		P(s' \given s,a, \eta\tt)
		\pi_{\theta\tt}(a'\given s')                                 \\
		= & P\tt(\widetilde{s}_t=s\given \xi\tt, s\tt)
		\pi_{\theta\tt}(a\given s)
		P(s' \given s,a, \eta\tt)
		\pi_{\theta\tt}(a'\given s')                                 \\
		  & \cdot \int_{\Xi^{2}}
		P_t(\xi_{t-1} = \xi \given \xi\tt,s\tt,s_t=s)
		P_t(\xi_t = \xi' \given \xi\tt, s\tt, \xi_{t-1}=\xi, s_t=s,a_t=a)
		\d \xi \d \xi'
		.\end{align}
	Therefore, we can decompose the distribution difference into four parts:
	\begin{align}
		  & P_t - P\tt                                                                 \\
		= & \int_{\Xi^{2}}
		\left( P_t(s_t=s\given \mathcal{F}\tt) - P\tt(\widetilde{s}_t = s\given \mathcal{F}\tt) \right)
		P_t(\xi_{t-1} = \xi \given \xi\tt,s\tt,s_t=s)                                  \\
		  & \qquad \pi_{\theta}(a\given s)
		P(s' \given s,a, \eta')
		P_t(\xi_t = \xi' \given \xi\tt, s\tt, \xi_{t-1}=\xi, s_t=s,a_t=a)
		\pi_{\theta'}(a'\given s')\d \xi \d \xi' \label{eq:s1}\tag{S\textsubscript{1}} \\
		  & + \int_{\Xi^{2}}
		P\tt(\widetilde{s}_t = s\given \mathcal{F}\tt)
		P_t(\xi_{t-1} = \xi \given \xi\tt,s\tt,s_t=s)
		\left(  \pi_{\theta}(a\given s) - \pi_{\theta\tt}(a\given s)\right)            \\
		  & \qquad P(s' \given s,a, \eta')
		P_t(\xi_t = \xi' \given \xi\tt, s\tt, \xi_{t-1}=\xi, s_t=s,a_t=a)
		\pi_{\theta'}(a'\given s')\d \xi \d \xi' \label{eq:s2}\tag{S\textsubscript{2}} \\
		  & + \int_{\Xi^{2}}
		P\tt(\widetilde{s}_t = s\given \mathcal{F}\tt)
		P_t(\xi_{t-1} = \xi \given \xi\tt,s\tt,s_t=s)
		\pi_{\theta\tt}(a\given s)
		\left( P(s' \given s,a, \eta') - P(s'\given s,a, \eta\tt) \right)              \\
		  & \qquad P_t(\xi_t = \xi' \given \xi\tt, s\tt, \xi_{t-1}=\xi, s_t=s,a_t=a)
		\pi_{\theta'}(a'\given s')\d \xi \d \xi' \label{eq:s3}\tag{S\textsubscript{3}} \\
		  & + \int_{\Xi^{2}}
		P\tt(\widetilde{s}_t = s\given \mathcal{F}\tt)
		P_t(\xi_{t-1} = \xi \given \xi\tt,s\tt,s_t=s)
		\pi_{\theta\tt}(a\given s)
		P(s'\given s,a, \eta\tt)                                                       \\
		  & \qquad P_t(\xi_t = \xi' \given \xi\tt, s\tt, \xi_{t-1}=\xi, s_t=s,a_t=a)
		\left( \pi_{\theta'}(a'\given s') - \pi_{\theta\tt}(a'\given s') \right)\d \xi \d \xi' \label{eq:s4}\tag{S\textsubscript{4}}
		.\end{align}

	We now bound each part separately.
	By integrating out the later part, S\textsubscript{1} becomes the marginal difference of the state distribution:
	\begin{align}
		\int_{\S^{2}\times \A^{2}} \mathrm{S}_1 \d s \d a \d s' \d a'
		=   & \int_{\S}\left( P_t(s_t=s\given \mathcal{F}\tt) - P\tt(\widetilde{s}_t = s\given \mathcal{F}\tt) \right)                        \\
		    & \cdot \Biggl(\int_{\S\times \A^{2}\times \Xi^{2}}
		P_t(\xi_{t-1} = \xi \given \xi\tt,s\tt,s_t=s)
		\pi_{\theta}(a\given s)
		P(s' \given s,a, \eta')                                                                                                               \\
		    & P_t(\xi_t = \xi' \given \xi\tt, s\tt, \xi_{t-1}=\xi, s_t=s,a_t=a)
		\pi_{\theta'}(a'\given s')\d \xi \d \xi'
		\d a \d s' \d a'\Biggr)\d s                                                                                                           \\
		=   & \int_{\S}\left( P_t(s_t=s\given \mathcal{F}\tt) - P\tt(\widetilde{s}_t = s\given \mathcal{F}\tt) \right) \d s                   \\
		\le & \left\| P_t(s_t = \cdot \given \mathcal{F}\tt) - P\tt(\widetilde{s}_t = \cdot \given \mathcal{F}\tt) \right\|_{\mathrm{TV}}     \\
		=   & \left\| P_{t-1}(s_t = \cdot \given \mathcal{F}\tt) - P\tt(\widetilde{s}_t = \cdot \given \mathcal{F}\tt) \right\|_{\mathrm{TV}}
		.\end{align}
\end{proof}
By Jensen's inequality, we have
\begin{align}
	    & \left\| P_{t-1}(s_t = \cdot \given \mathcal{F}\tt) - P\tt(\widetilde{s}_t = \cdot \given \mathcal{F}\tt) \right\|_{\mathrm{TV}}                                                                 \\
	=   & \left\|\int_{\S\times \A^{2}} P_{t-1}(O_{t-1} = (s,a,\cdot ,a') \given \mathcal{F}\tt) - P\tt(\widetilde{O}_{t-1} = (s,a,\cdot,a') \given \mathcal{F}\tt) \d s\d a\d a' \right\|_{\mathrm{TV}}  \\
	\le & \int_{\S\times \A^{2}} \left\| P_{t-1}(O_{t-1} = (s,a,\cdot ,a') \given \mathcal{F}\tt) - P\tt(\widetilde{O}_{t-1} = (s,a,\cdot,a') \given \mathcal{F}\tt) \right\|_{\mathrm{TV}} \d s\d a\d a' \\
	=   & \left\| P_{t-1}(O_t=\cdot \given \mathcal{F}\tt) - P\tt(\widetilde{O} = \cdot \given \mathcal{F}\tt) \right\|_{\mathrm{TV}}
	.\end{align}
That is, S\textsubscript{1} is recursively bounded by $\|P_{t-1} - P\tt\|_{\mathrm{TV}}$.

For S\textsubscript{2}, we have
\begin{align}
	    & \int_{\S^{2}\times \A^{2}} \mathrm{S}_2 \d s \d a \d s' \d a'                  \\
	=   & \int_{\S \times \A \times \Xi}
	P\tt(\widetilde{s}_t = s\given \mathcal{F}\tt)
	P_t(\xi_{t-1} = \xi \given \xi\tt,s\tt,s_t=s)
	\left(  \pi_{\theta}(a\given s) - \pi_{\theta\tt}(a\given s)\right)                  \\
	    & \cdot \Biggl(\int_{\S\times \A\times \Xi} P(s' \given s,a, \eta')
	P_t(\xi_t = \xi' \given \xi\tt, s\tt, \xi_{t-1}=\xi, s_t=s,a_t=a)
	\pi_{\theta'}(a'\given s')\d s' \d a' \d \xi' \Biggr)\d s \d a \d \xi                \\
	=   & \int_{\S \times \A \times \Xi}
	P\tt(\widetilde{s}_t = s\given \mathcal{F}\tt)
	P_t(\xi_{t-1} = \xi \given \xi\tt,s\tt,s_t=s)
	\left(  \pi_{\theta}(a\given s) - \pi_{\theta\tt}(a\given s)\right) \d s \d a \d \xi \\
	\le & \int_{\S \times \Xi}
	P\tt(\widetilde{s}_t = s\given \mathcal{F}\tt)
	P_t(\xi_{t-1} = \xi \given \xi\tt,s\tt,s_t=s)
	\left\| \pi_{\theta}(\cdot \given s) - \pi_{\theta\tt}(\cdot \given s) \right\|_{\mathrm{TV}} \d s \d \xi
	.\end{align}
By \cref{asmp:lip-policy} and \cref{lem:progress}, for any $\xi\in\Xi$ such that $P_t(\xi_{t-1}=\xi\given \mathcal{F}\tt) \ne 0$ (which is equivalent to $P_t(\xi_{t-1}=\xi\given \xi\tt,s\tt,s_t=s) \ne 0$ as all policies are ergodic), we have
\[
	\|\pi_{\theta}(\cdot \given s) - \pi_{\theta\tt}(\cdot \given s) \|_{\mathrm{TV}} \le \alpha_{t-\tau} (\tau-1) L_{\pi} ((1+\gamma)G + R)
	.\]
Therefore, we get
\begin{align}
	\int_{\S^{2}\times \A^{2}} \mathrm{S}_2 \d s \d a \d s' \d a' \le & \alpha_{t-\tau} (\tau-1)L_{\pi} ((1+\gamma)G + R) \int_{\S \times \Xi}
	P\tt(\widetilde{s}_t = s\given \mathcal{F}\tt)
	P_t(\xi_{t-1} = \xi \given \xi\tt,s\tt,s_t=s) \d s \d \xi                                                                                  \\
	=                                                                 & \alpha_{t-\tau} (\tau-1)L_{\pi} ((1+\gamma)G + R)
	.\end{align}

For S\textsubscript{3}, we have
\begin{align}
	    & \int_{\S^{2}\times \A^{2}} \mathrm{S}_3 \d s \d a \d s' \d a'                                    \\
	=   & \int_{\S^{2}\times \A\times  \Xi^{2}}
	P\tt(\widetilde{s}_t = s\given \mathcal{F}\tt)
	P_t(\xi_{t-1} = \xi \given \xi\tt,s\tt,s_t=s)
	\pi_{\theta\tt}(a\given s)                                                                             \\
	    & \cdot  \left( P(s' \given s,a, \eta') - P(s'\given s,a, \eta\tt) \right)
	P_t(\xi_t = \xi' \given \xi\tt, s\tt, \xi_{t-1}=\xi, s_t=s,a_t=a)                                      \\
	    & \cdot  \left(\int_{\A}\pi_{\theta'}(a'\given s')\d a'\right)\d s \d a \d s' \xi \d \xi'          \\
	\le & \int_{\S\times \A\times \Xi^{2}}
	P\tt(\widetilde{s}_t = s\given \mathcal{F}\tt)
	P_t(\xi_{t-1} = \xi \given \xi\tt,s\tt,s_t=s)
	\pi_{\theta\tt}(a\given s)                                                                             \\
	    & \cdot  \left\| P(\cdot  \given s,a, \eta') - P(\cdot \given s,a, \eta\tt) \right\|_{\mathrm{TV}}
	P_t(\xi_t = \xi' \given \xi\tt, s\tt, \xi_{t-1}=\xi, s_t=s,a_t=a) \d s\d a\d \xi \d \xi'
	.\end{align}
Similar to the bound for S\textsubscript{2}, by \cref{asmp:lip-mdp} and \cref{lem:progress}, for any $\xi'\in\Xi$ such that $P_t(\xi_{t}=\xi'\given \mathcal{F}\tt) \ne 0$, we have
\[
	\left\| P(\cdot  \given s,a, \eta') - P(\cdot \given s,a, \eta\tt) \right\|_{\mathrm{TV}} \le 2\alpha_{t-\tau} \tau L_{P}
	.\]
Therefore, we get
\begin{align}
	\int_{\S^{2}\times \A^{2}} \mathrm{S}_3 \d s \d a \d s' \d a' \le & \alpha_{t-\tau} \tau L_{P} H \int_{\S\times \A\times \Xi^{2}}
	P\tt(\widetilde{s}_t = s\given \mathcal{F}\tt)
	P_t(\xi_{t-1} = \xi \given \xi\tt,s\tt,s_t=s)                                                                                     \\
	                                                                  & \pi_{\theta\tt}(a\given s)
	P_t(\xi_t = \xi' \given \xi\tt, s\tt, \xi_{t-1}=\xi, s_t=s,a_t=a) \d s\d a\d \xi \d \xi'                                          \\
	=                                                                 & 2\alpha_{t-\tau} \tau L_{P}
	.\end{align}

Similar to the bound for S\textsubscript{2}, for S\textsubscript{4}, we have
\[
	\int_{\S^{2}\times \A^{2}} \mathrm{S}_4 \d s \d a \d s' \d a' \le \alpha_{t-\tau} \tau L_{\pi} ((1+\gamma)G + R)
	.\]

Plugging back the above bounds for S\textsubscript{1}, S\textsubscript{2}, S\textsubscript{3}, and S\textsubscript{4}, we get
\begin{align}
	\left\| P_t - P\tt \right\|_{\mathrm{TV}}
	\le & \left\| P_{t-1} - P\tt \right\|_{\mathrm{TV}} + 2\tau\alpha_{t-\tau} (L_{\pi}((1+\gamma)D + R) +  L_{P}F) \\
	.\end{align}
Recursively, we get
\[
	\|P_t - P\tt\|_{\mathrm{TV}} \le 2LH \sum_{l=1}^{\tau}\left( l\alpha_{t-l} \right) \le  \alpha\tt \tau(\tau+1) LH
	,
\]
where we require the step size to be non-increasing.
Plugging the above bound back to \cref{eq:backtrack-1} gives the desired result.

\subsection{Proof of \texorpdfstring{\cref{thm}}{Theorem 1}} \label{apx:thm}

\begin{reptheorem}{thm}
	Let $\xi_{*} = (\theta_{*};\eta_{*})$ be a (projected) MFE.
	Let $\{\xi_{t} = (\theta_{t};\eta_{t})\}$ be a sequence of parameters generated by \cref{alg}.
	Then, under \cref{asmp:lip-mdp,asmp:lip-policy,asmp:ergo}, if $3\sigma LH + L_{r}\le 2w$, we have
	\[
		\EE \left\| \xi_{t+1} - \xi_{*} \right\|^{2}
		\le (1-\alpha_t w) \EE\|\xi_{t}-\xi_{*}\|^{2} + \alpha_t^{2}H^{2} + O\left( \frac{\alpha_t^{3}\log^{4}\alpha_t^{-1}L^{2}H^{4}}{w} \right)
		.\]
\end{reptheorem}
\begin{proof}
	We denote $\Delta\xi_{t} = \xi_{t} - \xi_{*}$.
	We first plug \cref{lem:g-bound,lem:descent} into \cref{lem:decomp} to get
	\begin{align}
		\EE\|\Delta\xi_{t+1}\|^{2}
		\le & (1-2\alpha_tw)\EE\|\Delta\xi_{t}\|^{2} + \alpha_t^{2}H^{2}                                                    \\
		    & + 2\alpha _t \EE\left< \Delta\xi, \bar{\g}_{t}(\xi_t) - \bar{\g}\tt(\xi_t) \right> \label{eq:thm-decomp-prog} \\
		    & + 2\alpha _t \EE\left< \Delta\xi, \bar{\g}\tt(\xi_t) - {\g}\tt(\xi_t) \right> \label{eq:thm-decomp-mix}       \\
		    & + 2\alpha _t \EE\left< \Delta\xi, {\g}\tt(\xi_t) - {\g}_t(\xi_t) \right> \label{eq:thm-decomp-back}
		.\end{align}

	For \cref{eq:thm-decomp-prog}, by Young's inequality, for any $\beta>0$, we have
	\[
		2\EE\left< \Delta\xi_t, \bar{\g}_t(\xi_t) - \bar{\g}\tt(\xi_t) \right>
		\le \beta \EE\left\| \Delta\xi_t \right\|^{2} + \frac{1}{\beta}\EE\left\| \bar{\g}\tt(\xi_t) - \bar{\g}_t(\xi_t) \right\|^{2}
		.\]
	By \cref{lem:progress}, we get
	\[
		2\EE\left< \Delta\xi_t, \bar{\g}_t(\xi_t) - \bar{\g}\tt(\xi_t) \right>
		\le \beta \EE\left\| \Delta\xi_t \right\|^{2} + \beta^{-1}  \alpha^{2}\tt\tau^{2}C^{2}_{\mathrm{prog}}
		.\label{eq:thm-progress}\]

	For \cref{eq:thm-decomp-mix}, note that conditioned on $\mathcal{F}\tt$, $\bar{\g}\tt$ is determined and $\g\tt$ is only dependent on the virtual Markovian trajectory. Thus, $\xi_t$ is independent of $\g\tt$ and $\bar{\g}\tt$ conditioned on $\mathcal{F}\tt$. Therefore, by \cref{lem:mix} and Young's inequality, we have
	\begin{align}
		2\EE\left< \Delta\xi_t, \bar{\g}\tt(\xi_t) - \g\tt(\xi_t) \right>
		=   & 2\EE\left[ \left< \EE\left[ \Delta\xi_t\given \mathcal{F}\tt \right], \EE\left[ \g\tt(\xi_t) - \bar{\g}\tt(\xi_t)\given \mathcal{F}\tt \right] \right> \right]                      \\
		\le & 2\EE\left[ \EE\left[ \left\|\Delta\xi_t\right\|\given \mathcal{F}\tt \right]\cdot \left\| \EE\left[ \g\tt(\xi_t) - \bar{\g}\tt(\xi_t)\given \mathcal{F}\tt \right] \right\| \right] \\
		\le & 2m \rho^{\tau} H \EE\left\| \Delta\xi_t \right\|                                                                                                                                    \\
		\le & \beta \EE\left\| \Delta\xi_t \right\|^{2} + \beta^{-1}  m^{2}\rho^{2\tau}H^{2}
		.\label{eq:thm-mix}\end{align}

	For \cref{eq:thm-decomp-back}, we cannot directly apply \cref{lem:backtrack} as $\xi_t$ and $\g_t$ are dependent conditioned on $\mathcal{F}\tt$. To proceed, we employ the following decomposition:
	\begin{align}
		\EE \left< \Delta\xi_t, \g\tt(\xi_t) - \g_t(\xi_t) \right>
		= & \underbrace{\EE \left< \Delta\xi_t, \left(\g\tt(\xi_t) - \g\tt(\xi\tt)\right) - \left(\g_t(\xi_t) - \g_t(\xi\tt)\right)  \right>}_{H_1} \\
		  & + \underbrace{\EE \left< \xi_t-\xi\tt, \g\tt(\xi\tt) - \g_t(\xi\tt) \right>}_{H_2}
		+ \underbrace{\EE \left< \Delta\xi\tt, \g\tt(\xi\tt) - \g_t(\xi\tt) \right>}_{H_3}
		.\end{align}
	For $H_1$, by the Lipschitzness of the semi-gradient (\cref{lem:lip-g}) and \cref{lem:progress}, we have
	\[
		H_1 \le 2\alpha\tt \tau H^{2} \cdot \EE\left\| \Delta\xi_t\right\|
		.\]
	For $H_2$, by \cref{lem:g-bound,lem:progress}, we get
	\[
		H_2 \le 2H \cdot \alpha\tt \tau H
		.\]
	For $H_3$, $\xi\tt$ is independent of $\g_t$ and $\g\tt$ conditioned on $\mathcal{F}\tt$.
	Similar to the bound of \cref{eq:thm-mix}, by \cref{lem:backtrack}, we have
	\begin{align}
		H_3 = & \EE\left[ \left< \EE\left[ \Delta\xi\tt\Given \mathcal{F}\tt \right], \EE \left[ \g\tt(\xi\tt) - \g_t(\xi\tt) \Given \mathcal{F}\tt\right] \right> \right]           \\
		\le   & \EE\left[ \EE\left[ \|\Delta\xi\tt\|\Given \mathcal{F}\tt \right] \cdot \left\| \EE\left[ \g_t(\xi\tt) - \g\tt(\xi\tt)\Given \mathcal{F}\tt \right] \right\| \right] \\
		\le   & \alpha\tt \tau C_{\mathrm{back}} \EE\left\| \Delta\xi\tt \right\|
		.\end{align}
	Then, applying \cref{lem:progress} and Young's inequality gives
	\[
		2H_3 \le \alpha\tt \tau C_{\mathrm{back}}\left( \alpha\tt \tau H + \EE\|\Delta\xi_t\|^{2} \right)
		\le 2\alpha^{2}\tt \tau^{2} HC_{\mathrm{back}} + \beta \EE\|\Delta\xi_t\|^{2} + \beta^{-1} \alpha^{2}\tt\tau^{2}C^{2}_{\mathrm{back}}
		.\]
	Plugging $H_1$, $H_2$, and $H_3$ back into the decomposition and applying Young's inequality, we get
	\begin{align}
		2\EE\left< \Delta\xi_t, \g\tt(\xi_t) - \g_t(\xi_t) \right>
		\le & 2\beta \EE\left\| \Delta\xi_t \right\|^{2} + 4\alpha\tt \tau H^{2} + \alpha^{2}\tt \tau^{2} \left( 2HC_{\mathrm{back}} + \beta^{-1}\left( C_{\mathrm{back}}^{2} + 4H^{4} \right)  \right)
		.\label{eq:thm-backtrack}\end{align}

	Finally, plugging the bounds of \cref{eq:thm-progress,eq:thm-mix,eq:thm-backtrack} back into \cref{eq:thm-decomp-prog,eq:thm-decomp-mix,eq:thm-decomp-back} gives
	\begin{align}
		\EE \|\Delta\xi_{t+1}\|^{2}
		\le & (1 - 2\alpha_t w + 4\alpha_t \beta) \EE \|\Delta\xi_{t}\|^{2} + \alpha_t^{2}H^{2} + 4\alpha_t\alpha\tt \tau H^{2}                                                                                                \\
		    & + \alpha _t\left(\beta^{-1} \alpha\tt^{2}\tau^{2}C_{\mathrm{prog}}^{2} + \beta^{-1} m^{2}\rho^{2\tau}H^{2} + \alpha\tt^{2}\tau^{2}(2HC_{\mathrm{back}} + \beta^{-1}(C^{2}_{\mathrm{back}} + 4H^{4} ) )   \right)
		.\end{align}

	Now we choose $\tau = \left\lceil \frac{\log \alpha_t^{-1} }{\log \rho^{-1} } \right\rceil$. Then, $\rho^{\tau} \le \alpha_t$.
	We require the step-size to be non-increasing, and not too small such that
	\[
		\sum_{t=1}^{\infty}\alpha_t = +\infty
		\quad \text{and} \quad
		\log \alpha_t^{-1} = o(T)
		.\]
	The first condition indicates that $\limsup_{t\to \infty} \alpha_{t/2}/\alpha_t \coloneqq C_{\alpha,1} < \infty$.
	And there exists $C_{\alpha,2}>0$ such that $\alpha_{t /2} / \alpha_t \le C_{\alpha,2} \cdot \limsup_{t\to \infty}\alpha_{t /2} / \alpha_t$ for any time step $t$.
	The second condition indicates the existence of $C_{\alpha,2}>0$ such that $\alpha\tt \le C_{\alpha,3}\alpha_{t /2}$ for any $t$.
	In conclusion, we have
	\[
		\alpha\tt\le C_{\alpha,3}\alpha_{t /2} \le C_{\alpha,3}C_{\alpha,2}C_{\alpha,1} \alpha_t \eqqcolon C_{\alpha}\alpha_t
		.\]
	Then, we choose $\beta = w/4$. Together, we get
	\begin{align}\label{eq:thm-final}
		\EE\|\Delta\xi_{t+1}\|^{2} \le & (1-\alpha _tw) \EE\|\Delta\xi_{t}\|^{2}
		+ \alpha^{2}_t (H^{2} + 4\tau C_{\alpha}H^{2})                                                                                                                                                                                     \\
		                               & + \alpha^{3}_t \left(\frac{4}{w}\left(\tau^{2}  C^{2}_{\alpha}(C^{2}_{\mathrm{prog}} + C_{\mathrm{back}}^{2} + 4H^{4})  + m^{2}H^{2}  \right) + 2\tau^{2}C_{\alpha}^{2} C_{\mathrm{back}}H\right) \\
		\eqqcolon                      & (1-\alpha _tw) \EE\|\Delta\xi_{t}\|^{2}
		+ \alpha^{2}_t \cdot C_{2}(\alpha_t)
		+ \alpha^{3}_t \cdot C_{3}(\alpha_t)
		,\end{align}
	where we use $C_2$ and $C_3$ to encapsulate the terms in the right-hand side of \cref{eq:thm-final}.
	Plugging back the definitions of $C_{\mathrm{prog}}$ and $C_{\mathrm{back}}$ gives the final result
	\[
		\EE\|\Delta\xi_{t+1}\|^{2}
		= (1-\alpha _tw) \EE\|\Delta\xi_{t}\|^{2}
		+ H^{2} \cdot O(\alpha^{2}_t \log \alpha_t^{-1})
		+ \frac{\max \left\{ L_{\pi},L_{P},L_{r} \right\}^{2} H^{4}}{w} \cdot O(\alpha^{3}_t \log^{4}  \alpha_t^{-1})
		,\]
	where the asymptotic equivalence holds as $\alpha_t \to 0$.
\end{proof}

\subsection{Proof of Corollaries}

\begin{repcorollary}{cor:const}
	For a constant step-size $\alpha_t\equiv \alpha_0$, we have
	\[
		\EE\|\xi_{t} - \xi_{*}\|^{2} \le e^{-\alpha_0 w t} \|\xi_{0} - \xi_{*}\|^{2} + \frac{\alpha_0 H^{2}}{w} + {O}\left( \frac{L^{2}H^{4}\alpha_0^{2}}{w^{2}} \right)
		.\]
\end{repcorollary}
\begin{proof}
	By \cref{thm}, we have
	\begin{align}
		\EE\|\xi_{T}-\xi_{*}\|^{2} \le & (1-\alpha_0 w)^{T} \|\xi_{0}-\xi_{*}\|^{2}
		+ \sum_{t=0}^{T}(1-\alpha_0 w)^{T-t} (\alpha_0^{2}C_2 + \alpha_0^{3}C_3 )                                                        \\
		\le                            & e^{-\alpha_0 w T} \|\xi_{0}-\xi_{*}\|^{2} + \frac{\alpha_0}{w} (C_2 + \alpha_0 C_3)             \\
		\le                            & e^{-\alpha_0 w T} \|\xi_{0}-\xi_{*}\|^{2} + w^{-1} H^{2} \cdot  O(\alpha_0 \log \alpha_0^{-1} )
		.\end{align}
\end{proof}

\begin{corollary}[Linearly decaying step-size] \label{cor:decay}
	We define a linearly decaying step-size sequence $\alpha_t = 4 /(w(t+1))$,
	and the convex combination $\tilde{\xi}_{T}\coloneqq \frac{2}{T(T+1)}\sum_{t=0}^{T}t \xi_t$.
	Then, we have
	\[
		\EE\|\tilde{\xi}_{T} - \xi_{*}\|^{2} \le O\left( \frac{H^{2} \log T}{w^{2} T} \right)
		.\]
\end{corollary}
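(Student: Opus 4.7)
The plan is to apply Jensen's inequality to reduce the bound on $\EE\|\tilde{\xi}_T-\xi_*\|^2$ to a weighted sum $\sum_{t=0}^{T} t\, e_t$ with $e_t := \EE\|\xi_t-\xi_*\|^2$, and then to control that sum by rearranging the one-step recursion of \cref{thm} into a descent inequality and applying summation by parts. Since the coefficients $\lambda_t = 2t/(T(T+1))$ form a convex combination on $\{0,\dots,T\}$, convexity of $\|\cdot\|^2$ yields
\[
\EE\|\tilde{\xi}_T-\xi_*\|^2 \;\le\; \frac{2}{T(T+1)}\sum_{t=0}^{T} t\, e_t,
\]
so it suffices to prove $\sum_{t=0}^{T} t\, e_t = O\!\bigl(H^2 T \log T / w^2\bigr)$.

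To bound this sum, I would rearrange \cref{thm} into the descent form $\alpha_t w\, e_t \le e_t - e_{t+1} + \alpha_t^2 B_t + \alpha_t^3 R_t$, where $B_t = O(H^2 \log \alpha_t^{-1})$ and $R_t$ is the higher-order residue. Substituting $\alpha_t w = 4/(t+1)$, dividing by $\alpha_t w$, and multiplying through by $t$ gives
\[
t\, e_t \;\le\; \frac{t(t+1)}{4}\bigl(e_t - e_{t+1}\bigr) + \frac{t\alpha_t B_t}{w} + \frac{t\alpha_t^2 R_t}{w}.
\]
Summing over $t=0,\dots,T$ and applying Abel summation to the telescoping-like term, using the identity $t(t+1) - (t-1)t = 2t$, collapses $\sum_t \tfrac{t(t+1)}{4}(e_t-e_{t+1})$ to $\tfrac{1}{2}\sum_{t=0}^{T} t\, e_t - \tfrac{T(T+1)}{4} e_{T+1}$. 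Moving the $\tfrac{1}{2}\sum t e_t$ piece to the left-hand side and dropping the non-positive boundary term produces
\[
\tfrac{1}{2}\sum_{t=0}^{T} t\, e_t \;\le\; \sum_{t=0}^{T} \frac{t\alpha_t B_t}{w} + \sum_{t=0}^{T} \frac{t\alpha_t^2 R_t}{w}.
\]
Using $t\alpha_t \le 4/w$, the first right-hand sum is at most $(4/w^2)\sum_{t=0}^T B_t = O(H^2 T \log T / w^2)$, which matches the advertised rate; the second sum satisfies $t\alpha_t^2 \le 16/(w^2(t+1))$ and therefore contributes only $O(\mathrm{polylog}(T)/w^3)$, which is strictly lower order. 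Combining with the Jensen reduction then gives $\EE\|\tilde{\xi}_T-\xi_*\|^2 = O(H^2 \log T / (w^2 T))$.

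The main obstacle I anticipate is the constant-tracking that justifies the particular step-size prefactor: the Abel-summation step introduces a coefficient $(1-2/c)$ on the left-hand side, where $c = \alpha_t w(t+1)$, so one strictly needs $c > 2$ for the inequality to be usable. The choice $c = 4$, i.e.\ $\alpha_t = 4/(w(t+1))$, produces the clean prefactor $\tfrac{1}{2}$ above and fixes the constant in the final rate; the more common $c=2$ choice would degenerate in this weighted-averaging argument. A secondary technicality is handling the small-$t$ regime where $\alpha_t w \ge 1$ so $1-\alpha_t w$ is not contractive: this affects only finitely many initial terms and contributes an $O(1)$ additive constant that is absorbed into the dominant $O(H^2 \log T / (w^2 T))$ bound; similarly, one must verify that the cubic-in-$\alpha_t$ term from \cref{thm} integrates to $\sum_t \alpha_t^2 \log^4 \alpha_t^{-1} = O(\log^5 T / w^2)$ and is therefore negligible compared to the leading contribution.
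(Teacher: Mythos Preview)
Your proposal is correct and follows essentially the same route as the paper's proof: rearrange the one-step recursion of \cref{thm}, multiply by $t$, sum, and combine with Jensen. The only cosmetic difference is that the paper writes the rearranged recursion directly in telescoping form, $t e_t \le \tfrac{(t-1)t}{2}e_t - \tfrac{t(t+1)}{2}e_{t+1} + \text{source}$, so the sum collapses without an explicit Abel step; your version $t e_t \le \tfrac{t(t+1)}{4}(e_t-e_{t+1}) + \text{source}$ followed by summation by parts is algebraically identical. Your observation that the prefactor $4$ in $\alpha_t = 4/(w(t+1))$ is what makes the argument close (any $c>2$ works) is exactly the point. One small remark: your worry about the small-$t$ regime where $1-\alpha_t w<0$ is unnecessary here, since the rearrangement $\alpha_t w\,e_t \le e_t - e_{t+1} + \cdots$ is an algebraic consequence of \cref{thm} and does not require $1-\alpha_t w \ge 0$.
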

\begin{proof}
	Rearranging the result of \cref{thm} gives
	\[
		\frac{1}{2}\EE\|\Delta\xi_t\|^{2} \le \left( \frac{1}{\alpha_tw} - \frac{1}{2} \right)\EE\|\Delta\xi_{t}\|^{2} - \frac{1}{\alpha_tw}\EE\|\Delta\xi_{t+1}\|^{2} + \frac{\alpha_t}{w}C_2 + \frac{\alpha^{2}_t}{w}C_3
		.\]
	Substituting $\alpha_t = 4/(w(t+1))$ and multiplying by $t$ gives
	\[\label{eq:cor-linear-1}
		t\EE\|\Delta\xi_t\|^{2} \le \frac{(t-1)t}{2}\EE\|\Delta\xi_{t}\|^{2} - \frac{t(t+1)}{2}\EE\|\Delta\xi_{t+1}\|^{2} + \frac{8t}{w^{2}(t+1)}C_2 + \frac{32t}{w^{3}(t+1)^{2} }C_3
		.\]
	By Jensen's inequality, we have
	\[\label{eq:cor-linear-2}
		\EE\|\tilde{\xi}_{T}-\xi_{*}\|^{2}
		\le \frac{1}{T(T+1)}\sum_{t=0}^{T}t\EE\|\Delta\xi_{t}\|^{2}
		.\]
	Combining \cref{eq:cor-linear-1,eq:cor-linear-2} gives
	\[
		\EE\|\tilde{\xi}_{T}-\xi_{*}\|^{2}
		\le \frac{4}{w^{2}T(T+1)}\left(C_2 \sum_{t=0}^{T}\frac{t}{t+1} + \frac{4C_3}{w} \sum_{t=0}^{T}\frac{t}{(t+1)^{2}} \right)
		= O\left( \frac{H^{2} \log T}{w^{2} T} \right)
		.\]
\end{proof}

\section{Convergence with general policy operators} \label{apx:general}

\cref{asmp:small-lip} for ensuring contractivity can be restrictive in practice.
Its requirement for the policy operator to have a small Lipschitz constant typically requires large regularization \citep{yardim2023policy,angiuli2023ConvergenceMultiScale}.
However, this may limit the algorithm to only learning near-uniform policies \citep{zhang2023ConvergenceSARSA}, and the learned regularized MFE \citep{cui2022ApproximatelySolving,anahtarci2023Qlearningregularized} may be distant from the true MFE.

On the other hand, without any contractivity or monotonicity assumption, the existence and uniqueness of the MFE are not guaranteed.
Assuming an MFE exists that satisfies \cref{prop:station},
we can replace \cref{asmp:small-lip} with an alternative assumption on the reward function and measure basis to ensure that SemiSGD converges to a neighborhood of the MFE.

\begin{assumption}\label{asmp:small-r}
	The reward function and measure basis satisfy that $L_{r} \le \bar{w} /2$ and $R+F \le \bar{w}^{2}/(4\bar{L})$, where $\bar{L}\coloneqq \sigma\sqrt{\max \left\{ d_1,d_2 \right\}} \max\left\{ L_{P},L_{\pi} \right\}$ and $\bar{w}\in (0,w]$ is a problem-dependent constant.
\end{assumption}

\begin{theorem}[Convergence with general policy operators]\label{thm:general}
	Suppose $\xi_{*} = (\theta_{*};\eta_{*})$ is an MFE and
	\cref{asmp:lip-mdp,asmp:lip-policy,asmp:ergo,asmp:small-r} hold.
	Then, $\xi_{*}$ is the unique MFE and a linearly decaying step-size sequence $\alpha_t = 4/(1+\bar{w} t)$ gives
	\[
		\EE \!\left\| \xi_{T} \!\!-\! \xi_{*} \right\|
		\le O\left( \frac{D(L_{r} +  \bar{L}(R+F)\bar{w}^{-1})}{\bar{w}^{2}}
		+ \frac{\log T }{\sqrt{T}} \right)
		.\]
\end{theorem}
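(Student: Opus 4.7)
The plan is to adapt the proof of \cref{thm} by keeping the same one-step decomposition (\cref{lem:decomp}) and the same handling of the progress, mix, and backtrack terms (\cref{lem:progress,lem:mix,lem:backtrack}), all of which never invoke \cref{asmp:small-lip}---they rely only on parameter boundedness and \cref{asmp:lip-mdp,asmp:lip-policy,asmp:ergo}---so each still contributes $O(\alpha_t^2\log^2\alpha_t^{-1})$ per iteration. The only step that must be rewritten is the descent bound of \cref{lem:descent-apx}, which was the single place \cref{asmp:small-lip} entered to produce the clean $-w\|\Delta\xi\|^2$ contraction.

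\textbf{Biased descent.} I would revisit the two intermediate inequalities inside the proof of \cref{lem:descent-apx}---the $\eta$-bound and the $\theta$-bound---and note that neither requires \cref{asmp:small-lip}. Summing them leaves the residual $\sigma L H\|\Delta\xi\|^2 + L_r\|\Delta\theta\|\|\Delta\eta\|$, which no longer fits inside $-w\|\Delta\xi\|^2$. Using the boundedness envelope $\|\theta\|\le D$, $\|\eta\|_1\le 1$ to replace one factor of $\|\Delta\xi\|$ by an $O(D)$ constant demotes the residual from quadratic to linear in $\|\Delta\xi\|$. Splitting $\sigma L H = \sigma L((1+\gamma)D + R + 2F)$, re-absorbing the $\sigma L(1+\gamma)D$ piece into the contraction via a Young step, and identifying $\sigma L(R+F) \asymp \bar{L}(R+F)/\sqrt{\max(d_1,d_2)}$, I expect to obtain the biased descent
\[
-\langle \Delta\xi, \bar{\g}_\xi(\xi) - \bar{\g}_{\xi_*}(\xi_*)\rangle \;\le\; -\bar{w}\|\Delta\xi\|^2 + C_{\rm bias}\|\Delta\xi\|,
\]
with $C_{\rm bias}$ of order $D(L_r + \bar{L}(R+F)\bar{w}^{-1})/\bar{w}$ after bookkeeping of the dimension factors. \cref{asmp:small-r} is exactly what keeps $C_{\rm bias}$ finite when $L_P$ or $L_\pi$ are large, since it balances the ``reward scale'' $R+F$ against $\bar{L}/\bar{w}^2$.

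\textbf{Recursion and rate.} Plugging the biased descent into \cref{lem:decomp} and applying Young's inequality $2\alpha_t C_{\rm bias}\|\Delta\xi_t\| \le \tfrac{\alpha_t\bar{w}}{2}\|\Delta\xi_t\|^2 + \tfrac{2\alpha_t C^2_{\rm bias}}{\bar{w}}$ to reabsorb half of the remaining contraction yields the one-step bound
\[
\EE\|\Delta\xi_{t+1}\|^2 \;\le\; \bigl(1-\tfrac{\alpha_t\bar{w}}{2}\bigr)\EE\|\Delta\xi_t\|^2 + \tfrac{2\alpha_t C^2_{\rm bias}}{\bar{w}} + O(\alpha_t^2\log^2\alpha_t^{-1}).
\]
With $\alpha_t = 4/(1+\bar{w}t)$, a last-iterate analysis---analogous to \cref{cor:decay} but keeping the non-vanishing bias instead of averaging it out---gives $\EE\|\Delta\xi_T\|^2 = O(C^2_{\rm bias}/\bar{w}^2) + O(\log T/T)$. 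Applying Jensen followed by $\sqrt{a+b}\le\sqrt{a}+\sqrt{b}$ converts this to $\EE\|\xi_T-\xi_*\| = O(C_{\rm bias}/\bar{w}) + O(\sqrt{\log T/T})$, which under the scaling of $C_{\rm bias}$ is exactly the stated bound (the paper's $\log T/\sqrt{T}$ upper-bounds $\sqrt{\log T}/\sqrt{T}$).

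\textbf{Main obstacle.} The hard part is the constant-tracking inside the biased-descent derivation: one must cleanly split $\sigma L H$ into the absorbable $\sigma L D$ piece and the bias-producing $\sigma L(R+F)$ piece, and pick the Young weight in concert with $\lambda_{\min}(G_\psi)$ and $(1-\gamma)\lambda_{\min}(\hat{G}_{\xi_*})$ so that $C_{\rm bias}$ ends up with exactly the $D(L_r + \bar{L}(R+F)\bar{w}^{-1})$ shape. Uniqueness of the MFE is a related subtlety: instantiating the biased descent at two putative equilibria only gives $\bar{w}\|\xi_*-\xi_*'\|^2 \le C_{\rm bias}\|\xi_*-\xi_*'\|$, which bounds the diameter of the equilibrium set but does not collapse it. To upgrade this to $\xi_* = \xi_*'$, I would redo the descent step specifically for the pair $(\xi_*,\xi_*')$, replacing the $O(D)$ envelope by $\|\xi_*-\xi_*'\|$ itself (valid because both $\bar{\g}$'s vanish), so that the residual becomes quadratic in $\|\xi_*-\xi_*'\|$ and \cref{asmp:small-r} forces strict contraction.
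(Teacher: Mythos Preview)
Your route differs from the paper's, and the key step---the biased descent---does not go through as you sketch it. The residual in \cref{lem:descent-apx} contains the term $\sigma L(1+\gamma)D\,\|\Delta\xi\|^2$, coming from $D\|\bar G_{\xi_*}-\bar G_\xi\|\,\|\Delta\theta\|$ with $L=\max\{L_P,L_\pi\}$. Demoting one factor of $\|\Delta\xi\|$ to $O(D)$ leaves a linear term with coefficient $\sim \sigma L_\pi D^2$; a further Young step converts this to a constant of order $\sigma^2 L_\pi^2 D^4/\bar w$. Neither of these matches the theorem's bias $D(L_r+\bar L(R+F)\bar w^{-1})/\bar w^2$, and \cref{asmp:small-r} does nothing to control $\sigma L_\pi D$: it only bounds $\bar L(R+F)$ and $L_r$. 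So ``re-absorbing the $\sigma L(1+\gamma)D$ piece'' is precisely the step that requires \cref{asmp:small-lip}, and without it the descent-toward-$\xi_*$ inequality simply fails for large $L_\pi$.

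The paper's argument avoids this obstruction by changing the target. It introduces the map $\Lambda(\xi)\coloneqq(\bar G_\xi\oplus G_\psi)^{-1}(\bar b_\xi\oplus\bar\psi_\xi)$, the zero of $\bar\g_\xi(\cdot)$, and observes two things. First, for \emph{fixed} dynamics parameter $\zeta$ the mean-path update is a genuine contraction toward $\Lambda(\zeta)$ with rate $\sqrt{1-\alpha\bar w}$---the coupling terms that spoiled \cref{lem:descent-apx} vanish because $\zeta$ is frozen. Second, the Lipschitz constant of $\Lambda$ is
\[
L_w=\bar w^{-1}\bigl(\bar w^{-1}\bar L(R+F)+L_r/2\bigr),
\]
which is free of the $L_\pi D$ factor because in the perturbation bound for $\Lambda$ the matrix difference $\bar G_{\xi_1}-\bar G_{\xi_2}$ gets multiplied by $\|\Lambda(\xi_2)\|\le(R+F)/(2\bar w)$ rather than by $\|\theta\|\le D$. \cref{asmp:small-r} is exactly $L_w\le 1/2$, giving uniqueness of $\xi_*$ as the fixed point of the contraction $\Lambda$. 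The paper then invokes \cite[Corollary~3.10]{zhang2023ConvergenceSARSA} on stochastic approximation with rapidly-changing kernels to obtain $\EE\|\xi_T-\Lambda(\xi_T)\|^2=O(L_w^2D^2/\bar w^2+\log^2 T/T)$, and closes with $\EE\|\xi_T-\xi_*\|\le(1-L_w)^{-1}\EE\|\xi_T-\Lambda(\xi_T)\|$. The moving-target decomposition is the idea you are missing.
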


\cref{thm:general} establishes that, SemiSGD with a general policy operator (no restriction on $L_{\pi}$) converges to a neighborhood centered at the MFE, whose radius scales with the sup norm of the reward function ($R$), its variation ($L_{r}$), and the measure basis ($F$).
The convergence rate is not compromised.
To the best of our knowledge, this is the first theoretical convergence result for learning MFGs without a contractivity or monotonicity assumption, although the convergence is to a region rather than a point.

As discussed in \cref{sec:sgd}, with general Lipschitz constants, SemiSGD is essentially a stochastic approximation method on a rapidly changing Markov chain.

\begin{remark}
	\cref{asmp:small-lip} or \cref{asmp:small-r} is purely for theoretical convergence analysis. None of them is enforced in our numerical experiments.
	Guided by the insights in \cref{sec:sgd}, we use near-greedy policies \emph{without regularization} or additional stabilization mechanisms in our algorithm implementation.
	Our numerical results demonstrate that SemiSGD with general policy operators converges efficiently to a point.
\end{remark}

We acknowledge the remaining gaps between our theoretical analysis and practical implementation:
\begin{enumerate*}[label=\upshape\arabic*\upshape)]
	\item Although the condition in \cref{thm:general} is arguably more practical than the small Lipschitz constants condition (\cref{asmp:small-lip}), as it is a \emph{structural} condition satisfied by a class of MFGs, it cannot explain the empirical success of SemiSGD for general MFGs;%
	      \footnote{For general MFGs, we can always scale the reward function and state space to meet the sup norm condition in \cref{asmp:small-r}.
		      However, it is unclear whether the MFE after scaling corresponds to an original MFE.}
	\item The convergence region bound in \cref{thm:general} may not be tight, and empirically the convergence region can be significantly smaller and may even converge to a point.
\end{enumerate*}
These gaps partially result from our limited understanding of stochastic approximation methods on rapidly changing Markov chains \citep{zhang2023ConvergenceSARSA} and call for further investigation into the structure of this class of methods.

\subsection{Proof of \texorpdfstring{\cref{thm:general}}{Theorem 3}}

Our convergence result for general policy operators leverages \citet[Corollary 3.10]{zhang2023ConvergenceSARSA} for stochastic approximation methods on rapidly changing Markov chains.
Thus, we only need to verify that SemiSGD satisfies all the assumptions in \citet{zhang2023ConvergenceSARSA} and specify the corresponding parameters and constants.
\cref{tb:general} provides a side-by-side comparison of the notations and constants for easy reference.

\begin{table}[ht]
	\caption{Comparison of notations and constants in \citet{zhang2023ConvergenceSARSA} and our setting.}
	\label{tb:general}
	\centering\begin{tabular}{lll}
		\toprule
		Description                  & Ours                                                      & \citet{zhang2023ConvergenceSARSA}                                    \\
		\midrule
		Parameter                    & $\xi_t = (\theta_t;\eta_t)$                               & $w_t \equiv \theta_t$                                                \\
		Observation sample           & $O_t$                                                     & $Y_t$                                                                \\
		Algorithm operator           & $-\g(\xi_t;O_t) + \xi_t$                                  & $F_{\theta_t}(w_t,Y_t)$                                              \\
		Mean-path operator           & $\operatorname{id}-\alpha\bar{\g}_{\xi}$                  & $f_{w}^{\alpha}$                                                     \\
		Projection operator          & $\Pi$                                                     & $\Gamma$                                                             \\
		Update rule \cref{eq:update} & $\xi_{t+1} = \Pi(\xi_{t} - \alpha_{t} \g(\xi_{t}; O_t))$  & $w_{t+1} = \Gamma(w_{t} - \alpha_{t} ( F_{w_t}(w_{t}, Y_t) - w_t ))$ \\
		Steady distribution          & $\mu_{\xi}$                                               & $d_{\theta}$                                                         \\
		                             & Assumption 3.4                                                                                                                   \\
		Zero/fixed point             & $\bar{\g}_{\xi}(\Lambda(\xi)) = 0$                        & $\bar{F}_{\theta}(w_{\theta}^{*}) = w_{\theta}^{*}$                  \\
		Assumption                   & \cref{alg}                                                & Assumption 3.1                                                       \\
		Assumption                   & \cref{asmp:ergo}                                          & Assumption 3.2 and Lemma 3.3                                         \\
		Assumption                   & \makecell[l]{contractivity of Bellman                                                                                            \\ and transition operators } & Assumption 3.4\\
		Assumption                   & \cref{lem:lip-g}                                          & Assumption 3.5 (i)                                                   \\
		Assumption                   & \cref{alg}                                                & Assumption 3.5 (ii)                                                  \\
		Assumption                   & \cref{lem:g-bound}                                        & Assumption 3.5 (iii)                                                 \\
		Assumption                   & \cref{lem:lip-g}                                          & Assumption 3.5 (iv)                                                  \\
		Assumption                   & \cref{lem:descent}                                        & Assumption 3.5 (vi)                                                  \\
		Assumption                   & \cref{asmp:lip-mdp} and \cref{lem:lip-para}               & Assumption 3.5 (vii)                                                 \\
		Assumption                   & $D > \bar{w}^{-1}$                                        & Assumption 3.6                                                       \\
		Assumption                   & $\alpha_t = 4/(1+\bar{w}t)$                               & Assumption 3.7                                                       \\
		Constant                     & $\sqrt{1-\alpha \bar{w}}$                                 & $\kappa_{\alpha}$                                                    \\
		Constant                     & $\bar{w}$                                                 & $\eta$                                                               \\
		Constant                     & $\max\left\{ 1+\gamma,F \right\}+1$                       & $L_{F}$                                                              \\
		Constant                     & $0$                                                       & $L_{F}'$                                                             \\
		Constant                     & $R+F$                                                     & $U_{F}$                                                              \\
		Constant                     & $\sigma L(1+\gamma)\sqrt{\max \{d_1,d_2\}}$               & $L_{F}''$                                                            \\
		Constant                     & $\frac{R+F}{1+\gamma} + \frac{L_{r}}{\sigma L(1+\gamma)}$ & $U_{F}''$                                                            \\
		Constant                     & $\bar{w}^{-1} /2$                                         & $U_{\mathrm{inv}}$                                                   \\
		Constant                     & $\bar{w}^{-1}(R+F) /2$                                    & $U_{w}$                                                              \\
		Constant                     & \eqref{eq:Lw}                                             & $L_{w}$                                                              \\
		Constant                     & $L_{P}$                                                   & $L_{P}$                                                              \\
		\bottomrule
	\end{tabular}
\end{table}

\citet[Assumption 3.1]{zhang2023ConvergenceSARSA} is satisfied by our action selection rule in \cref{alg}:
\begin{align}
	\Pr\left( O_{t+1} = o = (s,a,s',a') \right)
	=         & \Pr\left( (s_{t+1},a_{t+1},s_{t+2},a_{t+2}) = (s,a,s',a') \right)               \\
	=         & P_{\eta_{t+1}}\left( s_{t+1},a_{t+1}, s' \right)\pi_{\theta_{t+1}}(a'\given s') \\
	\eqqcolon & P_{\xi_{t+1}} \left( O_{t+1}, o \right)
	.\end{align}
That is, the transition depends on the current parameter.

\citet[Assumption 3.2]{zhang2023ConvergenceSARSA} is directly satisfied by \cref{asmp:ergo}.

\citet[Assumption 3.4 (i)]{zhang2023ConvergenceSARSA} is satisfied by the contraction property of the Bellman and transition operators.
To ease representation, we define a mapping $\Lambda$ that maps any parameter $\xi$ to the zero point of $\bar{\g}_{\xi}$.
Recall that
\[
	\bar{\g}_{\xi}(\xi') = \left( \bar{G}_{\xi}\oplus \C \right)\xi - \left( \bar{\b}_{\xi}\oplus \bar{\psi}_{\xi} \right)
	.\]
By previous analysis (e.g., \cref{lem:descent-apx}), we know that $\bar{G}_{\xi}$ and $\C$ are positive definite.
Thus, we have
\[
	\Lambda(\xi) = \left( \bar{G}_{\xi}\oplus \C \right)^{-1}\left( \bar{\b}_{\xi}\oplus \bar{\psi}_{\xi} \right)
	,\]
indicating the existence and uniqueness of the zero point of $\bar{\g}_{\xi}$.

\citet[Assumption 3.4 (ii)]{zhang2023ConvergenceSARSA} can be verified by modifying the proof of \cref{lem:descent-apx}.
For a given parameter $\zeta\in\Xi$, we denote $\Lambda(\zeta) = \bar{\xi} = (\bar{\theta}; \bar{\eta})$, $\Delta \xi = \xi - \bar{\xi}$, $\Delta \theta = \theta - \bar{\theta}$, and $\Delta \eta = \eta - \bar{\eta}$ for any $\xi = (\theta;\eta)\in\Xi$.
Then, similar to the proof of \cref{lem:descent-apx}, we have
\[\label{eq:descent-general-1}
	\begin{aligned}
		-\left< \Delta \eta, \bar{\g}_{\zeta}(\eta) \right> \le     & -\lambda_{\min}(\C)\|\Delta \eta\|^{2},                         \\
		-\left< \Delta \theta, \bar{\g}_{\zeta}(\theta) \right> \le & -(1-\gamma)\lambda_{\min}(\hat{G}_{\zeta})\|\Delta \theta\|^{2}
		.\end{aligned}
\]
Note that here the coupling between $\theta$ and $\eta$ is gone as we fix the parameter $\zeta$ that controls the dynamics, resulting in a stationary setting.
Let
\[
	\bar{w} \coloneqq \frac{1}{2} \inf_{\zeta\in\Xi} \min \left\{ (1-\gamma)\lambda_{\min}(\hat{G}_{\zeta}), \lambda_{\min}(\C) \right\}
	.\]
Note that $\zeta$ determines $\hat{G}_{\zeta}$ through the transition kernel. Since the set of transition kernels on a compact state space is compact, the infimum is achieved.
And by definition, we have $0 < \bar{w} < w \le 1 /2$.
By \cref{lem:lip-g}, we have
\[\label{eq:descent-general-2}
	\|\bar{\g}_{\zeta}(\xi) \| =  \|\bar{\g}_{\zeta}(\xi)- \bar{\g}_{\zeta}(\bar{\xi})\| \le \max \left\{ 1+\gamma, F \right\}\|\xi - \bar{\xi}\|
	.\]
Combining \cref{eq:descent-general-1,eq:descent-general-2} gives
\begin{align}
	\left\| \xi - \alpha \bar{\g}_{\zeta}(\xi) - \bar{\xi} \right\|^{2}
	\le & \|\Delta\xi\|^{2} - 2\alpha \left< \Delta \eta, \bar{\g}_{\zeta}(\eta) \right> - 2\alpha \left< \Delta \theta, \bar{\g}_{\zeta}(\theta) \right> + \alpha^{2}\|\bar{\g}_{\zeta}(\xi)\|^{2} \\
	\le & \|\Delta\xi\|^{2} - 2\alpha \bar{w}\|\Delta\xi\|^{2} + \alpha^{2}\max\left\{ 1+\gamma,F \right\}^{2}\|\Delta\xi\|^{2}                                                                     \\
	=   & (1-2\alpha \bar{w} + \alpha^{2}\max\left\{ 1+\gamma,F \right\}^{2}) \|\Delta\xi\|^{2}
	.\end{align}
Therefore, let $\bar{\alpha}\coloneqq \bar{w} / (\max \left\{ 1+\gamma,F \right\})^{2}$. When $\alpha \le \bar{\alpha}$, the above inequality gives a \emph{pseudo-contraction} \citep[Assumption 3.4 (ii)]{zhang2023ConvergenceSARSA}:
\[
	\left\| \xi - \alpha \bar{\g}_{\zeta}(\xi) - \bar{\xi} \right\| \le \sqrt{ 1-\alpha \bar{w} } \|\xi-\bar{\xi}\|^{2}
	.\]

In \citet[Assumption 3.5]{zhang2023ConvergenceSARSA},
(i) is satisfied by \cref{lem:lip-g}.
(ii) is satisfied by our update rule, because given an observation $O$, $\g(\xi; O)$ is independent of the parameter controlling the dynamics.
(iii) is satisfied by \cref{lem:g-bound}.
(iv) is satisfied by \cref{lem:lip-g}:
\begin{align}
	\left\| \bar{\g}_{\xi_1}(\xi) - \bar{\g}_{\xi_2}(\xi)\right\|
	\le & \sigma L \left( (1+\gamma)\|\theta\| + R + F \right)\dd \| \xi_1 - \xi_2 \| + L_{r}\|\eta_1 - \eta_2\|                      \\
	\le & \sigma L(1+\gamma)\dd \left( \|\theta\| + \frac{R+F}{1+\gamma} + \frac{L_{r}}{\sigma L (1+\gamma)} \right)\|\xi_1 - \xi_2\|
	.
\end{align}

Before verifying \citet[Assumption 3.5 (v) and (vi)]{zhang2023ConvergenceSARSA}, we first show the following norm bound:
\begin{align} \label{eq:inv-norm-1}
	\left\| \left( \bar{G}_{\xi}\oplus \C \right)^{-1}  \right\|_{\mathrm{op}}
	=   & \sigma^{-1}_{\min} \left( \bar{G}_{\xi}\oplus \C \right)                                                                             \\ \label{eq:inv-norm-2}
	\le & \lambda^{-1}_{\min} \left(\operatorname{sym} \left(\bar{G}_{\xi}\oplus \C\right) \right)                                             \\ \label{eq:inv-norm-3}
	=   & \min \left\{ \lambda_{\min} \left(\operatorname{sym} \left(\bar{G}_{\xi}\right)\right), \lambda_{\min} \left(\C\right) \right\}^{-1} \\
	=   & \min \left\{ w_{\min} \left(\bar{G}_{\xi}\right), \lambda_{\min} \left(\C\right) \right\}^{-1}                                       \\ \label{eq:inv-norm-4}
	\le & \min \left\{(1-\gamma) \lambda_{\min} \left(\hat{G}_{\xi}\right), \lambda_{\min} \left(\C\right)  \right\}^{-1}                      \\ \label{eq:inv-norm-5}
	\le & \bar{w}^{-1}/2
	,\end{align}
where \eqref{eq:inv-norm-1} is the spectral norm equality; \eqref{eq:inv-norm-2} uses the Fan-Hoffman inequality \citep{bhatia2013matrix} and the fact that $\operatorname{sym}(\bar{G}_{\xi}\oplus \C)$ is positive definite, where $\operatorname{sym}(A) = (A+A^{T}) /2$;
\eqref{eq:inv-norm-3} uses the fact that $\lambda(A\oplus B) = \lambda(A)\cup \lambda(B)$, where $\lambda(A)$ is the set of eigenvalues of $A$;
\eqref{eq:inv-norm-4} is by \cref{eq:inv-norm-0};
and \eqref{eq:inv-norm-5} is by the definition of $\bar{w}$.

Therefore, we have
\begin{align}
	\left\| \Lambda(\xi_1) - \Lambda(\xi_2)\right\|
	\le & \left\| \left( \bar{G}_{\xi_1}\oplus \C \right)^{-1}\left( \bar{\phi}_{\xi_1}\oplus \bar{\psi}_{\xi_1} \right) - \left(\bar{G}_{\xi_2}\oplus \C \right)^{-1}\left( \bar{\phi}_{\xi_2}\oplus \bar{\psi}_{\xi_2} \right)    \right\|      \\
	\le & \underbrace{\left\| \left( \bar{G}_{\xi_1}\oplus \C \right)^{-1}\right\|_{\mathrm{op}}\left\|\left( \bar{\phi}_{\xi_1}\oplus \bar{\psi}_{\xi_1} \right) - \left( \bar{\phi}_{\xi_2}\oplus \bar{\psi}_{\xi_2} \right)    \right\|}_{U_1} \\
	    & + \underbrace{\left\| \left(\bar{G}_{\xi_1}\oplus \C \right)^{-1} - \left(\bar{G}_{\xi_2}\oplus \C \right)^{-1} \right\|_{\mathrm{op}} \left\| \bar{\phi}_{\xi_2}\oplus \bar{\psi}_{\xi_2}    \right\|}_{U_2}
	.\end{align}
For $U_1$, by \cref{lem:lip-td} and \eqref{eq:inv-norm-5}, we have
\[
	U_1 \le \bar{w}^{-1}\left( \sigma L\dd (R+F) + L_{r} \right) \|\xi_1 - \xi_2\|/2
	.\]
For $U_2$, we have
\begin{align}
	U_2 \le & \left\| \left(\bar{G}_{\xi_1}\oplus \C \right)^{-1}\right\| _{\mathrm{op}}
	\left\| \left(\bar{G}_{\xi_1}\oplus \C \right) - \left(\bar{G}_{\xi_2}\oplus \C \right) \right\| _{\mathrm{op}}
	\left\| \left(\bar{G}_{\xi_2}\oplus \C \right)^{-1} \right\|_{\mathrm{op}} \left\| \bar{\phi}_{\xi_2}\oplus \bar{\psi}_{\xi_2}    \right\| \\
	\le     & \bar{w}^{-2}\sigma L(1+\gamma)\dd (R+F) \|\xi_1 - \xi_2\| /4
	,\end{align}
where the last inequality uses \eqref{eq:inv-norm-5} and \cref{lem:lip-td}.
Plugging $U_1$ and $U_2$ back gives
\[
	\left\| \Lambda(\xi_1) - \Lambda(\xi_2)\right\| \le L_{w} \|\xi_1 - \xi_2\|
	,\]
where
\[\label{eq:Lw}
	L_w = \bar{w}^{-1}\left( \bar{w}^{-1} \sigma L\dd (R+F) + L_{r}/2 \right)
	,\]
and thus \citet[Assumption 3.5 (v)]{zhang2023ConvergenceSARSA} is satisfied.

By the definition of $\Lambda$ and \eqref{eq:inv-norm-5}, \cite[Assumption 3.5 (vi)]{zhang2023ConvergenceSARSA} is satisfied:
\[
	\sup_{\xi\in\Xi}\left\| \Lambda(\xi) \right\| \le \bar{w}^{-1} (R+F) /2
	.\]

\citet[Assumption 3.5 (vii)]{zhang2023ConvergenceSARSA} is satisfied by \cref{asmp:lip-mdp} and \cref{lem:lip-para}.

Let $D > \max\left\{ \bar{w}^{-1}/2 , \|\xi_0\| \right\}$. \citet[Assumption 3.6]{zhang2023ConvergenceSARSA} is satisfied.

Finally, let $\alpha_t = 4 / (1+\bar{w}t)$. \citet[Assumption 3.7]{zhang2023ConvergenceSARSA} is satisfied.
Note that we do not need the parameter $t_0$ in \citet[Assumption 3.7]{zhang2023ConvergenceSARSA} because we use asymptotic notations.

\begin{proof}[Proof of \cref{thm:general}]
	By \citet[Corollary 3.10]{zhang2023ConvergenceSARSA}, we have
	\[
		\mathbb{E}\left[ \left\|\xi_{T} - \Lambda(\xi_{T}) \right\|^{2}\right]
		= O\left( \frac{L^{2}_{w}D^{2}}{\bar{w}^{2}} + \frac{\log^{2}T}{T} \right)
		.\]
	By \cref{asmp:small-r}, we get
	\[
		L_{w} \le \bar{w}^{-1} (\bar{w} / 4 + \bar{w}/4) \le 1/2
		.\]
		Therefore, $\Lambda$ is a contraction mapping. Suppose there exists another MFE $\xi_{**}$. We have
		\[
			\| \xi_{*} - \xi_{**} \| = \| \Lambda(\xi_{*}) - \Lambda(\xi_{**}) \| \le L_{w} \| \xi_{*} - \xi_{**} \| \le 1/2 \| \xi_{*} - \xi_{**} \|
		,\]
		which implies $\xi_{*} = \xi_{**}$.
	We also get
	\begin{align}
		\EE\left[ \|\xi_{T}-\xi_{*}\| \right]
		\le & \EE\left[ \|\xi_{T}-\Lambda(\xi_T)\| \right]
		+ \EE\left[ \|\Lambda(\xi_T)-\xi_{*}\| \right]          \\
		=   & \EE\left[ \|\xi_{T}-\Lambda(\xi_T)\| \right]
		+ \EE\left[ \|\Lambda(\xi_T)-\Lambda(\xi_{*})\| \right] \\
		\le & \EE\left[ \|\xi_{T}-\Lambda(\xi_T)\| \right]
		+ L_{w}\EE\left[ \|\xi-\xi_{*}\| \right]
		,\end{align}
	which gives
	\[
		\EE\left[ \|\xi_{T}-\xi_{*}\| \right]
		\le \frac{1}{1-L_{w}}\EE\left[ \|\xi_{T}-\Lambda(\xi_T)\| \right]
		= O\left( \frac{D(L_{r}+\bar{w}^{-1} \bar{L}(R+F))}{\bar{w}^{2}} + \frac{\log T}{\sqrt{T}} \right)
		,\]
	where $\bar{L} = \sigma L\dd$.
\end{proof}

\subsection{Convergence region} \label{apx:general-interp}

The radius of the convergence region in \cref{thm:general} needs to be further inspected to make the bound informative.
We compare it with two constants: the projection radius $D$ and the magnitude of the optimal policy parameter $\|\xi_{*}\|$.

A desirable property of the convergence region is that it can be much smaller than the projection radius $D$. This property is not enjoyed by some algorithms with linear function approximation. For instance, linear Q-learning with asymptotically visits every part of the projection ball. 
To achieve this property, we need
\[
  	\frac{L_{r}+\bar{w}^{-1} \bar{L}(R+F)}{\bar{w}^{2}} \lesssim 1
, \]
which is equivalent to
\[\label{eq:cond-general}
  	L_{r} \lesssim \bar{w}^{2} \text{ and } R+F \lesssim \bar{w}^{3}
.\]
Recall from \cref{asmp:small-r} that
\[
  	L_{r} \lesssim \bar{w} \text{ and } R+F \lesssim \bar{w}^{2}
.\]
Therefore, to make the convergence region radius smaller than the projection radius, we need to further scale down the bounds on $L_{r}$, $R$, and $F$ by $\bar{w}$.

We now discuss the condition for the convergence region radius to be smaller than $\|\xi_{*}\|$, ensuring that the algorithm finds a better parameter than $\xi=0$.
Note that the only requirement on the projection radius $D$ is that it should be no smaller than the magnitude of the optimal policy parameter $\theta\_{*}$, i.e., $D\ge \|\theta_{\*}\|$.
We first consider the case where we have a good estimate of $\theta_{\*}$. Let $D \asymp  \|\xi_{\*}\|$. Then, one can see that \eqref{eq:cond-general} ensures that the convergence region radius is smaller than $\|\xi_{*}\|$.

Next, we consider a general $D$. Denote $c\coloneqq (L_{r} + \bar{L}(R+F)\bar{w}^{-1}) / \bar{w}^{2}$, which is independent of $D$. \cref{thm:general} implies that, in expectation, $\xi_{T}$ will be bounded by $\|\xi_{T}\| \lesssim \|\xi_{*}\| + cD$. Thus, the projection will be inactive if $D \gtrsim \|\xi_{*}\| + cD$. Let $D' \asymp \|\xi_{*}\|/ (1-c)$. We know that running the algorithm with projection radius $D$ or $D'$ is equivalent for large $T$, as the projection will be inactive in both cases. Consequently, we only need $\|\xi_{*}\|$ to be smaller than the convergence region radius if $D = D' \asymp \|\xi_{*}\|/ (1-c)$, which is equivalent to
$$
 \frac{\|\xi_{*}\|}{1-c}\cdot c \lesssim \|\xi_{*}\| \impliedby L_{r}\lesssim \bar{w}^{2} /4 \text{ and } R+F \lesssim \bar{w}^{3} /4.
$$
Therefore, we conclude that the convergence region in \cref{thm:general} is especially informative if \cref{eq:cond-general} holds.

\section{Approximation error analysis} \label{apx:approx}

This section aims to characterize the approximation error of SemiSGD for general (non-linear) MFGs.
Recall that SemiSGD converges to the projected MFE (\cref{apx:prop-station}).
Let $\xi_{\diamond} = (\theta_{\diamond};\eta_{\diamond})$ be the convergence point of SemiSGD with PA-LFA.
Let $(q_{*},\mu_{*})$ be the actual MFE.
The approximation error is defined as
\[\label{eq:err}
	\epsilon_{q} \coloneqq \|q_{*} - \left< \phi,\theta_{\diamond} \right>\|_{\infty}, \quad
	\epsilon_{\mu} \coloneqq \|\mu_{*} - \left< \psi,\eta_{\diamond} \right>\|_{\mathrm{TV}}
	.\]
Additionally, we define the inherent error of the chosen basis as
\[
	\epsilon_{\phi}\coloneqq \|q_{*} - \Pi_{\phi}q_{*}\|_{\infty}, \quad
	\epsilon_{\psi}\coloneqq \|\P_{*} - \Pi_{\psi}\P_{*}\|_{\mathrm{TV}}
	.\]

The next two lemmas bound the approximation errors in \cref{eq:err} separately, showing how they are correlated.
\cref{thm:approx} is a direct corollary of \cref{lem:eq,lem:em} under the small Lipschitz constants assumption.

\begin{lemma}[Value function approximation error] \label{lem:eq}
	\[
		\epsilon_{q}\le \left( \gamma + \frac{\gamma \sigma R L_{\pi}}{1-\gamma} \right)\epsilon_{q} + \left( L_r + \frac{\gamma\sigma R L_{P}}{(1-\gamma)\sqrt{d_2}} \right)\epsilon_{\mu} + \epsilon_{\phi}
		.\]
\end{lemma}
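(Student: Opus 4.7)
The plan is to chain together the two fixed-point identities
\begin{align}
q_* = \mathcal{T}_{*} q_*, \qquad \langle \phi,\theta_{\diamond}\rangle = \Pi_{\phi}\mathcal{T}_{\diamond}\langle \phi,\theta_{\diamond}\rangle,
\end{align}
the first coming from $\xi_{*}$ being an actual MFE and the second from \cref{prop:proj-station} applied to the convergence point $\xi_{\diamond}$. The bound will then follow from the $\gamma$-contraction of $\mathcal{T}_{\diamond}$ in sup norm, the Lipschitz assumptions on the reward, transition, and policy (\cref{asmp:lip-mdp,asmp:lip-policy,lem:lip-para}), the crude value bound $\|q_{*}\|_{\infty}\le R/(1-\gamma)$, and the ergodicity factor $\sigma$ from \cref{asmp:ergo}.

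First, I insert $\Pi_{\phi}q_{*}$ via the triangle inequality:
\begin{align}
\epsilon_{q} \le \|q_{*}-\Pi_{\phi}q_{*}\|_{\infty} + \|\Pi_{\phi}q_{*}-\Pi_{\phi}\mathcal{T}_{\diamond}\langle\phi,\theta_{\diamond}\rangle\|_{\infty} = \epsilon_{\phi}+\|\Pi_{\phi}(q_{*}-\mathcal{T}_{\diamond}\langle\phi,\theta_{\diamond}\rangle)\|_{\infty}.
\end{align}
The projected residual I then handle by controlling $\|\Pi_{\phi}(\cdot)\|_{\infty}$ via the uniform ergodicity of the chain induced by $\xi_{\diamond}$, which keeps the weighted inner product non-degenerate and allows conversion between the $\mu_{\diamond}$-weighted $L_{2}$ norm (in which $\Pi_{\phi}$ is non-expansive) and the sup norm, producing the factor $\sigma$.

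Second, I split the residual by substituting $q_{*}=\mathcal{T}_{*}q_{*}$:
\begin{align}
\|q_{*}-\mathcal{T}_{\diamond}\langle\phi,\theta_{\diamond}\rangle\|_{\infty} \le \|\mathcal{T}_{\diamond}q_{*}-\mathcal{T}_{\diamond}\langle\phi,\theta_{\diamond}\rangle\|_{\infty} + \|\mathcal{T}_{*}q_{*}-\mathcal{T}_{\diamond}q_{*}\|_{\infty} \le \gamma\epsilon_{q}+\|\mathcal{T}_{*}q_{*}-\mathcal{T}_{\diamond}q_{*}\|_{\infty},
\end{align}
using that $\mathcal{T}_{\diamond}$ fixes its policy and transition kernel and is therefore a $\gamma$-contraction in sup norm.

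Third, I expand
\begin{align}
(\mathcal{T}_{*}q_{*}-\mathcal{T}_{\diamond}q_{*})(s,a) = [r(s,a,\eta_{*})-r(s,a,\eta_{\diamond})] + \gamma\bigl(\mathbb{E}_{\pi_{*},P_{*}}-\mathbb{E}_{\pi_{\diamond},P_{\diamond}}\bigr)\bigl[q_{*}(s',a')\bigr],
\end{align}
bound the reward difference by $L_{r}\epsilon_{\mu}/\sqrt{d_{2}}$ via \cref{asmp:lip-mdp}, and bound the expectation difference by $\gamma\|q_{*}\|_{\infty}\cdot\|P_{*}\pi_{*}-P_{\diamond}\pi_{\diamond}\|_{\mathrm{TV}}$ after a triangle step through $P_{*}\pi_{\diamond}$, yielding $L_{\pi}\epsilon_{q}+L_{P}\epsilon_{\mu}/\sqrt{d_{2}}$. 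Plugging in $\|q_{*}\|_{\infty}\le R/(1-\gamma)$ and combining with the $\sigma$ factor from the first step produces the claimed coefficients $\gamma\sigma R L_{\pi}/(1-\gamma)$ and $\gamma\sigma R L_{P}/((1-\gamma)\sqrt{d_{2}})$. Collecting terms gives the stated recursive inequality.

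The technical crux will be the projection step: $\Pi_{\phi}$ is only non-expansive in the $\mu_{\diamond}^{\ddagger}$-weighted $L_{2}$ norm rather than in $\|\cdot\|_{\infty}$, so bounding $\|\Pi_{\phi}(\cdot)\|_{\infty}$ requires a careful change-of-norm argument that leans on the uniform ergodicity bound of \cref{asmp:ergo}. Keeping track of this conversion constant and showing that it can be absorbed into $\sigma$ (so that the pure contraction term $\gamma\epsilon_{q}$ remains free of $\sigma$, while the operator-difference term picks it up) is the main piece of bookkeeping in the proof.
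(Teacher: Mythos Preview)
Your decomposition --- insert $\Pi_\phi q_*$, invoke both fixed-point identities, split the residual into a $\gamma$-contraction piece and an operator-difference piece $\|(\mathcal{T}_* - \mathcal{T}_\diamond)q_*\|_\infty$ --- matches the paper's exactly. The divergence is in where the ergodicity constant $\sigma$ enters, and your placement does not reproduce the stated inequality.

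In the paper, $\sigma$ does \emph{not} come from the projection step. The paper simply invokes non-expansiveness of $\Pi_\phi$ to pass from $\|\Pi_\phi\mathcal{T}_\diamond\langle\phi,\theta_\diamond\rangle - \Pi_\phi q_*\|$ to $\|\mathcal{T}_\diamond\langle\phi,\theta_\diamond\rangle - q_*\|$ and continues in the sup norm with no conversion constant. The constant $\sigma$ instead appears inside the operator-difference bound: the paper writes the $\gamma$-weighted expectation difference against $\mu^\ddagger_{\xi_\diamond} - \mu^\ddagger_*$ and applies \cref{cor:lip-dist}, which gives $\|\mu^\ddagger_{\xi_\diamond} - \mu^\ddagger_*\|_{\mathrm{TV}} \le \sigma\bigl(L_P\epsilon_\mu/\sqrt{d_2} + L_\pi\epsilon_q\bigr)$. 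Multiplying by $\gamma\|q_*\|_\infty \le \gamma R/(1-\gamma)$ produces precisely the $\sigma$-bearing coefficients in the lemma, while the bare $\gamma\epsilon_q$ (from the contraction) and $L_r\epsilon_\mu$ (from the reward Lipschitz bound) remain $\sigma$-free.

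Your plan instead puts $\sigma$ in a sup-norm/weighted-$L_2$ conversion on $\|\Pi_\phi(\cdot)\|_\infty$. You are right that $\Pi_\phi$ is only guaranteed non-expansive in the $\mu_\diamond$-weighted $L_2$ norm, not in $\|\cdot\|_\infty$ --- the paper glosses over this --- but any conversion constant applied at that stage multiplies the \emph{entire} residual $\|q_* - \mathcal{T}_\diamond\langle\phi,\theta_\diamond\rangle\|_\infty$, including the $\gamma\epsilon_q$ contraction term. There is no bookkeeping that leaves $\gamma\epsilon_q$ free of $\sigma$ while attaching $\sigma$ only to the operator-difference part; the coefficient structure of the lemma (clean $\gamma$ and $L_r$, but $\sigma$ on the $\gamma R L_\pi/(1-\gamma)$ and $\gamma R L_P/((1-\gamma)\sqrt{d_2})$ terms) rules that route out. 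To match the paper, drop the norm-conversion argument and instead bound the expectation difference in your third step via the steady-distribution Lipschitz estimate \cref{cor:lip-dist} rather than the direct one-step triangle through $P_*\pi_\diamond$.
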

\begin{proof}
	We first have the decomposition:
	\[\label{eq:eq-decomp}
		\epsilon_{q} \le \|\left<\phi,\theta_{\diamond} \right> - \Pi_{\phi}q_{*}\|_{\infty} + \|\Pi_{\phi}q_{*} - q_{*}\|_{\infty}
		,\]
	where $\Pi_{\phi}$ is the orthogonal projection operator onto the linear span of basis $\phi$ w.r.t. the inner product induced by the $\xi_{\diamond}$.
	Since $\xi_{\diamond}$ is a projected MFE, by \cref{def:pmfe}, we get
	\[\label{eq:eq-4}
		\left\| \left< \phi,\theta_{\diamond} \right> - \Pi_{\phi}q_{*} \right\|
		= \left\| \Pi_{\phi}\T_{\diamond}\left< \phi,\theta_{\diamond} \right> - \Pi_{\phi}q_{*} \right\|
		\le \left\| \T_{\diamond}\left< \phi,\theta_{\diamond} \right> - q_{*} \right\|
		,
	\]
	where $\T_{\diamond} \coloneqq \T_{\xi_{\diamond}}$ and the inequality uses the fact that $\Pi_{\phi}$ is a non-expansive operator.
	Since $q_{*}$ is an equilibrium value function, we have
	\[\label{eq:eq-decomp-2}
		\|\T_{\diamond}\left< \phi,\theta_{\diamond} \right> - q_{*}\|_{\infty}
		= \|\T_{\diamond}\left< \phi,\theta_{\diamond} \right> - \T_{*}q_{*}\|_{\infty}
		\le \|\T_{\diamond}\left(\left< \phi,\theta_{\diamond} \right> - q_{*}\right)\|_{\infty} + \left\| \left( \T_{\diamond} - \T_{*} \right) q_{*} \right\|_{\infty}
		.\]
	For the first term in \cref{eq:eq-decomp-2}, the Bellman operator's definition gives
	\[\label{eq:eq-0}
		\left\| \T_{\diamond}\left( \left< \phi,\theta_{\diamond} \right> - q_{*} \right) \right\|_{\infty} \le \gamma \left\| \left< \phi,\theta_{\diamond} \right> - q_{*} \right\|_{\infty}
		= \gamma \epsilon_{q}
		.\]
	Similar to the Lipschitzness of TD operators (\cref{lem:lip-td}),
	the second term in \cref{eq:eq-decomp-2} can be bounded
	as follows:
	\begin{align}
		    & \left\| \left( \T_{\diamond} - \T_{*} \right) q_{*}(s,a) \right\|_{\infty}                                                                                                                                                       \\
		=   & \left\| r(s,a,\langle\psi,\eta_{\diamond}\rangle) - r(s,a,\mu_{*}) +  \int_{\S\times \A}\gamma q_{*}(s',a')\left(\mu^\ddagger_{\xi_{\diamond}}(s, a, s',a') - \mu^\ddagger_{*}(s ,a ,s',a')  \right)\d s'\d a' \right\|_{\infty} \\
		\le & L_{r} \|\left< \psi,\eta_{\diamond} \right>-\mu_{*}\| + \gamma \|q_{*}\|_{\infty} \|\mu_{\xi_{\diamond}}^\ddagger - \mu_{*}^\ddagger\|_{\mathrm{TV}} \label{eq:eq-1}                                                             \\
		\le & L_{r} \epsilon_{\mu} + \gamma \|q_{*}\|_{\infty} \cdot  \sigma \left( \frac{L_{P}}{\sqrt{d_2}} \epsilon_{\mu} + L_{\pi}\epsilon_{q} \right) \label{eq:eq-2}
		,\end{align}
	where \cref{eq:eq-1} uses \cref{asmp:lip-mdp} and \cref{eq:eq-2} uses \cref{cor:lip-dist}.
	Since $q_{*}$ is the best response to $\mu_{*}$, we have
	\[\label{eq:q-bound}
		\|q_{*}\|_{\infty} = \left\| \EE_{(q_{*},\mu_{*})}\sum_{t=0}^{\infty}\gamma^{t}r(s_t,a_t,\mu_{*})  \right\|_{\infty}
		\le \sum_{t=0}^{\infty}\gamma^{t}R = \frac{R}{1-\gamma}
		.\]
	Plugging \cref{eq:q-bound} back into \cref{eq:eq-2} gives
	\[\label{eq:eq-3}
		\left\| \left( \T_{\diamond} - \T_{*} \right) q_{*} \right\|_{\infty} \le L_{r} \epsilon_{\mu} + \frac{\gamma \sigma R}{1-\gamma} \left( \frac{L_{P}}{\sqrt{d_2}}\epsilon_{\mu} + L_{\pi} \epsilon_{q}\right)
		.\]
	Plugging \cref{eq:eq-0,eq:eq-3} back into \cref{eq:eq-decomp-2} gives
	\[\label{eq:eq-5}
		\left\| \T_{\diamond}\left< \phi,\theta_{\diamond} \right> - q_{*} \right\|_{\infty} \le \left( \gamma + \frac{\gamma \sigma R L_{\pi}}{1-\gamma} \right)\epsilon_{q} + \left( L_r + \frac{\gamma\sigma R L_{P}}{(1-\gamma)\sqrt{d_2}} \right)\epsilon_{\mu}
		.\]
	Plugging \cref{eq:eq-5} back into \cref{eq:eq-4} and then \cref{eq:eq-decomp} gives the desired result.
\end{proof}

\begin{lemma}[Population measure approximation error]\label{lem:em}
\end{lemma}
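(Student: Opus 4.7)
The plan is to parallel the proof of \cref{lem:eq} but to replace the single-step contraction of the Bellman operator (which supplied the factor $\gamma$) by a $k$-step contraction of the transition operator, where \cref{asmp:ergo} supplies the factor $\rho^{k}$. Two fixed-point identities drive the argument: first, since $\mu_{*}$ is $\P_{*}$-invariant, $\mu_{*} = \P_{*}^{k}\mu_{*}$ for every $k$; second, since $\langle \psi,\eta_{\diamond}\rangle \in \mathrm{span}(\psi)$, the projected-MFE identity $\langle \psi,\eta_{\diamond}\rangle = \Pi_{\psi}\P_{\diamond}\langle \psi,\eta_{\diamond}\rangle$ from \cref{def:pmfe} iterates exactly to $(\Pi_{\psi}\P_{\diamond})^{k}\langle \psi,\eta_{\diamond}\rangle = \langle \psi,\eta_{\diamond}\rangle$. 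Writing $\Delta \coloneqq \mu_{*} - \langle \psi,\eta_{\diamond}\rangle$ and inserting $\P_{*}^{k}\langle \psi,\eta_{\diamond}\rangle$ gives the clean decomposition
\[
\Delta \;=\; \P_{*}^{k}\Delta \;+\; \bigl(\P_{*}^{k} - (\Pi_{\psi}\P_{\diamond})^{k}\bigr)\langle \psi,\eta_{\diamond}\rangle.
\]

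For the first summand, I would extend the uniform ergodicity bound in \cref{asmp:ergo} from probability measures to zero-sum signed measures via a Hahn/Jordan decomposition, which yields $\|\P_{*}^{k}\Delta\|_{\mathrm{TV}} \le m\rho^{k}\|\Delta\|_{\mathrm{TV}} = m\rho^{k}\,\epsilon_{\mu}$. Choosing $k$ large enough in terms of $m$ and $\rho$ absorbs the prefactor $m$ and yields the $\rho^{k}\epsilon_{\mu}$ term in the lemma (with equality when the chain is reversible and $\|\P_{*}-\P_{*}^{\infty}\|_{\mathrm{TV}} = \rho$, recovering $k=1$). For the second summand, the standard telescoping identity combined with the collapse $(\Pi_{\psi}\P_{\diamond})^{j}\langle \psi,\eta_{\diamond}\rangle = \langle \psi,\eta_{\diamond}\rangle$ gives
\[
\bigl(\P_{*}^{k} - (\Pi_{\psi}\P_{\diamond})^{k}\bigr)\langle \psi,\eta_{\diamond}\rangle \;=\; \sum_{j=0}^{k-1}\P_{*}^{k-1-j}\bigl(\P_{*} - \Pi_{\psi}\P_{\diamond}\bigr)\langle \psi,\eta_{\diamond}\rangle.
\]

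I would then split the inner operator as $\P_{*} - \Pi_{\psi}\P_{\diamond} = (\P_{*} - \Pi_{\psi}\P_{*}) + \Pi_{\psi}(\P_{*} - \P_{\diamond})$. The first piece contributes at most $\|\P_{*}-\Pi_{\psi}\P_{*}\|_{\mathrm{TV}} = \epsilon_{\psi}$ by definition, since $\|\langle \psi,\eta_{\diamond}\rangle\|_{\mathrm{TV}}=1$; the second piece is controlled by a Lipschitz argument parallel to the kernel bound in \cref{cor:lip-dist}, yielding $\|\P_{*} - \P_{\diamond}\|_{\mathrm{TV}} \le (L_{P}/\sqrt{d_{2}})\epsilon_{\mu} + L_{\pi}\epsilon_{q}$. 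Since each $\P_{*}^{k-1-j}$ is a Markov kernel and hence non-expansive in TV on signed measures, all $k$ outer factors drop out, leaving a prefactor of $k$ on each piece. Summing with the first summand gives exactly
\[
\epsilon_{\mu} \;\le\; \rho^{k}\epsilon_{\mu} + k\epsilon_{\psi} + k(L_{P}/\sqrt{d_{2}})\epsilon_{\mu} + kL_{\pi}\epsilon_{q},
\]
which is the claimed inequality.

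The main technical subtlety I anticipate is the status of $\Pi_{\psi}$ as an operator in TV, since it is defined via $L_{2}$-orthogonal projection and is not a priori non-expansive in TV. I expect the cleanest resolution is to combine $\Pi_{\psi}$ with the adjacent $\P_{*}^{k-1-j}$ inside each summand and bound the composition as a single signed-measure operator, absorbing any residual constant into the choice of $k$; alternatively, one can bypass the issue entirely by regrouping $\Pi_{\psi}(\P_{*}-\P_{\diamond})\langle \psi,\eta_{\diamond}\rangle = \Pi_{\psi}\P_{*}\langle \psi,\eta_{\diamond}\rangle - \langle \psi,\eta_{\diamond}\rangle$ and using the projected-MFE identity to rewrite this as a further $(\P_{*}-\Pi_{\psi}\P_{*})$-error plus a zero-sum Markov propagation.
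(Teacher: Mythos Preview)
Your proposal is correct and follows essentially the same route as the paper. Both arguments insert $\P_{*}^{k}\langle\psi,\eta_{\diamond}\rangle$ to split $\epsilon_{\mu}$ into a mixing piece (yielding $\rho^{k}\epsilon_{\mu}$) and an operator-difference piece $\bigl((\Pi_{\psi}\P_{\diamond})^{k}-\P_{*}^{k}\bigr)\langle\psi,\eta_{\diamond}\rangle$, then telescope the latter and split $\Pi_{\psi}\P_{\diamond}-\P_{*}$ into the inherent projection error $\epsilon_{\psi}$ plus the Lipschitz perturbation $\|\P_{\diamond}-\P_{*}\|_{\mathrm{TV}}\le L_{\pi}\epsilon_{q}+(L_{P}/\sqrt{d_{2}})\epsilon_{\mu}$.

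The only noteworthy differences are organizational. For the $\rho^{k}$ factor, the paper invokes a spectral-radius result of \citet{isaacson1978Stronglyergodic} (strong ergodicity gives the existence of $k$ with $\|\P_{*}^{k}-\P_{*}^{\infty}\|_{\mathrm{TV}}\le\rho^{k}$ directly as an operator-norm bound), whereas you extend \cref{asmp:ergo} to zero-mass signed measures via Hahn--Jordan and then absorb the prefactor $m$; both routes produce ``there exists $k$.'' For the telescope, the paper writes $A^{k}-B^{k}=A(A^{k-1}-B^{k-1})+(A-B)B^{k-1}$ with $A=\Pi_{\psi}\P_{\diamond}$ on the left and recurses on the operator norm, while you put $\P_{*}^{k-1-j}$ on the left and collapse the right via the fixed-point identity $(\Pi_{\psi}\P_{\diamond})^{j}\langle\psi,\eta_{\diamond}\rangle=\langle\psi,\eta_{\diamond}\rangle$; your ordering has the mild advantage that the outer factor is a genuine Markov kernel and hence unambiguously TV-nonexpansive. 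Your flagged subtlety about $\Pi_{\psi}$ not being a priori nonexpansive in TV is apt: the paper simply asserts ``both $\Pi_{\psi}$ and $\P$ are non-expansive operators'' at the corresponding step without further justification.
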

\begin{proof}
	For uniformly ergodic MDPs, $\P_{*}^{\infty} \coloneqq \lim_{t \to \infty}\P_{*}^{t}$ exists, which maps any distribution to $\mu_{*}$.
	The uniform ergodicity is equivalent to strong ergodicity \citep{meyn2012Markovchains}, which implies following relation about the geometric convergence rate \citep{isaacson1978Stronglyergodic}:
	\[
		\rho(\P_{*} - \P_{*}^{\infty}) \le \rho < 1
		,\]
	where $\rho(\P)$ returns the spectral radius of $\P$.
	Without loss of generality, we assume $\rho > \rho(\P_{*} - \P_{*}^{\infty})$.
	Then, by \citet[Corollary 3.9]{isaacson1978Stronglyergodic}, for any $\rho > \rho(\P_{*}-\P_{*}^{\infty})$, there exists $k\in\NN$ such that
	\[
		\|\P_{*}^{k} - \P_{*}^{\infty} \|_{\mathrm{TV}} \le \rho^{k}
		,\]
	where the norm is the operator norm induced by the total variation norm.
	Now we apply the decomposition:
	\[\label{eq:em-decomp}
		\epsilon_{\mu} \le \underbrace{\left\| \left<\psi,\eta_{\diamond}  \right>-\P_{*}^{k} \left< \psi,\eta_{\diamond} \right> \right\|_{\mathrm{TV}}}_{E_1} + \underbrace{\left\| \P_{*}^{k}\left< \psi,\eta_{\diamond} \right> - \mu_{*}  \right\|_{\mathrm{TV}}}_{E_2}
		.\]
	Since $\xi_{\diamond}$ is a projected MFE, by \cref{def:pmfe}, we have
	\[
		E_1 = \left\| \left(\left( \Pi_{\psi}\P_{\diamond} \right)^{k}- \P_{*}^{k}\right) \left< \psi,\eta_{\diamond} \right>  \right\|_{\mathrm{TV}} \le \left\| \left( \Pi_{\psi}\P_{\diamond} \right)^{k} - \P_{*}^{k}   \right\|_{\mathrm{TV}}
		.\]
	A further decomposition gives
	\[
		E_1 \le \left\| \Pi_{\psi}\P_{\diamond}\left( \left( \Pi_{\psi}\P_{\diamond} \right)^{k-1} - \P_{*}^{k-1} \right)
		+ \Pi_{\psi}\left( \P_{\diamond} - \P_{*} \right)\P_{*}^{k-1}
		+ \left( \Pi_{\psi} - \operatorname{Id} \right) \P_{*}^{k}
		\right\|_{\mathrm{TV}}
		.\]
	Note that both $\Pi_{\psi}$ and $\P$ are non-expansive operators.
	By the sub-multiplicativity of operator norms, we have
	\[\label{eq:em-e0}
		E_1 \le \left\| \left( \Pi_{\psi}\P_{\diamond} \right)^{k-1} - \P_{*}^{k-1}  \right\|_{\mathrm{TV}} + \underbrace{\|\P_{\diamond}-\P_{*}\|_{\mathrm{TV}}}_{E_3} + \underbrace{\left\| (\Pi_{\psi} - \operatorname{Id})\P_{*} \right\|_{\mathrm{TV}}}_{\epsilon_{\psi}}
		.\]
	We denote $\epsilon_{\psi}\coloneqq\|\P_{*} - \Pi_{\psi}\P_{*}\|_{\mathrm{TV}}$; $\epsilon_{\psi}$ is the inherent approximation error induced by $\psi$.
	For $E_3$, by definition, we have
	\begin{align}
		E_3	= & \sup_{\|\mu\|_{\mathrm{TV}}\le 1} \left\| \int_{\S\times \A} \left(P(\cdot \given s,a,\left< \psi,\eta_{\diamond} \right>) \pi_{\theta_{\diamond}}(a\given s) - P(\cdot \given s,a,\mu_{*})\pi_{q_{*}}(a\given s)\right)\mu(s)\d s\d a \right\|_{\mathrm{TV}} \\
		\le   & L_{\pi} \epsilon_{q} + \frac{L_{P}}{\sqrt{d_2}} \epsilon_{\mu}
		.\end{align}
	Plugging the above bound of $E_3$ back into \cref{eq:em-e0} gives the following recursion:
	\begin{align}
		E_1 \le & \left\| \left( \Pi_{\psi}\P_{\diamond} \right)^{k-1} - \P_{*}^{k-1} \right\|_{\mathrm{TV}} + L_{\pi}\epsilon_{q} + \frac{L_{P}}{\sqrt{d_2}} \epsilon_{\mu} + \epsilon_{\psi} \\
		\le     & k\left(L_{\pi}\epsilon_{q} + \frac{L_{P}}{\sqrt{d_2}}\epsilon_{\mu} + \epsilon_{\psi}  \right) \label{eq:em-e1}
		.\end{align}

	For $E_2$, since $\mu_{*}$ is the equilibrium population measure, and $\P_{*}^{\infty}$ maps any distribution to $\mu_{*}$, we have
	\begin{align}
		E_2 = & \left\| \P_{*}^{k}\left< \psi,\eta_{\diamond} \right> - \mu_{*} + \mu_{*} - \mu_{*}  \right\|_{\mathrm{TV}}                                                                   \\
		=     & \left\| \P_{*}^{k}\left< \psi,\eta_{\diamond} \right> - \P_{*}^{k}\mu_{*} + \P_{*}^{\infty}\left< \psi,\eta_{\diamond} \right> -\P_{*}^{\infty}\mu_{*} \right\|_{\mathrm{TV}} \\
		=     & \left\| \left( \P_{*}^{k}-\P_{*}^{\infty} \right)\left( \left< \psi,\eta_{\diamond} \right>-\mu_{*} \right) \right\|_{\mathrm{TV}}                                            \\
		\le   & \left\| \P_{*}^{k} - \P_{*}^{\infty} \right\|_{\mathrm{TV}}\epsilon_{\mu}                                                                                                     \\
		\le   & \rho^{k} \epsilon_{\mu} \label{eq:em-e2}
		.\end{align}
	Plugging \cref{eq:em-e1,eq:em-e2} back into \cref{eq:em-decomp} gives
	\[
		\epsilon_{\mu} \le \left( \rho^{k} + \frac{kL_{P}}{\sqrt{d_2}}  \right)\epsilon_{\mu} + kL_{\pi}\epsilon_{q} + k\epsilon_{\psi}
		.\]

	Furthermore, if $\|\P_{*}-\P_{*}^{\infty}\|_{\mathrm{TV}} \le \rho$, for example, $\P_{*}$ corresponds to a reversible Markov chain, then $k=1$.
\end{proof}

\section{Mean field games with finite state-action space} \label{apx:finite}

Recall that SemiSGD with PA-LFA (\cref{alg}) reduces to tabular SemiSGD (\cref{alg}) for finite state-action spaces, when the feature map and measure map are chosen as
\[
	\phi(s,a) = e_{(s,a)}\in\R^{|\S||\A|}, \quad
	\psi(s') = e_{s'}\in\Delta^{|\S|}
	.\]
Then, $Q = \theta\in\R^{|\S||\A|}$ and $M = \eta\in\Delta^{|\S|}$ are the parameters themselves.

\subsection{Implicit regularization}

We first show that tabular SemiSGD does not need the projection step for regularizing the parameters (see \cref{eq:update}).
That is, tabular SemiSGD enjoys implicit regularization.
For $\|M\|$, we first have $\|\psi(s')\|_{1}\le 1 \eqqcolon F$.
Recall the stochastic update rule of $M$:
\[
	M_{t+1} = M_t - \alpha\left( M_t - e_{s_{t+1}} \right)
	= (1-\alpha)M_{t} + \alpha e_{s_{t+1}}
	.\]
Suppose $M_t\in\Delta^{|\S|}$ and the step-size is smaller than one. Then $M_{t+1} \ge 0$. Furthermore, we have
\[
	\|M_{t+1}\|_{1} = \sum_{s\in\S}|(1-\alpha)M_t(s) + \alpha e_{s_{t+1}}(s)|= (1-\alpha)\sum_{s\in\S}M_t(s) + \alpha \sum_{s\in\S}e_{s_{t+1}}(s) = (1-\alpha) + \alpha =1
	,\]
indicating that $M_{t+1}\in\Delta^{|\S|}$, without any projection.

For $\|Q\|$, notice that the true action-value function induced by any policy $\pi$ is bounded by
$\|q_\pi\|_{\infty} \leq \sum_{t=0}^{\infty} \gamma^t R=R/(1-\gamma)=: D_{\infty}$
Suppose current estimated value function satisfies that $\|Q_t\|_{\infty} \leq D_{\infty}$, then we have
\[
	\begin{aligned}
		\left|Q_{t+1}(s, a)\right| & =\left|Q_t(s_t, a_t)+\alpha\left(r_t+\gamma Q_t\left(s_{t+1}, a_{t+1}\right)-Q_t(s_t, a_t)\right)\right| \\
		                           & =\left|(1-\alpha) Q_t(s_t, a_t)+\alpha \gamma Q_t\left(s_{t+1}, a_{t+1}\right)+\alpha r_t\right|         \\
		                           & \leq(1-\alpha) D_{\infty}+\alpha \gamma D_{\infty}+\alpha R                                              \\
		                           & =(1-\alpha+\alpha \gamma) \frac{R}{1-\gamma}+\alpha R                                                    \\
		                           & =\frac{R}{1-\gamma}=D_{\infty} .
	\end{aligned}
\]
Therefore, if the bound holds for the initial estimated value function, it holds for all sequential estimated value functions.
Then, the following $\ell_2$ norm bound holds for all value functions:
$$
	\|Q\|_2 \leq \sqrt{|\S||\A|}\|Q\|_{\infty} \leq \frac{\sqrt{|\S||\A|} R}{1-\gamma}=: D
$$
Consequently,
\[
	H = \frac{((1+\gamma)\sqrt{|\S||\A|} + 1-\gamma)R}{1-\gamma} + 2
	= O\left( \frac{\sqrt{|\S||\A|}R}{1-\gamma} \right)
	.\]

\subsection{Convergence rate}

We now figure out the scale of the descent parameter $w$ for finite MFGs.
First, for tabular SemiSGD, $\C = I$. According to \cref{lem:descent-apx}, $\lambda_{\min}(\C) = 1$.
Second, $\hat{G}_{*} = \operatorname{diag} (\mu^\dagger_{*}(s,a))$, where $\mu^\dagger_{*}$ is the steady state-action distribution induced by $\xi_{*}$.
Thus, $\lambda_{\min}(\hat{G}_{*}) = \min_{s,a}\mu^\dagger_{*}(s,a) \le \frac{1}{|\S||\A|}$.
We define $\lambda \coloneqq \min_{s,a}\mu_{*}^\dagger(s,a) > 0$, the probability of visiting the least probable state-action pair in the MFE.
Then, we have
\[
	w = \frac{1}{2}\min\{(1-\gamma)\lambda_{\min}(\hat{G}_{\xi_*}),\lambda_{\min}(\C)\} \ge \frac{1}{2}(1-\gamma)\lambda
	.\]

We are now ready to state the sample complexity of tabular SemiSGD.

\begin{corollary}\label{cor:samp-finite}
	With either a constant step size $\alpha_t \equiv \alpha_0 = \log T /(wT)$ or a linearly decaying step size $\alpha_t =1 /(w(1+t))$, there exists a convex combination $\widetilde{\xi}_{T}$ of the iterates $\{\xi_{t}\}_{t=1}^{T}$ such that
	\[
		\EE\left\| \widetilde{\xi}_{T} - \xi_{*} \right\|^{2} = \widetilde{O}\left( \frac{|\S||\A|R^{2}}{\lambda^{2}(1-\gamma)^{4} T} \right)
		,\]
	where $\widetilde{O}$ suppresses logarithmic factors.
\end{corollary}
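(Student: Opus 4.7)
The plan is to obtain the corollary as a direct consequence of the already-established general convergence results, namely \cref{cor:const} (constant step-size) and \cref{cor:decay} (linearly decaying step-size), by substituting the problem-specific values of $w$ and $H$ that were computed in the two preceding subsections of \cref{apx:finite}. First I would invoke \cref{exp:finite}, which places tabular finite MFGs inside the linear-MFG framework using the one-hot feature map and Dirac measure basis, so that \cref{thm} and its corollaries apply verbatim. Second, the implicit regularization argument already given shows that the projections $\Pi_{D}$ and $\Pi_{\Delta}$ are inactive once $D \geq \sqrt{|\S||\A|}R/(1-\gamma)$ and $\|M_0\|_1 = 1$, so nothing in the finite setting obstructs the use of the preceding analysis.

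With these reductions in place, the remaining work is just bookkeeping on constants. From the preceding discussion we have $H = O(\sqrt{|\S||\A|}R/(1-\gamma))$, hence $H^{2} = O(|\S||\A|R^{2}/(1-\gamma)^{2})$. Since $\C = I$ and $\hat{G}_{\xi_{*}} = \operatorname{diag}(\mu^{\dagger}_{*}(s,a))$ with minimum entry $\lambda = \min_{s,a}\mu^{\dagger}_{*}(s,a) > 0$ (guaranteed by \cref{asmp:ergo}), we obtain $w \geq (1-\gamma)\lambda/2$, so $w^{-2} = O((1-\gamma)^{-2}\lambda^{-2})$.

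For the linearly decaying schedule, I would directly plug these into \cref{cor:decay}, yielding
\[
\EE\|\widetilde{\xi}_{T}-\xi_{*}\|^{2} = O\!\left(\frac{H^{2}\log T}{w^{2}T}\right) = O\!\left(\frac{|\S||\A|R^{2}\log T}{\lambda^{2}(1-\gamma)^{4}T}\right),
\]
which is exactly the advertised $\widetilde{O}$ bound. For the constant step-size $\alpha_{0} = \log T/(wT)$, I would apply \cref{cor:const}: the exponential term becomes $e^{-\alpha_{0}wT}\|\xi_{0}-\xi_{*}\|^{2} = \|\xi_{0}-\xi_{*}\|^{2}/T$, which is absorbed into the residual, and the remaining term is $w^{-1}H^{2}\cdot O(\alpha_{0}\log \alpha_{0}^{-1}) = O(H^{2}\log^{2} T/(w^{2}T))$, producing the same $\widetilde{O}$ bound up to an extra $\log T$ factor (still hidden by the $\widetilde{O}$ notation). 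The convex combination in the statement can be taken to be $\widetilde{\xi}_{T} = \xi_{T}$ in the constant case, or the weighted average $\frac{2}{T(T+1)}\sum_{t=0}^{T}t\xi_{t}$ of \cref{cor:decay} in the decaying case.

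Conceptually there is no real obstacle here: the only mildly delicate point is verifying that \cref{asmp:small-lip} is compatible with the finite-MFG regime—specifically, that the problem-dependent constant $w$ appearing in that assumption coincides (up to the factor $1/2$) with the $(1-\gamma)\lambda/2$ derived above via \cref{lem:descent-apx}. This alignment is inherited directly from the construction of $w$ in \cref{lem:descent-apx}, so no new estimate is required; the corollary then follows by pure substitution.
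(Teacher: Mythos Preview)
Your proposal is correct and matches the paper's approach exactly: the corollary is stated without a separate proof in \cref{apx:finite} because it follows immediately by substituting the finite-MFG constants $H = O(\sqrt{|\S||\A|}R/(1-\gamma))$ and $w \ge (1-\gamma)\lambda/2$ (derived in the two preceding subsections) into \cref{cor:const} and \cref{cor:decay}. Your handling of the convex combination in each case and the absorption of the initial-error term are also in line with how the paper uses these corollaries.
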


Notably, tabular SemiSGD has the same sample complexity as TD Learning methods \citep{bhandari2018FiniteTime,zou2019Finitesampleanalysis}.

\end{document}